\newcommand\mat[1]{\begin{pmatrix}#1\end{pmatrix}}
\newcommand{\Var}{{\rm Var}}
\newcommand{\Cov}{{\rm Cov}}
\newcommand{\E}{{\mathbb{E} }}
\newtheorem{theorem}{Theorem}
\newtheorem{lemma}{Lemma}
\newtheorem{proposition}{Proposition}
\newtheorem{assumption}{Assumption}
\title{Statistical Inference for Networks of High-Dimensional Point Processes}
\author{Xu Wang, Mladen Kolar \& Ali Shojaie}
\begin{document}

\maketitle

\begin{abstract}
Fueled in part by recent applications in neuroscience,
the multivariate Hawkes process has become a popular tool for
modeling the network of interactions among high-dimensional point
process data. While evaluating the uncertainty of the network
estimates is critical in scientific applications, existing
methodological and theoretical work has primarily addressed 
estimation. To bridge this gap, this paper develops a new
statistical inference procedure for high-dimensional
Hawkes processes. The key ingredient for this inference procedure is a new concentration inequality on the first- and second-order statistics for integrated
stochastic processes, which summarize the entire history of the
process. Combining recent results on martingale central limit theory
with the new concentration inequality, we then characterize
the convergence rate of the test statistics. We illustrate finite sample validity of our inferential tools via extensive simulations and demonstrate their utility 
by applying them to a neuron spike train data set.\\
\\
\textit{Keywords}: confidence intervals; Hawkes process; high-dimensional inference; hypothesis testing.
\end{abstract}

\section{Introduction}\label{sec:intro}
Multivariate point process data have become prevalent in a number of emerging application areas. Examples include neural spike train data in neuroscience, containing times of neuron spikes of a collection of neurons \citep{Okatan2005};  social media data, recording times when each individual in an online community takes an action \citep{Zhou2013LearningSI}; and high frequency financial data, recording times of market orders \citep{CHAVEZDEMOULIN2012}. 
The latent connectivity structure of these
processes can be represented by a probabilistic graphical model
\citep{lauritzen1996} with a graph or network $G = (V,E)$ whose nodes,
$v \in V$, represent components/units in the multivariate point processes
and each \textit{directed} edge, $(u \to v) \in E$, indicates that the probability of future events of the target node $v$ depends on the
history of the source node $u$. Multivariate point process data
can be used to learn the structure of this network.
%

In a seminal work, \citet{Hawkes1971} proposed a class of
multivariate point process models, where the probability of future
events for a component can depend on the entire history of events of
other components. Because of its flexibility and interpretability
when modeling the dependence structure of component processes, the multivariate Hawkes process has become a
popular tool for studying the latent network
of point processes. 
From its early application in earthquake
prediction \citep{Ogata1988}, the multivariate Hawkes process model has been
widely used to learn the latent connectivity structure in many fields, including
neuroscience \citep{Shizhe2017}, social media \citep{Zhou2013LearningSI}, and
finance \citep{Bacry2011,Linderman2014}.

The Hawkes process, as introduced in \citet{Hawkes1971} and later studied in
\citet{hawkes1974, reynaud-bouret2007, reynaud-bouret2010, Bacry2015,
hansen2015, Etesami2016}, is considered as a \textit{mutually-exciting} process,
in which an event can only excite future events. More specifically, each event in any component may trigger future events in all other components, including itself.
However, in many applications, it is desired to allow for \textit{inhibitory}
effects of past events. For example, a spike in one neuron may inhibit the
activities of other neurons \citep{Purves2001}, which means that it decreases
the probability that other neurons would spike. \citet{costa2018} and
\citet{Shizhe2017} developed a broader class of Hawkes process models that allow
for both excitatory and inhibitory effects in a single and multivariate point
process data, respectively.

In modern applications, it is common for the number of measured components,
e.g., the number of neurons, to be large compared to the observed time period,
e.g., the duration of neuroscience experiments. The high-dimensional nature of
data in such applications poses additional challenges to learning the
connectivity network of a multivariate point process. \citet{hansen2015} and
\citet{Shizhe2017} proposed $\ell_1$-regularized estimation procedures to
address this challenge. However, there are no tools to characterize the sampling
distribution of these estimators and characterize their uncertainty. Such
inferential tools are critical in scientific applications.

Tools for statistical inference in high-dimensional linear models
\citep{ZhangZhang2014, vandegeer2014, Belloni2012Inference}, graphical models \citep{Barber2015ROCKET, Jankova2018Inference, Lu2015Posta, Yu2019Simultaneous}, and more general estimators \citep{Ning2017, Neykov2018unified} are only recently developed for the setting of independent data and cannot be directly applied in a time series setting.  Statistical inference for high-dimensional vector auto-regressive (VAR) models was recently studied by \citet{Neykov2018unified} and \citet{Zheng2018}.  While a significant step
forward, the VAR model only captures dependence for a fixed and pre-specified
time lag (or order). In contrast, the Hawkes process is dependent on the
\textit{entire} history, which introduces significant challenges in developing
inferential procedures for the high-dimensional multivariate Hawkes process. In
particular, this dependence on the entire history complicates the proof of
convergence of the test statistic for the multivariate Hawkes process. Moreover,
unlike the time series models that are often set up in a discrete time domain
\citep{Basu2015, Zheng2018}, the multivariate Hawkes process is defined in a
continuous time domain and thus requires different technical tools to
investigate its properties.

In this paper, we provide the first high-dimensional inference procedure for
multivariate Hawkes processes with both excitatory and inhibitory effects. To
this end, we adopt the de-correlated score test framework of \citet{Ning2017} to
high-dimensional point processes. We also develop confidence intervals for model parameters by extending the semi-parametric efficient confidence region of \citet{Neykov2018unified, Zheng2018} for VAR models to the setting of the
multivariate Hawkes process. While the general steps for our inference framework are similar to those in de-correlated score test and efficient confidence regions, key challenges in adopting these tools stem from the dependence of Hawkes processes on their entire past and their continuous-time nature. In particular, to establish our inference framework, we tackle two main challenges: (i) deriving concentration inequalities for summary statistics of
high-dimensional Hawkes processes; and (ii) establishing the restricted
eigenvalue (RE) condition required for the estimation consistency of
$\ell_1$-regularized estimators \citep{bickel2009}.

To address the above challenges, we first generalize the results by
\citet{costa2018} and \citet{Shizhe2017} to obtain new concentration
inequalities on the first- and second-order statistics of the integrated
stochastic process that summarizes the entire history of each component of the
multivariate Hawkes process. These inequalities are essential for developing our
high-dimensional inference procedures. For instance, together with the
martingale central limit theorem (CLT) of \citet{Zheng2018}, they allow us to
establish the convergence of our test statistics to a $\chi^2$ distribution.
They are also used to establish the maximal inequalities needed to investigate
the theoretical properties of the $\ell_1$-regularized estimator. Next, to bound
the eigenvalues of the covariance matrix of the integrated process, we link
these eigenvalues to the spectral density of the transition matrix of the Hawkes
process. By carefully examining the transition functions of the multivariate
Hawkes process, we investigate structural conditions on the transition functions
that are sufficient to bound the eigenvalues. These bounds allow us to establish
the convergence of our test statistic, and are also used to verify the
restricted eigenvalue (RE) condition \citep{bickel2009}.


\section{The Linear Hawkes Process}\label{sec:hawkes}
Let $\{t_k\}_{k\in \mathbb{Z}}$ be a sequence of real-valued random variables,
taking values in $[0, T]$, with $t_{k+1} > t_k$ and $t_1 \ge 0$ almost surely.
Here, time $t = 0$ is a reference point in time, e.g., the start of an
experiment, and $T$ is the duration of the experiment. A simple point process
$N$ on $\mathbb{R}$ is defined as a family $\{ N(A) \}_{ A \in
\mathcal{B}(\mathbb{R}) }$, where $\mathcal{B}(\mathbb{R})$ denotes the Borel
$\sigma$-field of the real line and $N(A) = \sum_k \mathbf{1}_{\{t_k \in A\}} $.
The process $N$ is essentially a simple counting process with isolated jumps of
unit height that occur at $\{t_k\}_{k\in \mathbb{Z}}$. We write $ N([t, t + dt)
)$ as $dN(t)$, where $dt$ denotes an arbitrarily small increment of $t$.

Let $\mathbf{N}$ be a $p$-variate counting process $\mathbf{N} \equiv \{
N_i\}_{i\in \{1,\dots, p \}}$, where, as above, $N_i$ satisfies $N_i(A) = \sum_k
\mathbf{1}_{\{t_{ik} \in A\}}$ for $A \in \mathcal{B}(\mathbb{R})$ with $\{
t_{i1}, t_{i2}, \dots \}$ denoting the event times of $N_i$.  Let
$\mathcal{H}_t$ be the history of $\mathbf{N}$ prior to time $t$.  The intensity
process $\{ \lambda_{1}(t), \dots, \lambda_{p}(t) \}$ is a $p$-variate
$\mathcal{H}_t$-predictable process, defined as
\begin{align}
	\lambda_i (t) dt & = \mathbb{P}(dN_{i}(t)=1 \mid \mathcal{H}_t )  .
\end{align}
\citet{Hawkes1971} proposed a class of point process models in which
past events can affect the probability of future events. The process $\mathbf{N}$ is a \textit{linear Hawkes process} if the intensity function for each unit $i$ ($i \in \{1,\ldots,p\}$) takes the form
\begin{align}
  \lambda_{i}(t)
  &=   \mu_{i} + \sum_{j=1}^{p} \left(\omega_{ij} *  dN_{j} \right )(t)
    \label{eq:linear_hawkes}  ,
\end{align}
where
\begin{align}
  \left( \omega_{ij} *  dN_{j}  \right)(t)  =
  \int_0^{t-}  \omega_{ij}(t-s)   dN_{j} (s)
  = \sum_{ k: t_{jk} < t }   \omega_{ij}( t- t_{jk}) . \label{eq:transfer_function}
\end{align}
Here, $\mu_{i}$ is the background intensity of unit $i$ and
$\omega_{ij}(\cdot): \mathbb{R}^+ \rightarrow \mathbb{R}$ is the
\textit{transfer function}. In particular, $\omega_{ij}( t - t_{jk}) $
represents the influence from the $k$th event of unit $j$ on the intensity
of unit $i$ at time $t$.

Motivated by neuroscience applications \citep{Linderman2014, MAGRANSDEABRIL2018120}, we consider a parametric transfer function
$\omega_{ij}(\cdot)$ of the form
\begin{equation} \label{eq:omega}
\omega_{ij} (t)    =   \beta_{ij} \kappa_{j}(t) 
\end{equation}
with a \textit{transition kernel}
$\kappa_j(\cdot): \mathbb{R}^+ \rightarrow \mathbb{R}$ that captures the decay of the dependence on past events. This leads to $ \left( \omega_{ij} * dN_{j}  \right)(t) = \beta_{ij} x_{j}(t)$, where the \textit{integrated stochastic process}
\begin{equation}\label{eq:design_column_xt}
 x_{j}(t)  = \int_0^{t-} \kappa_{j}(t-s) dN_{j}(s) 
\end{equation}
summarizes the entire history of unit $j$ of the multivariate Hawkes processes. A commonly used example is the exponential transition kernel, $\kappa_{j}(t) = e^{-t}$ \citep{Bacry2015}.


In this formulation, the \textit{connectivity coefficient} of the underlying network, $\beta_{ij}$, represents the strength of the dependence of unit $i$'s intensity on unit $j$'s past events. A positive $\beta_{ij}$, which implies that past events of
unit $j$ \textit{excite} future events of unit $i$, is often considered in the
literature \citep[see, e.g.,][]{Bacry2015, Etesami2016}. However, we
might also wish to allow for negative $\beta_{ij}$ values to represent
\textit{inhibitory} effect of one unit's past events on another
\citep{Shizhe2017,costa2018}, which is expected in neuroscience applications \citep{Purves2001}. 
%

Denoting
$\bm{x}(t)= ( x_1(t), \dots, x_p(t) )^\top \in \mathbb{R}^{p}$ and
$\bm{\beta}_i = (\beta_{i1}, \dots, \beta_{ip})^\top \in
\mathbb{R}^{p}$, we can write
\begin{align}
  \lambda_{i}(t)
  &=   \mu_{i} + \bm{x}^\top(t)\bm{\beta}_i  \label{eq:linear_hawkes_para_transfer} .
\end{align}
Furthermore, let $Y_i(t) = {dN_i(t)}/{dt}$ and
$\epsilon_i(t) =Y_i(t) - \lambda_i(t) $.  Then the linear Hawkes
process can be written compactly as
\begin{align}
  Y_i(t) &= \mu_{i} + \bm{x}^\top(t)\bm{\beta}_i  + \epsilon_i(t) \label{eq:y_t}.
\end{align}
As we will discuss later, a key challenge in this `linear model'
stems from heteroscedasticity: the variance of $\epsilon_i(t)$ given the history
of $\mathbf{N}$ up to $t$,
\begin{align}
  \sigma_i^2(t) \equiv \Var\left(\epsilon_i(t) \mid \mathcal{H}_t \right) = \lambda_i(t)(1-\lambda_i(t)),
  \label{eq:sigma_i}
\end{align}
may not necessarily be 1 and depends on $\bm{x}(t)$.

Throughout this paper, we assume that the linear Hawkes model
described above is \textit{stationary}, meaning that for all units
$i=1,\dots, p$, the spontaneous rates $\mu_i$ and strengths of
transition $\bm{\beta}_i$ are constant over the time range $[0,T]$
\citep{Bremaud1996, Daley2003}.


\section{Testing}\label{sec:test}

Let $J \subset \{1,\dots, p \}$ be an index set of cardinality $|J| = d$ and
denote $\bm{\beta}_{iJ}=\{ \beta_{ij}, j\in J \}$. We consider testing a
$d$-dimensional subset $\bm{\beta}_{iJ}$:
\begin{align}
	H_0 &: \beta_{ij} = 0, \qquad j \in J \label{eq:H_0}.
\end{align}
For ease of notation, we primarily focus on the case of a single parameter; that
is, we consider testing $H_0: \beta_{ij}=0$, which corresponds to $d =1$.
However, our inferential framework is developed for the more general case of $d
\ge 1$.

In order to simplify the presentation, we scale the components of $\bm{x}(t)$ by
the variance of the noise $\sigma^2_i(t)$ defined in \eqref{eq:sigma_i}. Denote
the scaled components, $z_j(t) = {x_j(t)}/{\sigma_i(t)}$ for $j=1,\dots, p$.
Next, we define the orthogonal projection of $z_j(t)$ onto $\bm{z}_{-j}(t)$,
where $\bm{z}_{-j}(t) = (z_1(t), \dots, z_{j-1}(t), z_{j+1}(t),\dots , z_p(t)
)^\top \in \mathbb{R}^{p-1}$. Let the projection coefficient $\bm{w}_j^* =
\left( w^*_{j0}, ( \bm{w}_{j,-j}^*)^\top \right)^\top \in \mathbb{R}^p $ be
such that
\begin{align}
 \mathbb{E}\left[ z^*_j(t) \bm{z}_{-j}(t) \right] = 0
 \qquad\text{and}\qquad
	\mathbb{E}\left[  z^*_j(t)    \right] = 0,
 \label{eq: w_j_star}
\end{align}
where
\begin{align}
  z^*_j(t) \equiv z_j(t)  -\mat{1, \bm{z}^\top_{-j}(t) }  \bm{w}_j^*
  \label{eq:x_tilde_star}
\end{align}
denotes the orthogonal complement of $z_j(t)$ after removing its projection onto
$\bm{z}_{-j}(t)$. In particular, $z^*_j(t)$ is uncorrelated with
$\bm{z}_{-j}(t)$. Let $\widetilde{ \epsilon}_i(t) = \epsilon_i(t)/\sigma_i(t)$.

With this notation, we define the de-correlated score statistic as
\begin{align}
S_{ij} &= \frac{1}{T} \sum_{t=1}^{T}  \widetilde{ \epsilon}_i(t) \,z^*_j(t)
\label{eq:S}.
\end{align}
The de-correlated score statistic is constructed using $z^*_j(t)$, instead of
directly using $z_j(t)$, to make its sampling distribution robust to errors
induced by the estimation of the unknown nuisance parameters, $\mu_i$ and
$\bm{\beta}_{i,-j}$ \citep{Ning2017}. In particular, we will show that  the
induced error is asymptotically negligible and the limiting distribution of the
test statistic does not depend on model selection mistakes that occur when
estimating the nuisance parameters.

\citet{Neykov2018unified} and \cite{Zheng2018} consider a similar de-correlated
score statistic in the context of VAR models. Their proof strategy exploits the
homoscedastic noise variance in VAR models and does not extend to the linear
Hawkes process. In contrast, we need to take into account the noise variance when constructing the score statistic, as the variance varies over time. Moreover, the noise variance depends on the intensity value, which is time
varying, resulting in more challenging proof in our case.

In order to construct a test, we need to characterize the quantiles of the
de-correlated score statistic. Let
\begin{align}
  \Upsilon_j &= {\Cov}\left ( z^*_j(t) \right )
               = {\mathbb E} \left(    \big(  z^*_j(t) \big)^2 \right ) ,
               \label{eq:Upsilon} \\
  V_T &= \sqrt{T} \, \Upsilon_j^{-1/2} S_{ij} \label{eq:V_T}  ,\\
  U_T &= \lVert  V_T \rVert_2^2 \label{eq:U_T} .
\end{align}
While $\Upsilon_j$ is scalar when testing a univariate $\beta_{ij}$, when testing multiple parameters $\bm{\beta}_{iJ}$, $\Upsilon_J$ is a $d\times d$ matrix defined as
\[
  \Upsilon_J = \Cov \left( \bm{z}^*_J(t) \right ).
\]
In the next section, we show that $U_T$ converges weakly to a $\chi^2$
distribution with degrees of freedom $d$, which is 1 for testing a
univariate $\beta_{ij}$. The non-centrality parameter is zero under
the null hypothesis and depends on the true parameters under the
alternative.

In practice, $\left (\mu_i, \bm{\beta}_i \right )$ and $\bm{w}_j^*$ are not known. We next describe a procedure for estimating them.

\textbf{Step 1}: Calculate $\widehat{\mu}_i$,
$\widehat{ \bm{\beta} }_i$, and $\widehat{\sigma}^2_i(t)$.  We
estimate $\widehat{\mu}_i, \widehat{ \bm{\beta} }_i$ using the lasso on the
unscaled data $(Y_i(t), \bm{x}(t) )$:
\begin{align}
  \widehat{\mu}_i, \widehat{ \bm{\beta} }_i = \arg \min_{\mu_i\in \mathbb{R}, \bm{\beta}_i \in \mathbb{R}^p}
  \frac{1}{T} \sum_{t=1}^{T} \left ( Y_i(t) -\mu_i - \bm{x}^\top(t)\bm{\beta}_i \right )^2 + \lambda \lVert \bm{\beta}_i \rVert_1 . \label{eq:lasso_beta}
\end{align}
Then,
\begin{align}
  \widehat{\lambda}_i(t) = \bm{x}^\top(t) \widehat{ \bm{\beta} }_i
	\qquad\text{and}\qquad
  \widehat{\sigma}^2_i(t) = \widehat{\lambda}_i(t)(1-\widehat{\lambda}_i(t)) \label{eq:sig2hat}.
\end{align}
The estimation consistency for $\widehat{\mu}_i$, $\widehat{ \bm{\beta} }_i$ to
the corresponding true parameters is shown in Lemma~\ref{Lemma_beta_lasso}. The
consistency of $\widehat{ \sigma}^2(t)$ to $\sigma^2(t)$ follows from the
prediction consistency of the lasso estimator. In our proof, we show that the
\textit{restricted eigenvalue} (RE) condition required for the consistency of
lasso \citep{bickel2009} is met in our case. This follows from the bounded
eigenvalue of the covariance matrix of the integrated stochastic process
$\bm{x}(t)$, which is obtained under the assumptions made in the following
section.

The tuning parameter $\lambda$ is selected via cross-validation for sequentially
dependent data \citep{Safikhani2017}. Specifically, unlike standard
cross-validation  for independent samples, the sequential cross-validation  uses
successively training sets along with validation sets that follow each of the
training sets in the sequence order.

\textbf{Step 2}: Calculate $\widehat{\bm{w}}_j$. Let $\widehat{z}_j(t) =
{x_j(t)}/{\widehat{\sigma}_i(t)}$ for $j=1,\dots, p$.
%
We estimate $\widehat{\bm{w}}_j$ by regressing the outcome
$\widehat{z}_{j}$ on the design matrix
$\widehat{\bm{z}}_{-j}$ using a lasso procedure with tuning parameter selected as in Step 1:
\begin{align}
  \widehat{\bm{w}}_j = \arg \min_{ \bm{w}_j \in \mathbb{R}^p  } \frac{1}{T} \sum_{t=1}^{T} \left(\widehat{z}_j(t)  -\mat{1 & \widehat{\bm{z}}^\top_{-j}(t) } \bm{w}_j  \right)^2 + \lambda \lVert \bm{w}_{j,-j} \rVert_1 . \label{eq:lasso_w_hat}
\end{align}
The consistency of $\widehat{\bm{w}}_j$ for $\bm{w}_j^*$ is shown in
Lemma~\ref{Lemma_w_lasso}. The proof is similar to the one used for Step 1.

\textbf{Step 3}: Calculate $\widehat{\Upsilon}_j $. Let $\widehat{z}^*_j(t) =
\widehat{z}_j(t) -\mat{1 & \widehat{\bm{z}}^\top_{-j}(t) }\widehat{\bm{w} }_j$.
When testing a univariate $\beta_{ij}$ in \eqref{eq:H_0}, ${\Upsilon}_j$
is a scalar and we estimate it by the sample covariance as
\begin{align}
  \widehat{\Upsilon}_j &=  \frac{1}{T} \sum_{t=1}^{T}
                         \big( \widehat{z}^*_j(t) \big )^2.
                         \label{eq:upsilon_hat}
\end{align}
When testing multiple $\bm{\beta}_{iJ} = \{ \beta_{ij}, j\in J \}$,
we estimate ${\Upsilon}_J$ as
\[
  \widehat{\Upsilon}_J =  \frac{1}{T} \sum_{t=1}^{T}
   \widehat{z}^*_J(t)    \left(\widehat{z}^*_J(t) \right)^\top \in \mathbb{R}^{J\times J}.
\]
Our results are valid as long as $d = |J| \ll p$.

\textbf{Step 4}: Putting everything together, we compute the de-correlated score statistic
with estimated nuisance parameters. Let
\begin{align}
  \widehat{\epsilon}_i(t) &=
                            Y_i(t) - \widehat{\mu}_i - \bm{x}^\top_{-j}(t)\widehat{\bm{\beta} }_{i,-j} ,  \\
  \widehat{S}_{ij} &= \frac{1}{T} \sum_{t=1}^{T} \frac{ \widehat{\epsilon}_i(t) }{\widehat{\sigma}_i(t) }  \widehat{z}^*_j(t)  , \label{eq:S_hat} \\
  \widehat{V}_T &= \sqrt{T} \widehat{\Upsilon}_j^{-1/2} \widehat{S}_{ij} ,  \\
  \widehat{U}_T &= \lVert  \widehat{V}_T \rVert_2^2 .\label{eq:U_hat_T}
\end{align}
The above quantities are univariate when testing a univariate $\beta_{ij}$, but
are defined as vectors and matrices when testing multivariate $\bm{\beta}_{iJ}$
of dimension $d$. Specifically, $\widehat{S}_{iJ} \in \mathbb{R}^d$,
$\widehat{\Upsilon}_J \in \mathbb{R}^{d\times d}$ and $\widehat{V}_T \in
\mathbb{R}^d$.

In the next section, we show that with high probability $\widehat{U}_T$
converges to $U_T$, which asymptotically follows a $\chi^2$ distribution with
$d$ degrees of freedom under the null hypothesis. Thus, we define our test
procedure for \eqref{eq:H_0} as
\begin{align*}
\Phi_\alpha = \textrm{I} \left\{ \widehat{U}_T \ge \chi^2_{d,1-\alpha} \right \} ,
\end{align*}
and reject the null hypothesis when $\Phi_\alpha = 1$.




\section{Theoretical Guarantees}\label{sec:theory}

In this section, we present our main theoretical results, which characterize the
limiting distribution of $\widehat{U}_T$. We start by stating our assumptions.
For a square matrix $A$, let $\Lambda_{\max}(A)$ and $\Lambda_{\min}(A)$ be its
maximum and minimum eigenvalues, respectively.
Define
$\Theta = \{ \beta_{ij} \}_{1\le i,j \le p} \in \mathbb{R}^{p\times p}$ and $\bm{\mu} = \{\mu_i \}_{1\le i \le p} \in \mathbb{R}^p$.

\begin{assumption}\label{assumption1} Let $\Omega = \{ \Omega_{ij} \}_{1\le i,j
\le p} \in \mathbb{R}^{p\times p}$ with entries $\Omega_{ij} =   \int_0^{\infty}
|\omega_{ij}(\Delta)| d\Delta$. There exists a constant $\gamma_{\Omega}$ such
that $\Lambda_{\max} ( \Omega^T \Omega ) \le \gamma^2_{\Omega}  < 1 $.
\end{assumption}
Assumption~\ref{assumption1} is necessary for stationarity of a Hawkes process
\citep{Shizhe2017}.  The constant $\gamma_{\Omega}$ does not depend on the
dimension $p$.   For any fixed $p$, \citet{Bremaud1996} show that given this
assumption the intensity process of the form \eqref{eq:linear_hawkes} is stable
in distribution and, thus, a stationary process $\mathbf{N}$ exists. Since our
connectivity coefficients of interest, $\Theta$, are ill-defined without a
stationarity, this assumption provides the necessary context for our inferential
framework.

\begin{assumption}\label{assumption2}
There exists constants $\rho_r \in (0,1)$ and $0< \rho_c < \infty $ such that
\begin{align*}
\max_{1\le i \le p}  \sum_{j=1}^p \Omega_{ij} \le \rho_r
\qquad\text{and}\qquad
\max_{1\le j \le p}  \sum_{i=1}^p \Omega_{ij} \le \rho_c .
\end{align*}
\end{assumption}
Assumption~\ref{assumption2} requires maximum in- and out- intensity flows to be
bounded, which helps in bounding the eigenvalues of the cross-covariance of
$\bm{x}(t)$. A similar assumption is also considered by \citet{Basu2015} in
the context of VAR models. The condition of $\rho_r \in (0,1)$ prevents the
intensity from concentrating to a single process \citep{Shizhe2017}.

\begin{assumption}\label{assumption3}
There exists $\lambda_{\min}$ and $\lambda_{\max}$ such that
$$
0 < \lambda_{\min} \le \lambda_{i}(t) \le \lambda_{\max} 
$$
for all $i=1,\dots, p$ and $t\in [0, T]$.
\end{assumption}
Assumption~\ref{assumption3} requires that the intensity rate is strictly
bounded, which prevents degenerate processes for all units of
the multivariate Hawkes process. As a consequence, $\sigma_i^2(t)$ will be bounded
away from 0, and hence the construction of the de-correlated score in
\eqref{eq:S}  is valid.

\begin{assumption}\label{assumption4}
The transition kernel $k_{j}(t)$ is positive and integrable over $[0,T]$, for $1\le j \le p$.
\end{assumption}
Assumption~\ref{assumption4} implies that the integrated process $x_j(t)$
defined in \eqref{eq:design_column_xt} is bounded. Together with
Assumptions~\ref{assumption3}, it also implies that $\mu_i $ and $\bm{\beta}_i$
are bounded for all $i=1,\dots,p$.

Our next two assumptions  state the rate of convergence for the estimators of
$(\mu_i,\bm{\beta}_i)$ and $\bm{w}^*_j$ that guarantee the weak convergence of the
test statistic. We let $\Pi_0$ and $\Pi_a$ denote the feasible set of
$(\bm{\mu},\Theta)$ under the null and the alternative hypotheses, respectively,
with Assumptions~\ref{assumption1}--~\ref{assumption4} satisfied. In
addition, we use $s_j = \lVert\bm{w}_j^* \rVert_0$, $s = \max_{1\le j\le p}
s_j$, $\rho_i = \lVert \bm{\beta}_i \rVert_0$, and $\rho = \max_{1\le i\le p}
\rho_i$ to denote the sparsity of $\bm{w}^*_j$ and $\bm{\beta}_i$.

\begin{assumption}[Estimation error of $\bm{\beta}_i$]\label{assumption_beta}
For $(\bm{\mu},\Theta) \in \Pi_0 \cup \Pi_a$ and $r\in \{1,2\}$,
\begin{align*}
\left \lVert \mat{\widehat{ \mu}_i \\ \widehat{\bm{\beta}}_i} - \mat{ \mu_i \\ \bm{\beta}_i} \right  \rVert_r &\le  C_1 (\rho+1)^{1/r}  T^{-2/5},
\end{align*}
for all $1\le j \le p$, with probability at least $1-C_2 p^2 T \exp(- C_3
T^{1/5})$.  Constants $C_1, C_2, C_3$ only depend on $(\bm{\mu},\Theta) $ and
the transition kernel function.
\end{assumption}

\begin{assumption}[Estimation error of $\bm{w}^*_j$]\label{assumption_w}
For $(\bm{\mu},\Theta) \in \Pi_0 \cup \Pi_a$ and $r\in \{1,2\}$,
		\begin{align*}
	\left \lVert \widehat{\bm{w} }_j  - \bm{w}^*_j \right \rVert_r &\le C_1 (s +1)^{\frac{3-r}{2}}  \rho  T^{-2/5},
\end{align*}
for all $1\le j \le p$, with probability at least $1- C_2 p^2 T \exp(- C_3
T^{1/5})$. Constants $C_1, C_2, C_3$  only depend on $(\bm{\mu},\Theta) $ and
the transition kernel function.
\end{assumption}

Assumption~\ref{assumption_beta} and~\ref{assumption_w} state the rate of
convergence for estimators of the nuisance components. Under  a stationary
linear Hawkes process that satisfies
Assumptions~\ref{assumption1}--\ref{assumption4},  Lemmas~\ref{assumption_w}
and~\ref{Lemma_beta_lasso} in Appendix~B show that the lasso estimators in
\eqref{eq:lasso_beta} and \eqref{eq:lasso_w_hat} satisfy
Assumptions~\ref{assumption_beta} and \ref{assumption_w}. However, our main
results on the limiting distribution of the test statistic are  valid for other
high-dimensional estimators,  as long as their rate of convergence satisfies the
requirements  in  Assumptions~\ref{assumption_beta} and \ref{assumption_w}.

The rate of convergence naturally depends on the sparsity of $\bm{\beta}_i$ and
$\bm{w}^*_j$. In general, the relationship between the sparsity of $\bm{w}_j^*$
and the sparsity of $\bm{\beta}_i$ is not straightforward ---  it depends on the
sign and scale of the connectivity coefficients, as well as the transition
kernel.
Lemma~\ref{lemma_sparsity_s_rho} in Appendix~B shows that the sparsity of
$\bm{w}_j^*$ is upper bounded by the sparsity of $\bm{\beta}_i$ as $s \le 2\rho
+ 1$ when
connectivity  matrix is block diagonal. The rate of convergence for an
estimator of $\bm{w}^*_j$ in  Assumption~\ref{assumption_w} depends on both the sparsity of $\bm{w}^*_j$ and $\bm{\beta}_i$. This is because estimation of $\bm{w}^*_j$ requires an estimate of the unknown variance, which in turn depends on the estimates of $(\mu_i, \bm{\beta}_i)$.

Next, we introduce our first result, which establishes the weak convergence of
$\widehat{U}_T$ under the null hypothesis.

\begin{theorem}\label{theorem1}
	Suppose the linear Hawkes process defined in \eqref{eq:linear_hawkes}
  satisfies Assumptions \ref{assumption1}--\ref{assumption4}.
	Furthermore assume that $\left(\widehat{\mu}_i, \widehat{\bm{\beta} }_i\right)$ and
  $\widehat{\bm{w} }_j$ satisfy Assumptions~\ref{assumption_beta} and~\ref{assumption_w}.
  If $s^2 \rho^2 \log p = o\left( T^{1/5} \right)$, then, under the null hypothesis in \eqref{eq:H_0},
\begin{align}
\underset{ (\Theta, \mu) \in \Pi_0, x\in \mathbb{R}  }{\sup} \left |  {\mathbb P}\left(\widehat{U}_T \le x\right) - F_d(x)  \right |
\le C_1 p^2 T \exp(- C_2 T^{1/5}) + C_3 s^2 \rho^2   T^{-1/5}+ C_4 T^{-1/8}  ,
\end{align}
where
$F_d$ is the cdf of the $\chi^2$-distribution with $d$ degrees of freedom and
$C_k,k=1,\dots,4$, are constants only depending on the model parameter $(\bm{\mu}, \Theta)$ and the transition kernel function.
\end{theorem}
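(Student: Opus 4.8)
The plan is to establish the result in two conceptual halves: first, show that the \emph{infeasible} statistic $U_T$ (built from the true nuisance parameters $\mu_i$, $\bm\beta_{i,-j}$, $\bm w_j^*$, and $\sigma_i(t)$) converges weakly to $\chi^2_d$ at a quantifiable rate; second, show that replacing the true nuisances by their estimates perturbs the statistic by an amount that is uniformly $o_P(1)$ with an explicit bound, so that $\widehat U_T$ inherits the same limit with the stated error terms. For the first half I would write $V_T = \sqrt T\,\Upsilon_j^{-1/2} S_{ij}$ with $S_{ij} = \frac1T\sum_{t=1}^T \widetilde\epsilon_i(t)\, z_j^*(t)$, and observe that under $H_0$ the summands $\widetilde\epsilon_i(t)\, z_j^*(t)$ form a martingale difference sequence with respect to the filtration $\mathcal H_t$, since $z_j^*(t)$ is $\mathcal H_t$-predictable and $\mathbb E[\widetilde\epsilon_i(t)\mid\mathcal H_t]=0$. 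I would then invoke the martingale CLT of \citet{Zheng2018}: the two ingredients to verify are (i) the conditional variance $\frac1T\sum_t \mathbb E[(\widetilde\epsilon_i(t) z_j^*(t))^2\mid\mathcal H_t] = \frac1T\sum_t (z_j^*(t))^2$ (using \eqref{eq:sigma_i}) converges to $\Upsilon_j$, and (ii) a conditional Lindeberg / fourth-moment negligibility condition. Both follow from a concentration inequality on first- and second-order statistics of the integrated process --- the new inequality advertised in the introduction --- together with the boundedness of $x_j(t)$ from Assumption~\ref{assumption4} and of $\sigma_i(t)$ away from $0$ and $1$ from Assumption~\ref{assumption3}. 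This produces the $C_4 T^{-1/8}$ Berry--Esseen-type term.

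For the second half, the core identity is a decomposition of $\widehat S_{ij} - S_{ij}$ into a small number of terms, each of which is a product of an estimation error ($\widehat{\bm\beta}_{i,-j}-\bm\beta_{i,-j}$, $\widehat{\bm w}_j - \bm w_j^*$, or $\widehat\sigma_i - \sigma_i$) with an empirical average of bounded predictable processes. I would control each term by combining Assumptions~\ref{assumption_beta}--\ref{assumption_w} (which give the $\ell_1$/$\ell_2$ rates $O((\rho+1)^{1/r}T^{-2/5})$ and $O((s+1)^{(3-r)/2}\rho T^{-2/5})$ on a high-probability event) with a \emph{maximal} inequality over the $p$ coordinates --- again a consequence of the concentration inequality --- which costs a factor $\sqrt{\log p}$. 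Tracking the worst product of rates, $s\rho T^{-2/5}\cdot$ (coordinate-wise average of order $T^{-2/5}\sqrt{\log p}$ or $O(1)$), and multiplying by the $\sqrt T$ normalization in $V_T$, yields a bound of order $\sqrt T \cdot s\rho T^{-4/5}\sqrt{\log p} \lesssim s\rho\sqrt{\log p}\,T^{-3/10}$; squaring for $U_T$ versus the cross term gives the $C_3 s^2\rho^2 T^{-1/5}$ contribution, and the requirement $s^2\rho^2\log p = o(T^{1/5})$ is exactly what forces this remainder to vanish. The bad event where the lasso rates fail contributes $C_1 p^2 T\exp(-C_2 T^{1/5})$. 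I would also need $\widehat\Upsilon_j \to \Upsilon_j$ uniformly, handled the same way, plus the fact that $\Upsilon_j$ is bounded below (to make $\widehat\Upsilon_j^{-1/2}$ well behaved), which comes from the eigenvalue lower bound on the covariance of $\bm x(t)$ linked to the spectral density via Assumptions~\ref{assumption1}--\ref{assumption2}.

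Finally I would combine the two halves by a triangle inequality on cdfs: $|\mathbb P(\widehat U_T\le x)-F_d(x)| \le |\mathbb P(\widehat U_T\le x)-\mathbb P(U_T\le x)| + |\mathbb P(U_T\le x)-F_d(x)|$, bounding the first difference by the probability of the exceptional event plus an anti-concentration argument (the density of $\chi^2_d$ is bounded, so a uniformly small perturbation of $U_T$ translates into a uniformly small cdf gap), and the second by the martingale-CLT rate. Taking the supremum over $(\Theta,\mu)\in\Pi_0$ is legitimate because every constant above depends only on $(\bm\mu,\Theta)$ and the kernel through the fixed bounds in Assumptions~\ref{assumption1}--\ref{assumption4}, which are uniform over $\Pi_0$.

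I expect the main obstacle to be the second half --- specifically, controlling $\widehat S_{ij}-S_{ij}$ with the correct rate. The heteroscedasticity makes this genuinely harder than in the VAR analyses of \citet{Neykov2018unified} and \citet{Zheng2018}: the perturbation of $1/\widehat\sigma_i(t)$ around $1/\sigma_i(t)$ is itself a nonlinear function of the lasso estimate $\widehat{\bm\beta}_i$, so the error propagates through both $\widehat z_j(t)$, $\widehat z_j^*(t)$, and the weighting of $\widehat\epsilon_i(t)$ simultaneously, and one must be careful that these pieces do not interact to produce a worse rate than $s\rho\sqrt{\log p}\,T^{-3/10}$. Making the maximal inequality sharp enough --- so that the $\sqrt{\log p}$ factor, and not a larger power, is all one pays for taking a union bound over the $p$ nuisance coordinates --- is where the new concentration inequality on the integrated processes is indispensable, and getting its dependence on $T$ to line up with the $T^{1/5}$ exponent appearing throughout the assumptions is the delicate bookkeeping step.
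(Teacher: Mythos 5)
Your proposal is correct and follows essentially the same route as the paper: a martingale CLT for the infeasible statistic $U_T$ giving the $T^{-1/8}$ term, a decomposition of $\widehat S_{ij}-S_{ij}$ and $\widehat\Upsilon_j-\Upsilon_j$ into products of lasso estimation errors with concentrated empirical averages (with the cross term against $\lVert V_T\rVert_2$ producing the $s^2\rho^2 T^{-1/5}$ rate), and a cdf triangle inequality with $\chi^2_d$ anti-concentration. The only cosmetic difference is that the paper splits the nuisance perturbation into two stages via an intermediate statistic $\widehat U_T^0$ that isolates the $\widehat\sigma_i$ error, and absorbs the union bound over coordinates into the $p^2T\exp(-CT^{1/5})$ probability term rather than a $\sqrt{\log p}$ factor in the rate.
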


Theorem~\ref{theorem1} shows that $\widehat{U}_T$ converges to $\chi_d^2$ in
distribution ($d=1$ when testing univariate $\beta_{ij}$). The proof involves
quantifying the difference between the cdf of $U_T$ and $F_d(x)$ and
establishing the convergence of $\widehat{U}_T$ to $U_T$.  The main challenge in
establishing these results stems from the dependency structure of the
multivariate Hawkes process whose intensity depends on the entire history of
each component. The additional complexity due to the dependency structure leads
to a slower rate of convergence in quantifying the difference between
$\widehat{U}_T$ and $U_T$ than those obtained for the VAR model \citep{Zheng2018}.
Moreover, this dependence also leads to a difference between cdf of $U_T$ and
$F_d(x)$ that is dominated by $T^{-1/8}$ (using the martingale central limit
theorem in Proposition~\ref{prop_clt}) rather than $T^{-1/2}$ (using the
standard central limit theorem).

Next, we investigate the distribution of $\widehat{U}_T$ under the alternative
hypothesis. More specifically, for $\phi > 0$, we assume
\begin{align}
	H_a: \beta_{ij} = T^{-\phi}\Delta \label{eq:H_a} .
\end{align}

\begin{theorem}\label{theorem2}
Suppose the linear Hawkes process defined in (\ref{eq:linear_hawkes}) satisfies
Assumptions~\ref{assumption1}--\ref{assumption4}.
Furthermore assume that $\left(\widehat{\mu}_i, \widehat{\bm{\beta} }_i\right)$
and $\widehat{\bm{w}}_j$ satisfy Assumptions~\ref{assumption_beta} and~\ref{assumption_w}.
Let $F_{d, \delta }$ be the cdf of a non-central
$\chi^2$-distribution with $d$ degrees of freedom and non-centrality parameter
$\delta$.
If $ s^2 \rho^2 \log p = o\left( T^{1/5} \wedge T^{  2\phi-\frac{7}{5} }  \right)$,
then, under the alternative hypothesis in \eqref{eq:H_a},
we have:
\begin{itemize}
	\item[--]if  $\phi > \frac{1}{2}$,
	\begin{multline}
	\underset{ (\Theta, \mu) \in \Pi_a, x\in \mathbb{R}  }{\sup} \left |  {\mathbb P}\left(\widehat{U}_T \le x\right) - F_d(x)  \right |
	\le  C_1 p^2 T \exp(- C_2 T^{1/5}) \\
	+ C_3 s^2 \rho^2 \left( T^{-1/5} \vee T^{ \frac{7}{5} - 2\phi } \right)  + C_4 T^{-1/8}   ;
	\end{multline}
	\item[--]if $\phi = \frac{1}{2}$,
	\begin{multline}
	\underset{ (\Theta, \mu) \in \Pi_a, x\in \mathbb{R}  }{\sup}
	\left |  {\mathbb P}\left(\widehat{U}_T \le x\right) - F_{d, \lVert \widetilde{\Delta} \rVert_2^2 }(x)  \right| \le
	C_1 p^2 T \exp(- C_2 T^{1/5}) \\
	+ C_3 s^2 \rho^2  T^{-1/5}  + C_4 T^{-1/8}
	 ,
	\end{multline}
	where $\widetilde{\Delta} = \Upsilon_j^{1/2} \Delta$ with $\Upsilon_j$ defined in \eqref{eq:Upsilon};
	\item[--]if $  \phi <  \frac{1}{2}$,
	\begin{multline}
	\underset{ (\Theta, \mu) \in \Pi_a, x\in \mathbb{R}  }{\sup}
	\left |  {\mathbb P}\left(\widehat{U}_T \le x\right)  \right| \le
	C_1 p^2 T \exp(- C_2 T^{1/5})  + C_3 T^{-1/8}\\
	 + C_4 \exp(  -( C_5 T^{1/2 -\phi} - C_6 \sqrt{x} )^2  ) ;
	\end{multline}
\end{itemize}
here $C_k, k=1,\dots,6$, are constants only depending on the model parameter $(\bm{\mu}, \Theta)$ and the transition kernel function.
\end{theorem}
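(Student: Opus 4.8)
The plan is to build on the analysis behind Theorem~\ref{theorem1}: under the local alternative \eqref{eq:H_a} the de-correlated score $\widehat{S}_{ij}$ picks up a deterministic drift, and once that drift has been isolated the perturbed statistic can be pushed through the same martingale central limit machinery used in the null case, so that the behaviour of $\widehat{U}_T$ is governed by the size $T^{1/2-\phi}$ of the drift relative to the $O_P(1)$ noise $V_T$.

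First I would expand $\widehat{\epsilon}_i(t)$ under the model of \eqref{eq:H_a}. Since $Y_i(t)=\mu_i+T^{-\phi}\Delta\,x_j(t)+\bm{x}_{-j}^\top(t)\bm{\beta}_{i,-j}+\epsilon_i(t)$, we get
\[
\widehat{\epsilon}_i(t)=\epsilon_i(t)+T^{-\phi}\Delta\,x_j(t)+(\mu_i-\widehat{\mu}_i)+\bm{x}_{-j}^\top(t)\bigl(\bm{\beta}_{i,-j}-\widehat{\bm{\beta}}_{i,-j}\bigr).
\]
Substituting into \eqref{eq:S_hat} and grouping, the contribution of $\epsilon_i(t)$ and of the nuisance errors is precisely the quantity bounded in the proof of Theorem~\ref{theorem1}: it equals $S_{ij}$ up to a remainder controlled by Assumptions~\ref{assumption_beta}--\ref{assumption_w}, the first- and second-order concentration inequalities, and the consistency of $\widehat{\sigma}_i$, $\widehat{\bm{w}}_j$ and $\widehat{\Upsilon}_j$. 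The genuinely new piece is the signal term $T^{-\phi}\Delta\cdot\frac1T\sum_{t}z_j(t)z^*_j(t)$, obtained after replacing $\widehat{z}^*_j,\widehat{\sigma}_i$ by their population counterparts at the cost of additional remainders. Since $z_j(t)=z^*_j(t)+\mat{1 & \bm{z}_{-j}^\top(t)}\bm{w}^*_j$ with $z^*_j(t)$ uncorrelated with $(1,\bm{z}_{-j})$, we have $\E\bigl[z_j(t)z^*_j(t)\bigr]=\E\bigl[(z^*_j(t))^2\bigr]=\Upsilon_j$, so the second-order concentration inequality gives $\frac1T\sum_t z_j(t)z^*_j(t)=\Upsilon_j+o_P(1)$ with an explicit rate. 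Collecting terms,
\[
\widehat{S}_{ij}=S_{ij}+T^{-\phi}\Delta\,\Upsilon_j+R_T,\qquad
\widehat{V}_T=V_T+T^{1/2-\phi}\widetilde{\Delta}+\widetilde{R}_T,
\]
with $\widetilde{\Delta}=\Upsilon_j^{1/2}\Delta$ and $\lVert\widetilde{R}_T\rVert_2$ bounded by the remainder of the null analysis plus new cross terms that couple the $T^{1/2-\phi}$-sized drift with the lasso nuisance errors; carried through the normalization by $\widehat{\Upsilon}_j$ these cross terms produce the extra $s^2\rho^2 T^{7/5-2\phi}$ contribution appearing in the $\phi>\tfrac12$ bound.

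It then remains to invoke the martingale CLT of Proposition~\ref{prop_clt}, which gives $V_T\Rightarrow N(0,I_d)$ uniformly over $\Pi_a$ with the Berry--Esseen-type rate $T^{-1/8}$, and to split into the three regimes. If $\phi=\tfrac12$, then $\widehat{V}_T=V_T+\widetilde{\Delta}+\widetilde{R}_T$, so by Slutsky $\widehat{V}_T\Rightarrow N(\widetilde{\Delta},I_d)$ and hence the limiting cdf of $\widehat{U}_T=\lVert\widehat{V}_T\rVert_2^2$ is $F_{d,\lVert\widetilde{\Delta}\rVert_2^2}$, with the error inherited from the rates of $V_T$ and of $\widetilde{R}_T$. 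If $\phi>\tfrac12$, the drift $T^{1/2-\phi}\widetilde{\Delta}$ vanishes, so $\widehat{U}_T=\lVert V_T\rVert_2^2+O_P(T^{1/2-\phi})+(\text{remainder})$, which together with the CLT recovers the central $F_d$ at rate $T^{-1/8}+s^2\rho^2\bigl(T^{-1/5}\vee T^{7/5-2\phi}\bigr)$. If $\phi<\tfrac12$, then $\lVert T^{1/2-\phi}\widetilde{\Delta}\rVert_2\to\infty$, and for any fixed $x$ the reverse triangle inequality gives $\{\widehat{U}_T\le x\}\subseteq\{\lVert V_T\rVert_2\ge C_5 T^{1/2-\phi}-C_6\sqrt{x}-\lVert\widetilde{R}_T\rVert_2\}$; under the stated sparsity condition $\lVert\widetilde{R}_T\rVert_2=o_P(T^{1/2-\phi})$ is negligible, and bounding $\mathbb{P}(\lVert V_T\rVert_2\ge\cdot)$ by the CLT approximation followed by a Gaussian tail (equivalently, directly by a martingale Bernstein/Hoeffding inequality) yields the stated $C_4\exp(-(C_5 T^{1/2-\phi}-C_6\sqrt{x})^2)$, together with the $T^{-1/8}$ approximation error and the $p^2T\exp(-C_2T^{1/5})$ probability of the event on which Assumptions~\ref{assumption_beta}--\ref{assumption_w} fail.

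The step I expect to be the main obstacle is the uniform-in-$\Pi_a$ control of $\widetilde{R}_T$, and in particular the cross terms that couple the deterministic drift of order $T^{1/2-\phi}$ with the lasso nuisance errors: these have no analogue in the null analysis of Theorem~\ref{theorem1} and are exactly what forces the $T^{2\phi-7/5}$-type restriction on $\phi$ needed for a non-trivial statement. A secondary difficulty, specific to the regime $\phi<\tfrac12$, is obtaining a tail bound for $\lVert V_T\rVert_2$ sharp enough that the divergence exponent $T^{1/2-\phi}$ inside the exponential is preserved, which again relies on the martingale concentration inequalities rather than on the asymptotic CLT alone.
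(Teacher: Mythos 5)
Your proposal is correct and follows essentially the same route as the paper's proof: the same decomposition of $\widehat{S}_{ij}$ into $S_{ij}$ plus a drift $T^{-\phi}\Delta\,\Upsilon_j$ (using $\E[z_j(t)z^*_j(t)]=\Upsilon_j$ and the second-order concentration bounds) plus a remainder controlled as in Theorem~\ref{theorem1}, the same application of the martingale CLT to $\lVert V_T+\widetilde{\Delta}\rVert_2^2$ for the non-central limit at $\phi=\tfrac12$, and the same reverse-triangle-inequality plus $\chi^2$/Gaussian tail bound on $\lVert V_T\rVert_2$ for the divergent regime $\phi<\tfrac12$. The points you flag as the main obstacles (the drift--nuisance cross terms producing the $T^{7/5-2\phi}$ term and the sharp tail bound for $\lVert V_T\rVert_2$) are exactly where the paper's proof does its extra work relative to the null case.
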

Theorem~\ref{theorem2} establishes the asymptotic distribution of
$\widehat{U}_T$ under the alternative hypothesis. Depending on the scaling of
$\beta_{ij}$ with respect to $T$, which is parameterized by $\phi$, the
asymptotics are different. When $\phi > 1/2$, our test does not distinguish
$H_a$ from $H_0$, since in both cases $\widehat{U}_T$ convergences to
$\chi^2_d$. When $\phi < 1/2$, $\widehat{U}_T$ diverges to $+\infty$ in
probability, resulting in trivial rejection of the null hypothesis. Finally, when $\phi = 1/2$, $\widehat{U}_T$ converges to a non-central $\chi^2$-distribution with $d$ degrees of freedom and non-centrality parameter $\lVert \widetilde{\Delta} \rVert_2^2$. This result should be compared with Theorem~3.2 in \citet{Zheng2018} developed for VAR models.  However, since the multivariate Hawkes process is defined on a continuous time domain with intensity rate depending on the entire history, rather than a pre-specified time lag (or order), our rate of convergence is slower compared with the VAR model.

The results in Theorem~\ref{theorem1} and \ref{theorem2} are established by
extending the concentration inequality developed in \citet{Shizhe2017}, which is
built on a Bernstein type inequality for weakly dependent observations of the
point process at different time points \citep{Merlevede}. This weak dependence
leads to a slower rate of convergence in the second order statistics of
$\bm{x}(t)$ compared with the standard sub-Gaussian deviation bound for
independent samples; see \citet[][Theorem~4]{Shizhe2017} or
\citet[][Theorem~1]{Merlevede} for details.
As an alternative, in Lemma~\ref{lemma_concen_ineq_x2} in the Appendix~C, we write the relevant statistics (i.e. the second order statistics of $\bm{x}(t)$ in our case) based on independent `residuals', referred to as martingale compensated processes \citep{Bacry2011}. Then, considering the point process in a discrete time domain, we use the Hansen-Wright inequality to obtain a faster rate of convergence that is comparable to the standard Gaussian deviation bounds for VAR models in \citet{Zheng2018}. However, this is obtained under a more stringent requirement on the structure of the transfer function of the Hawkes processes.

\section{Confidence Regions}\label{sec:confidence}

We next describe a procedure for constructing a confidence intervals for
$\beta_{ij}$. Similar to \citet{Ning2017}, our confidence interval is based on
the one-step estimator of $\beta_{ij}$. Let $\widehat{\beta}_{ij}$ be the lasso
estimator in \eqref{eq:lasso_beta}, or any other consistent estimator with the
same order of the estimation error, and let
\begin{align}
	\label{eq:upsilon_hat_tilde}
	\widetilde{\Upsilon}_j =
	\frac{1}{T} \sum_{t=1}^{T}
	\widehat{z}^*_j(t)  \widehat{z}_j(t)
	\qquad\text{and}\qquad
\widetilde{S}_{ij} =
\frac{1}{T} \sum_{t=1}^{T} \frac{ Y_i(t) - \widehat{\mu}_i - \bm{x}^\top(t) \widehat{\bm{\beta} }_i }{\widehat{\sigma}_i(t)  } \widehat{z}^*_j(t) .
\end{align}
Note that $\widetilde{S}_{ij}$ involves the entire $\widehat{\bm{\beta} }_i$,
compared with $\widehat{S}_{ij}$ which uses $\widehat{\bm{\beta}}_{i,-j}$.
We define the one-step estimator
of $\beta_{ij}$ as
\begin{align}
  \widehat{b}_{ij} = \widehat{\beta}_{ij} - \left( \widetilde{\Upsilon}_j \right)^{-1} \widetilde{S}_{ij} \label{eq:one_step_est_bij}.
\end{align}
Finally, let
\begin{align}
	\widehat{R}_T = T \left (\widehat{b}_{ij} - \beta_{ij} \right )^\top  \widehat{\Upsilon}_j \left (\widehat{b}_{ij} - \beta_{ij} \right ) \label{eq:R_hat_T}.
\end{align}

Our next result shows that $\widehat{R}_T$ converges weakly to $\chi^2_d$.
Therefore, we construct an asymptotically $1-\alpha$ confidence region for $\beta_{ij}$ as
\begin{align}
	\textrm{CR}(\alpha) = \left \{
	\theta :
	T \left (\widehat{b}_{ij} - \theta \right )^\top  \widehat{\Upsilon}_j \left (\widehat{b}_{ij} - \theta \right )\le \chi_d^2(1-\alpha)
	\right \}  \label{eq:CI}.
\end{align}

\begin{theorem}\label{theorem3}
Suppose the linear Hawkes process defined in \eqref{eq:linear_hawkes}
satisfies Assumptions \ref{assumption1}-~\ref{assumption4}.
Furthermore $\left(\widehat{\mu}_i, \widehat{\bm{\beta} }_i\right)$ and
$\widehat{\bm{w} }_j$ satisfy Assumption~\ref{assumption_beta} and~\ref{assumption_w}.
If $ s^2 \rho^2 \log p = o\left( T^{1/5} \right)$, then
\begin{align*}
\sup_{x\in \mathbb{R} } \left  |\mathbb{P}(\widehat{R}_T \le x) - F_d(x) \right |
 \le   C_1 p^2 T \exp(- C_2 T^{1/5})
 + C_3 s^2 \rho^2  T^{-1/5}  + C_4 T^{-1/8}   ,
\end{align*}
where $C_k,k=1,\dots,4$, are constants only depending on the model parameter
$(\bm{\mu}, \Theta)$ and the transition kernel function.
\end{theorem}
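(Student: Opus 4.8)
The plan is to relate the confidence-region pivot $\widehat{R}_T$ to the de-correlated score statistic $\widehat{U}_T$ analyzed in Theorem~\ref{theorem1}, so that the already-established weak convergence to $\chi^2_d$ transfers over with the same rate. The starting point is the algebraic identity connecting the one-step estimator to the score. Writing out $\widehat{b}_{ij}-\beta_{ij} = (\widehat{\beta}_{ij}-\beta_{ij}) - \widetilde{\Upsilon}_j^{-1}\widetilde{S}_{ij}$ and expanding $\widetilde{S}_{ij}$ around the true parameters, I expect to obtain
\begin{align*}
\widehat{b}_{ij}-\beta_{ij}
= -\widetilde{\Upsilon}_j^{-1}\left( \frac{1}{T}\sum_{t=1}^T \widetilde{\epsilon}_i(t)\,\widehat{z}^*_j(t) \right) + \text{(remainder)},
\end{align*}
where the remainder collects the cross-terms involving $\widehat{\bm{\beta}}_{i,-j}-\bm{\beta}_{i,-j}$, $\widehat{\mu}_i-\mu_i$, and the scaling error $\widehat{\sigma}_i(t)-\sigma_i(t)$ — precisely the quantities shown to be asymptotically negligible in the proof of Theorem~\ref{theorem1}. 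The leading term is, up to the $\widetilde{\Upsilon}_j$ versus the score normalization, exactly $-\widetilde{\Upsilon}_j^{-1}\widehat{S}_{ij}$ (note $\widetilde{S}_{ij}$ and $\widehat{S}_{ij}$ differ only by the inclusion of the $j$th coordinate, whose contribution vanishes because $\widehat{z}^*_j(t)$ is orthogonal to $\widehat z_j(t)$ by construction in Step~3). Plugging this into $\widehat{R}_T = T(\widehat{b}_{ij}-\beta_{ij})^\top \widehat{\Upsilon}_j(\widehat{b}_{ij}-\beta_{ij})$ gives, to leading order,
\begin{align*}
\widehat{R}_T = T\,\widehat{S}_{ij}^\top \widetilde{\Upsilon}_j^{-1}\widehat{\Upsilon}_j\widetilde{\Upsilon}_j^{-1}\widehat{S}_{ij} + \text{(remainder)}.
\end{align*}

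The next step is to show $\widetilde{\Upsilon}_j$ and $\widehat{\Upsilon}_j$ are interchangeable to the required order. Both are empirical versions of $\Upsilon_j = \E[(z^*_j(t))^2]$: $\widehat{\Upsilon}_j$ uses $(\widehat z^*_j(t))^2$ while $\widetilde{\Upsilon}_j$ uses $\widehat z^*_j(t)\widehat z_j(t)$, and since $\widehat z_j(t) - \widehat z^*_j(t) = \mat{1 & \widehat{\bm{z}}^\top_{-j}(t)}\widehat{\bm{w}}_j$ is orthogonal (in the empirical inner product) to $\widehat z^*_j(t)$ by the KKT/normal equations of the lasso fit in Step~2 — or more precisely, nearly orthogonal up to the lasso bias — the difference $\widetilde\Upsilon_j - \widehat\Upsilon_j$ is controlled by $\|\widehat{\bm{w}}_j - \bm{w}^*_j\|$ times empirical second moments of $\widehat{\bm{z}}_{-j}(t)$. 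Here I would invoke the concentration inequalities for the second-order statistics of the integrated process (Lemma~\ref{lemma_concen_ineq_x2} and the bounded-eigenvalue results) together with Assumption~\ref{assumption_w}, exactly as in the proof of Theorem~\ref{theorem1}, to conclude $|\widetilde\Upsilon_j^{-1}\widehat\Upsilon_j\widetilde\Upsilon_j^{-1} - \Upsilon_j^{-1}| = O_P(s^2\rho^2 T^{-2/5} + \cdots)$, which is dominated by the claimed rate once squared into $T$. Consequently $\widehat{R}_T = T\,\widehat{S}_{ij}^\top\Upsilon_j^{-1}\widehat{S}_{ij} + o_P(\cdot) = \widehat{U}_T + o_P(\cdot)$, and the bound follows from Theorem~\ref{theorem1} applied at $\Theta$ with $\beta_{ij}$ the true value.

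The main obstacle I anticipate is the careful bookkeeping of the remainder term — specifically showing that the error from using $\widehat{\bm{\beta}}_i$ (the full vector, including the $j$th coordinate $\widehat\beta_{ij}$, which enters $\widetilde{S}_{ij}$) rather than only the nuisance coordinates does not contaminate the pivot. Unlike $\widehat{S}_{ij}$, the statistic $\widetilde{S}_{ij}$ contains the term $-\widehat{z}^*_j(t)\, x_j(t)(\widehat\beta_{ij}-\beta_{ij})/\widehat\sigma_i(t)$; this is what turns the score into the one-step correction, and one must verify that after multiplication by $\widetilde\Upsilon_j^{-1}$ it exactly cancels the $\widehat\beta_{ij}-\beta_{ij}$ piece of $\widehat b_{ij}-\beta_{ij}$ up to a higher-order term involving $(\widehat\Upsilon_j - \widetilde\Upsilon_j)(\widehat\beta_{ij}-\beta_{ij})$, which is $O_P(\text{rate}\times T^{-2/5})$ and hence negligible. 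The rest of the remainder — the heteroscedasticity error from $\widehat\sigma_i(t)$, the $\ell_1$/$\ell_2$ trade-offs in bounding $\sum_t \widehat z^*_j(t)\bm{z}_{-j}^\top(t)(\widehat{\bm\beta}_{i,-j}-\bm\beta_{i,-j})$ via Hölder, and the martingale CLT contributing the $T^{-1/8}$ term — is handled by essentially reproducing the corresponding lemmas behind Theorem~\ref{theorem1}, so no genuinely new estimate is needed beyond the identity-level manipulation above.
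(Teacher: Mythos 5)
Your overall route is the paper's: reduce $\widehat{R}_T$ to a quadratic form in a centered score, transfer the $\chi^2_d$ limit from Theorem~\ref{theorem1}, and spend the error budget on $\widetilde{\Upsilon}_j-\widehat{\Upsilon}_j$ and the nuisance estimation. However, the parenthetical justification in your second paragraph is wrong and, taken literally, breaks the argument: $\widehat{z}^*_j(t)$ is the residual of $\widehat{z}_j$ regressed on $\widehat{\bm{z}}_{-j}$, so it is (nearly) orthogonal to $\widehat{\bm{z}}_{-j}(t)$, \emph{not} to $\widehat{z}_j(t)$; indeed $\frac{1}{T}\sum_t \widehat{z}^*_j(t)\widehat{z}_j(t)=\widetilde{\Upsilon}_j\to\Upsilon_j>0$ by the very definition \eqref{eq:upsilon_hat_tilde}. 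The $j$th-coordinate term of $\widetilde{S}_{ij}$ therefore does not vanish---it equals $\widetilde{\Upsilon}_j(\widehat{\beta}_{ij}-\beta_{ij})$ up to sign, and this is exactly what cancels the $\widehat{\beta}_{ij}-\beta_{ij}$ piece of $\widehat{b}_{ij}-\beta_{ij}$. Your final paragraph describes this mechanism correctly, contradicting the earlier claim; note moreover that the cancellation is \emph{exact}, with no residue of the form $(\widehat{\Upsilon}_j-\widetilde{\Upsilon}_j)(\widehat{\beta}_{ij}-\beta_{ij})$, precisely because $\widetilde{\Upsilon}_j$ is built from the same empirical products that appear in $\widetilde{S}_{ij}$. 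If the $j$th-coordinate term really vanished, $\widehat{b}_{ij}-\beta_{ij}$ would retain $\widehat{\beta}_{ij}-\beta_{ij}=O_p(T^{-2/5})$ and $\widehat{R}_T$ would diverge like $T^{1/5}$. The clean statement, which is the paper's, is $\widehat{b}_{ij}-\beta_{ij}=-\widetilde{\Upsilon}_j^{-1}\check{S}_{ij}$ with $\check{S}_{ij}$ the score in which the true $x_j(t)\beta_{ij}$ has been subtracted, whence $\widehat{R}_T=\check{S}_{ij}^\top\widetilde{\Upsilon}_j^{-1}\widehat{\Upsilon}_j\widetilde{\Upsilon}_j^{-1}\check{S}_{ij}$ identically.

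Two smaller points. First, $\check{S}_{ij}$ coincides with $\widehat{S}_{ij}$ of \eqref{eq:S_hat} only under $H_0$; Theorem~\ref{theorem3} must hold for an arbitrary true $\beta_{ij}$, so ``apply Theorem~\ref{theorem1} at the true value'' is not quite enough---the paper reruns the $\phi>1/2$ case of Theorem~\ref{theorem2} with $\Delta=0$ to obtain the $\chi^2_d$ limit of $\check{S}_{ij}^\top\widehat{\Upsilon}_j^{-1}\check{S}_{ij}$ away from the null. Second, the comparison of $\widetilde{\Upsilon}_j$ with $\widehat{\Upsilon}_j$ should rest on Lemma~\ref{lemma_Rhat_R0_R} and the continuous-time concentration bound in Lemma~\ref{lemma_concen_ineq_z}, not on Lemma~\ref{lemma_concen_ineq_x2}, which lives in Appendix~C and requires the additional Assumption~\ref{assumption_transit_kernel} on the transition kernel that Theorem~\ref{theorem3} does not impose. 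With these repairs your argument matches the paper's proof.
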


\section{Simulation Studies}\label{sec:sims}

We illustrate finite sample properties of the proposed inference procedure
through extensive simulations. We consider the linear Hawkes process with the
transfer function specified in~\eqref{eq:linear_hawkes_para_transfer}.  For the
connectivity matrix $\Theta = \{\beta_{ij} \}_{1\le i,j\le p}$, we consider
three structures: chain, block and random, with $p=50$ component processes.  The chain structure contains nodes of component processes sequentially connected; the block structure contains 25 blocks with 2 component processes mutually connected within each block; the random structure is created by randomly assigning edges over all possible pairs of the component processes with a total sparsity of about 2\%. Figure~\ref{fig:simu_setting} illustrates the
connectivity matrices under the three graph structures. The background intensity
$\mu_i$ is set to be 0.2 and the scale of non-zero elements $\beta_{ij}$ is set
to be 0.3. The transfer kernel function $k_{ij}(t)$ is chosen to be $\exp(-t)$.
This setting satisfies our assumptions of a stationary Hawkes process.

To assess the performance of our method, we test each of the $p^2$ coefficients
in the connectivity matrix. We calculate the type-I error (i.e. the
rejection rate among zero coefficients) and the power (i.e. the rejection rate
among non-zero coefficients). We also investigate the empirical coverage
of the 95\% confidence intervals for zero and non-zero coefficients.
We consider experiments lengths $T \in \{200, 1000, 2000\}$. As a benchmark, we compare the performance of our methods against an oracle procedure, which knows what coefficients are non-zero.

Figure~\ref{fig:hdi_result} illustrates the simulation results for chain, block
and random structure separately. It can be seen that as the experiment length
increases, our test properly controls the type-I error rate. Moreover, the 95\%
confidence intervals have reasonable converge. Finally, our test also achieves
power close to the oracle procedure.

	\begin{figure}[!t]
		\centering
		\includegraphics[width=1\linewidth, clip=TRUE, trim=0mm 10mm 0mm 0mm]{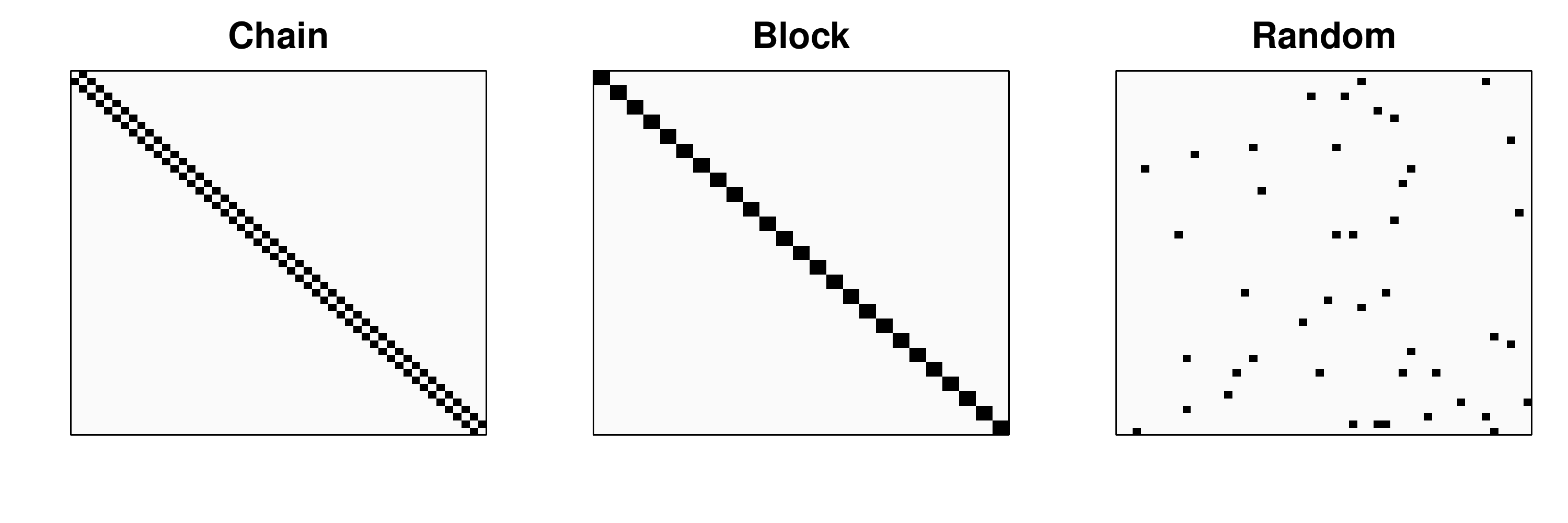}
		\caption{Connectivity matrices under chain, block and random graph structures. Zero coefficients are shown in gray and non-zero coefficients are shown in black.}
		\label{fig:simu_setting}
	\end{figure}

	\begin{figure}[!ht]
		\centering
		\includegraphics[width=1\linewidth]{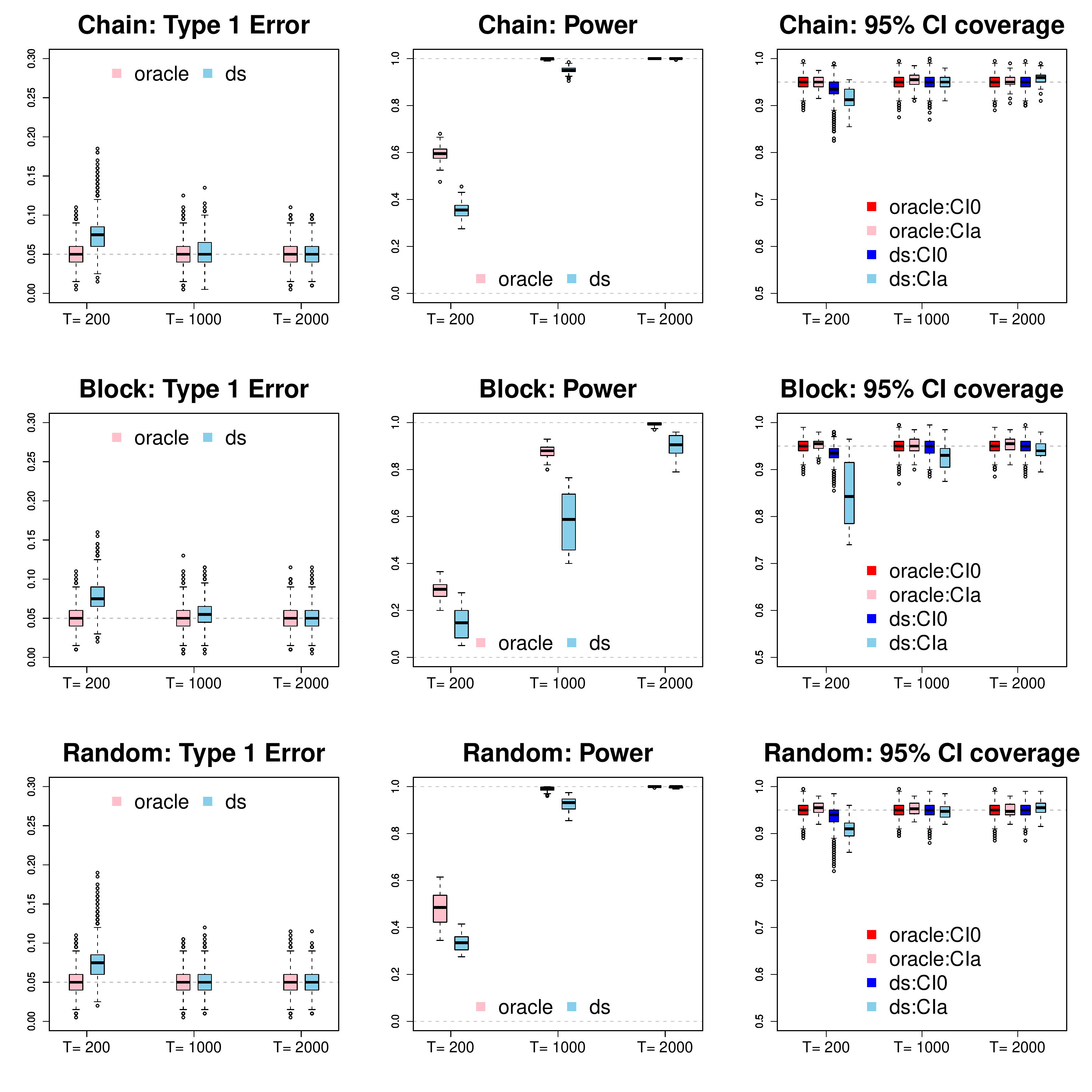}
		\caption{Type-I errors, powers and coverage of confidence intervals under chain, block and random graph structures.
		The \texttt{oracle} corresponds to the score test under the true model
		with known zero coefficients and \texttt{ds} corresponds to the de-correlated
		score test with nuisance coefficients.
		In the last column, CI0:95\% and CIa:95\% correspond to the coverage of
		confidence intervals for zero and non-zero coefficients, respectively.}
		\label{fig:hdi_result}
	\end{figure}

\section{Application}\label{sec:data}

We consider the task of learning the functional connectivity network among
population of neurons, using the spike train data from \citet{Boldingeaat6904}.
In this experiment, spike times are recorded at 30 kHz on a region of the mice
olfactory bulb (OB), while a laser pulse is applied directly on the OB cells of
the subject mouse. The laser pulse has been applied at increasing intensities
from 0 to 50 ($mW/mm^2$). The laser pulse at each intensity level lasts 10
seconds and is repeated 10 times on the same set of neuron cells of the subject
mouse. 

The experiment in total collects spike train data on 23 mice. We consider the spike train data collected at two intensity levels, 0 $mW/mm^2$ and 20 $mW/mm^2$, in the subject mouse with the most neurons (25 neurons). 
In particular, we use the spike train data from one laser pulse at each intensity level. Since one laser pulse spans 10 seconds and the spike train data is recorded at 30 kHz, there are 300,000 time points per experimental replicate. We apply our inference procedure separately for each intensity level, and obtain the estimated connectivity coefficients and the corresponding 95\% confidence interval for the 25-neuron network.

Figure~\ref{fig:dataexample} illustrates the estimated connectivity coefficients that are specific to each laser condition in a graph representation, where each node represents a neuron and a directed edge indicates a statistically significant estimated connectivity coefficient.
Compared with the control (0 $mW/mm^2$ laser) we see more condition-specific edges as laser is applied (at 20 $mW/mm^2$). This agrees with the observation by neuroscientists that the OB response is sensitive to the intensity level of the external stimuli \citep{Boldingeaat6904}.
Figure~\ref{fig:dataexample} also shows the 95\% confidence interval for 12 unique edges with largest estimated connectivity coefficients in one of the conditions. As expected, the confidence intervals corroborate with testing results and provide additional insight into differences in connectivity coefficients. 

\begin{figure}[t]
	\centering
	\includegraphics[width=1\linewidth, clip=TRUE, trim=0mm 0mm 0mm 0mm]{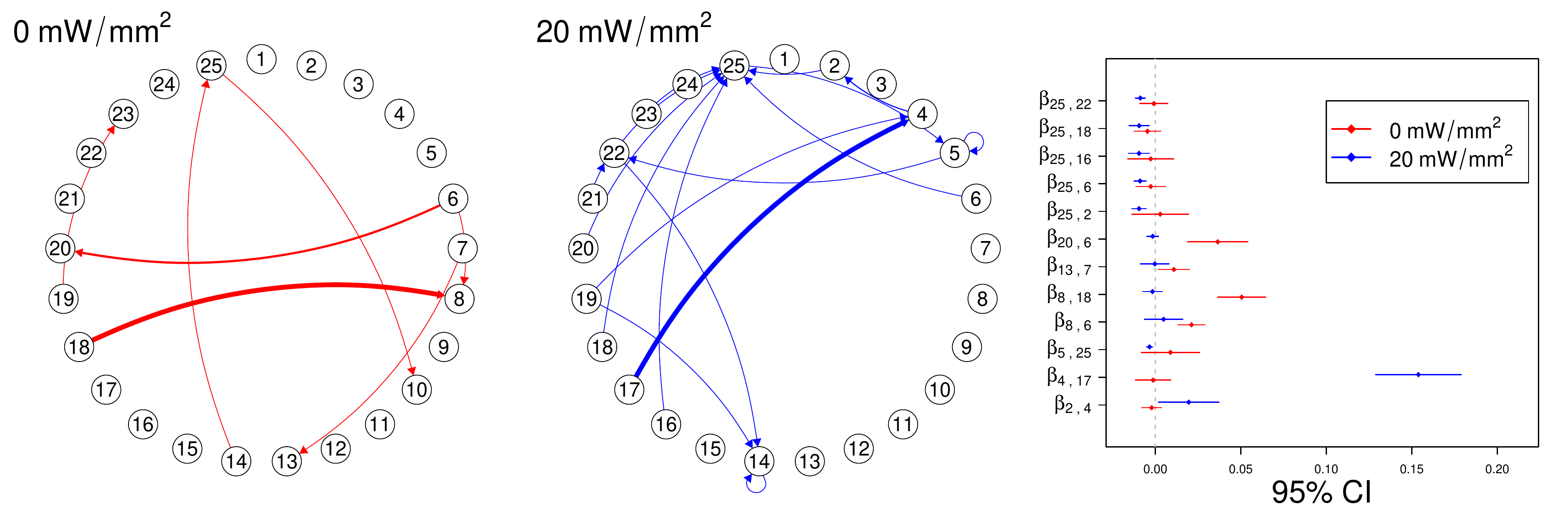}
	\caption{Estimated functional connectivities among neuronal populations
	using the spike train data from \citet{Boldingeaat6904}. In the condition-specific connectivity
	 graphs, 
	  red edges are unique to 0 $mW/mm^2$,
	 and blue edges are unique to 20 $mW/mm^2$. The last plot
	 shows 95\% confidence intervals for 12 unique edges with largest estimated
	  connectivity coefficients in one of the conditions. 
  }
		\label{fig:dataexample}
	\end{figure}

As discussed in Section~\ref{sec:theory}, our inference procedure is asymptotically valid. That means with large enough samples, if the other assumptions in Section~\ref{sec:theory} are satisfied, the type-I error should be controlled at the nominal level. Assessing the validity of the assumptions and estimating the type-I error in real data applications is challenging. However, we can verify the sample size requirement by estimating the type-I error rate in a `permuted’ data set where each neuron’s spike train is permuted. This permutation destroys both the connections between neurons and also the temporal dependence in each neuron. As a result, the neuronal connectivity network corresponding to this data set contains no edges. Moreover, some of the other assumptions in Section~\ref{sec:theory}  --- e.g. the sparsity of $\bm{w}^*_j$ and $\bm{\beta}_i$ and the structure of the transition matrix --- are trivially satisfied for this permuted data set. Thus, if the sample size is sufficient and the other assumptions are satisfied, we should not reject more than $\alpha=0.05$ of the tests. This is indeed the case: the total rejection rate is 0.32\%, suggesting that the conditions are likely satisfied.

\section{Discussion}\label{sec:disc}

We proposed a statistical inference procedure with theoretical guarantees for
high-dimensional linear Hawkes processes. To overcome the challenges arising
from the dependence of a Hawkes process on its entire history,  we develop a new
concentration inequality for the first- and second-order statistics of an 
integrated stochastic process; these integrated processes   summarize the entire
history for each component. We combine this new  concentration inequality with a
recent martingale central limit theorem,  to give an upper bounds for the
convergence rate of the test statistics.  We also provide a procedure for
constructing confidence intervals for the parameters.
Our results establish the first inferential framework for high-dimensional point
processes.

We considered a parametric transition function for the Hawkes process. Given the
complex nature of a point process, one may consider modeling  the transition
function nonparametrically and learn the form  adaptively from data. In
addition, since non-linear link functions are often used when analyzing spike
train data \citep{PANINSKI2007, Pillow2008},  it would also be of interest to
develop statistical inference procedure for non-linear Hawkes processes.

\bibliographystyle{rss.bst}
\bibliography{ref.bib}

\clearpage
\section*{Appendix~A: Proof of Main Results}
\label{sec:proof_outline_detail}


Before presenting the formal proof of Theorems~\ref{theorem1}--~\ref{theorem3}, we outline the key technical steps in this section.
We also briefly discuss key steps in the proof of lemmas used to prove the theorems, as well as auxiliary lemmas, which are presented in Appendix~B.
Recall that $S_{ij}$ is the de-correlated score defined in \eqref{eq:S}, $\Upsilon_j$, defined in \eqref{eq:Upsilon}, is the covariance of $z^*_j(t)$, where $z^*_j(t)$ is the scaled version of the design column $x_j(t)$ after removing its projection onto the other columns. Here $x_j(t)$, defined in \eqref{eq:design_column_xt}, is an integrated stochastic process that summarizes the past events of the $j$th feature of the multivariate Hawkes process. For theoretical convenience, when calculating $S_{ij}$, we scale $x_j(t)$ by the standard deviation of the $i$th process at time $t$, $\sigma_i(t)$ (see details in Section \ref{sec:test}).

\paragraph{Theorem~\ref{theorem1}:}
This theorem establishes the convergence of the test statistics $\widehat{U}_T$ \eqref{eq:U_hat_T} to a $\chi^2$-distribution under the null hypothesis. While the result is comparable to that in \citet{Neykov2018unified} and \citet{Zheng2018}, in our case $\widehat{U}_T$ is a function of the estimate of the time-varying variance of the point process, $\widehat{\sigma}^2(t)$. Proving the convergence in this case is different and requires additional care. To this end, we (i) show the convergence in probability of $\widehat{U}_T$ to the test statistic $\widehat{U}^0_T$, which is defined similar to $\widehat{U}_T$ but with $\widehat{\sigma}^2(t)$ replaced with the true $\sigma^2(t)$; (ii) show that $\widehat{U}^0_T$ converges in probability to $U_T$; and (iii) establish that $U_T$ weakly converges to a $\chi^2$-distribution.
%
Next, we provide some details on each of these steps.

To show the weak convergence of $U_T$ to a $\chi^2$-distribution under the null hypothesis, we adopt the recently developed martingale central limit theorem (CLT) \citep{Grama2006, Zheng2018}, which is given as a special case of Proposition~\ref{prop_clt} (for $\Delta = 0$).

To show the convergence of $\widehat{U}^0_T$ to $U_T$, we expand the difference in terms of differences between $\widehat{S}^0_{ij}$ and $S_{ij}$, and $\widehat{\Upsilon}^0_j$ and $\Upsilon_j$, and bound each term. Here, $\widehat{S}^0_{ij}$ and $\widehat{\Upsilon}^0_j$ are estimates of $S_{ij}$ and $\Upsilon_j$ but with the true $\sigma^2(t)$.
By the construction of the decorrelated score \citep{Neykov2018unified}, bounding the difference between $\widehat{S}^0_{ij}$ and $S_{ij}$ is equivalent to evaluating the estimation error of the lasso estimators of $\left(\mu_i,\bm{\beta}_i \right)$ and $\bm{w}^*_j$. Bounds on these estimation errors are given in Lemmas~\ref{Lemma_beta_lasso} and~\ref{Lemma_w_lasso}, respectively, in both $\ell_1$- and $\ell_2$-norms.
To establish these bounds, we show that the restricted eigenvalue condition \citep{bickel2009} is satisfied with high probability in our case, by using the concentration bounds for the first and second order statistics of $\bm{x}(t)$ (shown in Lemma~\ref{lemma_concen_ineq_x}). We also show that the eigenvalue of the covariance matrix of $\bm{x}(t)$ are bounded under Assumptions~\ref{assumption1}--\ref{assumption4} (shown in Proposition~\ref{prop_eigen}).
To bound the difference between $\widehat{S}^0_{ij}$ and $S_{ij}$, we need to bound the $\ell_\infty$ norms of (i) the first and the second order statistics of $z^*_{j}(t)$, and (ii) the average of the scaled version of the residual $\epsilon_i(t)$ and its inner product with the scaled integrated feature, $z^*_j(t)$.
The bounds for (i) are presented in Lemma~\ref{lemma_concen_ineq_z} using the concentration bound for the first and second order statistics of $\bm{x}(t)$ (shown in Lemma~\ref{lemma_concen_ineq_x}). The bounds for (ii) are given in Lemma~\ref{lemma_concen_ineq_z} by a direct application of a martingale inequality \citep{vandegeer1995} using the fact that $x_j(t), \sigma_i(t)$ are bounded under Assumptions~\ref{assumption3} and~\ref{assumption4}.
The proof for the convergence of $\widehat{\Upsilon}^0_j$ to $\Upsilon_j$, which is shown in Lemma~\ref{lemma_Rhat_R0_R}, is similar to that for the convergence of $\widehat{S}^0_{ij}$ to $S_{ij}$; however, we only need the estimation error bound for $\bm{w}_j$ as the construction of $\Upsilon_j$ only involves the integrated process (or the design columns) $\bm{x}(t)$.

To complete the proof of Theorem~\ref{theorem1}, we establish the convergence of $\widehat{U}_T$ to $\widehat{U}^0_T$ in Lemma~\ref{lemma_U0_U}.
The proof of the lemma first expands the difference between $\widehat{U}_T$ and $\widehat{U}^0_T$ in terms of differences between (i) $\widehat{S}^0_{ij}$ to $\widehat{S}_{ij}$ \eqref{eq:S_hat}, and (ii) $\widehat{\Upsilon}^0_j$ and $\widehat{\Upsilon}_j$ \eqref{eq:upsilon_hat}. (ii) can be bounded using Lemma~\ref{lemma_Rhat_R0_R}. To quantify the bound of (i), we first replace $\sigma^2(t)$ by $\mu_i + \bm{x}^\top(t) \bm{\beta}_i$ in $\widehat{S}^0_{ij}$ and then bound the term by carefully applying the bound for convergence of $(\widehat{\mu}_i,\widehat{\bm{\beta}}_i)$ to $(\mu_i, \bm{\beta}_i)$ by Assumption~\ref{assumption_beta} which is also proved to be satisfied in Lemma~\ref{Lemma_beta_lasso}.

%
%
\paragraph{Theorem~\ref{theorem2}:}
This theorem characterizes the behavior of the test statistics $\widehat{U}_T = \lVert  \widehat{V}_T \rVert_2^2$ \eqref{eq:U_hat_T} under the alternative hypothesis. More specifically, it shows that
for different signal strengths characterized by parameter $\phi$, the test statistic behaves differently around the cutoff of $\phi = 1/2$. This is because (i) $\widehat{U}_T$ converges to $U_T= \lVert  V_T \rVert_2^2$ defined in \eqref{eq:U_T} in probability (see the following paragraphs for the proof outline), and (ii) under the alternative, we have $\mathbb{E}(V_T) = O(T^{1/2-\phi}\Delta)$.
If the alternative signal is too small with $\phi > 1/2$, the expectation of $V_T$ converges to 0 as $T$ increases. In this case, we can show that $\widehat{ U}_T$ converges weakly to a central $\chi^2$ distribution, and is hence indistinguishable from null. In contrast, if the signal strength is too large with $\phi <1/2$, then $V_T$ diverges as $T$ goes to $\infty$. Finally, when the alternative signal is moderate with $\phi = 1/2$, the expectation of $V_T$ is a constant, $\widetilde{\Delta} = \Upsilon^{1/2}_j\Delta$. In this case, we apply Proposition~\ref{prop_clt} to show the weak convergence of $\widehat{U}_T$ to a non-central $\chi^2$ distribution with non-centrality parameter $\lVert \widetilde{\Delta} \rVert^2_2$.

For the case of $\phi > 1/2$, we use a similar strategy as that for the proof of Theorem~\ref{theorem1} under the null hypothesis. More specifically, we split the proof into two parts by bounding $\widehat{U}_T - \widehat{U}^0_T$ and $\widehat{U}^0_T - U_T$. $U_T$ is shown to follow a central $\chi^2$ distribution in Proposition~\ref{prop_clt}. The bound of $\widehat{U}_T - \widehat{U}^0_T$ is given in Lemma~\ref{lemma_U0_U}. Similar to in the proof of Theorem~\ref{theorem1}, the key in bounding $\widehat{ U}^0_T - U_T$ under the alternative hypothesis is also to bound $\sqrt{T} \left\lVert \left( \Upsilon_j^0 \right)^{-1/2}(\widehat{S}^0_{ij} - S_{ij} ) \right\rVert_2$. The difference from the proof of Theorem~\ref{theorem1} is an extra term involving $T^{1/2-\phi}\Delta$ in the bound of $\lVert \widehat{S}^0_{ij} - S_{ij} \rVert_2$ because under the alternative hypothesis, $\beta_{ij} = T^{-\phi}\Delta$.
Such difference leads to an extra term of order $ O\left( s^2 \rho^2 \left( T^{ \frac{7}{5} - 2\phi } \right)  \right)$ in bounding $\widehat{ U}^0_T - U_T$.
As a result, we reach a modified rate of weak convergence of $\widehat{ U}_T$ in this case compared to the rate in Theorem~\ref{theorem1}.

For the case of $\phi = 1/2$, our proof uses a strategy similar to the proof of Theorem~\ref{theorem1}. Specifically, we split the proof by bounding $\widehat{U}^0_T$ to $\widehat{U}_T$, and $\widehat{U}^0_T$ to $ V_T + \widetilde{\Delta}$. The proof of the first part is the same as that for Theorem~\ref{theorem1} and is given in Lemma~\ref{lemma_U0_U}. Although the second part involves non-zero $\widetilde{\Delta}$, the proof strategy is similar and involves explicitly expressing the difference between $\widehat{U}^0_T$ and $ V_T + \widetilde{\Delta}$ in terms of the difference between $\widehat{S}^0_{ij}$ and $S_{ij}$, the difference between $\widehat{\Upsilon}^0_j$ and $\widehat{\Upsilon}_j$, and an extra item involving $\Delta$. The first two items are bounded using the same strategy as in Theorem~\ref{theorem1}. The term involving $\Delta$ is also bounded since the leading term involves the $\ell_\infty$ bound of
the first and the second order statistics of $\bm{z}(t)$,
which are bounded by Lemma~\ref{lemma_concen_ineq_z}. We complete the proof by applying Proposition~\ref{prop_clt} to show that $V_T + \widetilde{\Delta}$ converges weakly to $\chi^2_{d, \lVert \widetilde{\Delta} \rVert^2_2}$.

For the case of $ \phi < 1/2$, the test statistic diverges in probability as $T$ goes to $\infty$. Therefore, we derive a lower bound for $\widehat{U}_T$ which requires quantifying the lower bound of $\widehat{S}_{ij} - S_{ij}$ and the upper bound of $\lVert V_T \rVert_2$.
Due to the estimate on the unknown variance involved in $\widehat{S}_{ij}$, we split the difference in the first part into (i) $\widehat{S}^0_{ij} - S_{ij}$ and (ii) $\widehat{S}_{ij} - \widehat{S}^0_{ij}$. We show that the lower bound of part (i) is in order of $T^{1/2-\phi}$ under the setting of the alternative of $\phi < 1/2$.  To bound part (ii), similar as before, we expand the term into components involving differences between the estimates of $(u_i,\bm{\beta}_i)$ and $\bm{w}^*_j$, whose error bounds are assumed in Assumptions~\ref{assumption_beta} and~\ref{assumption_w}, and parts with the first and second order statistics of $\bm{x}(t)$ and $\bm{z}(t)$ given in Lemmas~\ref{lemma_concen_ineq_x} and~\ref{lemma_concen_ineq_z}. To quantify the upper bound of $\lVert V_T \rVert_2$, we apply a result on the tail bound of the central $\chi^2$ distribution since $\lVert  V_T \rVert^2_2$ follows a central $\chi^2$-distribution by Proposition~\ref{prop_clt}; this bound is also used to prove Theorem~3.2 of \citet{Zheng2018}.
%
%

\paragraph{Theorem~\ref{theorem3}:}
Recall from \eqref{eq:one_step_est_bij} that $\widehat{b}$ is the one-step debiased lasso estimates of $\bm{\beta}_i$. This theorem shows that $\widehat{R}=  T \left(\widehat{b}_{ij} - \beta_{ij}\right)^\top  \widehat{\Upsilon}_j \left(\widehat{b}_{ij} - \beta_{ij}\right)$ converges weakly to a central $\chi^2$ distribution. This allows us to construct the optimal confidence regions in \eqref{eq:CI}, since the asymptotic variance of $\widehat{R}$ is close to the inverse of the partial information, $\Upsilon_j=  \textrm{Cov}\left (z^*_j(t) \right ) \eqref{eq:Upsilon}$.

For this proof, we introduce a new quantity $\check{S}_{ij}$:
\begin{align*}
\check{S}_{ij} &=\frac{1}{T} \sum_{t=1}^{T} \frac{ Y_i(t) - x_j(t)\beta_{i,j} - \widehat{\mu}_i - \bm{x}^\top_{-j}(t) \widehat{\bm{\beta}}_{i,-j}  }{\widehat{\sigma}_i(t)  } \widehat{ z}^*_j(t)  \\
&=\frac{1}{T} \sum_{t=1}^{T} \frac{ \epsilon_i(t) + ( \mu_i - \widehat{\mu}_i)
	+\bm{x}^\top_{-j}(t)  (\bm{\beta}_{i,-j} - \widehat{\bm{\beta}}_{i,-j} )  }{\widehat{\sigma}_i(t)  } \widehat{ z}^*_j(t) ,
\end{align*}
which is equivalent to the $\widehat{S}_{ij}$ defined in \eqref{eq:S_hat} under the null, and similar to that under the alternative, except that here $\Delta =0$ or $\phi = \infty$.
Letting $\check{U}_T = \check{S}_{ij}^\top \widehat{\Upsilon}_j^{-1} \check{S}_{ij}$, we then show that $\check{U}_T$ weakly converges to a $\chi^2$ distribution under both  null and alternative hypotheses. For the null, we follow Theorem~\ref{theorem1}, while for the alternative, we repeat the steps in Theorem~~\ref{theorem2} (case $\phi > 1/2$) but replace $\Delta = 0$ or $\phi=\infty$ throughout.

Since by the construction of $\widehat{R}_T$, we have $\widehat{R}_T = \left( \check{S}_{ij} \right)^\top  \left(\widetilde{\Upsilon}_j   \right)^{-1} \widehat{\Upsilon}_j \left(\widetilde{\Upsilon}_j   \right)^{-1} \check{S}_{ij}$, what is left is to bound the difference between $\widehat{R}_T$ and $\check{U}_T$. This requires to bound $ \widetilde{\Upsilon}_j - \widehat{\Upsilon}_j$ where $\widetilde{\Upsilon}_j$ and $\widehat{\Upsilon}_j$ are defined in \eqref{eq:upsilon_hat_tilde} and \eqref{eq:upsilon_hat}, respectively. To obtain such a bound with the unknown variance, we define $\widetilde{\Upsilon}^0_j$ and $\widehat{\Upsilon}^0_j$ similar to $\widetilde{\Upsilon}_j$ and $\widehat{\Upsilon}_j$, but with $\widehat{\sigma}^2$ replace by the true $\sigma^2(t)$. We then separately bound (i) $\widetilde{\Upsilon}_j -
\widetilde{\Upsilon}^0_j$, (ii) $\widetilde{\Upsilon}^0_j- \widehat{\Upsilon}^0_j$, and (iii) $\widehat{\Upsilon}^0_j - \widehat{\Upsilon}_j$: (i) is bounded according to the consistency of $\widehat{\sigma}^2(t)$ for the true $\sigma^2(t)$ following a similar strategy used in Lemma~\ref{lemma_concen_ineq_z}; (ii) is bounded by carefully evaluating the lasso estimation error of $\bm{w}^*_j$; (iii) is bounded by Lemma~\ref{lemma_concen_ineq_z}.


\paragraph{Key steps in proof of lemmas:}
Proofs of supporting lemmas crucially rely on properties of the integrated stochastic process $\bm{x}(t)=\mat{x_1(t)&\cdots&x_p(t)}$, which summarizes all the past event history of the Hawks process. More specifically, the main theorems rely on the bounded eigenvalue of $\Upsilon_x= \textrm{Cov}(\bm{x}(t))$ (Proposition~\ref{prop_eigen}) and the concentration bounds on the first and second order statistics of scaled $\bm{x}(t)$ (i.e. $\bm{z}(t)$) (Lemmas~\ref{lemma_concen_ineq_x} and~\ref{lemma_concen_ineq_z}). These results are used to show the condition of the martingale CLT (Proposition~\ref{prop_clt}), the restrict eigenvalue (RE) condition used in the estimation consistency of lasso (Lemma~\ref{Lemma_beta_lasso} and~\ref{Lemma_w_lasso}), and the convergence rate of $\widehat{\Upsilon}_j$ (Lemma~\ref{lemma_Rhat_R0_R}).

A key challenge in establishing results for the integrated process stems from the complicated (and non-Markovian) dependence structure of the Hawkes process. In particular, each column of $\bm{x}(t)$, $x_j(t)$, is a stochastic process with non-trivial serial dependence due to the integration over the past history.
To show that the eigenvalues of $\Upsilon_x= \textrm{Cov}(\bm{x}(t))$ are bounded, we show that the eigenvalues of the cross-variance for a stationary stochastic process can be bounded by its spectral density in the Hawkes process. The proof follows a similar strategy as \citet{Basu2015} in the VAR models, but is specialized for the Hawkes process in a continuous time domain with an integrable transition kernel. Next, we establish a relationship between the spectral densities of the cross-covariance and the transition matrix of the Hawkes process by generalizing Theorem~1 in \citet{Bacry2011} or Theorem~3 in \citet{Etesami2016} (where they assume a non-negative transfer function) to real-value transfer functions.
At last, we utilize the martingale inequality in \citet{vandegeer1995} and the concentration inequality on weak dependent samples in \citet{Shizhe2017} to establish the concentration bounds on the first and second order statistics of $\bm{x}(t)$ in Lemma~\ref{lemma_concen_ineq_x}.
%
The final concentration bound for $\bm{z}(t)$ in Lemma~\ref{lemma_concen_ineq_z} directly follows from the deviation bounds for $\bm{x}(t)$ but requires a lengthy derivation, due to the scaling factor of the time-varying variance of the point process, $\sigma^2(t)$, in $\bm{z}(t)$.


For ease of notations, we prove the result for testing an univariate
$\beta_{ij}$; i.e., $d=1$. Our proof can be extended to $d>1$ by replacing the
scalars by vectors or matrices in the corresponding norms when needed. In the
following, we use $C_k$, $c_k$ with some subscript $k$ to represent constants
that only depend on the model parameter $(\bm{\mu},\Theta)$ and the
transition kernel function.

\subsection*{Proof of Theorem~\ref{theorem1}}

Our proof strategy is to relate $\widehat{U}_T$ in \eqref{eq:U_hat_T} to $U_T$
in \eqref{eq:U_T}, and show that $U_T$ converges weakly to $\chi_d^2$.
Directly comparing $\widehat{U}_T$ with $U_T$ is difficult due to the unknown
time-varying variance $\sigma_i^2(t)$ involved. Therefore, we start by introducing
$\widehat{U}^0_T$ and then show that $\widehat{U}_T \approx \widehat{U}^0_T$.
Let
\begin{align}
  \widehat{S}^0_{ij}
  &= \frac{1}{T}\sum_{t=1}^{T}  \frac{1}{\sigma_i(t)} \left ( Y_i(t) - \widehat{\mu}_i - \bm{x}^\top_{-j}(t)\widehat{ \bm{\beta}}_{i,-j} \right ) \left (x_j(t)/\sigma_i(t)- \widehat{w}_{j0} - \bm{x}^\top_{-j}(t)/\sigma_i(t) \widehat{\bm{w}}_{j,-j}  \right ),
    \label{eq:S_0_hat} \\
  \widehat{\Upsilon}_j^0
  &=  \frac{1}{T} \sum_{t=1}^T
    \left (x_j(t)/\sigma_i(t)- \widehat{w}_{j0} - \bm{x}^\top_{-j}(t)/\sigma_i(t) \widehat{\bm{w}}_{j,-j}  \right )^2,
    \label{eq:Upsilon_hat_0} \\
  \widehat{V}^0_T
  &= \sqrt{T} \left ( \widehat{\Upsilon}^{0}_j\right )^{-1/2} \widehat{S}^0_{ij}
    \label{eq:V_0_hat} , \\
  \widehat{U}^0_T &= \lVert  \widehat{V}^0_T \rVert_2^2 \label{eq:U_0_hat}.
\end{align}
The difference between
$\widehat{U}_T$ and $\widehat{U}_T^0$
is that we replace $\widehat{ \sigma}_i(t) $ by $\sigma_i(t) $.
%
%

For any  $\delta  > 0$, we have
\begin{multline*}
\mathbb{P}\left( \widehat{U}_T \le x\right) - F_d(x)
\le \mathbb{P}( U_T \le x + \delta) + \mathbb{P}\left(\left| \widehat{U}_T - \widehat{U}^0_T \right|> \delta \right)
- F_d(x) \\
\le \left|  \mathbb{P}(U_T \le x + \delta ) - F_d (x+\delta)  \right|
+ F_d(x+ \delta) - F_d(x) + \mathbb{P}\left (\left|\widehat{U}_T - \widehat{U}^0_T  \right|> \delta \right )  ,
\end{multline*}
and
\begin{multline*}
F_d(x) - \mathbb{P}\left ( \widehat{U}_T \le x \right )
\le  \mathbb{P}( \widehat{U}^0_T  > x - \delta) + \mathbb{P}\left (\left|\widehat{U}_T - \widehat{U}^0_T \right|> \delta \right )
- (1-F_d(x)  ) \\
\le \left|  F_d (x-\delta) -  \mathbb{P}(U_T \le x - \delta )   \right|
+ F_d(x) - F_d(x- \delta) + \mathbb{P}\left (\left| \widehat{U}_T - \widehat{U}^0_T  \right|> \delta \right ).
\end{multline*}
Combining the two inequalities gives 
\begin{multline}
\left |\mathbb{P}\left (\widehat{U}_T \le x \right ) - F_d(x) \right |  \le
\sup_{y\in \mathbb{R} } \left |\mathbb{P}\left ( \widehat{U}^0_T \le y \right ) - F_d(y) \right |  \\
+   F_d(x+ \delta ) - F_d(x -\delta)
+  \mathbb{P}\left ( \left |\widehat{U}_T - \widehat{U}^0_T \right | > \delta \right ) .
\label{eq:bound_U_T_Fd}
\end{multline}
Next, we bound
\[
  \left | \mathbb{P}\left(\widehat{U}^0_T \le x\right) - F_d(x) \right|
\]
by showing  $\widehat{U}^0_T \approx U_T \approx \chi^2_d$.
Following a similar deduction as \eqref{eq:bound_U_T_Fd}, for any  $\epsilon > 0$, we have
\begin{multline}
  \left | \mathbb{P}(\widehat{U}^0_T \le x) - F_d(x) \right |
  \le
  \underbrace{ \sup_{y\in \mathbb{R} } | \mathbb{P}( U_T \le y) - F_d(y)  | }_{A} \\
  + \underbrace{ F_d(x+ \epsilon ) - F_d(x -\epsilon) }_{B}
+\underbrace{   \mathbb{P}( |\widehat{U}^0_T - U_T| > \epsilon ) }_{C}.
\label{eq:bound_U0_T_Fd}
\end{multline}
Direct application of Proposition~\ref{prop_clt} shows that
$\text{A} = O\left(  T^{-1/8} \right) $.
Using the fact that a $\chi_d^2$ random variable has bounded density
gives $|\text{B}| =  O\left( \epsilon\right) $.
Thus, we control the term C in the rest of the proof. Notice that
\begin{align}
\left | \widehat{U}^0_T - U_T \right |
=& \left|  T  \left ( \widehat{S}^0_{ij}  \right )^\top  \left ( \widehat{\Upsilon}^0_j \right )^{-1} \widehat{S}^0_{ij}  -
S^T_{ij} \Upsilon_j^{-1} S_{ij} \right| \nonumber \\
\le& \left|  T  \left ( \widehat{S}^0_{ij}  \right )^\top  \left ( \left (\widehat{\Upsilon}^0_j \right )^{-1}  - \Upsilon_j^{-1} \right )\widehat{S}^0_{ij}
+ T   \left ( \widehat{S}^0_{ij}  \right )^\top   \Upsilon_j^{-1} \widehat{S}_{ij}
- S^T_{ij} \Upsilon_j^{-1} S_{ij} \right| \nonumber \\
\le&
\left \lVert  \Upsilon_j^{1/2} \left (\widehat{\Upsilon}^0_j \right )^{-1}   \Upsilon_j^{1/2} - I \right \rVert_\infty
\left \lVert  \sqrt{T} \Upsilon_j^{-1/2} \widehat{S}^0_{ij} \right \rVert_1^2 \nonumber \\
&+   \lVert  \sqrt{T} \Upsilon_j^{-1/2} (S_{ij}- \widehat{S}^0_{ij} ) \rVert_1^2 \nonumber \\
&+ 2 \lVert V_T \rVert_2  \left \lVert  \sqrt{T} \Upsilon_j^{-1/2} (S_{ij}- \widehat{S}^0_{ij} ) \right \rVert_2.
\label{eq: U_T0_UT}
\end{align}
Let $E = \sqrt{T} \Upsilon_j^{-1/2} (S_{ij}- \widehat{S}^0_{ij} )$,
where $\Upsilon_j$ is defined in \eqref{eq:Upsilon}.
 Then
\begin{align}
| \widehat{U}^0_T - U_T |  &\le
\lVert E \rVert_2^2 +  2 \lVert V_T \rVert_2 \lVert E \rVert_2 +
\left \lVert \Upsilon_j^{1/2} \left( \widehat{\Upsilon}^0_j \right)^{-1}  \Upsilon_j^{1/2} - I  \right \rVert_\infty
\left ( \lVert V_T \rVert_2 + \lVert E \rVert_2 \right )^2 .
\label{eq:bound_U0_UT_null}
\end{align}
Next, we provide probabilistic bounds for $\lVert E \rVert_2$, $\lVert V_T \rVert_2$,
and
$\left \lVert \Upsilon_j^{1/2} \left( \widehat{\Upsilon}^0_j \right)^{-1}   \Upsilon_j^{1/2} - I \right  \rVert_\infty$
in order to bound $| \widehat{U}^0_T - U_T |$.
First, from Proposition~\ref{prop_eigen} and
Lemma~\ref{lemma_S0_S},
we get
$ \lVert E \rVert_2^2  = O_p \left(  s^2\rho^4  T^{-3/5} \right) $, with probability at least $1- c_1 p^2 T \exp( - c_2 T^{1/5})$.
Second, Lemma~\ref{lemma_VT_ineq} and Proposition~\ref{prop_clt} lead
$ \mathbb{P}(\lVert V_T \rVert_2 > T^{1/10} ) = O\left( T^{-1/8}   \right) . $
Third, Lemma~\ref{lemma_Rhat_R0_R} gives
\begin{align*}
\left \lVert  \Upsilon_j^{1/2} \left (\widehat{\Upsilon}^0_j\right )^{-1}  \Upsilon_j^{1/2} - I \right \rVert_\infty = O_p( s^2\rho^2 T^{-2/5} ) ,
\end{align*}
with probability at least $1- c_3 p^2 T \exp( - c_4 T^{1/5})$.
Therefore,
\begin{align}
\mathbb{P}\left( | \widehat{U}^0_T - U_T |  > c_5  s^2 \rho^2 T^{-1/5} \right)
\le c_6 p^2 T \exp( - c_7 T^{1/5}) + c_8 T^{-1/8}  .
\end{align}
Turning back to \eqref{eq:bound_U0_T_Fd} with the bounds of $A$, $B$, and $C$ gives us
\begin{align}
\sup_{x \in \mathbb{R} } | \mathbb{P}(\widehat{U}^0_T \le x) - F_d(x)  | \le
c_6 p^2 T \exp( - c_7 T^{1/5}) + c_8 T^{-1/8} + c_9  s^2 \rho^2 T^{-1/5} .
\label{eq:U_hat_0_to_chisq}
\end{align}
By the bounded density of $\chi^2$ distribution
\[
  \left| F_d(x+ \delta ) - F_d(x -\delta) \right | \le C(d) \delta.
\]
Finally, Lemma~\ref{lemma_U0_U} gives us
\begin{align*}
\mathbb{P}\left( | \widehat{U}_T - \widehat{U}^0_T |  > c_{10}  \rho T^{-1/5} \right)
\le c_{11} p^2 T \exp( - c_{12} T^{1/5}) + c_{13} T^{-1/8} + c_{14} s^2\rho^2 T^{-1/5} .
\end{align*}
Combining the last three displays gives us a bound on
\eqref{eq:bound_U_T_Fd}, which completes the proof.


\subsection*{Proof of Theorem~\ref{theorem2}}

%

We study the three cases separately.

\noindent
\textbf{Case}:  $\phi > 1/2$. The proof for this case is similar to the proof of
Theorem~\ref{theorem1}. However, due to $\beta_{ij}\ne 0$ under the alternative
hypothesis, we have a modified rate of convergence depending on the scale of $\phi$.
A new bound is needed for  $\left |  \widehat{U}^0_T - U_T \right |$, which is
obtained modifying the bound for the difference between $\widehat{S}^0_{ij}$ and $S_{ij} $.
To be specific,
\begin{align}
\widehat{S}^0_{ij} - S_{ij} &=
( \widehat{\bm{w} }_j  - \bm{w}^*_j )^\top  \frac{1}{T}
\sum_{t=0}^{T-1} \widetilde{\epsilon}_i(t)  \bm{z}^\top _{-j}(t) \nonumber \\
& +  \frac{1}{T} \sum_{t=0}^{T-1} z^*_j
\mat{1 & \bm{z}_{-j}^\top(t) }
\left ( \mat{ \widehat{\mu}_{i} \\ \widehat{\bm{\beta}}_{i,-j} } - \mat{ \mu_{i} \\ \bm{\beta}_{i,-j} } \right )
\nonumber   \\
& - ( \widehat{\bm{w} }_j  - \bm{w}^*_j )^\top
\left (   \frac{1}{T} \sum_{t=0}^{T-1}  \mat{1 \\ \bm{z}_{-j}(t) } \mat{1 & \bm{z}_{-j}^\top(t)  }
\right )
\left ( \mat{ \widehat{\mu}_{i} \\ \widehat{\bm{\beta}}_{i,-j} } - \mat{ \mu_{i} \\ \bm{\beta}_{i,-j} } \right )
\nonumber \\
& - T^{-\phi} \frac{1}{T}\sum_{t=1}^T  \left ( z_j(t) -   \mat{1 & \bm{z}^\top(t) } \widehat{\bm{w} }_j  \right ) z_j(t) \Delta .
\label{eq:th2_S0_S_diff}
\end{align}
The first three terms are bounded by $O_p(s \rho^2 T^{-4/5})$, as shown in the proof of Lemma~\ref{lemma_S0_S} (see \eqref{eq:theorem1_bound_S0_S}).
The last term shows up because under the alternative hypothesis setting,
$\beta_{ij} = T^{-\phi}\Delta \ne 0$.
Now we examine the fourth item in \eqref{eq:th2_S0_S_diff}. By Lemma~\ref{lemma_bounded_terms}, $ \lVert z_j(t) \rVert_\infty =O(1)$. Then,
$$
\left \lVert
\frac{1}{T}\sum_{t=1}^T  \left( z_j(t) -   \mat{1 & \bm{z}^\top(t) } \widehat{\bm{w} }_j  \right)
z_j(t)  \Delta
\right\rVert_2 \le |\Delta| \lVert z_j(t) \rVert^2_\infty = O(1).
$$
Using the same notation as the proof in Theorem~\ref{theorem1}, let
$ E  = \sqrt{T}\left( \Upsilon_j \right)^{-1/2}( \widehat{S}^0_{ij} - S_{ij}) $.
Then, with the bounds in each part of \eqref{eq:th2_S0_S_diff},
\begin{align}
|E | = O_p \left(   s \rho^2T^{-3/10} + T^{1/2-\phi} \right)  ,
 \label{eq:S0_S_phigt0.5}
\end{align}
with probability at least $1- c_1p^2 T \exp(-c_2 T^{1/5})$.
Here we see that $\phi > 1/2$ is important to bound the term;
otherwise, it is not guaranteed that the term is bounded as $T$ increases.

By repeating the same steps in Theorem~\ref{theorem1} with the bound of $|E|$,
we show
\begin{align*}
\left| \widehat{U}^0_T  - U_T \right|  = O_p\left( s^2 \rho^2 \left( T^{-1/5} \vee T^{ \frac{7}{5} - 2\phi } \right)  \right),
\end{align*}
with probability at least
$$
1 - c_3 p^2 T \exp(- c_4 T^{1/5}) - c_5 s^2 \rho^2 \left( T^{-1/5} \vee T^{ \frac{7}{5} - 2\phi } \right)
$$
and
\begin{align*}
\sup_{x\in \mathbb{R}} |   \mathbb{P}(\widehat{U}^0_T \le x) - F_d(x)  |
\le c_3 p^2 T \exp(- c_4 T^{1/5}) + c_5 s^2 \rho^2 \left( T^{-1/5} \vee T^{ \frac{7}{5} - 2\phi } \right) + c_6 T^{-1/8} .
\end{align*}
At last, using \eqref{eq:bound_U_T_Fd} and Lemma~\ref{lemma_U0_U} to bound $|\widehat{U}_T - \widehat{U}^0_T |$, we reach the conclusion.


\noindent
\textbf{Case}: $\phi = 1/2$. The proof strategy here is
to quantify the difference between the cdf of $\widehat{U}^0_T$
and $\chi^2$ distribution.
For any $\epsilon > 0$, we have
\begin{multline*}
\left |   \mathbb{P}(\widehat{U}^0_T \le x) - F_{d, \lVert \widetilde{\Delta} \rVert_2^2 }(x) \right |
\le
\underbrace{ \sup_{y\in \mathbb{R} }
	\left |   \mathbb{P}( \lVert V_T + \widetilde{\Delta} \rVert_2^2 \le y) -  F_{d, \lVert \widetilde{\Delta}\rVert_2^2  }(y) \right | }_{A} \nonumber \\
+ \underbrace{  F_{d, \lVert \widetilde{\Delta} \rVert_2^2 } (x+ \epsilon ) -  F_{d, \lVert \widetilde{\Delta}\rVert_2^2  } (x -\epsilon) }_{B}
+\underbrace{    \mathbb{P}\left ( \left |\widehat{U}^0_T -  \lVert V_T + \widetilde{\Delta} \rVert_2^2 \right| > \epsilon \right ) }_{C} ,
\end{multline*}
where $\widetilde{\Delta} = \Upsilon^{1/2}_j\Delta$, and $\widehat{ U}^0_T$, $V_T$ are  defined in \eqref{eq:U_0_hat} and \eqref{eq:V_T}, respectively.

Lemma~\ref{prop_clt} gives us
$A= O \left( T^{-1/8} \right)$.
By the bounded density of non-central $\chi^2$ distribution, we have
$B = O\left( \epsilon \right)$.
Therefore, in the rest of the proof, we bound part $C$.

Let $E = \widehat{V}^0_T  - V_T - \widetilde{\Delta}$,
where $\widehat{V}^0_T = \sqrt{T} \left(\Upsilon_j \right)^{-1/2} \widehat{S}^0_{ij}$
is defined in \eqref{eq:V_0_hat} and $\widehat{ U}^0_T = \lVert  \widehat{V}^0_T \rVert^2_2$.
Then,
\begin{multline}
\left| \widehat{U}^0_T - \left\lVert V_T + \widetilde{\Delta} \right\rVert_2^2  \right  |
\le
\lVert E \rVert_2^2 +
\left\lVert V_T + \widetilde{\Delta} \right \rVert_2 \lVert E \rVert_2  \\
+
\left \lVert \left( \Upsilon_j \right)^{1/2} \left( \widehat{\Upsilon}^0_j \right)^{-1}  \left( \Upsilon_j\right) ^{1/2} - I \right \rVert_\infty
\left( \left\lVert V_T + \widetilde{\Delta} \right\rVert_2
+   \lVert E \rVert_2 \right)^2.
\label{eq:U0_hat_VT_phi0.5}
\end{multline}
We bound
$\left \lVert \left( \Upsilon_j \right)^{1/2} \left( \widehat{\Upsilon}^0_j \right)^{-1}  \left( \Upsilon_j\right) ^{1/2} - I \right \rVert_\infty $ using Lemma~\ref{lemma_Rhat_R0_R},
as we did in Theorem~\ref{theorem1}.
However, the bounds for $E$ and $V_T + \widetilde{\Delta}$ need to be modified.

Let $\bm{w}_j$ be such that $z^*_j(t) =  \mat{1 & \bm{z}^\top (t)}\bm{w}_j$,
where
\begin{align} \label{eq:w_j}
\bm{w}_j = \left (w^*_{j0} , \{ w^*_{jl}\mathbf{1}(l\ne j) + \mathbf{1}(l = j)\}_{1\le l \le p} \right )^\top \in \mathbb{R}^{p+1} .
\end{align}
Then
\begin{align*}
\Upsilon_j = \textrm{Cov}( z^*_j(t))
= \textrm{Cov}\left ( \mat{1 & \bm{z}^\top(t)}\bm{w}_j \right )
=  \bm{w}_j^\top  \Upsilon  \bm{w}_j  ,
\end{align*}
where
$\Upsilon = \mathbb{E} \left(  \mat{1 \\ \bm{z}(t)}  \mat{1 & \bm{z}^\top(t)  } \right)$
and $\lVert \bm{w}_j \rVert_1 = \lVert \bm{w}_j^* \rVert_1 + 1$.
Recall that
$ S_{ij} = \frac{1}{T} \sum_{t=1}^{T}  \widetilde{ \epsilon}_i(t) \,z^*_j(t)  $
as defined in (\ref{eq:S}). In addition, define
$ \widetilde{S}_{ij} = S_{ij} + \frac{1}{T} \sum_{t=1}^T z_j(t)\beta_{ij} z^*_j $.
Thus, with $\beta_{ij} = T^{-1/2}\Delta$,
\begin{align*}
V_T + \widetilde{\Delta} &=
\sqrt{T}\left(\Upsilon_j \right)^{-1/2}S_{ij} + \Upsilon^{1/2}_j \Delta  \\
&= \sqrt{T}\left(\Upsilon_j \right)^{-1/2}\left(
S_{ij} + \Upsilon_j \beta_{ij}
\right)\\
&=
\sqrt{T}\left(\Upsilon_j \right)^{-1/2}\left(
S_{ij} + z_j(t)\beta_{ij} z^*_j  -
z_j(t)\beta_{ij} z^*_j +
\Upsilon_j \beta_{ij} \right) \\
&=
\sqrt{T}\left(\Upsilon_j \right)^{-1/2}\left(
\widetilde{S}_{ij} -  \bm{w}_j^\top  \left(  \frac{1}{T}\sum_{t=1}^T
 \mat{1\\\bm{z}(t)} z_j(t)   - \Upsilon_{\cdot ,j} \right) \beta_{ij}
\right) .
\end{align*}
%
%
Then, by Lemma~\ref{prop_eigen} of the bounded eigenvalue of $\Upsilon_j$,
\begin{align}
\lVert E \rVert_2
&= \lVert \widehat{V}^0_T  - V_T - \widetilde{\Delta} \rVert_2 \nonumber \\
&=
\left \lVert
\sqrt{T} \left(\Upsilon_j \right)^{-1/2} \widehat{S}^0_{ij}
-\sqrt{T}\left(\Upsilon_j \right)^{-1/2}\left(
\widetilde{S}_{ij} -  \bm{w}_j^\top  \left(  \frac{1}{T}\sum_{t=1}^T
\mat{1\\\bm{z}(t)} z_j(t)   - \Upsilon_{\cdot ,j} \right) \beta_{ij}
\right)
\right \rVert_2 \nonumber \\
&\le
\left \lVert  \sqrt{T}\left(\Upsilon_j \right)^{-1/2}\left( \widehat{S}^0_{ij} - \widetilde{S}_{ij}  \right) \right \rVert_2 \nonumber \\
&+ \left \lVert
\left(\Upsilon_j \right)^{-1/2}
\bm{w}_j \left(  \frac{1}{T}\sum_{t=1}^T
\mat{1\\\bm{z}(t)} \mat{1 & \bm{z}^\top(t)}   - \Upsilon_{\cdot ,j} \right)
\sqrt{T}\beta_{ij}
\right \rVert_2 \nonumber \\
&=  O\left( \sqrt{T} \left \lVert \widehat{S}^0_{ij} - \widetilde{S}_{ij} \right \rVert_2 \right)  \nonumber \\
&+  O\left(  \left |\sqrt{T}\beta_{ij} \right | \lVert \bm{w}_j \rVert_1
\left \lVert
\frac{1}{T}\sum_{t=1}^T
\mat{1\\\bm{z}(t)} \mat{1 & \bm{z}^\top(t)}    - \Upsilon \right \rVert_\infty
\right) .
\end{align}
First, we examine the second item on RHS.
Using Lemma~\ref{lemma_w_bound_norm}, $\lVert \bm{w}_j \rVert_1 = O(\sqrt{s})$,
and using Lemma~\ref{lemma_concen_ineq_z} we get
$$ \left \lVert
\frac{1}{T}\sum_{t=1}^T
\mat{1\\\bm{z}(t)} \mat{1 & \bm{z}^\top(t)}    - \Upsilon \right \rVert_\infty = O_p(\rho T^{-2/5}) ,
$$
with probability at least $1- c_1 p^2 T \exp(- c_2 T^{1/5} )$. Then, with $\beta_{ij} = T^{-1/2}\Delta$,
\begin{align*}
\left |\sqrt{T}\beta_{ij} \right | \lVert \bm{w}_j \rVert_1
\left \lVert
\frac{1}{T}\sum_{t=1}^T
\mat{1\\\bm{z}(t)} \mat{1 & \bm{z}^\top(t)}    - \Upsilon \right \rVert_\infty
= O_p\left ( \sqrt{s}\rho T^{-2/5}    \right).
\end{align*}

Next, we quantify $ \sqrt{T} \left \lVert \widehat{S}^0_{ij} - \widetilde{S}_{ij} \right \rVert_2 $. Expanding the difference, we have
\begin{align}
\widehat{S}^0_{ij} - \widetilde{S}_{ij}
=& (\widehat{\bm{w} }_j  - \bm{w}^*_j)^\top  \frac{1}{T} \sum_{t=1}^T   \mat{ 1 \\ \bm{z} _{-j}(t) }   \widetilde{\epsilon}_i(t)  \nonumber \\
+& \left(  \mat{\widehat{\mu}_i & \widehat{\bm{\beta}}^\top_{i,-j} } -
\mat{\mu_i & \bm{\beta}^\top_{i,-j} }  \right)
\frac{1}{T} \sum_{t=1}^T    \mat{ 1/\sigma_i(t) \\ \bm{z}_{-j}(t) }    z^*_j(t) \nonumber \\
-& \left(  \mat{\widehat{\mu}_i & \widehat{\bm{\beta}}^\top_{i,-j} } -
\mat{\mu_i & \bm{\beta}^\top_{i,-j} }  \right)
\frac{1}{T} \sum_{t=1}^T   \mat{ 1/\sigma_i(t) \\ \bm{z}_{-j}(t) }
\mat{ 1 & \bm{z}^\top_{-j}(t) }  \left( \widehat{\bm{w} }_j  - \bm{w}^*_j \right)  \nonumber \\
+& (\widehat{\bm{w} }_j  - \bm{w}^*_j)^\top  \frac{1}{T} \sum_{t=1}^T   \mat{ 1 \\ \bm{z} _{-j}(t) }  z_{j}(t) \beta_{ij}  .  \label{eq:S_0_S_tilde_diff}
\end{align}
According to the proof of Lemma~\ref{lemma_S0_S}, the first three terms above are bounded as
$O_p \left( s\rho^2 T^{-4/5} \right)$.

With $\lVert z_j(t)\rVert_\infty = O(1)$ by Lemma~\ref{lemma_bounded_terms}, and $\lVert  \widehat{\bm{w} }_j  - \bm{w}^*_j \rVert_1 = O_p( s\rho T^{-2/5})$ in Assumption~\ref{assumption_w}, the last term is bounded as
\begin{align*}
 (\widehat{\bm{w} }_j  - \bm{w}^*_j)^\top  \frac{1}{T} \sum_{t=1}^T   \mat{ 1 \\ \bm{z} _{-j}(t) }  z_{j}(t) \beta_{ij}
 = O\left( \lVert  \widehat{\bm{w} }_j  - \bm{w}^*_j \rVert_1 \cdot | \beta_{ij} | \right)
 = O_p\left(  s\rho \Delta T^{-9/10 }  \right),
\end{align*}
with probability at least $1- c_3 p^2 T \exp(- c_4
T^{1/5})$, where $\beta_{ij} = T^{-1/2}\Delta$ as set in the alternative.
Thus,
\begin{align*}
\left| \widehat{S}^0_{ij} - \widetilde{S}_{ij}  \right|
= O_p \left( s\rho^2 T^{-4/5} \right) +O_p\left(  s\rho \Delta T^{-9/10 }  \right)
= O_p\left( s\rho^2  T^{-4/5} \right)   .
\end{align*}
As a result,
$$
\lVert E \rVert_2  =
O_p\left( s  \rho^2  T^{-3/10} \right)+
O_p\left ( \sqrt{s}\rho T^{-2/5} \Delta  \right)
=  O_p\left ( s\rho^2  T^{-3/10} \right),
$$
with probability at least $1-c_5 p^2 T \exp(- c_6 T^{1/5})$.

Next, we quantify the bound of $V_T + \widetilde{\Delta} $.
Setting $y =  T^{1/10}  $ in \eqref{eq:P_V_T} of Lemma~\ref{lemma_VT_ineq} and $\lVert V_T \rVert^2_2$ weakly converges to $\chi^2$ in Proposition~\ref{prop_clt}, we get
\begin{align}
P\left(   \left\lVert V_T + \widetilde{\Delta} \right\rVert_2  > T^{1/10}  \right)= O( T^{-1/8} ).
\end{align}
At last, we take the bounds of  $\left\lVert V_T + \widetilde{\Delta} \right\rVert_2$
and $\lVert E \rVert_2$ back to \eqref{eq:U0_hat_VT_phi0.5}, and
\begin{align*}
\left| \widehat{U}^0_T - \left\lVert V_T + \widetilde{\Delta} \right\rVert_2^2  \right  |
= O_p\left( s^2\rho^2 T^{-1/5} \right) ,
\end{align*}
with probability at least $1 - c_7 p^2 T \exp(-c_8T^{1/5}) - c_9 T^{-1/8}$.
Finally, using decomposition in \eqref{eq:bound_U0_T_Fd},
we get
\begin{align}
\sup_{x \in \mathbb{R} } | \mathbb{P}(\widehat{U}^0_T \le x) - F_{d,\lVert \widetilde{\Delta}\rVert_2^2  }(x)  | \le
c_7 p^2 T \exp( - c_8 T^{1/5}) + c_{10} T^{-1/8} + c_{11}  s^2 \rho^2 T^{-1/5} .
\end{align}
The conclusion follows from the decomposition in \eqref{eq:bound_U_T_Fd} and Lemma~\ref{lemma_U0_U} that quantify the bound of $|\widehat{U}_T - \widehat{U}^0_T |$.

\noindent
\textbf{Case}: $\phi < 1/2$.
Recall that $S_{ij}$ is defined in (\ref{eq:S}) and $V_T$ is defined in (\ref{eq:V_T}).
First, notice that
\begin{align}
\widehat{U}_T &=   T \widehat{S}_{ij} \left( \widehat{\Upsilon_j }\right)^{-1} \widehat{S}_{ij} \nonumber \\
&=
 T \widehat{S}_{ij} \left( \Upsilon_j^{-1}+  \left( \widehat{\Upsilon_j }\right)^{-1}
  -   \Upsilon_j^{-1}
 \right )
  \widehat{S}_{ij} \nonumber \\
&\ge T \lVert \left(  \Upsilon_j \right)^{-1/2}\widehat{S}_{ij}  \rVert_2^2
\left(
1 - d \left \lVert \Upsilon_j^{1/2}  \left( \widehat{\Upsilon_j }\right)^{-1}  \Upsilon_j^{1/2} - I \right\rVert_\infty
\right )  \nonumber \\
&\ge c_1 T \lVert  \left(  \Upsilon_j \right)^{-1/2}\widehat{S}_{ij}  \rVert_2^2 \nonumber \\
&= c_1  \left( \lVert
\sqrt{T}  \left( \Upsilon_j \right)^{-1/2}(\widehat{S}_{ij} - S_{ij} )
\rVert_2  - \lVert  V_T \rVert_2  \right)^2,\label{eq:U_0_phi_le_0.5}
\end{align}
where the second inequality follows from
Lemma~\ref{lemma_Rhat_R0_R} as $\left \lVert \Upsilon_j^{1/2}  \left( \widehat{\Upsilon_j^0 }\right)^{-1}  \Upsilon_j^{1/2} - I \right\rVert_\infty$ convergences to 0 when $s^2\rho^2 \log p = o(T^{1/5})$.

Next, we bound $\widehat{S}_{ij} - S_{ij}$ by separately
bounding $\widehat{S}^0_{ij}- S_{ij}$
and $\widehat{S}_{ij} - \widehat{S}^0_{ij} $,
where $\widehat{S}^0_{ij}$ is defined in \eqref{eq:S_0_hat}.

First,
\begin{align}
\widehat{S}^0_{ij} - S_{ij}
= \widehat{S}^0_{ij} - \widetilde{S}_{ij} +
\frac{1}{T} \sum_{t=1}^T
z_j^*(t) z^\top_j(t) \beta_{ij}.
\label{eq:S0hat_S}
\end{align}
A similar deduction as \eqref{eq:S_0_S_tilde_diff},
 but with $\beta_{ij} = T^{-\phi}\Delta$ leads to $\widehat{S}^0_{ij} - \widetilde{S}_{ij}= O_p(s\rho T^{-\frac{2}{5} - \phi})$.
 Thus, in the follows, we give a lower bound for $\frac{1}{T} \sum_{t=1}^T z^*_j(t) z^\top_j(t) \beta_{ij}$.

By the construction of $z_j^*(t)$ and the projection coefficients $w_j^*$ defined in \eqref{eq: w_j_star},
$\Upsilon_j = \Cov\left ( z^*_j(t) \right)
            = \E \left( z_j^*(t)
           \left ( \bm{z}_j - \bm{w}^*_{-j} \mat{ 1\\ \bm{z}_{-j}(t)  }\right )   \right)
            = \E\left( z_j^*(t) z_j(t) \right)  $.
Then,
\begin{align}
\left|
\frac{1}{T} \sum_{t=1}^T
z_j^*(t) z_j\left(t\right)
- \Upsilon_j
\right |
&= \left|
\frac{1}{T} \sum_{t=1}^T
z_j^*(t) z_j (t)
- \E\left( z_j^*(t) z_j(t) \right)
\right | \nonumber \\
&\le   \left( \lVert  \bm{w}^*_j \rVert_1 +1  \right)
\left \lVert   \frac{1}{T} \sum_{t=1}^T   \mat{1 \\ \bm{z} (t ) }   \mat{1 & \bm{z}^\top(t) }   -
\Upsilon   \right \rVert_\infty .
\end{align}
By Lemma~\ref{lemma_concen_ineq_z}, $\left \lVert   \frac{1}{T} \sum_{t=1}^T   \mat{1 \\ \bm{z} (t ) }   \mat{1 & \bm{z}^\top(t) }   -
\Upsilon   \right \rVert_\infty = O_p( \rho T^{-2/5})$. In addition,  $\lVert  \bm{w}^*_j \rVert_1 = O(\sqrt{s})$ by Lemma~\ref{lemma_w_bound_norm}.
%
%
Therefore,
$$
\left|
\frac{1}{T} \sum_{t=1}^T
\widehat{z}_j^*(t) z_j\left(t\right)
- \Upsilon_j
\right | =  O_p( (\sqrt{s}+1)\rho T^{-2/5}) ,
$$
with probability at least $1-c_2p^2 T \exp(-c_3 T^{1/5} )$.
Then, by $\beta_{ij} = T^{-\phi}\Delta$,
$s^2\rho^2 \log p= o(T^{1/5})$ and $\Lambda_{\min}\left(\Upsilon_j \right) >0$
in Proposition~\ref{prop_eigen},
\begin{align*}
\frac{1}{T} \sum_{t=1}^T
\widehat{z}^*_j(t) z^\top_j (t ) \beta_{ij}
\ge \left( \Upsilon_j  - c_4 T^{-2/5 } (\sqrt{s}+1) \rho \right)  \left|\beta_{ij}\right |
\ge  c_5 T^{-\phi} ,
\end{align*}
with probability at least $1-c_2p^2 T \exp(-c_3 T^{1/5} )$.
Taking the bounds above back to \eqref{eq:S0hat_S}, under $s^2\rho^2 \log p = o(T^{1/5})$,
we have
\begin{align}
T   \left \lVert \left( \Upsilon_j \right)^{-1/2} \left( \widehat{S}^0_{ij} - S_{ij} \right) \right \rVert^2_2
\ge O_p(T^{1-2\phi })  .
\end{align}
with probability at least $1-c_6 p^2 T \exp(- c_7 T^{1/5})$.


Next, we bound $ \widehat{S}_{ij} - \widehat{S}^0_{ij}$.
Without repeating the technical details, the bound of $ \widehat{S}_{ij} - \widehat{S}^0_{ij}$ can be derived as follows:
under $\phi < 1/2$, we add an addition term of $x_j(t)\beta_{ij}$ to $\epsilon_i(t)$ because $\beta_{ij}\ne 0$ under the alternative; then the bound for $A_1$ and $B_1$ in the proof of Lemma~\ref{lemma_U0_U} are dominated by $O_p(\rho T^{-\frac{2}{5}-\phi} \vee T^{-4/5})$ instead of $O_p(\rho T^{-\frac{4}{5}})$ under the alternative, which implies
$ \lVert \widehat{S}_{ij} - \widehat{S}^0_{ij}  \rVert_2
=  O_p\left(\rho T^{-\frac{2}{5}-\phi} \vee T^{-4/5}  \right) $.
Combining the results above,
\begin{align}
T \left \lVert \left( \Upsilon_j \right)^{-\frac{1}{2}} \left( \widehat{S}_{ij} - S_{ij} \right) \right \rVert^2_2
&\ge
T   \left \lVert \left( \Upsilon_j \right)^{-\frac{1}{2}}  \left(
\widehat{S}_{ij}  - \widehat{S}^0_{ij}  +
\widehat{S}^0_{ij}  - S_{ij} \right) \right \rVert^2_2  \nonumber \\
&\ge
T   \left  \lVert  \left( \Upsilon_j \right)^{-\frac{1}{2}}  \left(
\widehat{S}^0_{ij}  - S_{ij} \right)   \right \rVert^2_2  -
T   \left   \lVert \left( \Upsilon_j \right)^{-\frac{1}{2}} \left(
\widehat{S}_{ij}  - \widehat{S}^0_{ij} \right)  \right \rVert^2_2  \nonumber \\
& = O_p( T^{1-2\phi }) , \label{eq:S_hat_S_phi_le_0.5}
\end{align}
with probability at least $1-c_{10} p^2 T \exp(- c_{11} T^{1/5})$.

Note that $ \lVert V_T \rVert^2_2$ weakly converges to $\chi^2$ by Proposition~\ref{prop_clt}. In addition, taking $y=  c_{12} T^{1/2 -\phi} - c^{-1}_1 \sqrt{x} $ in Lemma~\ref{lemma_VT_ineq},
\begin{align}
\mathbb{P}( \lVert V_T \rVert_2 \ge c_{12} T^{1/2 -\phi} - c^{-1}_1 \sqrt{x}  )
\le   c_{13} T^{-1/8}  + c_{14} \exp(  -( c_{12} T^{1/2 -\phi} - c^{-1}_1 \sqrt{x} )^2  ) .
\label{eq: V_T_tail_bound}
\end{align}
Taking
\eqref{eq:S_hat_S_phi_le_0.5} and
\eqref{eq: V_T_tail_bound} back to \eqref{eq:U_0_phi_le_0.5}, we reach the conclusion:
\begin{multline*}
\mathbb{P}\left( \widehat{ U}_T \ge  x  \right) \\
\ge \mathbb{P}\left(
\left\{ \lVert
\sqrt{T}  \left( \Upsilon_j \right)^{-1/2}(\widehat{S}_{ij} - S_{ij} )
\rVert_2 \ge c_{12} T^{1/2 -\phi}\right\} \cap
\left\{ \lVert  V_T \rVert_2 \le c_{12} T^{1/2 -\phi} - c^{-1}_1 \sqrt{x} \right\} \right) \\
\ge 1 - c_{10} p^2 T \exp(- c_{11} T^{1/5})  - c_{13} T^{-1/8}  - c_{14} \exp\left (  -\left ( c_{12} T^{1/2 -\phi} - c^{-1}_1 \sqrt{x} \right )^2  \right )  .
\end{multline*}

\subsection*{Proof of Theorem~\ref{theorem3}}

Denote
$$
\check{S}_{ij} =
\frac{1}{T} \sum_{t=1}^{T} \frac{ Y_i(t) - x_j(t) \beta_{i,j} - \widehat{\mu}_i - \bm{x}^\top_{-j}(t)\widehat{ \bm{\beta}}_{i,-j}  }{\widehat{\sigma}_i(t)  } \widehat{z}^*_j(t) .
$$
We have the following decomposition.
	\begin{align*}
	\widetilde{S}_{ij} &= \check{S}_{ij} +
	\frac{1}{T} \sum_{t=1}^T  \widehat{z}^*_j(t) \widehat{z}_j(t)( \widehat{\beta}_{ij} - \beta_{ij} ) =
	\check{S}_{ij} +
	\widetilde{\Upsilon}_j \left(   \widehat{\beta}_{ij} - \beta_{ij}  \right).
	\end{align*}
By the decomposition above, we have
	$$
	\widehat{b}_{ij} -  \beta_{ij} = - \left(\widetilde{\Upsilon}_j   \right)^{-1} \check{S}_{ij} ,
	$$
	and
	$$
	\widehat{R}_T = \left( \check{S}_{ij} \right)^\top  \left(\widetilde{\Upsilon}_j   \right)^{-1} \widehat{\Upsilon}_j \left(\widetilde{\Upsilon}_j   \right)^{-1} \check{S}_{ij}.
	$$
	Notice that
	$$
	\check{S}_{ij} =
	\frac{1}{T} \sum_{t=1}^{T} \frac{ \epsilon_i(t) + ( \mu_i - \widehat{\mu}_i)
		+ \bm{x}^\top_{-j}(t) (\bm{\beta}_{i,-j} - \widehat{ \bm{\beta}}_{i,-j} )  }{\widehat{\sigma}_i(t)  } \widehat{z}^*_j(t) ,
	$$
	which is equivalent to $\widehat{S}_{ij}$ defined in (\ref{eq:S_hat}) under the null hypothesis.
	Let
	$$
	\check{U}_T = \check{S}_{ij}^\top \widehat{\Upsilon}_j^{-1} \check{S}_{ij} .
	$$
	$\check{U}_T$ weakly converges to $\chi^2$ distribution following the proof of Theorem~\ref{theorem1}:
	\begin{align} \label{eq:U_check}
	\sup_{ x\in \mathbb{R} } \left |  {\mathbb P}(\check{U}_T \le x) - F_d(x) \right | \le
	c_1 p^2 T \exp(- c_2 T^{1/5}) + c_3 s^2 \rho^2   T^{-1/5}+ c_4 T^{-1/8}.
	\end{align}

	Next, we bound
\begin{align} \label{eq:RThat_checkUT}
	\widehat{R}_T - \check{U}_T =
\check{S}_{ij}^\top  \left( \left(\widetilde{\Upsilon}_j   \right)^{-1} \widehat{\Upsilon}_j \left(\widetilde{\Upsilon}_j   \right)^{-1}
-  \left( \widehat{\Upsilon}_j \right)^{-1}
\right) \check{S}_{ij} .
\end{align}
Using Assumptions~\ref{assumption3}-\ref{assumption_w} and the consistency of estimators in Assumption~\ref{assumption_beta} and~\ref{assumption_w}, it is easy to see that
$\check{S}_{ij} = O_p(1)$.
  Therefore, it is enough to quantify
	$ (\widetilde{\Upsilon}_j   )^{-1} \widehat{\Upsilon}_j (\widetilde{\Upsilon}_j   )^{-1}
	-  ( \widehat{\Upsilon}_j )^{-1}$ in order to quantify $  \widehat{R}_T - \check{U}_T$.
Let $E = 	\widetilde{\Upsilon}_j   - \widehat{\Upsilon}_j$. Then,
	\begin{align*}
	\widetilde{\Upsilon}_j     \left( \widehat{\Upsilon}_j \right)^{-1} \widetilde{\Upsilon}_j
	=
	\left(\widehat{\Upsilon}_j  + E  \right)
	\left( \widehat{\Upsilon}_j \right)^{-1}
	\left(\widehat{\Upsilon}_j  + E  \right)
	= \widehat{\Upsilon}_j   + E + E + E \left( \widehat{\Upsilon}_j \right)^{-1}  E   ,
	\end{align*}
	which leads to
	\begin{align}
	   \widetilde{\Upsilon}_j     \left( \widehat{\Upsilon}_j \right)^{-1} \widetilde{\Upsilon}_j - \widehat{\Upsilon}_j
	 = E + E + E \Upsilon^{-1}_j  E
	 + E \left( \left( \widehat{\Upsilon}_j \right)^{-1} -  \Upsilon^{-1}_j  \right)  E .
	 \label{eq:th3_Rtilde_Rhat}
	\end{align}
	Proposition~\ref{prop_eigen} implies that $E  \Upsilon^{-1}_j  E   \le O(E^2) $.
	Thus, the first three items are bounded by $O(|E|\vee E^2)$.
  By Proposition~\ref{prop_eigen}
  $\Upsilon_j^{-1} = O(1)$, and Theorem 2.5 in \citet{Stewart90matrixperturbation}
  gives us
	\begin{align*}
	\left \lVert \left(\widehat{\Upsilon}_j  \right)^{-1}  - \Upsilon_j^{-1} \right \rVert_2 \le
	\frac{   \lVert \Upsilon_j^{-1}\rVert_2 \lVert \widehat{\Upsilon}_j -  \Upsilon_j \rVert_2  }{
		1-   \lVert \Upsilon_j^{-1}\rVert_2 \lVert \widehat{\Upsilon}_j -  \Upsilon_j \rVert_2  } = O( \lVert \widehat{\Upsilon}_j -  \Upsilon_j \rVert_2 ) .
	\end{align*}
	Then, taking this result back to \eqref{eq:th3_Rtilde_Rhat}, we get
	$$
	\widetilde{\Upsilon}_j     \left( \widehat{\Upsilon}_j \right)^{-1} \widetilde{\Upsilon}_j - \widehat{\Upsilon}_j = O( |E|) + O(E^2) + O(E^2)O\left(\left\lVert  \widehat{\Upsilon}_j -  \Upsilon_j \right\rVert_2 \right).
	$$
	By Lemma~\ref{lemma_Rhat_R0_R},
	$\lVert \widehat{\Upsilon}_j -  \Upsilon_j \rVert_2   = O_p\left( s^2\rho^2 T^{-2/5}\right)$. Therefore, to bound $\widetilde{\Upsilon}_j     \left( \widehat{\Upsilon}_j \right)^{-1} \widetilde{\Upsilon}_j - \widehat{\Upsilon}_j$, it is sufficient to quantify the bound of $E$.
   We first write $E=\widetilde{\Upsilon}_j   - \widehat{\Upsilon}_j   =  \widetilde{\Upsilon}_j -
	\widetilde{\Upsilon}^0_j +   \widetilde{\Upsilon}^0_j
	- \widehat{\Upsilon}^0_j + \widehat{\Upsilon}^0_j - \widehat{\Upsilon}_j$ and then quantify the bound for each difference.

	We start with the bound of $ E^0 \equiv \widetilde{\Upsilon}^0_j - \widehat{\Upsilon}^0_j$.
	Notice that
	\begin{align*}
	\lVert E^0 \rVert_\infty
	&\le \left\lVert \frac{1}{T} \sum_{t=1}^T  \left( z_j(t) - \mat{1 & \bm{z}_{-j}^\top(t)  } \widehat{\bm{w} }_j   \right) \mat{1 & \bm{z}_{-j}^\top(t)  }  \widehat{\bm{w} }_j   \right \rVert_\infty \\
	&\le
	\left\lVert \frac{1}{T} \sum_{t=1}^T  \left( z_j(t) -   \mat{1 & \bm{z}_{-j}^\top(t)  }  \bm{w}^*_j  \right) \bm{z}_{-j}(t) \right \rVert_\infty ( \lVert \bm{w}_j^* \rVert_1  + \lVert \bm{w}_j^* - \widehat{\bm{w} }_j  \rVert_1 ) \\
	&+  \left | (\widehat{\bm{w} }_j  - \bm{w}^*_j)^\top  \frac{1}{T} \sum_{t=1}^T \left(  \mat{1 \\ \bm{z}_{-j}(t)  }   \mat{1 & \bm{z}_{-j}^\top(t)  }  \right)   (\widehat{\bm{w} }_j  - \bm{w}^*_j) \right|  \\
	&+ \left \lVert \frac{1}{T}\sum_{t=1}^T   \mat{1 \\ \bm{z}_{-j}(t)  }   \mat{1 & \bm{z}_{-j}^\top(t)  }   \bm{w}_j^* \right \rVert_\infty \lVert \widehat{\bm{w} }_j  - \bm{w}^*_j \rVert_1 .
	\end{align*}
	Using Lemma~\ref{lemma_w_empirical} (to bound $\left\lVert \frac{1}{T} \sum_{t=1}^T  \left( z_j(t) -   \mat{1 & \bm{z}_{-j}^\top(t)  }  \bm{w}^*_j  \right) \bm{z}_{-j}(t) \right \rVert_\infty$), Lemma~\ref{lemma_w_bound_norm} of bounded norm of $\bm{w}^*_j$, and the estimation consistency of $\widehat{\bm{w}}_j$ in Assumption~\ref{assumption_w},
	the first item on the RHS is $O_p( \sqrt{s} \rho T^{-2/5} )$. Using Assumption~\ref{assumption_w} again and Lemma~\ref{lemma_bounded_terms}, the second item on RHS is $O_p( s\rho T^{-2/5})$.
	by Lemma~\ref{lemma_w_bound_norm} and~\ref{lemma_bounded_terms},
	\begin{align*}
	\left \lVert \frac{1}{T}\sum_{t=1}^T   \mat{1 \\ \bm{z}_{-j}(t)  }   \mat{1 & \bm{z}_{-j}^\top(t)  }   \bm{w}_j^* \right \rVert_\infty
	&\le  \lVert \bm{w}^*_j \rVert_1
	       \left \lVert    \mat{1 \\ \bm{z}_{-j}(t)  }   \mat{1 & \bm{z}_{-j}^\top(t)  }    \right\rVert_\infty
	       =  O(\sqrt{s}) .
	\end{align*}
	Then, under Assumption~\ref{assumption_w}, the last term on the RHS is bounded by $O_p(s^{3/2}\rho T^{-2/5} )$.
	Therefore,
	 $$ E^0 =  \widetilde{\Upsilon}^0_j - \widehat{\Upsilon}^0_j =O_p(s^{3/2}\rho T^{-2/5} )  , $$
	with probability at least $1-c_5p^2 T\exp(-c_6T^{1/5})$.
By Lemma~\ref{lemma_Rhat_R0_R}, $\widehat{\Upsilon}^0_{j} - \widehat{\Upsilon}_{j}$
is bounded by $O_p(\rho T^{-2/5})$ and $\widetilde{\Upsilon}^0_{j} - \widetilde{\Upsilon}_{j} = O_p(\rho T^{-2/5})$ .
	Combining these results, we get
	$|E|= O_p(s^{3/2}\rho T^{-2/5} )$.
	Next, turning back to \eqref{eq:th3_Rtilde_Rhat},
	$\widetilde{\Upsilon}_j     \left( \widehat{\Upsilon}_j \right)^{-1} \widetilde{\Upsilon}_j - \widehat{\Upsilon}_j =O_p(s^{3/2}\rho T^{-2/5} )$.
	Then, referring to \eqref{eq:RThat_checkUT} at beginning,
	\begin{align*}
	\widehat{R}_T - \check{U}_T &=
	\check{S}_{ij}^\top  \left( \left(\widetilde{\Upsilon}_j   \right)^{-1} \widehat{\Upsilon}_j \left(\widetilde{\Upsilon}_j   \right)^{-1}
	-  \left( \widehat{\Upsilon}_j \right)^{-1}
	\right) \check{S}_{ij} \\
	&= O_p(1) O_p(s^{3/2} \rho T^{-2/5}) O_p(1)
 = O_p(s^{3/2} \rho T^{-2/5}) ,
	\end{align*}
	with probability at least $1-c_6p^2 T\exp(-c_7T^{1/5})$.
	At the end, taking $\delta = s^{3/2} \rho T^{-2/5}$ and using \eqref{eq:U_check},
	\begin{align*}
	&\sup_{x\in \mathbb{R} }  |\mathbb{P}(\widehat{R}_T \le x) - F_d(x) |    \\
	& \le
	\sup_{y\in \mathbb{R} } |\mathbb{P}( \check{U}_T  \le y) - F_d(y)  |
	+  \sup_{x\in \mathbb{R} } \left( F_d(x+ \delta ) - F_d(x -\delta) \right)
	+  \mathbb{P}( | \widehat{R}_T - \check{U}_T  | > \delta ) \\
	&\le 	c_8 p^2 T \exp(- c_9 T^{1/5}) + c_{10} s^2 \rho^2   T^{-1/5}+ c_{11} T^{-1/8} .
	\end{align*}


\clearpage
\section*{Appendix~B: Auxiliary Results}\label{sec:auxilary}

The following result shows that
$ \mathbb{P}\left( U_T \le y\right) \approx F_d(y) $ uniformly in $y$.
The proof is based on a martingale central limit theorem.
Recall that $s_j = \lVert\bm{w}_j^* \rVert_0$ and $s = \max_{1\le j\le p} s_j$; $\rho_i
= \lVert \bm{\beta}_i \rVert_0$ and $\rho = \max_{1\le i\le p} \rho_i$.

\begin{proposition}\label{prop_clt}
  Suppose the linear Hawkes model with its intensity function defined in
 \eqref{eq:linear_hawkes_para_transfer} is stationary and satisfies Assumptions~\ref{assumption1} --~\ref{assumption4}.
  Then  $\forall u \in \mathbb{R}^d$,
  \begin{align}
    \sup_{y\in \mathbb{R} } \left |  \mathbb{P}( \lVert V_T + u \rVert^2_2 \le y) - F_{d,\lVert u \rVert_2^2 }(y)  \right|
    \le C(\lVert u \rVert_2, d) T^{-1/8},
  \end{align}
  where $C(\lVert u \rVert_2, d) $ is a constant that is non-decreasing w.r.t.
  $\lVert u \rVert_2 $ and $d$.
\end{proposition}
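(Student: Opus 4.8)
The plan is to recognize $V_T=\sqrt T\,\Upsilon_j^{-1/2}S_{ij}$ as a normalized sum of martingale differences, apply a Berry--Esseen-type martingale central limit theorem to bound $\sup_{x\in\mathbb R}|\mathbb P(V_T\le x)-\Phi(x)|$ (writing $\Phi$ for the standard normal cdf; we take $d=1$, as in the rest of the Appendix), and then transfer this to the law of $\|V_T+u\|_2^2$ by the elementary fact that, for $d=1$, the event $\{\|V_T+u\|_2^2\le y\}$ is the event that $V_T$ lies in an interval determined by $u$. The general-$d$ statement follows by the same argument, with the interval replaced by a Euclidean ball and the scalar martingale CLT by its multivariate analogue.

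Concretely, write $V_T=\frac1{\sqrt T}\sum_{t=1}^T\eta_t$ with $\eta_t=\Upsilon_j^{-1/2}\,\widetilde\epsilon_i(t)\,z_j^*(t)$. Since $x_j(t)$ in \eqref{eq:design_column_xt}, and hence $z_j(t)=x_j(t)/\sigma_i(t)$ and $z_j^*(t)$ in \eqref{eq:x_tilde_star}, are measurable with respect to the history strictly prior to $t$ (and $\Upsilon_j$ and $\bm w_j^*$ are deterministic), each $z_j^*(t)$ is $\mathcal H_t$-predictable, while $\mathbb E[\epsilon_i(t)\mid\mathcal H_t]=0$ by \eqref{eq:y_t}; hence $\{\eta_t,\mathcal H_t\}$ is a martingale difference sequence. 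Two inputs feed the CLT. First, using $\mathbb E[\widetilde\epsilon_i^2(t)\mid\mathcal H_t]=\sigma_i^2(t)/\sigma_i^2(t)=1$ from \eqref{eq:sigma_i} and the $\mathcal H_t$-measurability of $z_j^*(t)$, the normalized conditional variance is $\frac1T\sum_{t=1}^T\mathbb E[\eta_t^2\mid\mathcal H_t]=\frac1{T\,\Upsilon_j}\sum_{t=1}^T(z_j^*(t))^2$, and one must show this concentrates around $\mathbb E[(z_j^*(t))^2]/\Upsilon_j=1$; this follows from the concentration inequality for the first- and second-order statistics of the scaled integrated process (Lemma~\ref{lemma_concen_ineq_z}) together with $\Lambda_{\min}(\Upsilon_j)>0$ from Proposition~\ref{prop_eigen}, and supplies the deviation rate together with an $\exp(-cT^{1/5})$-type bound on the exceptional event. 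Second, under Assumptions~\ref{assumption3}--\ref{assumption4} the variables $\widetilde\epsilon_i(t)$ and $z_j^*(t)$ are bounded (Lemma~\ref{lemma_bounded_terms}, with $\sigma_i(t)$ bounded away from $0$) and $\Upsilon_j$ is bounded below (Proposition~\ref{prop_eigen}), so $|\eta_t|\le C$ almost surely; the normalized increments $\eta_t/\sqrt T$ are therefore $O(T^{-1/2})$ and the Lyapunov/Lindeberg conditions hold trivially.

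Feeding these two facts into the martingale Berry--Esseen bound of \citet{Grama2006}, in the form used by \citet{Zheng2018}, gives
\[
\sup_{x\in\mathbb R}\bigl|\mathbb P(V_T\le x)-\Phi(x)\bigr|\le C\,T^{-1/8},
\]
the exponent $1/8$ being the rate that theorem yields given the $O(T^{-1/2})$ bound on the increments and the weak-dependence concentration rate available for $\frac1T\sum_t(z_j^*(t))^2$ --- this is precisely why the final rate is $T^{-1/8}$ rather than the $T^{-1/2}$ of a standard i.i.d.\ CLT. To conclude, observe that for $y\ge 0$ and $d=1$ the event $\{\|V_T+u\|_2^2\le y\}$ equals $\{-\sqrt y-u\le V_T\le\sqrt y-u\}$, while $F_{1,u^2}(y)=\Phi(\sqrt y-u)-\Phi(-\sqrt y-u)$; since $\Phi$ is continuous the distinction between $\le$ and $<$ at the endpoints is immaterial, so
\[
\sup_{y\in\mathbb R}\bigl|\mathbb P(\|V_T+u\|_2^2\le y)-F_{1,u^2}(y)\bigr|\le 2\sup_{x\in\mathbb R}\bigl|\mathbb P(V_T\le x)-\Phi(x)\bigr|\le C\,T^{-1/8},
\]
with both sides vanishing for $y<0$; since the right-hand side does not depend on $u$, the constant may be taken non-decreasing in $\|u\|_2$ and $d$, as required. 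The main obstacle is the conditional-variance step: because $z_j^*(t)$ aggregates the entire, non-Markovian past of the Hawkes process, $\frac1T\sum_t(z_j^*(t))^2$ cannot be controlled by an i.i.d.\ argument but only through the weak-dependence concentration machinery underlying Lemma~\ref{lemma_concen_ineq_z} (built on the Bernstein inequality of \citet{Merlevede}), and it is this slower concentration, propagated through the martingale CLT, that determines the $T^{-1/8}$ rate.
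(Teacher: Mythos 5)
Your proposal is correct and takes essentially the same route as the paper: both reduce to the quantitative martingale CLT of Grama--Haeusler as packaged in \citet{Zheng2018} (Lemma~\ref{lem:mart_diff_seq}), verify the increment-moment condition from the boundedness of $\widetilde\epsilon_i(t)$ and $z_j^*(t)$ (Lemma~\ref{lemma_bounded_terms}, Proposition~\ref{prop_eigen}) and the conditional-variance condition from Lemma~\ref{lemma_concen_ineq_z}, the only cosmetic difference being that the paper applies the ball-indexed form of that lemma directly to $\mathbb{P}(\lVert V_T+u\rVert_2\le r)$ for general $d$ rather than converting from a scalar Kolmogorov bound via the interval representation. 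One small correction to your closing remark: in the paper's accounting the dominant term is $L_{1/2}^{n,d}=\sum_t\mathbb{E}\lVert\xi_{T,t}\rVert_2^{3}=O(T^{-1/2})$, which comes from the increment moments alone, so the exponent $1/8 = (1/2)\cdot\tfrac{1}{3+2\delta}$ at $\delta=1/2$ is set by that term, while the weak-dependence concentration of $\tfrac1T\sum_t(z_j^*(t))^2$ enters only through the smaller-order quantity $N_{\delta}^{T,d}=O(T^{-1-\delta}(s\rho)^{1+\delta})$.
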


\begin{proof}

The main part of the proof is based on the result on the martingale difference sequence in Lemma~\ref{lem:mart_diff_seq}. To reach the conclusion, we verify the conditions required
to apply the result in the setting of the multivariate Hawkes process.

Let
\begin{align*}
\xi_{T,t} = - \frac{1}{\sqrt{T}}\left( \Upsilon_j \right)^{-1/2}
\frac{\epsilon_i(t)}{\sigma_i(t)}   z^*_j (t),
\end{align*}
where $\sigma_i(t)$, $z^*_j (t)$ are defined in \eqref{eq:sigma_i} and  \eqref{eq:x_tilde_star}, respectively.

Recall that  $\mathcal{H}_{T,t}$ is information filtration of the past.
Then $( \xi_{T,t}, \mathcal{H}_{T,t} )$ is a martingale difference sequence. Then, $V_T = \sum_{t=1}^{T} \xi_{T,t}$, where $V_T$ is defined in \eqref{eq:V_T}.
Following the same notation as Lemma~\ref{lem:mart_diff_seq}, denote
\begin{align}
L_{\delta}^{n,d} &=  \sum_{t=1}^{T}  \mathbb{E}\lVert \xi_{Tt} \rVert_2^{2 + 2\delta}
, \\
N_{\delta}^{T,d} & =
E\left \lVert
\left( \Upsilon_j \right)^{-1/2}
\left( \frac{1}{T} \sum_{t=1}^{T}   z^*_j(t) \left( z^*_j(t) \right)^\top    - \Upsilon_j \right )
\left( \Upsilon_j \right)^{-1/2}
\right \lVert_{tr}^{1+ \delta }.
\end{align}
To apply Lemma~\ref{lem:mart_diff_seq}, we need to evaluate the bound of
$R_{\delta}^{n,d} \equiv L_{\delta}^{n,d} + N_{\delta}^{T,d}$.

We start with the bound of $L_{\delta}^{n,d}$.
Notice that
\begin{align*}
L_{\delta}^{n,d} =  \sum_{t=1}^{T}  \mathbb{E}\lVert \xi_{Tt} \rVert_2^{2 + 2\delta}
\le  \Lambda^{-1/2}_{\min}\left( \Upsilon_j\right) T^{-(1+\delta)}
\sum_{i=1}^{T}  \mathbb{E}\left \lVert  \frac{1}{\sigma_i(t)}\epsilon_i(t) z^*_j (t) \right \rVert_2^{2 + 2\delta}.
\end{align*}

Proposition~\ref{prop_eigen} implies $ \Lambda^{-1/2}_{\min}\left( \Upsilon_j\right)  = O(1)$.
Lemma~\ref{lemma_bounded_terms} shows that  $0 < \sigma^2_i(t)$  and $x_j(t)$, $\epsilon_i(t)$ and $z_j(t)$ are bounded. Therefore,
\begin{align*}
\mathbb{E}\left \lVert  \frac{1}{\sigma_i(t)}\epsilon_i(t) z^*_j (t) \right \rVert_2^{2 + 2\delta}
\le O(
\mathbb{E}\left \lVert  z^*_j (t) \right \rVert_2^{2 + 2\delta}
 ).
\end{align*}

Recall that  $z^*_j(t)$
is $z_j(t)$ after removing its projection onto $\bm{z}_{-j}$. Then,
$ \E \left\lVert z^*_j (t) \right\rVert_2^2 \le  \E\left\lVert z_j (t) \right\rVert^2_2   =O(1) $. In addition, since $x^{1+\delta}$ is convex under $\delta \in [0,1/2]$ for $x \ge 0$,
\begin{align*}
\mathbb{E} \left( \left \lVert  z^*_j (t) \right \rVert_2^2 \right)^{1 + \delta}
\le
\left(  \mathbb{E}  \left \lVert  z^*_j (t) \right \rVert_2^2 \right)^{1 + \delta}
\le
\left(  \mathbb{E}  \left \lVert  z_j (t) \right \rVert_2^2 \right)^{1 + \delta} = O(1).
\end{align*}
Thus, $L_{\delta}^{n,d} = O\left( T^{-\delta} \right )$.


Next, we quantify the bound of $N_{\delta}^{n,d}$. Notice that
\begin{align*}
\sum_{t=0}^{T-1}  \mathbb{E} \left(  \xi_{T,t} \xi_{T,t}^\top \mid \mathcal{H}_t \right)  - I
&=  \left( \Upsilon_j \right)^{-1/2}
\left( \frac{1}{T} \sum_{t=1}^{T}  z^*_j(t) \left( z^*_j(t) \right)^\top   - \Upsilon_j  \right)
\left( \Upsilon_j \right)^{-1/2}.
\end{align*}
By Proposition~\ref{prop_eigen}, the rank of
\[
\left( \Upsilon_j \right)^{-1/2}
\left(  \frac{1}{T} \sum_{t=1}^{T}  z^*_j(t) \left( z^*_j(t) \right)^\top    - \Upsilon_j   \right)
\left( \Upsilon_j \right)^{-1/2}
\]
is at most $d$ (where $d=1$ in the case of testing an univariate $\beta_{ij}$).
Since $\lVert B \rVert_{tr}  \le d \lVert B \rVert_2$
and $\lVert B_d \rVert_2 \le d \lVert B_d \rVert_\infty$ for $B \in \mathbb{R}^{d\times d}$ ,
we have
\begin{align*}
N_{\delta}^{T,d} & =
E\left \lVert
\left( \Upsilon_j \right)^{-1/2}
\left( \frac{1}{T} \sum_{t=1}^{T}   z^*_j(t) \left( z^*_j(t) \right)^\top    - \Upsilon_j \right )
\left( \Upsilon_j \right)^{-1/2}
\right \lVert_{tr}^{1+ \delta }
\\
& \le  \mathbb{E}\left(  d\left \lVert
\left( \Upsilon_j \right)^{-1/2}
\left( \frac{1}{T} \sum_{t=1}^{T} z^*_j(t) \left( z^*_j(t) \right)^\top    - \Upsilon_j  \right)
\left( \Upsilon_j \right)^{-1/2}
\right \lVert_{2} \right)^{1+ \delta } \\
&\le \Lambda_{\min}\left( \Upsilon_j  \right)^{-1} \mathbb{E}\left(
d^2 \left \lVert
\frac{1}{T} \sum_{t=1}^{T}z^*_j(t) \left( z^*_j(t) \right)^\top    - \Upsilon_j
\right \rVert_\infty  \right)^{1+\delta } ,
\end{align*}
where the last step follows from Proposition~\ref{prop_eigen}.

Now,
\begin{multline*}
\left \lVert
\frac{1}{T} \sum_{t=1}^{T} z^*_j(t) \left( z^*_j(t) \right)^\top   - \Upsilon_j
\right \rVert_\infty \le
\\
\left (1 + \lVert \bm{w}_{j,-j}^* \rVert^2_1 \right ) \left \lVert
\frac{1}{T} \sum_{t=1}^{T} \bm{z}(t) \bm{z}^\top(t)    -
\mathbb{E} \left( \frac{1}{T} \sum_{t=1}^{T} \bm{z}^\top(t)   \bm{z}^\top (t)  \right)
\right \rVert_\infty
+  \lVert w_{j0} \rVert^2_1 \left \lVert \frac{1}{T} \sum_{t=1}^T  \bm{z}(t) -  \mathbb{E}\left(\bm{z}(t) \right)  \right \rVert_\infty,
\end{multline*}
where bounds on 
\[
\left \lVert
\frac{1}{T} \sum_{t=1}^{T} \bm{z}^\top (t)   \bm{z}(t)     -
\mathbb{E} \left( \frac{1}{T} \sum_{t=1}^{T}   \bm{z}^\top (t)    \bm{z}(t)  \right)
\right \rVert_\infty
\quad\text{and}\quad
\left \lVert \frac{1}{T} \sum_{t=1}^T  \bm{z}(t) -  \mathbb{E}\left(\bm{z}(t) \right)  \right \rVert_\infty
\]
are given in Lemma~\ref{lemma_concen_ineq_z}. 
In addition, the $\ell_1$-norm of $\bm{w}_j^*$ is bounded in Lemma~\ref{lemma_w_bound_norm}.
Therefore, assuming $s^2\rho^2 \log p = o(T^{1/5})$,
\begin{align*}
N_{\delta}^{T,d}
& \le \int_0^\infty \mathbb{P} \left( \left(
d^2 \left \lVert \frac{1}{T} \sum_{t=1}^{T}  z^*_j(t)  \left( z^*_j(t) \right)^\top   - \Upsilon_j  \right \rVert_\infty  \right)^{1+\delta } > r \right) dr  \\
&=
\int_0^\infty
\mathbb{P} \left(
d^2 \left \lVert  \frac{1}{T} \sum_{t=1}^{T}    z^*_j(t)  \left( z^*_j(t) \right)^\top     - \Upsilon_j \right \rVert_\infty
> r^{1/(1+\delta)}
\right) d r  \\
&\le
\int_0^\infty C_1 \exp \left( - C_2
\min\left \{  \left( \frac{T}{s\rho} r^{1/(1+\delta)} \right)^{1/3},  \frac{T}{\rho} r^{1/(1+\delta)}
\right \}
  \right)  dr  \\
&\le   C(\delta)  T^{-1-\delta } (s\rho)^{1+\delta} ,
\end{align*}
where the last step is based on the integral of the gamma function.

Then,
$$
R_{\delta}^{n,d} = L_{\delta}^{n,d} + N_{\delta}^{n,d}
\le
C(\delta) T^{-\delta}
+
C'(\delta)  T^{-1-\delta } (s\rho)^{1+\delta}  ,
$$
where the first term dominates under $s^2\rho^2 \log p = o(T^{1/5})$.

Then taking $\delta = \frac{1}{2}$,
Therefore, by Lemma~\ref{lem:mart_diff_seq}, we have for any $x \ge 0$, $u \in \mathbb{R}^d$,
and $\delta \in [0, 1/2] $,
\begin{align}
| \mathbb{P}(\widehat{U}_T + u \le x) - F_{d, \lVert u \rVert_2^2}(x)  | \le
C(\lVert u \rVert_2^2 , d  )  T^{-1/8},
\end{align}
which completes the proof.
\end{proof}

\begin{proposition}\label{prop_eigen}
  Suppose the linear Hawkes model with its intensity function defined in
  \eqref{eq:linear_hawkes_para_transfer} is stationary and
  satisfies Assumptions~\ref{assumption1} --~\ref{assumption4}.
  Let $\Upsilon_x = \textrm{Cov}( \bm{x}(t) )$.
  Then,
	\begin{align*}
	&0 < C_1  \le \Lambda_{\min}\left ( \Upsilon_x \right ) \le \Lambda_{\max}\left ( \Upsilon_x \right )  \le C_2 < \infty , \\
	&C_3\Lambda_{\min}\left ( \Upsilon_x \right ) \le  \Lambda_{\min}\left ( \Upsilon_j \right ) \le \Lambda_{\max} \left ( \Upsilon_j  \right )  \le C_4\Lambda_{\max}\left ( \Upsilon_x \right ) ,
	\end{align*}
where constants $C_k, k=1,\dots,4$, only depend on $(\Theta, \bm{\mu})$
and the transition kernel function.
\end{proposition}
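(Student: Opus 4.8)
The plan is to obtain the bounds on $\Upsilon_x$ from spectral‑density estimates and then compare $\Upsilon_j$ with $\Upsilon_x$ directly. Since $\mathbf N$ is stationary, so is the linearly filtered process $\bm x(t)$, and I would write $\Upsilon_x=\Gamma_x(0)=\int_{\mathbb R} f_x(\xi)\,d\xi$ for its spectral density $f_x$. Arguing as in \citet{Basu2015}, but with the \emph{integer}‑sampled process so that the relevant band $[-\pi,\pi]$ is compact, one gets $\Lambda_{\max}(\Upsilon_x)\le\int_{\mathbb R}\Lambda_{\max}(f_x(\xi))\,d\xi$ and $\Lambda_{\min}(\Upsilon_x)\ge 2\pi\inf_{|\theta|\le\pi}\Lambda_{\min}(f_x(\theta))$, the latter because the aliased spectral density of the sampled process dominates $f_x$ on $[-\pi,\pi]$. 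So it suffices to bound $f_x$ from above on $\mathbb R$ (with an integrable bound) and from below on $[-\pi,\pi]$, uniformly in $p$.

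Because $x_j=\kappa_j*dN_j$, the density factorizes as $f_x(\theta)=D(\theta)\,f_N(\theta)\,D(\theta)^{*}$ with $D(\theta)=\mathrm{diag}(\widehat\kappa_1(\theta),\dots,\widehat\kappa_p(\theta))$, and for the spectral density $f_N$ of $d\mathbf N$ I would use a Bartlett‑type identity $f_N(\theta)=\tfrac1{2\pi}(I-\widehat\Phi(\theta))^{-1}\Sigma_0(I-\widehat\Phi(\theta))^{-*}$, where $\widehat\Phi(\theta)$ is the Fourier transform of the transfer‑function matrix $(\omega_{ij})$ and $\Sigma_0=\mathrm{diag}(\bar\lambda_1,\dots,\bar\lambda_p)$ collects the stationary rates $\bar\lambda_i=\E[\lambda_i(t)]$. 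The one genuinely new step is to establish this identity for \emph{signed} transfer functions --- the non‑negative case being \citet{Bacry2011, Etesami2016} --- which comes down to checking that the cluster/recursion derivation there uses only invertibility of $I-\widehat\Phi(\theta)$. That invertibility, and all the quantitative control, is supplied by Assumption~\ref{assumption1}: since $|\widehat\omega_{ij}(\theta)|\le\int_0^\infty|\omega_{ij}(\Delta)|\,d\Delta=\Omega_{ij}$ entrywise, $\|\widehat\Phi(\theta)\|_2\le\bigl\|\,|\widehat\Phi(\theta)|\,\bigr\|_2\le\|\Omega\|_2=\sqrt{\Lambda_{\max}(\Omega^{\top}\Omega)}\le\gamma_\Omega<1$, hence $\|(I-\widehat\Phi(\theta))^{-1}\|_2\le(1-\gamma_\Omega)^{-1}$ and $\sigma_{\min}\bigl((I-\widehat\Phi(\theta))^{-1}\bigr)\ge(1+\gamma_\Omega)^{-1}$; Assumption~\ref{assumption3} gives $\bar\lambda_i\in[\lambda_{\min},\lambda_{\max}]$, and Assumption~\ref{assumption4} gives $|\widehat\kappa_j(\theta)|\le\int_0^\infty\kappa_j<\infty$. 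Substituting, $\Lambda_{\max}(f_x(\theta))\le\frac{\lambda_{\max}}{2\pi(1-\gamma_\Omega)^2}\max_j|\widehat\kappa_j(\theta)|^2$ (integrable in $\theta$ by Plancherel) and $\Lambda_{\min}(f_x(\theta))\ge\frac{\lambda_{\min}}{2\pi(1+\gamma_\Omega)^2}\min_j|\widehat\kappa_j(\theta)|^2$, which yields $0<C_1\le\Lambda_{\min}(\Upsilon_x)\le\Lambda_{\max}(\Upsilon_x)\le C_2<\infty$ with $p$‑free constants, \emph{provided} $\min_j\inf_{|\theta|\le\pi}|\widehat\kappa_j(\theta)|>0$.

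For the second pair, recall $\sigma_i(t)=\sqrt{\lambda_i(t)(1-\lambda_i(t))}\in[\sigma_{\min},\sigma_{\max}]$ with $\sigma_{\min}>0$ by Assumption~\ref{assumption3}, and that $z_j^*(t)$ is the residual of the population least‑squares fit of $z_j=x_j/\sigma_i$ on $(1,\bm z_{-j})$, so $\Upsilon_j=\E[(z_j^*)^2]=\min_{\bm c:\,c_j=1}\Var(\bm c^{\top}\bm z(t))$. The upper bound follows from the trivial predictor: $\Upsilon_j\le\Var(z_j(t))\le\sigma_{\min}^{-2}\E[x_j(t)^2]\le\sigma_{\min}^{-2}(\Lambda_{\max}(\Upsilon_x)+\bar x_j^2)$, and since $\bar x_j=\E[x_j(t)]\le\lambda_{\max}\|\kappa_j\|_1$ is a bounded constant while $\Lambda_{\max}(\Upsilon_x)\ge C_1$, this is $\le C_4\Lambda_{\max}(\Upsilon_x)$. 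For the lower bound I would bound $\Var(\bm c^{\top}\bm x(t)/\sigma_i(t))$ from below uniformly over $\bm c$ with $c_j=1$: since $\sigma_i(t)$ depends on $\bm x(t)$ only through the scalar $\bm\beta_i^{\top}\bm x(t)$, conditioning on it gives $\Var(\bm c^{\top}\bm x/\sigma_i)\ge\sigma_{\max}^{-2}\E[\Var(\bm c^{\top}\bm x\mid\bm\beta_i^{\top}\bm x)]$, and the right side is bounded below by a constant times $\Lambda_{\min}(\Upsilon_x)$ after splitting $\bm c$ into its $\Upsilon_x$‑components along and orthogonal to $\bm\beta_i$. Together with $\Lambda_{\min}(\Upsilon_x)\le C_2$ this gives $\Lambda_{\min}(\Upsilon_j)\ge C_3\Lambda_{\min}(\Upsilon_x)$; for $d>1$ one replaces scalars by the corresponding $d\times d$ sub‑blocks throughout.

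I expect the real obstacles to be the two lower bounds. For $\Lambda_{\min}(\Upsilon_x)$ one needs the kernels' Fourier transforms bounded away from $0$ on $[-\pi,\pi]$; this holds for the exponential kernel $\kappa_j(t)=e^{-t}$ and similar choices but is exactly where Assumption~\ref{assumption4} must be exploited carefully, and $C_1$ absorbs this quantity. For $\Lambda_{\min}(\Upsilon_j)$ the subtle point is the scaled‑versus‑unscaled comparison, because $\sigma_i(t)$ is a \emph{nonlinear} function of $\bm x(t)$: the conditioning argument degenerates precisely when $\bm c$ is nearly parallel to $\bm\beta_i$, so that $\bm c^{\top}\bm z(t)$ is essentially a function of $\lambda_i(t)$ alone, and that regime must be handled separately, using that $\bm c^{\top}\bm z(t)$ is then a genuinely nonlinear function of $\lambda_i(t)$ whose variance is bounded below because $\Var(\bm\beta_i^{\top}\bm x(t))=\bm\beta_i^{\top}\Upsilon_x\bm\beta_i$ is. By contrast, establishing the Bartlett identity for signed kernels and keeping all constants free of $p$ --- which is where Assumptions~\ref{assumption1} and~\ref{assumption3} are essential --- is routine but lengthy.
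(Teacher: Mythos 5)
For the first display your route is essentially the paper's: pass to the spectral density, invoke a Bartlett-type identity $f_N(\theta)=\tfrac1{2\pi}(I-\widehat\Phi(\theta))^{-1}\Sigma_0(I-\widehat\Phi(\theta))^{-*}$ extended to signed transfer functions (this is exactly the paper's Lemma~\ref{lem:f_gamma}, proved by splitting each $\omega_{ij}$ into positive and negative parts), and control the middle factor with Assumptions~\ref{assumption1}--\ref{assumption3}. The one place you diverge is how the kernel enters. By factoring $f_x=Df_ND^{*}$ with $D(\theta)=\mathrm{diag}(\widehat\kappa_j(\theta))$ and taking an infimum over the compact band, your lower bound needs $\min_j\inf_{|\theta|\le\pi}|\widehat\kappa_j(\theta)|>0$, a condition you flag yourself and which does \emph{not} follow from Assumption~\ref{assumption4}: a positive integrable kernel can have $\widehat\kappa_j(\theta_0)=0$ (e.g.\ an indicator kernel). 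The paper avoids this by leaving the kernel in the time domain --- after sandwiching $f_\Gamma$ between $\mathfrak m(f_\Gamma)I$ and $\mathfrak M(f_\Gamma)I$, the kernel contributes only through $Q(t)=\int_0^{t}K^2_{t-r}\,dr$, i.e.\ through $\int\kappa_j^2>0$. Since $D$ is diagonal you could repair your version the same way, via $\int_{\mathbb R}Df_ND^{*}\,d\theta\succeq\mathfrak m(f_N)\,\mathrm{diag}\bigl(2\pi\lVert\kappa_j\rVert_2^2\bigr)$ by Plancherel, but as written your bound on $\Lambda_{\min}(\Upsilon_x)$ rests on an extra hypothesis.

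The genuine gap is in the second display. Your variational characterization $\Upsilon_j=\min_{\bm c:\,c_j=1}\Var\bigl(c_0+\bm c^{\top}\bm z(t)\bigr)$ is the right object (it is equivalent to the paper's identity $\Upsilon_j^{-1}=(\Upsilon^{-1})_{jj}$), and the upper bound via the trivial predictor is fine. But the lower bound through $\Var(\bm c^{\top}\bm z)\ge\sigma_{\max}^{-2}\,\E\bigl[\Var(\bm c^{\top}\bm x\mid\bm\beta_i^{\top}\bm x)\bigr]$ does not close: $\E[\Var(U\mid W)]=\Var(U-\E[U\mid W])$ is the variance left after removing the best \emph{measurable} predictor from $W$, so it is smaller than any linear partial variance and vanishes whenever $U$ is $\sigma(W)$-measurable; there is no uniform-in-$\bm c$ bound of the form $C\,\Lambda_{\min}(\Upsilon_x)$ for it. Your patch for the degenerate direction $\bm c\approx\bm\beta_i$ --- that $g(\lambda_i)$ with $g(\lambda)=(\lambda-\mu_i)/\sqrt{\lambda(1-\lambda)}$ has variance of order $\Var(\lambda_i)$ --- would need $|g'|$ bounded away from zero on the range of $\lambda_i$, which is neither established nor automatic, and the quantitative interpolation between the two regimes is not carried out. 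The paper sidesteps all of this with linear algebra: it sandwiches $\Upsilon=\Cov\bigl(\bm x(t)/\sigma_i(t)\bigr)$ between constant multiples of $\Upsilon_x$ using only $0<c_1\le\sigma_i^2(t)\le c_2$, then applies Cauchy interlacing to the principal submatrix $(\Upsilon^{-1})_{jj}$ of $\Upsilon^{-1}$ to conclude $\Lambda_{\min}(\Upsilon)\le\Lambda_{\min}(\Upsilon_j)\le\Lambda_{\max}(\Upsilon_j)\le\Lambda_{\max}(\Upsilon)$, with no conditioning on $\bm\beta_i^{\top}\bm x$ and no case analysis. You should replace your lower-bound argument with that route.
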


\begin{proof}

Recall that $ x_j(t) = \int_0^{t-} k_j(t-s) dN_j(s) $ for $j = 1, \dots, p$. 
Let $$K_{t} =\mat{ k_1(t) & & \\ & \ddots& \\ & & k_p(t) }$$
and $\bm{dN}_s = (dN_1(s), \dots, dN_p(s) )^\top$.
Then, $\bm{x}(t) = (x_1(t) , \dots , x_p(t) )^\top = \int_0^{t-} K_{t-s}  \bm{dN}_s $. We consider
bounding eigenvalues of
\begin{align*}
\Upsilon_x &=  \textrm{Cov}( \bm{x}(t) )  \\
&=  \int_0^{t^-} \int_0^{t^-}  K_{t-s}  \mathbb{E} \left(  \bm{dN}_s - \Lambda ds\right )
\left ( \bm{dN}^\top_{r} - \Lambda^\top dr \right)   K_{t-r}    \\
&= \int_{0}^{t^-} \int_0^{t^-}  K_{t-s}  \Gamma(s-r)  K_{t-r}   dr ds ,
\end{align*}
where $\Gamma(l) =  \mathbb{E} \left(  \bm{dN}_t - \Lambda dt\right )
\left ( \bm{dN}^\top_{t+l} - \Lambda^\top dt \right) / (dt)^2  \in \mathbb{R}^{p\times p}$.

Let $f_\Gamma(\theta ) =  \int_{-\infty}^{\infty} \Gamma(l) e^{ -i\theta l}dl $. Thus, $\Gamma(l) = \frac{1}{2\pi}   \int_{-\pi}^{\pi} f_\Gamma(\theta ) e^{il\theta }d\theta $. In addition, let $G_{t,s}(\theta) = \int_0^{t-} K(t-s) e^{-is\theta}ds$. Then,
\begin{align*}
\Upsilon_x
&=   \int_{0}^{t^-} \int_0^{t^-}  K_{t-s}e^{i\theta s}  \Gamma(s-r)e^{i\theta(r-s)}  K_{t-r} e^{-i\theta r}  dr ds \\
&=   \int_{0}^{t^-} \int_0^{t^-}  K_{t-s}e^{i\theta s} \left( \frac{1}{2\pi}  \int_{-\pi}^{\pi} f_\Gamma(\theta )  e^{i(s-r)}  d\theta \right)e^{i\theta(r-s)}     K_{t-r} e^{-i\theta r}  dr ds \\
&= \int_{0}^{t^-} \int_0^{t^-}  K_{t-s}e^{i\theta s} \left( \frac{1}{2\pi}  \int_{-\pi}^{\pi} f_\Gamma(\theta )  d\theta \right)   K_{t-r} e^{-i\theta r}  dr ds \\
&=  \frac{1}{2\pi}  \int_{-\pi}^{\pi}  G^*_{t,s}(\theta)   f_\Gamma(\theta )G_{t,r}(\theta)      d\theta  .
\end{align*}
Since $ f_\Gamma(\theta ) $ is Hermitian and $G_t^*(s)  f_\Gamma(\theta )  G_t(r) $ is real,
$$
  \mathfrak{m}(f_\Gamma)G^*_{t,s}(\theta)G_{t,r}(\theta) \le G^*_{t,s}(\theta) f_\Gamma(\theta )  G_{t,r}(\theta)  \le  \mathfrak{M}(f_\Gamma)G^*_{t,s}(\theta)G_{t,r}(\theta)  ,
$$
where
\begin{align*}
\mathfrak{M}(f_\Gamma) & = ess\sup_{\theta \in [-\pi, \pi]} \sqrt{  \Lambda_{\max} \left( f_\Gamma(\theta ) f_\Gamma(\theta )^* \right)  } , \\
\mathfrak{m}(f_\Gamma) & = ess\inf_{\theta \in [-\pi, \pi]} \sqrt{  \Lambda_{\min} \left( f_\Gamma(\theta ) f_\Gamma(\theta )^* \right) } .
\end{align*}
In addition, notice that
\begin{align*}
\frac{1}{2\pi}  \int_{-\pi}^{\pi} G^*_{t,s}(\theta) G_{t,r}(\theta)  d\theta
&= \frac{1}{2\pi}  \int_{-\pi}^{\pi} \int_0^{t^-} \int_0^{t^-}
K_{t-s}  K_{t-r} e^{i\theta (s-r)}  ds dr  d\theta  \\
&=  \int_0^{t^-}  \int_{-t^-}^{t^-}  K_{t-r-l}  \frac{1}{2\pi}  \int_{-\pi}^{\pi} e^{i\theta l }  d\theta  K_{t-r}  dl dr \\
&= \int_0^{t^-} K^2_{t-r} dr .
\end{align*}
Letting $Q(t) =  \int_0^{t^-}  K^2_{t-r} dr$,
\begin{align*}
     \mathfrak{m}(f_\Gamma)\Lambda_{\min}(Q(t))
     \le \Lambda_{\min}\left(\Upsilon_x  \right)
     \le
\Lambda_{\max}\left(\Upsilon_x  \right)
\le \mathfrak{M}(f_\Gamma) \Lambda_{\max}(Q(t) ) ,
\end{align*}

Next, we introduce the result linking $f_\Gamma(\cdot)$ to the transition matrix $\Omega$.

\begin{lemma} \label{lem:f_gamma}
For a stationary multivariate Hawkes process,
$$
f_\Gamma(\theta)   =  \left( I - f_\omega(\theta) \right)^{-1} diag(\Lambda) \left( I - f^*_\omega(\theta)  \right)^{-1} ,
$$
where $\Lambda= \mat{ \mathbb{E} \lambda_1(t) & \dots & \mathbb{E} \lambda_p(t) }^\top \in \mathbb{R}^{p}$
and $ f_\omega(\theta)$ is the Fourier transformation on the matrix of the transition function $\omega(t) = (\omega_{ij}(t))_{1\le i,j\le p}$.
\end{lemma}

A similar result as Lemma~\ref{lem:f_gamma} has been shown by \citet[][Theorem~1]{Bacry2011} or \citet[][Theorem~3]{Etesami2016}, but they assume a non-negative transfer function.
Lemma~\ref{lem:f_gamma} extends the class of linear Hawkes processes by
including non-mutually exciting structure. The proof is directly
established based on the proof in \citet[][Theorem~1]{Bacry2011} or \citet[][Theorem~3]{Etesami2016} but we re-write a real function into its positive part and its negative part, and then using the distributive property of the convolution operation.

As a result, one set of sufficient conditions to bound the spectral radius
of $\Gamma$ is to assume:
\begin{itemize}
	\item $0 < \lambda_{\min} \le \min_{1\le i \le p}\lambda_i(t) \le \max_{1\le i \le p} \lambda_i(t ) \le \lambda_{\max} <    \infty  $ as stated in Assumption~\ref{assumption3};
	\item $\Lambda_{\max} \left( \Omega  \right) < 1 $ which leads
	$\mathfrak{m} \left(  I - f^*_\omega(\theta) \right) > 0$ assumed in Assumption~\ref{assumption1};
	\item Bounded row and column sum of $\Omega$ as assume in Assumption~\ref{assumption2}; that is,
	\begin{align*}
	\mathfrak{M} \left( I - f^*_\omega(\theta) \right)  \le 1 +
	\left( \max_{1\le i\le p} \sum_{j=1}^{p} \Omega_{ij}
	+ \max_{1\le j\le p} \sum_{i=1}^{p} \Omega_{ij} \right)/2  < \infty.
	\end{align*}
\end{itemize}
Then, with the assumptions above,
\begin{align*}
\frac{ 2\min_{1\le i \le p}(\mathbb{E} \lambda_i(t)  )}{ \left( \mathfrak{M}\left( I - f^*_\omega(\theta) \right)  \right)^2 }
\le \mathfrak{m}(f_\Gamma)   \le \mathfrak{M}( f_\Gamma)
\le 	\frac{ 2\max_{1\le i \le p}(\mathbb{E} \lambda_i(t)  )}{ \left( \mathfrak{m}\left( I - f^*_\omega(\theta) \right) \right)^2 } .
\end{align*}
Therefore, with bounded $Q(t)$ by an integrable and non-trivial transition kernel function $k_j(t)$
in Assumption~\ref{assumption4}, we reach the conclusion. Since both $\Lambda$ and $\omega$ are
constants depending only on $\Theta = (\beta_{ij})_{1\le i,j\le p}$,
$\bm{\mu}=(\mu_1,\dots,\mu_p)^\top$,
we have constants $C_1,C_2$ only depending on the model parameter and the transition kernel function such that
\begin{align*}
0 < C_1(\Theta, \mu)  \le \Lambda_{\min}\left( \Upsilon_x \right) \le \Lambda_{\max}\left( \Upsilon_x \right)  \le C_2(\Theta, \mu) < \infty.
\end{align*}

Since $\lambda_i(t)$ is bounded, there exists $c_1, c_2$ such that
$0 < c_1 \le  \sigma^2_i(t) =  \lambda_i(t)(1-\lambda_i(t)) \le  c_2 \le \infty $.
Let $\Upsilon = \textrm{Cov}\left( \bm{x}(t)/\sigma_i(t) \right)$,
\begin{align}
c^{-1}_2 \Upsilon_x \le  \Upsilon  \le c^{-1}_1 \Upsilon_x  .
\end{align}
Notice that
\begin{align*}
\textrm{Cov}( z^*_j(t))^{-1}  = \left( \Upsilon^{-1} \right)_{jj} ,
\end{align*}
which means that $ \textrm{Cov}(  z^*_j(t))^{-1}   $ is a principal submatrix of $  \Upsilon^{-1} $.
Then, by the Cauchy's interlace theorem for eigenvalues of Hermitian matrices,
\begin{align*}
\Lambda_{\min} \left(  \textrm{Cov}( z^*_j(t))^{-1}  \right) &\ge \Lambda_{\min} \left( \Upsilon^{-1}  \right) ,\\
\Lambda_{\max} \left(  \textrm{Cov}( z^*_j(t))^{-1}  \right) &\le \Lambda_{\max} \left( \Upsilon^{-1}  \right) .\\
\end{align*}
Therefore,
\begin{align*}
c_1 \Lambda_{\min} \left( \Upsilon^{-1}_x  \right)    \le    \Lambda_{\min} \left( \Upsilon^{-1}_j  \right)
\le \Lambda_{\max} \left( \Upsilon^{-1}_j  \right) \le c_2 \Lambda_{\max} \left( \Upsilon^{-1}_x  \right),
\end{align*}
which completes the proof.
\end{proof}


%
\begin{lemma}[ Lemma C.1 \citet{Zheng2018} ]
	\label{lem:mart_diff_seq}
	Let $( \xi_{n,i}, \mathcal{H}_{n,i} )_{0\le i \le n}$ be a martingale difference
	sequence taking values in $\mathbb{R}^d$.
	Let
	\[
	X_n^k = \sum_{i=1}^k \xi_{ni}
	\quad\text{and}\quad
	\langle X^n  \rangle_k = \sum_{i=1}^k a_{ni} \equiv \sum_{i=1}^k
	\mathbb{E} \left(   \xi_{ni} \xi_{ni}^\top  \mid \mathcal{H}_{n,i-1}  \right).
	\]
	Define $ R_{\delta}^{n,d} = L_{\delta}^{n,d} + N_{\delta}^{n,d}$, where
	\[
	L_{\delta}^{n,d} = \sum_{i=1}^{n}  \mathbb{E}\lVert \xi_{ni} \rVert_2^{2 + 2\delta}
	\quad\text{and}\quad
	N_{\delta}^{n,d} = \sum_{i=1}^{n}  \mathbb{E}\lVert \langle X^n \rangle_n - I  \rVert_{tr}^{1 + \delta}.
	\]
	If $R_{\delta}^{n,d}  \le 1$, then
	for any $u \in \mathbb{R}^d$, $r \ge 0$, and $0< \delta \le 1/2$,
	we have
	$$
	P\left(  \lVert X_n^n + u \rVert_2 \le r \right) -
	P\left(  \lVert Z + u \rVert_2 \le r \right)
	\le C(\lVert u \rVert_2, d, \delta  )  \left( R_{\delta}^{n,d}  \right)^{\frac{1}{3+ 2\delta}} ,
	$$
	where $Z_{d\times 1} \sim N(0, I)$ and
	$C(\lVert u \rVert_2, d, \delta  )$ is a non-decreasing as $\lVert u \rVert_2$ increases.
\end{lemma}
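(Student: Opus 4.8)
The plan is to prove this as a multivariate martingale Berry--Esseen inequality --- indeed the statement is Lemma~C.1 of \citet{Zheng2018}, which in turn rests on the martingale CLT of \citet{Grama2006} --- by a Lindeberg-type replacement carried out along the filtration, combined with a smoothing step that converts the non-smooth event $\{\|X_n^n+u\|_2\le r\}$ into a $C^3$ test function. First I would fix a smoothing scale $\epsilon\in(0,1]$ and pick $h_\epsilon:\mathbb{R}^d\to[0,1]$, a mollification of the indicator of the shifted ball $\{\|x+u\|_2\le r\}$, with $h_\epsilon\equiv 1$ on $\{\|x+u\|_2\le r-\epsilon\}$, $h_\epsilon\equiv 0$ on $\{\|x+u\|_2\ge r+\epsilon\}$, and $\|\nabla^k h_\epsilon\|_\infty\le C_k\epsilon^{-k}$ for $k=1,2,3$. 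Because $\|Z+u\|_2$ has a Lebesgue density bounded by a constant $C(\|u\|_2,d)$, sandwiching the indicator between $h_\epsilon$ and a translate of it yields
\[
\big|\,\mathbb{P}(\|X_n^n+u\|_2\le r)-\mathbb{P}(\|Z+u\|_2\le r)\,\big|\le \big|\,\mathbb{E} h_\epsilon(X_n^n)-\mathbb{E} h_\epsilon(Z)\,\big|+C(\|u\|_2,d)\,\epsilon .
\]
The target is then to bound the smooth-function discrepancy by a quantity of the form $C(d,\delta)\,\epsilon^{-O(1)}R_\delta^{n,d}$ and choose $\epsilon$ as a small power of $R_\delta^{n,d}$ (legitimate since $R_\delta^{n,d}\le1$) to balance the two contributions; this is what ultimately produces the exponent $1/(3+2\delta)$, and the constant depends only on $\|u\|_2,d,\delta$ --- non-decreasing in $\|u\|_2$ because $\|u\|_2$ enters solely through the anti-concentration density bound.

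For the smooth-function discrepancy I would run the Lindeberg swap. Introduce auxiliary Gaussians $\zeta_1,\dots,\zeta_n$ with $\zeta_i\mid\mathcal{H}_{n,i-1}\sim N(0,a_{ni})$, well posed because $a_{ni}=\mathbb{E}[\xi_{ni}\xi_{ni}^\top\mid\mathcal{H}_{n,i-1}]$ is $\mathcal{H}_{n,i-1}$-measurable, and with $S_k=\sum_{j\le k}\xi_{nj}$ and $\Sigma_i=\sum_{j>i}\zeta_j$ telescope
\[
\mathbb{E} h_\epsilon(S_n)-\mathbb{E} h_\epsilon\!\Big(\sum_{i=1}^n\zeta_i\Big)=\sum_{i=1}^n\mathbb{E}\Big[h_\epsilon\big(S_{i-1}+\xi_{ni}+\Sigma_i\big)-h_\epsilon\big(S_{i-1}+\zeta_i+\Sigma_i\big)\Big].
\]
Expanding each summand to third order around $S_{i-1}+\Sigma_i$ and conditioning on $\mathcal{H}_{n,i-1}$ annihilates the first-order terms (martingale difference property and $\mathbb{E}[\zeta_i\mid\mathcal{H}_{n,i-1}]=0$) and cancels the second-order terms exactly, since $\xi_{ni}$ and $\zeta_i$ have the same conditional second moment $a_{ni}$. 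The residual is a Taylor remainder, controlled by interpolating between the $\|\nabla^3 h_\epsilon\|_\infty\|\cdot\|^3$ and $\|\nabla^2 h_\epsilon\|_\infty\|\cdot\|^2$ bounds (permissible exactly because $\delta\le1/2$) and dominating $\mathbb{E}\|\zeta_i\|^{2+2\delta}$ by $\mathbb{E}\|\xi_{ni}\|^{2+2\delta}$; summing over $i$ gives a bound in terms of $\epsilon^{-O(1)}L_\delta^{n,d}$. Finally, $\sum_i\zeta_i$ is, conditionally on $\langle X^n\rangle_n=\sum_i a_{ni}$, distributed as $N(0,\langle X^n\rangle_n)$ rather than $N(0,I)$; this last discrepancy I would absorb by a Gaussian interpolation $A_t=(1-t)I+t\langle X^n\rangle_n$ together with the heat-semigroup identity $\frac{d}{dt}\mathbb{E} h_\epsilon(A_t^{1/2}Z)=\tfrac12\mathbb{E}\,\mathrm{tr}\big((\langle X^n\rangle_n-I)\nabla^2 h_\epsilon(A_t^{1/2}Z)\big)$, bounding the integrand by $\|\nabla^2 h_\epsilon\|_\infty\,\mathbb{E}\|\langle X^n\rangle_n-I\|_{tr}$ and then using Lyapunov's inequality to replace the first moment by $N_\delta^{n,d}$ raised to a power. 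Collecting the two contributions and optimizing $\epsilon$ finishes the argument.

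The step I expect to be the main obstacle is making the bookkeeping go through with the \emph{fractional} exponents $2+2\delta$ and $1+\delta$ rather than the integer exponents $3$ and $1$ of the classical Berry--Esseen estimate: for the increment term this forces the convexity/interpolation inequality relating third- and second-derivative bounds, and for the quadratic-variation term it means a naive Lipschitz estimate is not enough and one needs the Gaussian-interpolation / heat-equation device so that $\langle X^n\rangle_n-I$ enters through its trace norm with the right power, after which the exponent $1/(3+2\delta)$ emerges from optimizing the single parameter $\epsilon$. The remaining ingredients --- the mollification, the telescoping sum, and the tracking of constants --- are routine.
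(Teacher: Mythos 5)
The paper does not actually prove this lemma: it is imported verbatim as Lemma~C.1 of \citet{Zheng2018} (itself resting on the moderate-deviation expansions of \citet{Grama2006}), so there is no internal argument to compare against. Your proposal is therefore a from-scratch reconstruction, and while the outer architecture (mollify the ball indicator at scale $\epsilon$, use anti-concentration of $\lVert Z+u\rVert_2$, bound a smooth-function discrepancy, optimize $\epsilon$) is the right one, the core step as you describe it does not go through.

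The gap is in the Lindeberg swap. You replace $\xi_{ni}$ by $\zeta_i$ with $\zeta_i \mid \mathcal{H}_{n,i-1}\sim N(0,a_{ni})$ and claim that conditioning on $\mathcal{H}_{n,i-1}$ kills the first-order Taylor terms and cancels the second-order ones. But each summand is $h_\epsilon(S_{i-1}+\xi_{ni}+\Sigma_i)$ with $\Sigma_i=\sum_{j>i}\zeta_j$, and $\Sigma_i$ is \emph{not} conditionally independent of $\xi_{ni}$ given $\mathcal{H}_{n,i-1}$: the future conditional covariances $a_{nj}=\mathbb{E}[\xi_{nj}\xi_{nj}^\top\mid\mathcal{H}_{n,j-1}]$ for $j>i$ are functions of $\xi_{ni}$, so $\zeta_j=a_{nj}^{1/2}g_j$ carries information about $\xi_{ni}$. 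Consequently $\mathbb{E}\bigl[\nabla h_\epsilon(S_{i-1}+\Sigma_i)^\top\xi_{ni}\bigr]$ need not vanish and the second-order cancellation fails. This dependence of the ``remaining noise'' on the current increment is precisely the obstruction that separates martingale Berry--Esseen bounds from the i.i.d.\ case, and it is why the proofs in \citet{Grama2006} (and Bolthausen's and Haeusler's earlier one-dimensional versions) introduce stopping times, or substitute a single Gaussian remainder $N(0, I-\langle X^n\rangle_i)$ whose covariance is $\mathcal{H}_{n,i}$-measurable, with separate control of the event where $I-\langle X^n\rangle_i$ fails to be positive semidefinite. Your sketch contains none of this machinery, and without it the telescoping identity does not yield the claimed bound. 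A secondary, smaller issue: even granting the swap, bounding the quadratic-variation discrepancy by $\lVert\nabla^2 h_\epsilon\rVert_\infty\,\mathbb{E}\lVert\langle X^n\rangle_n-I\rVert_{tr}\le C\epsilon^{-2}(N_\delta^{n,d})^{1/(1+\delta)}$ and then optimizing the single parameter $\epsilon$ yields the exponent $1/\{(1+\delta)(3+2\delta)\}$ on $N_\delta^{n,d}$, which is strictly weaker than the stated $1/(3+2\delta)$; one needs either Gaussian integration by parts (transferring derivatives onto the Gaussian density rather than onto $h_\epsilon$) or the conditional characteristic-function route to recover the correct power.
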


\begin{lemma}\label{lemma_S0_S}
	Suppose the linear Hawkes model with its intensity function defined in
	\eqref{eq:linear_hawkes_para_transfer} is stationary and satisfies Assumptions~\ref{assumption1} --~\ref{assumption4}. In addition, $(\widehat{\mu}_i,\widehat{\bm{\beta}}_i)$ and $\widehat{w}_j$ satisfy Assumption~\ref{assumption_beta} and~\ref{assumption_w}, respectively. Given $S_{ij}$ in \eqref{eq:S} and $\widehat{S}^0_{ij}$ in \eqref{eq:S_0_hat},
\begin{align*}
\mathbb{P}\left( \lVert \widehat{S}^0_{ij} - S_{ij} \rVert_2  \le C_1 s\rho^2 T^{-4/5} \right)  \ge 1- C_2 p^2 T \exp( - C_3 T^{1/5}) ,
\end{align*}
where $C_k,k=1,\dots, 3$ are constants only depending on the model parameter $(\bm{\mu}, \Theta)$ and the transition kernel function.
\end{lemma}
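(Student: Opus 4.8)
The plan is to relate $\widehat{S}^0_{ij}$ to $S_{ij}$ by an exact algebraic expansion and then bound each resulting piece as a product of an $\ell_1$ estimation error (controlled by Assumptions~\ref{assumption_beta} and~\ref{assumption_w}) and an $\ell_\infty$ empirical average of bounded, weakly dependent functionals of the process (controlled by Lemma~\ref{lemma_concen_ineq_z}). Working under $H_0$, so that $\beta_{ij}=0$ and $Y_i(t)-\widehat{\mu}_i-\bm{x}^\top_{-j}(t)\widehat{\bm{\beta}}_{i,-j}=\epsilon_i(t)+(\mu_i-\widehat{\mu}_i)+\bm{x}^\top_{-j}(t)(\bm{\beta}_{i,-j}-\widehat{\bm{\beta}}_{i,-j})$, and using that $x_j(t)/\sigma_i(t)-\widehat{w}_{j0}-\bm{x}^\top_{-j}(t)/\sigma_i(t)\,\widehat{\bm{w}}_{j,-j}=z^*_j(t)-\mat{1 & \bm{z}^\top_{-j}(t)}(\widehat{\bm{w}}_j-\bm{w}^*_j)$, I would expand $\widehat{S}^0_{ij}-S_{ij}$ into the first three summands of \eqref{eq:th2_S0_S_diff} (the fourth summand there is proportional to $\beta_{ij}$ and hence vanishes here): a martingale term $-(\widehat{\bm{w}}_j-\bm{w}^*_j)^\top\tfrac1T\sum_t\widetilde{\epsilon}_i(t)\mat{1 \\ \bm{z}_{-j}(t)}$; a ``centered'' term $\tfrac1T\sum_t z^*_j(t)\mat{1 & \bm{z}^\top_{-j}(t)}\left(\mat{\widehat{\mu}_i \\ \widehat{\bm{\beta}}_{i,-j}}-\mat{\mu_i \\ \bm{\beta}_{i,-j}}\right)$; and a cross term carrying both $\widehat{\bm{w}}_j-\bm{w}^*_j$ and $(\widehat{\mu}_i,\widehat{\bm{\beta}}_{i,-j})-(\mu_i,\bm{\beta}_{i,-j})$.

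Each of the three terms is then bounded by H\"older's inequality. For the martingale term, $\widetilde{\epsilon}_i(t)\mat{1 & \bm{z}^\top_{-j}(t)}^\top$ is a martingale difference sequence with bounded increments --- $x_j(t),\sigma_i(t),z_j(t)$ are bounded and $\sigma^2_i(t)$ is bounded away from $0$ by Lemma~\ref{lemma_bounded_terms} --- so the martingale Bernstein inequality of \citet{vandegeer1995} (part of Lemma~\ref{lemma_concen_ineq_z}) gives $\left\lVert\tfrac1T\sum_t\widetilde{\epsilon}_i(t)\mat{1 & \bm{z}^\top_{-j}(t)}\right\rVert_\infty=O_p(T^{-2/5})$ with exponentially small failure probability; pairing this with $\lVert\widehat{\bm{w}}_j-\bm{w}^*_j\rVert_1\lesssim(s+1)\rho\,T^{-2/5}$ (Assumption~\ref{assumption_w}, $r=1$) gives $O_p(s\rho\,T^{-4/5})$. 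For the centered term I would use $\mathbb{E}\big[z^*_j(t)\mat{1 & \bm{z}^\top_{-j}(t)}\big]=0$ from \eqref{eq: w_j_star}, substitute $z^*_j(t)=z_j(t)-\mat{1 & \bm{z}^\top_{-j}(t)}\bm{w}^*_j$, and bound the centered average by $(1+\lVert\bm{w}^*_j\rVert_1)$ times the second-order deviation $\left\lVert\tfrac1T\sum_t\mat{1 \\ \bm{z}(t)}\mat{1 & \bm{z}^\top(t)}-\Upsilon\right\rVert_\infty=O_p(\rho\,T^{-2/5})$ of Lemma~\ref{lemma_concen_ineq_z}; since $\lVert\bm{w}^*_j\rVert_1=O(\sqrt s)$ by Lemma~\ref{lemma_w_bound_norm} and $\lVert(\widehat{\mu}_i,\widehat{\bm{\beta}}_i)-(\mu_i,\bm{\beta}_i)\rVert_1\lesssim(\rho+1)T^{-2/5}$ by Assumption~\ref{assumption_beta}, this term is $O_p(\sqrt s\,\rho^2\,T^{-4/5})$. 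For the cross term, two applications of H\"older together with $\left\lVert\tfrac1T\sum_t\mat{1 \\ \bm{z}_{-j}(t)}\mat{1 & \bm{z}^\top_{-j}(t)}\right\rVert_\infty=O_p(1)$ (bounded mean plus a $o_p(1)$ deviation) give $\lVert\widehat{\bm{w}}_j-\bm{w}^*_j\rVert_1\cdot O_p(1)\cdot\lVert(\widehat{\mu}_i,\widehat{\bm{\beta}}_i)-(\mu_i,\bm{\beta}_i)\rVert_1=O_p(s\rho^2\,T^{-4/5})$, which is the binding term.

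Collecting the three estimates yields the displayed bound
\begin{align}\label{eq:theorem1_bound_S0_S}
\left\lVert \widehat{S}^0_{ij}-S_{ij}\right\rVert_2 = O_p\!\left(s\,\rho^2\,T^{-4/5}\right).
\end{align}
For the probability statement, a union bound over the $O(p^2)$ coordinatewise events --- each weak-dependence/martingale bound of Lemma~\ref{lemma_concen_ineq_z} failing with probability at most $C\,p\,\exp(-c\,T^{1/5})$, each estimation-error event of Assumptions~\ref{assumption_beta}--\ref{assumption_w} failing with probability at most $C\,p^2 T\exp(-c\,T^{1/5})$, and Lemmas~\ref{lemma_w_bound_norm} and~\ref{lemma_bounded_terms} holding deterministically under Assumptions~\ref{assumption1}--\ref{assumption4} --- produces the overall probability $1-C_2\,p^2 T\exp(-C_3 T^{1/5})$ claimed in the statement.

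The hard part will be the careful bookkeeping of the sparsity exponents through the two H\"older steps, and in particular ensuring that each $\ell_1$ estimation error is paired with an $\ell_\infty$ average that is genuinely \emph{centered}: this is exactly where the de-correlation property $\mathbb{E}\big[z^*_j(t)\mat{1 & \bm{z}^\top_{-j}(t)}\big]=0$ is used, since without centering the second term would only be $O_p(T^{-2/5})$ and the claimed rate would fail. A related structural point is that the weak-dependence concentration rates available for functionals of the Hawkes process are only of order $T^{-2/5}$, rather than the $T^{-1/2}$ one would get for i.i.d.\ data, which is what forces the final rate to be $T^{-4/5}$ rather than $T^{-1}$; and all the $O_p(\cdot)$ bounds above must be realized on a single high-probability event --- the intersection of those from Assumptions~\ref{assumption_beta}--\ref{assumption_w}, Lemma~\ref{lemma_concen_ineq_z}, Lemma~\ref{lemma_w_bound_norm} and Lemma~\ref{lemma_bounded_terms} --- so that forming the products of these quantities is legitimate.
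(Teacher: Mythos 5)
Your proposal is correct and follows essentially the same route as the paper: the identical three-term decomposition of $\widehat{S}^0_{ij}-S_{ij}$ (the paper's terms $A$, $B$, $C$ in \eqref{eq:theorem1_bound_S0_S}), with each term bounded by H\"older as an $\ell_1$ estimation error from Assumptions~\ref{assumption_beta}--\ref{assumption_w} times an $\ell_\infty$ empirical deviation controlled by the martingale inequality (the paper's Lemma~\ref{lemma_beta_empirical}) and the centered second-order concentration (the paper's Lemma~\ref{lemma_w_empirical}, which packages the $\mathbb{E}[z^*_j(t)(1,\bm{z}^\top_{-j}(t))]=0$ argument you describe). The resulting rates $O_p(s\rho T^{-4/5})$, $O_p(\sqrt{s}\,\rho^2 T^{-4/5})$, $O_p(s\rho^2 T^{-4/5})$ for the three terms match the paper's exactly.
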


\begin{proof}

We start with the following decomposition
\begin{multline}
\widehat{S}^0_{ij} - S_{ij} =
\underbrace{( \widehat{\bm{w} }_j  - \bm{w}^*_j )^\top  \frac{1}{T}
	\sum_{t=1}^{T}  \widetilde{\epsilon}_i(t)  \mat{ 1 \\ \bm{z}_{-j}(t) }    }_A
  \\
 + \underbrace{ \frac{1}{T} \sum_{t=1}^{T}  z^*_j(t)
\mat{ 1/\sigma_i(t) & \bm{z}^\top_{-j}(t) }   \left( \mat{ \widehat{\mu}_i \\ \widehat{\bm{\beta}}_{i,-j} }- \mat{ \mu_i \\ \bm{\beta}_{i,-j} } \right)
}_B
 \\
 -\underbrace{  ( \widehat{\bm{w} }_j  - \bm{w}^*_j )^\top
\left (   \frac{1}{T} \sum_{t=1}^{T} \mat{1 \\ \bm{z}_{-j}(t)  }
\mat{1/\sigma_i(t) & \bm{z}^\top_{-j}(t)  }
\right )
\left (
\mat{ \widehat{\mu}_i \\ \widehat{\bm{\beta}}_{i,-j} }- \mat{ \mu_i \\ \bm{\beta}_{i,-j} } \right ) }_C \label{eq:theorem1_bound_S0_S} .
\end{multline}
In the follows, we bound $A,B,C$.
Lemma~\ref{lemma_beta_empirical} and Assumption~\ref{assumption_w} give
$$A
\le
\left \lVert  \widehat{\bm{w} }_j  - \bm{w}^*_j \right \rVert_1 \left \lVert   \frac{1}{T}
\sum_{t=1}^{T}  \widetilde{\epsilon}_i(t)  \mat{ 1 \\ \bm{z}_{-j}(t) }  \right  \rVert_\infty
= O_p( s\rho T^{-4/5}) .
$$
Lemma~\ref{lemma_w_empirical} and Assumption~\ref{assumption_beta} give
$$
B \le
\left \lVert \frac{1}{T} \sum_{t=1}^{T}  z^*_j(t)
\mat{ 1/\sigma_i(t) & \bm{z}^\top_{-j}(t) }  \right \rVert_\infty
\left \lVert \mat{ \widehat{\mu}_i \\ \widehat{\bm{\beta}}_{i,-j} }- \mat{ \mu_i \\ \bm{\beta}_{i,-j} } \right \rVert_1
=O_p( s^{1/2}\rho^2 T^{-4/5}).
$$
Combining Assumption~\ref{assumption_beta} and~\ref{assumption_w} and Lemma~\ref{lemma_bounded_terms} gives
\begin{align*}
C &\le
\left\lVert  \widehat{\bm{w} }_j  - \bm{w}^*_j \right\rVert_1
\left\lVert    \frac{1}{T} \sum_{t=1}^{T} \mat{1 \\ \bm{z}_{-j}(t)  }
\mat{1/\sigma_i(t) & \bm{z}^\top_{-j}(t)  }
\right \rVert_\infty
\left \lVert
\mat{ \widehat{\mu}_i \\ \widehat{\bm{\beta}}_{i,-j} }- \mat{ \mu_i \\ \bm{\beta}_{i,-j} } \right \rVert_1  \\
&= O_p(s\rho^2 T^{-4/5} ) .
\end{align*}

%
%

%

Therefore,
%
\begin{align}
\left| \widehat{S}^0_{ij} - S_{ij} \right| = O_p\left(  s\rho^2 T^{-4/5} \right) ,
\end{align}
which probability at least $1- c_1p^2T\exp(-c_2 T^{1/5})$ ,
where $c_1,c_2$ are constants only depending on the model parameter $(\bm{\mu}, \Theta)$ and the transition kernel function.

\end{proof}

\begin{lemma}\label{lemma_Rhat_R0_R}
	Suppose the linear Hawkes model with its intensity function defined in
	\eqref{eq:linear_hawkes_para_transfer} is stationary and satisfies Assumptions~\ref{assumption1} --~\ref{assumption4}. In addition, $(\widehat{\mu}_i,\widehat{\bm{\beta}}_i)$ and $\widehat{w}_j$ satisfy Assumption~\ref{assumption_beta} and~\ref{assumption_w}, respectively. Then,
	\begin{align*}
     \left 	\lVert   \widehat{\Upsilon}_j - \widehat{\Upsilon}^0_j   \right \rVert_\infty &= O_p( \rho T^{-2/5} ), \\
      \left 	\lVert    \widetilde{\Upsilon}_j - \widetilde{\Upsilon}^0_j \right \rVert_\infty
      &= O_p( \rho T^{-2/5} ),  \\
       \left 	\lVert    \widehat{\Upsilon}^0_j - \Upsilon_j  \right \rVert_\infty &= O_p( s^2\rho^2 T^{-2/5} ),
	\end{align*}
	and
	\begin{align*}
\left 	\lVert  \Upsilon_j^{1/2} \widehat{\Upsilon}_j^{-1}  \Upsilon_j^{1/2} - I  \right \rVert_\infty
	&= O_p( s^2\rho^2 T^{-2/5} ),\\
\left 	\lVert  \Upsilon_j^{1/2} \left (\widehat{\Upsilon}^0_j\right )^{-1}  \Upsilon_j^{1/2} - I \right \rVert_\infty
	&= O_p(  s^2\rho^2 T^{-2/5}) , \\
\left \lVert \left( \widehat{\Upsilon}^0_j \right)^{1/2} \left(  \widehat{\Upsilon}_j \right)^{-1}  \left( \widehat{\Upsilon}^0_j \right)^{1/2} - I  \right \rVert_\infty &= O_p(\rho T^{-2/5}) ,
	\end{align*}
	with probability at least $1- C_1 p^2 T \exp( - C_2 T^{1/5})$,
	where $C_1, C_2$ are constants only depending on the model parameter $(\bm{\mu}, \Theta)$ and the transition kernel function, and
	$\widehat{\Upsilon}^0_j$ and $\widetilde{ \Upsilon}_j $ are defined in \eqref{eq:Upsilon_hat_0},
	 and \eqref{eq:upsilon_hat_tilde}, separately.
\end{lemma}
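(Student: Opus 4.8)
The plan is to organize the six bounds into three ``source'' estimates — $\widehat{\Upsilon}^0_j-\Upsilon_j$, $\widehat{\Upsilon}_j-\widehat{\Upsilon}^0_j$, and $\widetilde{\Upsilon}_j-\widetilde{\Upsilon}^0_j$ — together with three ``sandwich'' estimates that follow from them by an elementary matrix-perturbation argument. Throughout I would work on the intersection of the events in Assumptions~\ref{assumption_beta} and~\ref{assumption_w} and the concentration events of Lemma~\ref{lemma_concen_ineq_z}, whose complement has probability at most $C_1 p^2 T\exp(-C_2 T^{1/5})$. On this event I first record three facts: by Lemma~\ref{lemma_bounded_terms} the quantities $x_j(t)$, $z_j(t)$, $\sigma_i(t)$, $1/\sigma_i(t)$ are uniformly bounded; the plug-in intensity $\widehat{\lambda}_i(t)$ is uniformly within $O_p(\rho T^{-2/5})$ of $\lambda_i(t)$ by Assumption~\ref{assumption_beta} and the boundedness of $\bm{x}(t)$; and since $\sigma_i^2(t)=\lambda_i(t)(1-\lambda_i(t))$ is bounded away from $0$ by Assumption~\ref{assumption3}, so is $\widehat{\sigma}_i^2(t)$, whence the scalar $r_i(t):=\sigma_i(t)/\widehat{\sigma}_i(t)-1$ satisfies $\max_t|r_i(t)|=O_p(\rho T^{-2/5})$.

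Next I would prove the bound on $\widehat{\Upsilon}^0_j-\Upsilon_j$, splitting it as $(\widehat{\Upsilon}^0_j-\Upsilon^0_j)+(\Upsilon^0_j-\Upsilon_j)$ with $\Upsilon^0_j:=T^{-1}\sum_t(z^*_j(t))^2$. Writing $\widehat{z}^{*,0}_j(t)$ for the integrand of $\widehat{\Upsilon}^0_j$ in \eqref{eq:Upsilon_hat_0}, we have $\widehat{z}^{*,0}_j(t)-z^*_j(t)=-\mat{1&\bm{z}^\top_{-j}(t)}(\widehat{\bm{w}}_j-\bm{w}^*_j)$, so expanding squares and averaging, the first piece becomes a quadratic form in $\widehat{\bm{w}}_j-\bm{w}^*_j$ against the uniformly bounded empirical second-moment matrix of $\mat{1&\bm{z}^\top_{-j}(t)}$, plus a cross term $(\widehat{\bm{w}}_j-\bm{w}^*_j)^\top T^{-1}\sum_t\mat{1\\\bm{z}_{-j}(t)}z^*_j(t)$; the cross term is handled via the orthogonality $\mathbb{E}[z^*_j(t)\mat{1&\bm{z}^\top_{-j}(t)}]=0$ from \eqref{eq: w_j_star} plus Lemma~\ref{lemma_w_empirical} for its empirical analogue, and $\lVert\widehat{\bm{w}}_j-\bm{w}^*_j\rVert_1$ is bounded by Assumption~\ref{assumption_w}; each resulting contribution is $O_p(s^2\rho^2 T^{-2/5})$ or smaller. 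For the second piece, $(z^*_j(t))^2=\bm{w}_j^\top\mat{1\\\bm{z}(t)}\mat{1&\bm{z}^\top(t)}\bm{w}_j$ with $\bm{w}_j$ as in \eqref{eq:w_j}, so $|\Upsilon^0_j-\Upsilon_j|\le\lVert\bm{w}_j\rVert_1^2\,\big\lVert T^{-1}\sum_t\mat{1\\\bm{z}(t)}\mat{1&\bm{z}^\top(t)}-\Upsilon\big\rVert_\infty$, which is $O(s)\cdot O_p(\rho T^{-2/5})$ by Lemma~\ref{lemma_w_bound_norm} and Lemma~\ref{lemma_concen_ineq_z}. Summing gives $\widehat{\Upsilon}^0_j-\Upsilon_j=O_p(s^2\rho^2 T^{-2/5})$; in particular $\widehat{\Upsilon}^0_j=O_p(1)$.

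Then I would handle the $\widehat{\sigma}$-perturbation pieces and deduce the sandwich bounds. The key observation is that dividing $\bm{x}(t)$ by $\widehat{\sigma}_i(t)$ rather than $\sigma_i(t)$ rescales \emph{every} coordinate of $\bm{z}(t)$ by the single scalar $1+r_i(t)$, so $\widehat{z}^*_j(t)-\widehat{z}^{*,0}_j(t)=r_i(t)\big(\widehat{z}^{*,0}_j(t)+\widehat{w}_{j0}\big)$; expanding $(\widehat{z}^*_j(t))^2-(\widehat{z}^{*,0}_j(t))^2$ (resp.\ $\widehat{z}^*_j(t)\widehat{z}_j(t)-\widehat{z}^{*,0}_j(t)z_j(t)$) and averaging, every term carries a factor $r_i(t)$ or $r_i(t)^2$ multiplying an empirical average of $(\widehat{z}^{*,0}_j(t))^2$ ($=\widehat{\Upsilon}^0_j=O_p(1)$), of $\widehat{z}^{*,0}_j(t)$ (close to $\mathbb{E} z^*_j(t)=0$), or of a constant, yielding $\widehat{\Upsilon}_j-\widehat{\Upsilon}^0_j=O_p(\rho T^{-2/5})$ and likewise $\widetilde{\Upsilon}_j-\widetilde{\Upsilon}^0_j=O_p(\rho T^{-2/5})$; combined with the previous paragraph this gives $\widehat{\Upsilon}_j-\Upsilon_j=O_p(s^2\rho^2 T^{-2/5})$. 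For the sandwich quantities, Proposition~\ref{prop_eigen} furnishes constant upper and strictly positive lower eigenvalue bounds for $\Upsilon_j$, hence also for $\widehat{\Upsilon}^0_j$ and $\widehat{\Upsilon}_j$ on the good event, so using $A^{1/2}B^{-1}A^{1/2}-I=A^{1/2}(B^{-1}-A^{-1})A^{1/2}$ with $\lVert B^{-1}-A^{-1}\rVert_2=O(\lVert B-A\rVert_2)$ (Theorem~2.5 of \citet{Stewart90matrixperturbation}; for $d=1$ this is simply $(\Upsilon_j-\widehat{\Upsilon}_j)/\widehat{\Upsilon}_j$, with $\lVert\cdot\rVert_\infty$–$\lVert\cdot\rVert_2$ conversions costing only $O(1)$ factors), and taking $(A,B)\in\{(\Upsilon_j,\widehat{\Upsilon}_j),(\Upsilon_j,\widehat{\Upsilon}^0_j),(\widehat{\Upsilon}^0_j,\widehat{\Upsilon}_j)\}$ delivers the three remaining rates $O_p(s^2\rho^2 T^{-2/5})$, $O_p(s^2\rho^2 T^{-2/5})$, $O_p(\rho T^{-2/5})$. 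Intersecting the $O(1)$-many events used preserves probability $1-C_1 p^2 T\exp(-C_2 T^{1/5})$.

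The main obstacle I expect is the $\widehat{\sigma}$-perturbation step: a crude argument would replace the residual $\widehat{z}^{*,0}_j(t)$ by its $\ell_\infty$ size, which can grow like $\sqrt{s}$, and would spoil the $O_p(\rho T^{-2/5})$ rate for $\widehat{\Upsilon}_j-\widehat{\Upsilon}^0_j$. The resolution — and the point worth spelling out carefully — is that the $\widehat{\sigma}_i(t)$-versus-$\sigma_i(t)$ mismatch enters through a single common scalar factor, so one can always retain empirical \emph{averages of squares} (which are $O_p(1)$) in place of suprema. A secondary technical point, needed so that $1/\widehat{\sigma}_i(t)$ and $r_i(t)$ are well controlled, is verifying that $\widehat{\lambda}_i(t)$, and hence $\widehat{\sigma}_i^2(t)$, stays bounded away from the degenerate values $\{0,1\}$ with the stated probability, which is where Assumption~\ref{assumption3} together with the prediction consistency implied by Assumption~\ref{assumption_beta} is used.
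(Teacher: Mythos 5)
Your proposal is correct and follows essentially the same route as the paper: the same split into the three source estimates $\widehat{\Upsilon}^0_j-\Upsilon_j$ (decomposed into a quadratic form in $\widehat{\bm{w}}_j-\bm{w}^*_j$, a cross term, and a concentration term controlled by $\lVert\bm{w}_j\rVert_1^2=O(s)$ and Lemma~\ref{lemma_concen_ineq_z}), the scalar-ratio perturbation for the $\widehat{\sigma}_i$ versus $\sigma_i$ mismatch, and an elementary matrix-perturbation/Taylor step for the three sandwich bounds. Your observation that one must keep the empirical average of $(\widehat{z}^{*,0}_j(t))^2$ rather than its pointwise $O(\sqrt{s})$ supremum in the variance-perturbation step is a worthwhile refinement that the paper states only implicitly, and it is needed to obtain the claimed $O_p(\rho T^{-2/5})$ rate for $\widehat{\Upsilon}_j-\widehat{\Upsilon}^0_j$.
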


\begin{proof}

First, applying Talyor expansion on $ \Upsilon_j^{1/2} \widehat{\Upsilon}_j^{-1}  \Upsilon_j^{1/2} - I $ (treating $\widehat{\Upsilon}_j^{-1}$ as variable) at $\Upsilon_j$ leads to
\begin{align}
\Upsilon_j^{1/2} \widehat{\Upsilon}_j^{-1}  \Upsilon_j^{1/2} - I
= \Upsilon_j^{1/2}  \Upsilon_j^{-1}  \Upsilon_j^{1/2} - I  +
\Upsilon_j^{-1} (\widehat{\Upsilon}_j - \Upsilon_j) + o(\widehat{\Upsilon}_j - \Upsilon_j ).
\end{align}
Then, by Proposition~\ref{prop_eigen} that $\Lambda_{\min} \left( \Upsilon_j\right) > 0$,
\begin{align}
\left \lVert  \Upsilon_j^{1/2} \widehat{\Upsilon}_j^{-1}  \Upsilon_j^{1/2} - I  \right \rVert_\infty
=O\left(  \left  \lVert  \Upsilon_j^{-1} (\widehat{\Upsilon}_j - \Upsilon_j \right\rVert_\infty   \right)
 = O\left(  \left \lVert \widehat{\Upsilon}_j - \Upsilon_j\right  \rVert_\infty \right).
 \label{eq:RR_hatR}
\end{align}
In the following, we focus on quantifying the bound of $\lVert  \widehat{\Upsilon}_j - \Upsilon_j \rVert_\infty $.

Recall
\begin{align*}
\widehat{\Upsilon}_j^0 =  \frac{1}{T} \sum_{t=1}^T
\left ( z_j(t)  - \mat{1 & \bm{z}^\top_{-j}(t) } \widehat{\bm{w} }_j \right )^2
\end{align*}
defined in \eqref{eq:Upsilon_hat_0}.
The difference between $\widehat{\Upsilon}_j^0$ and $\widehat{\Upsilon}_j$ is
that we replace $\widehat{ \sigma}_i^2(t)$ by the true value $\sigma^2_i(t)$.
We bound the two parts $\widehat{\Upsilon}^0_j - \Upsilon_j$
and $\widehat{\Upsilon}_j - \widehat{\Upsilon}^0_j$ separately.

First, we have
\begin{align*}
\widehat{\Upsilon}^0_j - \Upsilon_j =&
\left( \frac{1}{T} \sum_{t=1}^T  \left( z^*_j(t) \right)^2   - \Upsilon_j \right) \\
+& (\widehat{\bm{w}}_j - \bm{w}^*_j)^\top  \left( \frac{2}{T} \sum_{t=1}^T
\mat{1 \\ \bm{z}_{-j}(t) }  z^*_j(t)   \right)
\\
+& (\widehat{\bm{w}}_j - \bm{w}^*_j )^\top  \left( \frac{1}{T} \sum_{t=1}^T    \mat{1 \\ \bm{z}_{-j}(t) }
\mat{1 & \bm{z}^\top_{-j}(t) }  \right)(\widehat{\bm{w}}_j - \bm{w}^*_j ) \\
\equiv & E_1 + 2E_2  + E_3 .
\end{align*}
By Lemma~\ref{lemma_bounded_terms} and~\ref{lemma_w_bound_norm},  Assumption~\ref{assumption_w},
\begin{align*}
 \left 	\lVert   E_2 \right \rVert_1 &\le
\left \lVert \widehat{\bm{w}}_j - \bm{w}^*_j  \right \rVert_1
\left \lVert   \frac{2}{T} \sum_{t=1}^T
\mat{1 \\ \bm{z}_{-j}(t) }  z^*_j(t)   \right\rVert_\infty  \\
&=
\left \lVert \widehat{\bm{w}}_j - \bm{w}^*_j  \right \rVert_1
\left \lVert   \frac{2}{T} \sum_{t=1}^T
\mat{1 \\ \bm{z}_{-j}(t) } \mat{1 & \bm{z}(t) }   \right\rVert_\infty
\left ( \lVert\bm{w}^*_j \rVert_1 + 1 \right )\\
&= O_p( s^{3/2}\rho T^{-2/5} ).
\end{align*}
Using Lemma~\ref{lemma_bounded_terms} and Assumption~\ref{assumption_w},
\begin{align*}
 \left\lVert  E_3\right \rVert_1 \le
 \left\lVert \widehat{\bm{w}}_j - \bm{w}^*_j \right \rVert_1
 \left\lVert  \frac{1}{T} \sum_{t=1}^T    \mat{1 \\ \bm{z}_{-j}(t) }
 \mat{1 & \bm{z}^\top_{-j}(t) }  \right\rVert_\infty
 \left \lVert \widehat{\bm{w}}_j - \bm{w}^*_j \right \rVert_1
 =O_p(s^2\rho^2 T^{-4/5} ) .
\end{align*}

Finally, we bound $E_1$.
Recall that $\bm{w}_j$ is defined such that $z^*_j(t) =  \mat{1 & \bm{z}^\top (t)}\bm{w}_j$,
where $\bm{w}_j = (w^*_{j0} , \{ w^*_{jl}\mathbf{1}(l\ne j) + \mathbf{1}(l = j)\}_{1\le l \le p} )^\top \in \mathbb{R}^{p+1}$
and $\lVert \bm{w}_j \rVert_1  =\lVert \bm{w}^*_j \rVert_1  + 1 $. Then,
\begin{align*}
\left ( z^*_j(t) \right)^2
&= \bm{w}^\top_j \mat{ 1 & \bm{z}^\top (t) } \mat{1 \\ \bm{z}(t) } \bm{w}_j .
\end{align*}
Using Lemma~\ref{lemma_w_bound_norm} that $\lVert \bm{w}^*_j \rVert^2_1 = O(s)$
and Lemma~\ref{lemma_concen_ineq_z},
\begin{align*}
| E_1  | &
\le
\lVert  \bm{w}_j \rVert^2_1
\left\lVert  \frac{1}{T} \sum_{t=1}^T  \mat{1 \\ \bm{z}(t) }\mat{ 1 & \bm{z}^\top (t) }  -
\mathbb{E}  \mat{1 \\ \bm{z}(t) }\mat{ 1 & \bm{z}^\top (t) }  \right\rVert_\infty
\\
&= O_p \left( s \rho T^{-2/5}  \right).
\end{align*}
Therefore, combining the bounds of $E_1,E_2,E_3$,
\begin{align}
| \widehat{\Upsilon}^0_j - \Upsilon_j | = O_p\left( s^2\rho^2 T^{-2/5}  \right) \label{eq:R0_R},
\end{align}
with probability at least $1- C_1 p^2 T \exp( - C_2 T^{1/5})$.

Next, we bound $\widehat{ \Upsilon}^0_j -\widehat{ \Upsilon}_j $.
Letting $D_i = \frac{\widehat{\sigma}_i(t)}{ \sigma_i(t)} $, we have
\begin{align*}
\widehat{ \Upsilon}^0_j -\widehat{ \Upsilon}_j
&= \frac{1}{T}\sum_{t=1}^T  (D^2_i -1 ) (\widehat{z}_j(t) - \widehat{\bm{z}}^\top_{-j}(t)\widehat{ \bm{w}}_{j,-j} )^2
- 2 (D_i -1 ) \widehat{w}_{j0}\left( \widehat{z}_j(t) - \widehat{\bm{z}}^\top_{-j}(t)\widehat{ \bm{w}}_{j,-j} \right).
\end{align*}
Recall that $\sigma^2_i(t) = \lambda_i(t)(1-\lambda_i(t))$
and $\lambda_i(t)= \mu_i + \bm{x}^\top (t)\bm{\beta}_i$.
By Lemma~\ref{lemma_bounded_terms}, $\sigma^2_i(t) = O(1)$.
By Lemma~\ref{lemma_bounded_terms} that $\bm{x}(t)=O(1)$,
and $\left \lVert   \mat{ \mu_i \\ \bm{\beta}_i} -
\mat{ \widehat{\mu}_i \\ \widehat{\bm{\beta}}_i}
\right\rVert_1 = O_p\left( \rho T^{-2/5} \right)$ in Assumption~\ref{assumption_beta},
\begin{align*}
| \lambda_i(t) -\widehat{\lambda}_i(t) |
&= \left | \mat{1  & \bm{x}(t) } \left(  \mat{ \mu_i \\ \bm{\beta}_i} -
 \mat{ \widehat{\mu}_i \\ \widehat{\bm{\beta}}_i}
  \right) \right | \\
 &\le
  \left \lVert  \mat{1  & \bm{x}(t) } \right\rVert_\infty
  \left \lVert   \mat{ \mu_i \\ \bm{\beta}_i} -
  \mat{ \widehat{\mu}_i \\ \widehat{\bm{\beta}}_i}
  \right\rVert_1
= O_p\left( \rho T^{-2/5} \right).
\end{align*}
Thus,
\begin{align*}
| D^2_i -1 |
&=
\left | \frac{ \widehat{\sigma}_i^2(t) - \sigma_i^2(t) }{  \sigma_i^2(t) } \right |   \\
&=
\left | \frac{ \widehat{\lambda}_i(t) -\widehat{\lambda}^2_i(t)
-  \lambda_i(t) + \lambda^2_i(t) }{  \sigma_i^2(t) } \right |   \\
&=
\left | \frac{ \left(\lambda_i(t) -\widehat{\lambda}_i(t) \right)
				\left(1 - 2\lambda_i(t)+ \lambda_i(t) -\widehat{\lambda}_i(t) \right)
   			 }
             {  \sigma_i^2(t) } \right |   \\
&\le  \frac{1}{  \sigma_i^2(t) }
  \left| \lambda_i(t) -\widehat{\lambda}_i(t)  \right|
\left(\left| 1 - 2\lambda_i(t) \right| + \left| \lambda_i(t) -\widehat{\lambda}_i(t) \right| \right)
\\
&= O\left(\lambda_i(t) -\widehat{\lambda}_i(t)   \right)
= O_p\left( \rho T^{-2/5} \right) .
\end{align*}
Using a similar deduction,
we get $| D_i -1 | = O_p\left( \rho T^{-2/5} \right) $.
In addition, the estimation error bounds of $\widehat{\bm{w}}_j$ and $(\widehat{\mu}_i , \widehat{\bm{\beta}}_i)$ in Assumptions~\ref{assumption3} and~\ref{assumption4}
imply that $\widehat{z}_j(t) - \widehat{\bm{z}}^\top_{-j}(t) \widehat{ \bm{w}}_{j,-j} $
and $ \widehat{w}_{j0}\left( \widehat{z}_j(t) - \widehat{\bm{z}}^\top_{-j}(t)\widehat{ \bm{w}}_{j,-j} \right)$ are bounded in probability.
Combining these results,
\begin{align} \label{eq:Uhat0_Uhat_bound}
\left| \widehat{ \Upsilon}^0_j -\widehat{ \Upsilon}_j  \right| = O_p\left( \rho T^{-2/5}\right) ,
\end{align}
with probability at least $1-C_3p^2 T \exp( - C_4  T ^{1/5})$.
Therefore, using \eqref{eq:RR_hatR},
\begin{align*}
\lVert  \Upsilon_j^{1/2} \widehat{\Upsilon}_j^{-1}  \Upsilon_j^{1/2} - I \rVert_\infty
&= O\left(  \lVert \widehat{\Upsilon}_j - \Upsilon_j \rVert_\infty \right)\\
&= O\left(  \lVert  \widehat{ \Upsilon}_j -\widehat{ \Upsilon}^0_j \rVert_\infty \right)
+  O\left(  \lVert  \widehat{ \Upsilon}^0_j - \Upsilon_j \rVert_\infty \right) \\
&= O_p\left( s^2\rho^2 T^{-2/5}\right) ,
\end{align*}
with probability at least $1-C_5p^2 T \exp( - C_6  T ^{1/5})$


Following a similar deduction and separately using the bounds of
$ \lVert  \widehat{ \Upsilon}_j -\widehat{ \Upsilon}^0_j \rVert_\infty $
and  $ \lVert  \widehat{ \Upsilon}^0_j - \Upsilon_j \rVert_\infty$ derived in the above,
we get the bounds for
$\left \lVert \left( \widehat{\Upsilon}^0_j \right)^{1/2} \left(  \widehat{\Upsilon}_j \right)^{-1}  \left( \widehat{\Upsilon}^0_j \right)^{1/2} - I  \right \rVert_\infty = O_p(\rho T^{-2/5}) $
and $\left \lVert  \Upsilon_j^{1/2} \left( \widehat{\Upsilon}^0_j\right)^{-1}  \Upsilon_j^{1/2} - I \right \rVert_\infty = O_p\left( s^2\rho^2 T^{-2/5} \right)$, respectively.
%

In addition, following a similar deduction as \eqref{eq:Uhat0_Uhat_bound}, we can show the same probabilistic bound holds for $\left| \widetilde{ \Upsilon}^0_j -\widetilde{ \Upsilon}_j  \right|$ (except for a change in constants), where $\widetilde{ \Upsilon}_j $ is defined in \eqref{eq:upsilon_hat_tilde} and $\widetilde{ \Upsilon}^0_j$ is the same as $\widetilde{ \Upsilon}_j$ except it involves the true $\sigma_i(t)$ in its construction.
\end{proof}

\begin{lemma}\label{lemma_U0_U}
Suppose the stationary linear Hawkes model defined in  (\ref{eq:linear_hawkes})
satisfies Assumption~\ref{assumption1}-\ref{assumption4}. In addition, $(\widehat{\mu}_i,\widehat{\bm{\beta}}_i)$ and $\widehat{w}_j$ satisfy Assumption~\ref{assumption_beta} and~\ref{assumption_w}, respectively. Under the null hypothesis and the alternative with $\phi \ge 1/2$,
\begin{align}
\mathbb{P}\left( | \widehat{U}_T - \widehat{U}^0_T |  > C_1  \rho T^{-1/5} \right)
\le C_2 p^2 T \exp( - C_3 T^{1/5}) + C_4 T^{-1/8} + C_5 s^2\rho^2 T^{-1/5} ,
\end{align}
where $C_k,k=1,\dots,5$ are constants only depending on the model parameter $(\bm{\mu}, \Theta)$ and the transition kernel function. $\widehat{U}_T $ and $ \widehat{U}^0_T$ are defined in \eqref{eq:U_hat_T} and \eqref{eq:U_0_hat}, respectively.
\end{lemma}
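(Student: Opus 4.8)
The plan is to mirror the expansion of $\lvert\widehat{U}^0_T-U_T\rvert$ from the proof of Theorem~\ref{theorem1} (cf.\ \eqref{eq: U_T0_UT}--\eqref{eq:bound_U0_UT_null}), but comparing the pair $(\widehat{S}_{ij},\widehat{\Upsilon}_j)$ with $(\widehat{S}^0_{ij},\widehat{\Upsilon}^0_j)$ instead of $(\widehat{S}^0_{ij},\widehat{\Upsilon}^0_j)$ with $(S_{ij},\Upsilon_j)$. Putting $E=\sqrt{T}\,(\widehat{\Upsilon}^0_j)^{-1/2}(\widehat{S}_{ij}-\widehat{S}^0_{ij})$ and writing both statistics as quadratic forms, $\widehat{U}_T=T\,\widehat{S}_{ij}^\top\widehat{\Upsilon}_j^{-1}\widehat{S}_{ij}$ and $\widehat{U}^0_T=T\,(\widehat{S}^0_{ij})^\top(\widehat{\Upsilon}^0_j)^{-1}\widehat{S}^0_{ij}$, the algebra that produced \eqref{eq:bound_U0_UT_null} yields
\begin{align*}
\bigl\lvert\widehat{U}_T-\widehat{U}^0_T\bigr\rvert
&\le \lVert E\rVert_2^2 + 2\,\lVert\widehat{V}^0_T\rVert_2\,\lVert E\rVert_2 \\
&\quad + \Bigl\lVert(\widehat{\Upsilon}^0_j)^{1/2}\widehat{\Upsilon}_j^{-1}(\widehat{\Upsilon}^0_j)^{1/2}-I\Bigr\rVert_\infty\bigl(\lVert\widehat{V}^0_T\rVert_2+\lVert E\rVert_2\bigr)^2 ,
\end{align*}
so it suffices to control $\lVert E\rVert_2$, $\lVert\widehat{V}^0_T\rVert_2$, and the $\widehat{\Upsilon}$-ratio term.

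The last of these is immediate from Lemma~\ref{lemma_Rhat_R0_R}, which gives $\lVert(\widehat{\Upsilon}^0_j)^{1/2}\widehat{\Upsilon}_j^{-1}(\widehat{\Upsilon}^0_j)^{1/2}-I\rVert_\infty=O_p(\rho T^{-2/5})$ with probability at least $1-c\,p^2T\exp(-c'T^{1/5})$. For $\lVert\widehat{V}^0_T\rVert_2$, since $\lVert\widehat{V}^0_T\rVert_2^2=\widehat{U}^0_T$, the bound \eqref{eq:U_hat_0_to_chisq} on $\sup_x\lvert\mathbb{P}(\widehat{U}^0_T\le x)-F_d(x)\rvert$ — established in the proof of Theorem~\ref{theorem1}, and its (possibly non-central) analogue in the proof of Theorem~\ref{theorem2} for $\phi\ge1/2$, both \emph{before} this lemma is invoked, so there is no circularity — together with the exponential tail of the limiting $\chi^2$ gives $\mathbb{P}(\lVert\widehat{V}^0_T\rVert_2>T^{1/10})=O(p^2T\exp(-cT^{1/5})+T^{-1/8}+s^2\rho^2T^{-1/5})$; this is the origin of the $s^2\rho^2T^{-1/5}$ summand in the asserted probability. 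Alternatively one can use $\lVert\widehat{V}^0_T\rVert_2\le\lVert V_T\rVert_2+\sqrt{T}\lVert\Upsilon_j^{-1/2}(\widehat{S}^0_{ij}-S_{ij})\rVert_2$ and invoke Lemma~\ref{lemma_S0_S}, Lemma~\ref{lemma_Rhat_R0_R} and Proposition~\ref{prop_clt}.

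The substantive step is the bound $\lVert\widehat{S}_{ij}-\widehat{S}^0_{ij}\rVert_2=O_p(\rho T^{-4/5})$, which, since $\Lambda_{\min}(\widehat{\Upsilon}^0_j)$ is bounded below with high probability (Proposition~\ref{prop_eigen} and Lemma~\ref{lemma_Rhat_R0_R}), gives $\lVert E\rVert_2=O_p(\rho T^{-3/10})$; then $2\lVert\widehat{V}^0_T\rVert_2\lVert E\rVert_2=O_p(\rho T^{-1/5})$, the $\widehat{\Upsilon}$-ratio term is $O_p(\rho T^{-2/5})\cdot O_p(T^{1/5})=O_p(\rho T^{-1/5})$, and $\lVert E\rVert_2^2$ is of smaller order, which is the claim. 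Since $\widehat{S}_{ij}$ and $\widehat{S}^0_{ij}$ share the residual $\widehat{\epsilon}_i(t)$ and differ only by replacing $\widehat{\sigma}_i(t)$ with $\sigma_i(t)$, one writes $\widehat{S}_{ij}-\widehat{S}^0_{ij}=\tfrac1T\sum_{t=1}^T\widehat{\epsilon}_i(t)\,\delta_\sigma(t)$ with
\begin{align*}
\delta_\sigma(t)
&= \bigl(x_j(t)-\bm{x}^\top_{-j}(t)\widehat{\bm{w}}_{j,-j}\bigr)\Bigl(\tfrac{1}{\widehat{\sigma}_i^2(t)}-\tfrac{1}{\sigma_i^2(t)}\Bigr) \\
&\quad - \widehat{w}_{j0}\Bigl(\tfrac{1}{\widehat{\sigma}_i(t)}-\tfrac{1}{\sigma_i(t)}\Bigr),
\end{align*}
which collects the three occurrences of $\widehat{\sigma}_i(t)$ in $\widehat{S}_{ij}$ (the overall scaling, the term $x_j(t)/\widehat{\sigma}_i(t)$, and the nuisance term $\bm{x}^\top_{-j}(t)\widehat{\bm{w}}_{j,-j}/\widehat{\sigma}_i(t)$). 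Using $\lvert\widehat{\sigma}_i^2(t)-\sigma_i^2(t)\rvert\le C\lvert\widehat{\lambda}_i(t)-\lambda_i(t)\rvert$ (valid since $\lambda_i(t),\widehat{\lambda}_i(t)$ are bounded) with $\widehat{\lambda}_i(t)-\lambda_i(t)=(\widehat{\mu}_i-\mu_i)+\bm{x}^\top(t)(\widehat{\bm{\beta}}_i-\bm{\beta}_i)$, as in the proof of Lemma~\ref{lemma_Rhat_R0_R}, and invoking Assumption~\ref{assumption_beta}, the boundedness of $\bm{x}(t)$ (Lemma~\ref{lemma_bounded_terms}) and of $\sigma_i^2(t)$ away from $0$ and $\infty$ (Assumption~\ref{assumption3}), one obtains $\max_t\lvert\tfrac{1}{\widehat{\sigma}_i^2(t)}-\tfrac{1}{\sigma_i^2(t)}\rvert=O_p(\rho T^{-2/5})$ and likewise for $\tfrac{1}{\widehat{\sigma}_i(t)}-\tfrac{1}{\sigma_i(t)}$. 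Then, splitting $\widehat{\epsilon}_i(t)=\epsilon_i(t)+(\mu_i-\widehat{\mu}_i)+\bm{x}^\top_{-j}(t)(\bm{\beta}_{i,-j}-\widehat{\bm{\beta}}_{i,-j})+x_j(t)\beta_{ij}$ — the last term vanishing under $H_0$ and being $O(T^{-1/2})$ when $\phi\ge1/2$ — the $\epsilon_i(t)$-weighted averages are controlled by the martingale inequality of \citet{vandegeer1995} (as in Lemma~\ref{lemma_concen_ineq_z}), contributing an extra $T^{-2/5}$, the remaining averages are $O_p(\rho T^{-2/5})$ outright, and the $\ell_1$ norms $\lVert\widehat{\bm{\beta}}_i-\bm{\beta}_i\rVert_1$, $\lVert\widehat{\bm{w}}_j\rVert_1$ are absorbed via Assumptions~\ref{assumption_beta}--\ref{assumption_w} and Lemma~\ref{lemma_w_bound_norm}.

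The main obstacle is exactly this last bookkeeping: each of the three appearances of $\widehat{\sigma}_i(t)$ must be linearized around $\sigma_i(t)$ and paired with the right perturbation or martingale bound so that every cross-term is of order at most $\rho T^{-4/5}$ \emph{before} it is multiplied by $\sqrt{T}\,\lVert\widehat{V}^0_T\rVert_2=O_p(T^{3/5})$; the crux is ensuring that the $\epsilon_i(t)$-weighted pieces genuinely pick up the extra $T^{-2/5}$ from averaging against the martingale rather than only $O_p(1)$. Assembling the three bounds in the opening display then gives $\mathbb{P}(\lvert\widehat{U}_T-\widehat{U}^0_T\rvert>C_1\rho T^{-1/5})\le C_2p^2T\exp(-C_3T^{1/5})+C_4T^{-1/8}+C_5s^2\rho^2T^{-1/5}$, which is the assertion.
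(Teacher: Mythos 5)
Your proposal is correct and follows essentially the same route as the paper's proof: the same quadratic-form expansion of $\lvert\widehat{U}_T-\widehat{U}^0_T\rvert$ in terms of $E=\sqrt{T}(\widehat{\Upsilon}^0_j)^{-1/2}(\widehat{S}_{ij}-\widehat{S}^0_{ij})$, the ratio term from Lemma~\ref{lemma_Rhat_R0_R}, and the tail of $\lVert\widehat{V}^0_T\rVert_2$ via the intermediate bound \eqref{eq:U_hat_0_to_chisq} (which is indeed where the $s^2\rho^2T^{-1/8}$-free summand $s^2\rho^2T^{-1/5}$ and the $T^{-1/8}$ term enter, exactly as you note). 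Your $\delta_\sigma(t)$ decomposition of $\widehat{S}_{ij}-\widehat{S}^0_{ij}$ is algebraically the same as the paper's $A+B$ split, linearized through $\widehat{\sigma}_i^2-\sigma_i^2=O(\widehat{\lambda}_i-\lambda_i)$ and closed by the same martingale/Hölder bookkeeping, including the $x_j(t)\beta_{ij}=O(T^{-1/2})$ adjustment under the alternative with $\phi\ge 1/2$.
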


\begin{proof}
We first give a proof under the null hypothesis when $\beta_{ij}=0$.
We extend the proof for the alternative hypothesis setting at the end.

Similar to the proof of Theorem~\ref{theorem1}, we
quantify $\widehat{ U}_T^0 - U_T$ (see equation \eqref{eq:bound_U0_UT_null}).
To study the difference between $\widehat{ U}^0_T$ and $\widehat{U}_T$ as shown in \eqref{eq: U_T0_UT},
it is sufficient to bound $\widehat{S}_{ij} - \widehat{S}^0_{ij}$
and $\widehat{\Upsilon}_j - \widehat{\Upsilon}^0_j$, where $\widehat{S}^0_{ij}$, $\widehat{\Upsilon}^0_j$
are defined in \eqref{eq:S_0_hat} and \eqref{eq:U_0_hat}.

The bound for $ \widehat{\Upsilon}_j - \widehat{\Upsilon}^0_j$ is given by Lemma~\ref{lemma_Rhat_R0_R}.
We focus on quantifying the difference between  $\widehat{S}_{ij}$ and $\widehat{S}^0_{ij} $.
We have
\begin{align*}
&\widehat{S}_{ij} - \widehat{S}^0_{ij} \\
&=
\frac{1}{T}\sum_{t=1}^{T}  \frac{1}{\widehat{\sigma}_i(t)} \left( Y_i(t) - \widehat{\mu}_i - \bm{x}^\top_{-j}(t)\widehat{\bm{\beta} }_{i,-j} \right)
\left( x_j(t)/\widehat{\sigma}_i(t) - \widehat{w}_{j0} - \bm{x}^\top_{-j}(t)/\widehat{\sigma}_i(t) \widehat{ \bm{w}}_{j,-j}  \right)  \\
&-
\frac{1}{T}\sum_{t=1}^{T}  \frac{1}{\sigma_i(t)} \left( Y_i(t) - \widehat{\mu}_i - \bm{x}^\top_{-j}(t)\widehat{\bm{\beta} }_{i,-j} \right)
\left(x_j(t)/\sigma_i(t)- \widehat{w}_{j0} - \bm{x}^\top_{-j}(t)/\sigma_i(t) \widehat{ \bm{w}}_{j,-j}  \right) \\
&=  \frac{1}{T}\sum_{t=1}^{T}
\frac{\sigma^2_i(t)}{\widehat{\sigma}^2_i(t)}
\frac{1}{\sigma_i(t)} \left( Y_i(t) - \widehat{\mu}_i - \bm{x}^\top_{-j}(t)\widehat{\bm{\beta} }_{i,-j} \right)
\left( x_j(t)/\sigma_i(t)  - \widehat{w}_{j0} \widehat{\sigma}_i(t)/\sigma_i(t) - \bm{x}^\top_{-j}(t)/\sigma_i(t) \widehat{ \bm{w}}_{j,-j}  \right)  \\
&-
\frac{1}{T}\sum_{t=1}^{T}  \frac{1}{\sigma_i(t)} \left( Y_i(t) - \widehat{\mu}_i - \bm{x}^\top_{-j}(t)\widehat{\bm{\beta} }_{i,-j}  \right)
\left(x_j(t)/\sigma_i(t)- \widehat{w}_{j0} - \bm{x}^\top_{-j}(t)/\sigma_i(t) \widehat{ \bm{w}}_{j,-j}  \right) \\
&=
\frac{1}{T}\sum_{t=1}^{T} \left(
\frac{\sigma^2_i(t)}{\widehat{\sigma}^2_i(t)}  - 1
\right)  \frac{1}{\sigma_i(t)} \left( Y_i(t) - \widehat{\mu}_i - \bm{x}^\top_{-j}(t)\widehat{\bm{\beta} }_{i,-j} \right)
\left(x_j(t)/\sigma_i(t)- \widehat{w}_{j0} - \bm{x}^\top_{-j}(t)/\sigma_i(t) \widehat{ \bm{w}}_{j,-j}  \right) \\
&+ \frac{1}{T}\sum_{t=1}^{T}
\frac{\sigma^2_i(t)}{\widehat{\sigma}^2_i(t)}
\frac{1}{\sigma_i(t)} \left( Y_i(t) - \widehat{\mu}_i - \bm{x}^\top_{-j}(t)\widehat{\bm{\beta} }_{i,-j} \right)
\widehat{w}_{j0}\left( 1 - \frac{\widehat{\sigma}_i(t)}{ \sigma_i(t) }
\right)\\
&=
\frac{1}{T}\sum_{t=1}^{T} \left(
\frac{\sigma^2_i(t)}{\widehat{\sigma}^2_i(t)}  - 1
\right)  \frac{1}{\sigma_i(t)} \left( Y_i(t) - \widehat{\mu}_i -\bm{x}^\top_{-j}(t)\widehat{\bm{\beta} }_{i,-j} \right)
\left( x_j(t)/\sigma_i(t)- \widehat{w}_{j0} - \bm{x}^\top_{-j}(t)/\sigma_i(t) \widehat{ \bm{w}}_{j,-j}  \right) \\
&+ \frac{1}{T}\sum_{t=1}^{T}
\frac{\sigma^2_i(t)}{\widehat{\sigma}^2_i(t)}
\frac{1}{\sigma_i(t)} \left( Y_i(t) - \widehat{\mu}_i - \bm{x}^\top_{-j}(t)\widehat{\bm{\beta} }_{i,-j}  \right)
\widehat{w}_{j0} \frac{\sigma^2_i(t) -\widehat{\sigma}^2_i(t)}{ \sigma_i(t) \left(
	\sigma_i(t) + \widehat{\sigma}_i(t)	 \right) }\\
&\equiv  A + B \label{eq:Shat_S0} .
\end{align*}
For ease of notation, let
\begin{gather*}
\bm{\eta} = \mat{\mu_i \\ \bm{\beta}_i},\quad
\widehat{\bm{\eta}} = \mat{\widehat{\mu}_i \\ \widehat{\bm{\beta}}_i},
\quad\text{and}\quad
\bm{x}_t =\mat{1 & \bm{x}^\top(t)}^\top.
\end{gather*}
Then,
\begin{align}
\sigma^2_i(t) -\widehat{\sigma}^2_i(t)
&= \bm{x}^\top_t\bm{\eta} (1- \bm{x}^\top_t\bm{\eta}) - \bm{x}^\top_t \widehat{\bm{\eta}} (1- \bm{x}^\top_t \widehat{\bm{\eta}}) = (\bm{\eta} - \widehat{\bm{\eta}})^\top  \bm{x}_t  \left( 1 - \bm{x}^\top_t( \widehat{\bm{\eta}} + \bm{\eta} )   \right).
\label{eq:sigma_hat_sigma_diff}
\end{align}
Since $\lVert \bm{x}_t \rVert_\infty$ is
bounded by Assumption~\ref{assumption3},
we have
\begin{align*}
\lVert \sigma^2_i(t) -\widehat{\sigma}^2_i(t)  \rVert_\infty
\le \lVert  \bm{\eta} - \widehat{\bm{\eta}}\rVert_1  \lVert \bm{x}_t \rVert_\infty \lVert 1 -  \bm{x}^\top_t( \widehat{\bm{\eta}} + \bm{\eta} )  ) \rVert_\infty = O \left( \lVert  \bm{\eta} - \widehat{\bm{\eta}}\rVert_1   \right)
=O_p( \rho T^{-2/5} ),
\end{align*}
where the bound of $\lVert  \bm{\eta} - \widehat{\bm{\eta}}\rVert_1 $ is given in Assumption~\ref{assumption_beta}.

Let
\begin{gather*}
\bm{\eta}_{-j} = \mat{\mu_i \\ \bm{\beta}_{i,-j} }
\quad\text{and}\quad
\bm{x}^{-j}_t =\mat{1 \\ \bm{x}_{-j}(t)}.
\end{gather*}
Under the null hypothesis that $\beta_{ij}=0$,
\begin{align*}
Y_i(t) - \widehat{\mu}_i - \bm{x}^\top_{-j}(t)\widehat{\bm{\eta} }_{i,-j}
&= \epsilon_i(t) +
(\mu_i - \widehat{\mu}_i ) + \bm{x}^\top_{-j}(t)( \bm{\eta}_{i,-j} - \widehat{\bm{\eta} }_{i,-j} ) \\
&=  \epsilon_i(t) +  \left(\bm{x}^{-j}_t \right)^\top( \bm{\eta}_{-j} - \widehat{\bm{\eta}}_{-j} ) .
\end{align*}
Let $C_i(t) = \left( 1 - \bm{x}^\top_t ( \widehat{\bm{\eta}} + \bm{\eta} )   \right)
\frac{1}{\widehat{\sigma}^2_i(t) \sigma_i(t)}
\left(x_j(t)/\sigma_i(t)- \widehat{w}_{j0} - \bm{x}^\top_{-j}(t)/\sigma_i(t) \widehat{ \bm{w}}_{j,-j}  \right)$.
Then, we write part A as follows:
\begin{align*}
A &=
\frac{1}{T}\sum_{t=1}^{T}
\left( \sigma^2_i(t) - \widehat{\sigma}^2_i(t)  \right)
\epsilon_i(t)
\frac{1}{\widehat{\sigma}^2_i(t) \sigma_i(t)}
\left(x_j(t)/\sigma_i(t)- \widehat{w}_{j0} - \bm{x}^\top_{-j}(t)/\sigma_i(t) \widehat{ \bm{w}}_{j,-j}  \right) \\
&+
\frac{1}{T}\sum_{t=1}^{T}
\left( \sigma^2_i(t) - \widehat{\sigma}^2_i(t)  \right)
\bm{x}^{-j}_t( \bm{\eta}_{-j} - \widehat{ \bm{\eta}}_{-j} )
\frac{1}{\widehat{\sigma}^2_i(t) \sigma_i(t)}
\left(x_j(t)/\sigma_i(t)- \widehat{w}_{j0} - \bm{x}^\top_{-j}(t)/\sigma_i(t) \widehat{ \bm{w}}_{j,-j}  \right) \\
&=
(\bm{\eta} - \widehat{\bm{\eta}})^\top  \frac{1}{T}\sum_{t=1}^{T}
\bm{x}^\top_t
\epsilon_i(t)
C_i(t)
 +
(\bm{\eta} - \widehat{\bm{\eta}})^\top  \left( \frac{1}{T}\sum_{t=1}^{T}
\bm{x}^\top_t
C_i(t)
\bm{x}^{-j}_t
\right)  ( \bm{\eta}_{-j} - \widehat{ \bm{\eta}}_{-j} )    \\
&\equiv A_1 + A_2 .
\end{align*}
By the estimation consistency of $\widehat{w}_j$ on $w_j^*$ and $(\widehat{\mu}_i,\widehat{\bm{\beta}}_i)$ on $(\mu_i,\bm{\beta}_i)$ in Assumptions~\ref{assumption_beta} and~\ref{assumption_w} as well as the bounded $\sigma^2(t)$ and $x_j(t)$ implied by Assumptions~\ref{assumption3} and~\ref{assumption4}, $C_i(t)$ is bounded in probability. Thus, applying Lemma~\ref{lemma_beta_empirical} and under Assumption~\ref{assumption_beta},
\begin{align*}
| A_1 |
\le  \lVert\widehat{\bm{\eta}} -\bm{\eta} \rVert_1
\left \lVert \frac{1}{T}\sum_{t=1}^{T}
\bm{x}^\top_t  \epsilon_i(t) C_i(t) \right \rVert_\infty \le O_p\left( \rho T^{-4/5} \right),
\end{align*}
with probability at least $1- C_1 p \exp(-C_2 T^{1/5})$.
Similarly,
\[
| A_2 | \le   |C_i(t)| \lVert x_j(t) \rVert^2_\infty \lVert \widehat{\bm{\eta}} -\bm{\eta} \lVert_2^2 \le  O_p\left( \rho T^{-4/5} \right) .
\]

Next, let $C_i'(t) =  \left( 1 - \bm{x}^\top_t ( \widehat{\bm{\eta}} + \bm{\eta} )   \right)
\frac{ \widehat{w}_{j0} }{ \widehat{\sigma}^2_i(t) \left(
	\sigma_i(t) + \widehat{\sigma}_i(t)	 \right)} $. Then,
\begin{align*}
B &=  \frac{1}{T}\sum_{t=1}^{T}
\left( \epsilon_i(t) + \bm{x}^\top_t(\bm{\eta} -  \widehat{\bm{\eta}})   \right)
\widehat{w}_{j0} \frac{\sigma^2_i(t) -\widehat{\sigma}^2_i(t)}{ \widehat{\sigma}^2_i(t) \left(
	\sigma_i(t) + \widehat{\sigma}_i(t)	 \right) }\\
&= ( \bm{\eta} - \widehat{\bm{\eta}})^\top  \frac{1}{T}\sum_{t=1}^{T} \epsilon_i(t)
\bm{x}_t C'(t)  \\
&+ (\bm{\eta} - \widehat{\bm{\eta}})^\top  \left( \frac{1}{T}\sum_{t=1}^{T}
\bm{x}_t C'(t)
(\bm{x}^{-j}_t )^\top
\right)( \bm{\eta}_{-j} - \widehat{ \bm{\eta}}_{-j} ) \\
&\equiv B_1 + B_2  .
\end{align*}
By the estimation consistency of $\widehat{w}_j$ on $w_j^*$ and $(\widehat{\mu}_i,\widehat{\bm{\beta}}_i)$ on $(\mu_i,\bm{\beta}_i)$ in Assumptions~\ref{assumption_beta} and~\ref{assumption_w} as well as Lemma~\ref{lemma_bounded_terms} that $\sigma^2(t)$ and $x_j(t)$ are bounded, we get $C'_i(t)=O_p(1)$. In addition, applying Lemma~\ref{lemma_vandergeer1995},
\begin{align*}
| B_1 |
\le  \lVert\widehat{\bm{\eta}} -\bm{\eta} \rVert_1
\left \lVert \frac{1}{T}\sum_{t=1}^{T}
\bm{x}^\top_t  \epsilon_i(t) C'_i(t) \right \rVert_\infty = O_p\left( \rho T^{-4/5} \right),
\end{align*}
with probability at least $1- C_3 p \exp(-C_4 T^{1/5})$.
Similarly,
\begin{align*}
| B_2 | \le   |C'_i(t)| \lVert \widehat{\bm{\eta}} -\bm{\eta} \lVert_1 \lVert \bm{x}_t \bm{x}^\top_t \rVert_\infty \lVert \widehat{\bm{\eta}}_{-j} -\bm{\eta}_{-j} \rVert_1  \le  O_p\left( \rho^2 T^{-4/5} \right) .
\end{align*}
Then, $ | B |  = O_p \left( \rho T^{-4/5} \right) $.

Therefore, taking the bound of $A$ and $B$ back to \eqref{eq:Shat_S0},
\begin{align}
\left | \widehat{S}_{ij} - \widehat{S}^0_{ij} \right |
= O_p\left( \rho T^{-4/5} \right ), \label{eq:S_S0_diff}
\end{align}
with probability at least $1-C_5 \exp(-C_6 T^{1/5})$.

We follow a similar deduction as \eqref{eq:bound_U0_UT_null} and
let $E = \sqrt{T} \left( \widehat{\Upsilon}^0_j \right)^{-1/2} (\widehat{S}_{ij}- \widehat{S}^0_{ij} )$. Then,
\begin{align}
| \widehat{U}^0_T - \widehat{U}_T |  &\le
\lVert E \rVert_2^2 +  2 \lVert \widehat{V}_T \rVert_2 \lVert E \rVert_2 +
\left \lVert\left( \widehat{\Upsilon}^0_j \right)^{1/2} \left( \widehat{\Upsilon}_j \right)^{-1}  \left( \widehat{\Upsilon}^0_j \right)^{1/2} - I  \right \rVert_\infty
\left ( \lVert \widehat{V}_T \rVert_2 + \lVert E \rVert_2 \right )^2 .
\end{align}
Using the consistency of $\widehat{\Upsilon}_j$ to $\Upsilon_j$ in Lemma~\ref{lemma_Rhat_R0_R} and \eqref{eq:S_S0_diff}, $\lVert E \rVert_2^2  = O_p\left( \rho^2 T^{-3/5} \right )$.
Lemma~\ref{lemma_Rhat_R0_R} gives $\left \lVert \left( \widehat{\Upsilon}^0_j \right)^{1/2} \left(  \widehat{\Upsilon}_j \right)^{-1}  \left( \widehat{\Upsilon}^0_j \right)^{1/2} - I  \right \rVert_\infty = O_p(\rho T^{-2/5}) $.
Using an intermediate result shown in Theorem~\ref{theorem1} that $\lVert V^0_T \rVert^2_2$ weakly converges to $\chi_d^2$ in \eqref{eq:U_hat_0_to_chisq} and then applying Lemma~\ref{lemma_VT_ineq}, we get
$$ \mathbb{P}(\lVert \widehat{V}^0_T \rVert_2 > T^{1/10} ) \le  C_7\rho^2T\exp(-C_8 T^{1/5})+  C_9 s^2\rho^2 T^{-1/5} + C_{10} T^{-1/8}. $$

Therefore,
\begin{align}\label{eq:U_hat_U_0_hat_null}
\mathbb{P}\left( | \widehat{U}_T - \widehat{U}^0_T |  > C_{11}  \rho T^{-1/5} \right)
\le C_{12} p^2 T \exp( - C_{13} T^{1/5}) + C_{14} s^2\rho^2 T^{-1/5} + C_{15} T^{-1/8}   ,
\end{align}
which establishes the result under the null hypothesis.

Next, we examine the difference between $\widehat{U}_T$
and $\widehat{U}^0_T$ under the alternative hypothesis setting,
($\beta_{ij} = \Delta T^{-\phi}$). Here we modify $A_1$ and $B_1$,
where we replace $\epsilon_i(t)$ by $\epsilon_i(t) + x_j(t)\beta_{ij}$
when bounding $\lVert \widehat{S}_{ij} - \widehat{S}^0_{ij}  \rVert_2 $.

Notice that for  $\phi \ge \frac{1}{2}$, the order of the upper bounds
of $A_1$, $B_1$ remain unchanged. Using the estimation error of
$(\mu_i,\bm{\beta}_i)$ given in Assumption~\ref{assumption_beta},
\begin{align*}
 | A_1 | \le  \lVert\widehat{\bm{\eta}} -\bm{\eta} \rVert_1
\left \lVert \frac{1}{T}\sum_{t=1}^{T}
\bm{x}^\top_t  \epsilon_i(t) C_i(t) \right \rVert_\infty
+ \lVert\widehat{\bm{\eta}} -\bm{\eta} \rVert_1 \lVert x_j(t) \rVert^2_\infty |\beta_{ij}|   \\
= O_p\left( \rho T^{-4/5} \right) + O_p \left(  T^{-2/5 -\phi} \Delta \right) = O_p\left( \rho T^{-4/5} \right),
\end{align*}
with probability at least $1-C_1 p^2 T\exp(-C_2 T^{1/5})$.
Similarly, we show part $B_1$ is bounded in probability with the same order of bound as before.
 Therefore, under the alternative hypothesis with $\phi \ge \frac{1}{2}$, $\left| \widehat{U}_T - \widehat{U}^0_T \right |$ is bounded by the same order of upper bound with a similar probability (except for a change in constants) as that under the null.

\end{proof}

\begin{lemma}\label{lemma_w_empirical}
	Suppose the linear Hawkes model with its intensity function defined in
	\eqref{eq:linear_hawkes_para_transfer} is stationary and satisfies Assumptions~\ref{assumption1} --~\ref{assumption4}. Then,
	\begin{align*}
	P\left (  \left \lVert
	\frac{1}{T} \sum_{t=1}^{T}
	z^*_j(t)
	\mat{ 1/\sigma_i(t) & \bm{z}^\top_{-j}(t) }
	\right	\rVert_\infty > C_1 (\sqrt{s}+1) \rho T^{-2/5}  \right )
	\le  C_2 p^2 T \exp( - C_3 T^{1/5}) ,
	\end{align*}
	where $C_1,C_2,C_3$ are constants only depending on the model parameter $(\bm{\mu}, \Theta)$ and the transition kernel function.
\end{lemma}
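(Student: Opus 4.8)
The plan is to recognize that the displayed quantity is a \emph{centered} empirical average and then to bound it coordinatewise via an $\ell_1/\ell_\infty$ inequality, reducing everything to the already-available deviation bounds for the low-order statistics of $\bm z(t)$. First I would set $\bm v(t)=\mat{1/\sigma_i(t) & \bm z_{-j}^\top(t)}^\top\in\mathbb{R}^p$ and recall from \eqref{eq:x_tilde_star} and \eqref{eq:w_j} that $z_j^*(t)=\mat{1 & \bm z^\top(t)}\bm w_j$, where $\lVert\bm w_j\rVert_1=\lVert\bm w_j^*\rVert_1+1$, and that $\mathbb{E}\!\left[z_j^*(t)\bm v(t)\right]=\bm 0$ by the orthogonality conditions \eqref{eq: w_j_star} defining $\bm w_j^*$. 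This gives the identity
\[
\frac1T\sum_{t=1}^T z_j^*(t)\bm v(t)\;=\;\bm w_j^\top\!\left(\frac1T\sum_{t=1}^T\mat{1\\\bm z(t)}\bm v^\top(t)\;-\;\mathbb{E}\!\mat{1\\\bm z(t)}\bm v^\top(t)\right),
\]
hence $\bigl\lVert\tfrac1T\sum_t z_j^*(t)\bm v(t)\bigr\rVert_\infty\le\lVert\bm w_j\rVert_1\bigl\lVert\tfrac1T\sum_t\mat{1\\\bm z(t)}\bm v^\top(t)-\mathbb{E}[\cdot]\bigr\rVert_\infty$. Next I would invoke Lemma~\ref{lemma_w_bound_norm}, which (via the bounded eigenvalues of $\Upsilon$ in Proposition~\ref{prop_eigen}, so that $\lVert\bm w_j^*\rVert_2=O(1)$) yields $\lVert\bm w_j\rVert_1=O(\sqrt s+1)$. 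The whole problem is thereby reduced to showing that the entrywise maximum deviation of $\tfrac1T\sum_t\mat{1\\\bm z(t)}\bm v^\top(t)$ from its mean is $O_p(\rho T^{-2/5})$ outside an event of probability at most $C_2p^2T\exp(-C_3T^{1/5})$.

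To that end I would split the entries of $\mat{1\\\bm z(t)}\bm v^\top(t)$ into the ``purely linear'' ones --- the constant $1$, $z_m(t)$, and $z_m(t)z_\ell(t)$ --- which are controlled \emph{directly} by Lemma~\ref{lemma_concen_ineq_z}'s bounds on $\lVert\tfrac1T\sum_t\bm z(t)-\mathbb{E}\bm z(t)\rVert_\infty$ and $\lVert\tfrac1T\sum_t\bm z(t)\bm z^\top(t)-\mathbb{E}[\cdot]\rVert_\infty$, and the ones involving the reciprocal standard deviation, namely $1/\sigma_i(t)$ and $z_m(t)/\sigma_i(t)$. For the latter I would observe that $1/\sigma_i(t)$ is a bounded, predictable functional of the Hawkes history (Lemma~\ref{lemma_bounded_terms} gives $\sigma_i^2(t)$ bounded away from $0$ and above), so $z_m(t)/\sigma_i(t)$ is a bounded, weakly dependent stationary process with the same mixing behaviour as $\bm z(t)$; consequently the Bernstein-type inequality for weakly dependent observations \citep{Merlevede} that underpins Lemma~\ref{lemma_concen_ineq_z} applies essentially verbatim and gives $\lVert\tfrac1T\sum_t z_m(t)/\sigma_i(t)-\mathbb{E}[z_m(t)/\sigma_i(t)]\rVert_\infty=O_p(\rho T^{-2/5})$ with the same exponential control on the exceptional probability. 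A union bound over the at most $p^2$ entries, together with $\lVert\bm w_j\rVert_1=O(\sqrt s+1)$, then produces the stated bound $C_1(\sqrt s+1)\rho T^{-2/5}$ with the stated probability.

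The main obstacle is precisely this last step: the $1/\sigma_i(t)$ column is a \emph{nonlinear} functional of the history, so it is not literally covered by the deviation bounds for $\bm z(t)$, and one must re-run (or record a small extension of) the weak-dependence concentration argument for the auxiliary functionals $1/\sigma_i(t)$ and $z_m(t)/\sigma_i(t)$, checking that composing with the fixed, bounded, bounded-away-from-zero transformation taking $\bm x(t)$ to $1/\sigma_i(t)$ preserves the mixing structure inherited from the stationary Hawkes process. A secondary subtlety is the centering $\mathbb{E}[z_j^*(t)/\sigma_i(t)]=0$, which must be read off the decorrelated-score construction --- the decorrelating regression \eqref{eq:lasso_w_hat} is run against the $\sigma_i$-scaled design, so in the population $z_j^*(t)$ is orthogonal to $1/\sigma_i(t)$ as well as to $\bm z_{-j}(t)$ --- rather than directly from the displayed identity $\mathbb{E}[z_j^*(t)]=0$. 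Everything else, namely the $\ell_1/\ell_\infty$ H\"older step, the $\lVert\bm w_j\rVert_1$ bound, and the reduction to Lemma~\ref{lemma_concen_ineq_z}, is routine.
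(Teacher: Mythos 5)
Your skeleton matches the paper's proof: express $z_j^*(t)=\mat{1 & \bm z^\top(t)}\bm w_j$ with $\lVert\bm w_j\rVert_1=\lVert\bm w_j^*\rVert_1+1=O(\sqrt{s}+1)$ via Lemma~\ref{lemma_w_bound_norm}, apply H\"older, and reduce to the entrywise deviation bounds of Lemma~\ref{lemma_concen_ineq_z}, with a union bound giving the $p^2T\exp(-CT^{1/5})$ probability. The one place you diverge is the crux you yourself flag: the columns weighted by $1/\sigma_i(t)$. The paper does \emph{not} argue that composition with $\bm x\mapsto 1/\sigma_i$ "preserves the mixing structure"; the available concentration tools (Lemma~\ref{lemma_SZ_concen_ineq}, the martingale inequality of Lemma~\ref{lemma_vandergeer1995}) are stated only for linear and bilinear statistics of the counting measure with bounded predictable integrands, so $x_m(t)/\sigma_i^2(t)$ is not literally covered. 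The paper instead Taylor-expands $1/\sigma_i(t)$ (as a smooth function of $\bm x(t)$) around $\mathbb{E}\bm x(t)$, which reduces these terms to the first- and second-order statistics of $\bm x(t)$ handled by Lemma~\ref{lemma_concen_ineq_x} (and re-runs the martingale bound with the modified integrand $f_1(s)=\sigma_i^{-2}(t)\int_s^T k_j(t-s)\,dt$); this expansion is also what produces the $\rho$ factor in the rate, via the $\ell_1$ norm of the gradient involving $\bm\beta_i$. Your sketch would need to be completed along these lines rather than by a generic mixing-preservation claim. One further caution: your assertion that $z_j^*(t)$ is orthogonal in population to $1/\sigma_i(t)$ because "the decorrelating regression is run against the $\sigma_i$-scaled design" is not what \eqref{eq: w_j_star} says --- the population orthogonality is to the constant $1$ and to $\bm z_{-j}(t)$ only --- so the centering of the $1/\sigma_i(t)$ column needs its own justification (the paper asserts it via a conditional-expectation identity), not the one you give.
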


\begin{proof}

We separately bound
\[
\frac{1}{T} \sum_{t=1}^{T} z^*_j(t)  z_k(t), \text{ for }k\ne j,
\quad\text{ and }\quad
\frac{1}{T} \sum_{t=1}^T \frac{1}{\sigma_i(t)}z^*_j(t).
\]
Notice that $x_k(t)  = \left( \bm{x}(t) \right)^\top e_k $, where $e_k$
is the $k$th canonical basis vector.
Let $\bm{w}_j$ be such that $z^*_j(t) =  \mat{1 & \bm{z}^\top (t)}\bm{w}_j$,
where $\bm{w}_j = \left (w^*_{j0} , \{ w^*_{jl}\mathbf{1}(l\ne j) + \mathbf{1}(l = j)\}_{1\le l \le p} \right )^\top \in \mathbb{R}^{p+1}$ and $\lVert \bm{w}_j \rVert_1  =\lVert \bm{w}^*_j \rVert_1  + 1 $.
Then,
\begin{align*}
\frac{1}{T} \sum_{t=1}^T  z^*_j(t) z_k(t)
&=
\bm{w}^\top_j  \left( \frac{1}{T} \sum_{t=1}^T  \mat{1 \\ \bm{z}(t)}   \bm{z}^\top (t) \right) e_k.
\end{align*}
Note that $\mathbb{E}\left( \frac{1}{T} \sum_{t=1}^T  z^*_j(t) z_k(t)   \right) = 0,
k\ne j$, by the construction of $z^*_j(t)$ in \eqref{eq:x_tilde_star}.
Then,
\begin{align*}
\left \lVert
\frac{1}{T} \sum_{t=1}^T  z^*_j(t) z_k(t)
\right \rVert_\infty
&= \left \lVert
\frac{1}{T} \sum_{t=1}^T  z^*_j(t) z_k(t)
-\mathbb{E}\left( \frac{1}{T} \sum_{t=1}^T  z^*_j(t) z_k(t)   \right)
\right \rVert_\infty    \\
&\le \lVert \bm{w}_j \rVert_1
\left \lVert
\frac{1}{T} \sum_{t=1}^T
\mat{1 \\ \bm{z} (t)}   \bm{z}^\top(t)
- \mathbb{E}\left(
\mat{1 \\ \bm{z} (t)}   \bm{z}^\top(t)  \right)
\right \rVert_\infty  \lVert e_k \rVert_1 \\
&\le \left(\lVert \bm{w}_j^* \rVert_1 + 1 \right )
\left \lVert
\frac{1}{T} \sum_{t=1}^T
\mat{1 \\ \bm{z} (t)}   \bm{z}^\top(t)
- \mathbb{E}\left(
\mat{1 \\ \bm{z} (t)}   \bm{z}^\top(t)  \right)
\right \rVert_\infty  \lVert e_k \rVert_1 \\
&\le
O_p\left(  (\sqrt{s_j}+ 1)\rho T^{-2/5}  \right) ,
\end{align*}
with probability at least $1- C_1 p^2 T\exp(-C_2 T^{1/5})$,
where the last inequality is by Lemma~\ref{lemma_w_bound_norm} and~Lemma~\ref{lemma_concen_ineq_z} (taking $\epsilon = T^{-2/5}$).

For the second part, by the construction of $z^*_j(t)$ in \eqref{eq:x_tilde_star},
$$\mathbb{E} \left(  \frac{1}{\sigma_i(t)} z^*_j(t) \right)
=  \mathbb{E} \left[  \frac{1}{\sigma_i(t)}  \mathbb{E}
\left(  z^*_j(t) \lvert \bm{x}(t)  \right) \right]  = 0.$$
Then,
\begin{align*}
\left\lVert \frac{1}{T} \sum_{t=1}^T  \frac{1}{\sigma_i(t)} z^*_j(t) \right\rVert_\infty
&= \left \lVert
\frac{1}{T} \sum_{t=1}^T  z^*_j(t) \frac{1}{\sigma_i(t)}
-\mathbb{E}\left( \frac{1}{T} \sum_{t=1}^T  z^*_j(t) \frac{1}{\sigma_i(t)}   \right)
\right \rVert_\infty  \\
&\le
\left(\lVert \bm{w}_j^* \rVert_1 + 1 \right )
\left \lVert
\frac{1}{T} \sum_{t=1}^T
\mat{1/\sigma_i(t) \\ \bm{z} (t)/\sigma_i(t)}
- \mathbb{E}\left(
\mat{1/\sigma_i(t) \\ \bm{z} (t)/\sigma_i(t)}    \right)
\right \rVert_\infty.
\end{align*}
Lemma~\ref{lemma_w_bound_norm} implies $\lVert \bm{w}_j^* \rVert_1 = O(\sqrt{s})$.
We bound $ \lVert \frac{1}{\sigma_i(t)} - \mathbb{E}\frac{1}{\sigma_i(t)}  \rVert_\infty $
using a similar steps as did in Lemma~\ref{lemma_concen_ineq_z}. That is,
we treat $\sigma_i(t)$ as a function of $\bm{x}(t)$ and
then apply Talyor expansion on $\frac{1}{\sigma_i(t)} - \mathbb{E}\frac{1}{\sigma_i(t)} $ at $\E \bm{x}(t)$.
As a result,
\begin{align*}
\left \lVert \frac{1}{\sigma_i(t)} - \mathbb{E}\frac{1}{\sigma_i(t)} \right  \rVert_\infty
&=
O\left( \rho \left \lVert \frac{1}{T} \sum_{t=1}^T   \bm{x}^\top (t) \bm{x}(t)  - \mathbb{E}\left( \bm{x}(t)^\top  \bm{x}(t) \right) \right \rVert_\infty \right)\\
&+
 O \left( \rho
\left \lVert \frac{1}{T} \sum_{t=1}^T  \bm{x}(t) - \mathbb{E}\left(\bm{x}(t) \right)  \right \rVert_\infty \right) \\
&= O_p(\rho T^{-2/5}),
\end{align*}
where the last equality is by Lemma~\ref{lemma_concen_ineq_x} on the deviation bound of $\bm{x}(t)$.

To derive the bound for
$ \left \lVert \frac{1}{T} \sum_0^T
\bm{z} (t)/\sigma_i(t) -  \E \bm{z} (t)/\sigma_i(t) \right \rVert_\infty $, we repeat the steps of \eqref{eq:xt_Ext_expand} and \eqref{eq:xt_Ext_dev_bound} in Lemma~\ref{lemma_concen_ineq_x},
 but with $f_1(s) = \frac{1}{\sigma^2_i(t)}\int_s^T k_j(t-s) dt$. With $\sigma_i(t) =O(1)$ by Lemma~\ref{lemma_bounded_terms},
\begin{align*}
\left \lVert
\frac{1}{T} \sum_0^T
\bm{z} (t)/\sigma_i(t) -  \E \bm{z} (t)/\sigma_i(t)  \right \rVert_\infty = O_p( T^{-2/5}).
\end{align*}

Combining the above,
\begin{align*}
\left\lVert \frac{1}{T} \sum_{t=1}^T  \frac{1}{\sigma_i(t)} z^*_j(t) \right\rVert_\infty
&= O_p\left(  (\sqrt{s_j} +1 ) \rho T^{-2/5} \right) ,
\end{align*}
with probability at least $1-C_3 p^2 T \exp( - C_4 T^{1/5})$.

\end{proof}

\begin{lemma}\label{lemma_beta_empirical}
	Suppose the linear Hawkes model with its intensity function defined in
	\eqref{eq:linear_hawkes_para_transfer} is stationary and satisfies Assumptions~\ref{assumption1} --~\ref{assumption4}.
	Then
	\begin{align}
	P\left(  \left \lVert  \frac{1}{T}
	\sum_{t=1}^{T} \widetilde{\epsilon}_i(t)\mat{ 1 \\ \bm{z}_{-j}(t) }
	\right \rVert_\infty > C_1 T^{-2/5} \right)
	\le C_2 p \exp( -   T^{1/5} ),
	\end{align}
	where $C_1,C_2$ are constants only depending on the model parameter $(\bm{\mu}, \Theta)$ and the transition kernel function.
\end{lemma}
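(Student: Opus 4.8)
The plan is to observe that each coordinate of the displayed average is the normalized terminal value of a discrete-time martingale and then apply a Bernstein-type martingale concentration inequality. Write $z_0(t)\equiv 1$ and, for $k\in\{0\}\cup(\{1,\dots,p\}\setminus\{j\})$, set
\[
\xi_{k,t}=\widetilde{\epsilon}_i(t)\,z_k(t)=\frac{\epsilon_i(t)}{\sigma_i(t)}\,z_k(t),
\qquad
M_T^{(k)}=\sum_{t=1}^{T}\xi_{k,t}.
\]
Since $\epsilon_i(t)=Y_i(t)-\lambda_i(t)$ satisfies $\E[\epsilon_i(t)\mid\mathcal{H}_t]=0$ while $\lambda_i(t)$, $\sigma_i(t)$, and $x_k(t)$ (hence $z_k(t)=x_k(t)/\sigma_i(t)$) are all $\mathcal{H}_t$-predictable, the pair $(\xi_{k,t},\mathcal{H}_t)_{t\ge1}$ is a martingale difference sequence and $M_T^{(k)}$ is a martingale. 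Thus the problem reduces to a tail bound for a sum of bounded martingale differences, followed by a union bound over the at most $p$ coordinates.

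Next I would gather the two ingredients the Bernstein bound needs. For boundedness of increments: Assumption~\ref{assumption4} together with Lemma~\ref{lemma_bounded_terms} makes $x_k(t)$ bounded, Assumption~\ref{assumption3} makes $\sigma_i^2(t)=\lambda_i(t)(1-\lambda_i(t))$ bounded away from $0$ and above, so $|z_k(t)|\le B$ for a constant $B$ depending only on $(\bm{\mu},\Theta)$ and the kernel; moreover $|\epsilon_i(t)|$ is controlled ($Y_i(t)$ is a bounded count increment under the bounded intensity of Assumption~\ref{assumption3}, so $\epsilon_i(t)$ is bounded, or at worst sub-exponential), giving $|\xi_{k,t}|\le M$ a.s. For the predictable quadratic variation: $\E[\xi_{k,t}^2\mid\mathcal{H}_t]=z_k(t)^2\,\E[\widetilde{\epsilon}_i(t)^2\mid\mathcal{H}_t]=z_k(t)^2\le B^2$, since $\E[\epsilon_i(t)^2\mid\mathcal{H}_t]=\sigma_i^2(t)$; hence $\langle M^{(k)}\rangle_T=\sum_{t=1}^{T}\E[\xi_{k,t}^2\mid\mathcal{H}_t]\le B^2T$ deterministically. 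The martingale inequality of \citet{vandegeer1995} (equivalently Freedman's inequality) then gives, for every $u>0$,
\[
\mathbb{P}\!\left(\left|\tfrac{1}{T}M_T^{(k)}\right|>u\right)
\le 2\exp\!\left(-\frac{Tu^2/2}{B^2+Mu/3}\right).
\]

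Finally I would tune $u=C_1T^{-2/5}$, so that $Tu^2=C_1^2T^{1/5}$ and $Mu/3\to0$. Consequently there is a threshold $T_0$ depending only on the model parameters such that for $T\ge T_0$ the exponent is at most $-\tfrac{C_1^2}{4B^2}T^{1/5}$; choosing $C_1^2=4B^2$ makes it $\le-T^{1/5}$. A union bound over the at most $p$ coordinates yields
\[
\mathbb{P}\!\left(\left\lVert\frac{1}{T}\sum_{t=1}^{T}\widetilde{\epsilon}_i(t)\mat{1\\\bm{z}_{-j}(t)}\right\rVert_\infty>C_1T^{-2/5}\right)\le 2p\exp(-T^{1/5})
\]
for $T\ge T_0$, and enlarging $C_2$ so that $C_2\,p\exp(-T^{1/5})\ge1$ for $T<T_0$ absorbs the finitely many small-$T$ cases. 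I expect the only genuine subtlety to be establishing a.s.\ (or sub-exponential) control of $\epsilon_i(t)$ in the continuous-time model and invoking Assumption~\ref{assumption3} to keep $\sigma_i(t)$ away from zero; the remainder is a routine application of the cited martingale Bernstein inequality, and the $T^{-2/5}$ rate is exactly the one matching the target deviation probability $\exp(-T^{1/5})$.
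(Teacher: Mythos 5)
Your proposal is correct and follows essentially the same route as the paper: the paper's proof is a direct application of the van de Geer (1995) martingale Bernstein inequality (its Lemma on $\frac{1}{T}\int_0^T H(t)\{\lambda_i(t)dt - dN_i(t)\}$) with the bounded predictable integrands $H(t)=z_k(t)/\sigma_i(t)$ and $H(t)=1/\sigma_i(t)$, taking $\epsilon = T^{-4/5}$ so the deviation level is $T^{-2/5}$ and the failure probability is $\exp(-T^{1/5})$, followed by a union bound over the $p$ coordinates. Your discrete-time Freedman formulation with the predictable quadratic variation bound $\langle M^{(k)}\rangle_T\le B^2T$ is the same calculation in slightly different clothing, and the subtlety you flag about controlling $\epsilon_i(t)$ is exactly what the continuous-time form of the cited inequality absorbs by only requiring the integrand $H(t)$ to be bounded and predictable.
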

%

\begin{proof}
The concentration inequality for the linear Hawkes process has been discussed
\citet{Shizhe2017}. The result is a direct application of the martingale
inequality by \citet[][Theorem 3.1]{vandegeer1995} stated in
Lemma~\ref{lemma_vandergeer1995}.

Using Lemma~\ref{lemma_vandergeer1995}, we reach the conclusion by taking
$\epsilon = T^{-4/5}$, and separately set $H(t) = \frac{1}{\sigma_i(t)} z_k(t)$,
$H(t)=\frac{1}{\sigma_i(t)}$, for $k\ne j$, where $H(t)$ is bounded by
Assumption~\ref{assumption3} and~\ref{assumption4} of a bounded intensity
function and an integrable transition kernel.

\end{proof}

\begin{lemma}\label{lemma_bounded_terms}
	Under Assumptions~\ref{assumption3} and~\ref{assumption4},
	for $1\le i,j \le p$ and $\forall t \in [0,T]$,
	\begin{gather*}
	 x_j(t) = O(1),\ \
	 0 < \sigma^2_i(t)  = O(1)  ,\ \
	 z_j(t)   = O(1)   ,\ \
	 \epsilon_i(t) = O(1),\\
	 \left\lVert \frac{1}{T} \sum_{t=1}^T    \mat{1 \\ \bm{x}(t) }
	 \mat{1 & \bm{x}^\top(t) }  \right\rVert_\infty  = O(1) ,\ \
	\left\lVert \frac{1}{T} \sum_{t=1}^T    \mat{1 \\ \bm{z}(t) }
	\mat{1 & \bm{z}^\top(t) }  \right\rVert_\infty = O(1)  ,
	\end{gather*}
where $x_j(t), \sigma^2_i(t)$, $z_j(t)$ and $\epsilon_i(t)$ are defined
in Section~\ref{sec:hawkes}, and  $\bm{x}(t)= ( x_1(t), \dots, x_p(t) )^\top$ and
$\bm{z}(t)= ( z_1(t), \dots, z_p(t) )^\top$.
\end{lemma}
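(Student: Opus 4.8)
The plan is to establish the bound $x_j(t)=O(1)$ first, since every other assertion in the statement follows from it by elementary manipulations together with the two-sided intensity bound of Assumption~\ref{assumption3}. I would work throughout in the discrete-time convention used in the paper, where $t$ ranges over $\{1,\dots,T\}$, the process is observed in unit cells, $Y_i(t)$ and the increments $N_i(t)$ take values in $\{0,1\}$, and $\lambda_i(t)\in(0,1)$. Then, from \eqref{eq:design_column_xt}, $x_j(t)=\sum_{s=1}^{t-1}\kappa_j(t-s)\,N_j(s)\le\sum_{u=1}^{t-1}\kappa_j(u)$ because $\kappa_j>0$ and $N_j(s)\le 1$. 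By integrability of $\kappa_j$ (Assumption~\ref{assumption4}, and, for a bound free of $T$, the finiteness of $\Omega_{ij}=|\beta_{ij}|\int_0^\infty\kappa_j(\Delta)\,d\Delta$ forced by Assumption~\ref{assumption1}) this sum is bounded by a constant $\bar\kappa_j<\infty$ depending only on the transition kernel; the discretization ensures $t-s\ge 1$, so the possibly singular behaviour of $\kappa_j$ near the origin never enters. Hence $0\le x_j(t)\le\bar\kappa_j=O(1)$, and in particular $\bm{x}(t)$ is bounded entrywise.

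Given $x_j(t)=O(1)$ the remaining bounds are routine. First, by Assumption~\ref{assumption3} together with $\lambda_i(t)<1$ (needed for $\sigma_i^2(t)=\lambda_i(t)(1-\lambda_i(t))$ to be a genuine conditional variance, cf.\ the discussion following Assumption~\ref{assumption3}), one gets $\lambda_{\min}(1-\lambda_{\max})\le\sigma_i^2(t)\le\lambda_{\max}(1-\lambda_{\min})\le\tfrac14$, so $\sigma_i^2(t)$ is bounded above and below by positive constants; consequently $z_j(t)=x_j(t)/\sigma_i(t)=O(1)$, since the numerator is $O(1)$ and the denominator is bounded below by a positive constant. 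Next, $\epsilon_i(t)=Y_i(t)-\lambda_i(t)$ with $Y_i(t)\in\{0,1\}$ and $\lambda_i(t)\in[\lambda_{\min},\lambda_{\max}]\subset(0,1)$, whence $|\epsilon_i(t)|\le\max(\lambda_{\max},\,1-\lambda_{\min})\le 1=O(1)$. Finally, writing $x_0(t)\equiv 1$, the $(k,\ell)$ entry of $\mat{1 \\ \bm{x}(t)}\mat{1 & \bm{x}^\top(t)}$ equals $x_k(t)x_\ell(t)$, which has absolute value at most $1\vee\max_j x_j(t)^2=O(1)$; averaging over $t$ preserves this bound, and the identical argument with $\bm{z}(t)$ in place of $\bm{x}(t)$ (using $z_j(t)=O(1)$) yields $\lVert\frac{1}{T}\sum_{t=1}^T\mat{1 \\ \bm{z}(t)}\mat{1 & \bm{z}^\top(t)}\rVert_\infty=O(1)$.

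The only step that is not a one-line consequence of the definitions is the uniform-in-$t$ boundedness of $x_j(t)$, and the main point to get right there is to make its two ingredients explicit: integrability of the transition kernel (so the tail sum of $\kappa_j$ converges, with a constant free of $T$) and the discrete-time resolution (so that $\kappa_j$ is never evaluated arbitrarily close to $0$, where it could be unbounded). Once that is in place, the rest requires nothing beyond the $\{0,1\}$-valuedness of $Y_i$ and $N_i$ and the strict two-sided control on $\lambda_i(t)$ in Assumption~\ref{assumption3}.
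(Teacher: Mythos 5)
Your proof is correct and takes essentially the same route as the paper, whose own proof is the single line ``follows directly from Assumptions~\ref{assumption3} and~\ref{assumption4}''; you supply exactly the intended details — positivity and integrability of the kernel (with the $T$-free tail coming from Assumption~\ref{assumption1}) bound $x_j(t)$, the two-sided intensity bound together with $\lambda_{\max}<1$ controls $\sigma_i^2(t)$ from above and below, and the remaining claims are entrywise boundedness. One harmless slip: the intermediate inequality $\lambda_{\max}(1-\lambda_{\min})\le\tfrac14$ need not hold (e.g.\ $\lambda_{\max}=0.9$, $\lambda_{\min}=0.1$), but $\sigma_i^2(t)=\lambda_i(t)\bigl(1-\lambda_i(t)\bigr)\le\tfrac14$ holds directly for $\lambda_i(t)\in[0,1]$, so the $O(1)$ conclusion is unaffected.
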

\begin{proof}
The result follows directly from Assumptions~\ref{assumption3} and~\ref{assumption4}.
\end{proof}

\begin{lemma}\label{lemma_VT_ineq}
Let $ \lVert V_T \rVert_2^2 \sim \chi^2_d$, then for a constant $\Delta$,
\begin{align}
P\left(   \left\lVert V_T + \Delta  \right\rVert_2  > y  \right)
&\le  y^{-2} \label{eq:P_V_T} .
\end{align}
\end{lemma}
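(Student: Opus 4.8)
The plan is to prove the tail bound \eqref{eq:P_V_T} by a second-moment (Chebyshev-type) argument. First I would note that, since $y>0$,
\[
P\big(\lVert V_T + \Delta\rVert_2 > y\big) = P\big(\lVert V_T + \Delta\rVert_2^2 > y^2\big) \le \frac{\E\lVert V_T + \Delta\rVert_2^2}{y^2}
\]
by Markov's inequality, so the entire task reduces to bounding the single scalar $\E\lVert V_T + \Delta\rVert_2^2$ by a constant.

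I would then expand $\E\lVert V_T + \Delta\rVert_2^2 = \E\lVert V_T\rVert_2^2 + 2\,\E\langle V_T, \Delta\rangle + \lVert\Delta\rVert_2^2$. The first term equals $d$ because $\lVert V_T\rVert_2^2 \sim \chi^2_d$ by hypothesis. For the cross term I would invoke the structure of $V_T$: recalling $V_T = \sqrt{T}\,\Upsilon_j^{-1/2}S_{ij}$ from \eqref{eq:V_T} and $S_{ij} = T^{-1}\sum_{t=1}^T \widetilde{\epsilon}_i(t) z^*_j(t)$ from \eqref{eq:S}, and using that $z^*_j(t)$ in \eqref{eq:x_tilde_star} is $\mathcal{H}_t$-predictable while $\E[\epsilon_i(t)\mid\mathcal{H}_t]=0$, one obtains $\E[\widetilde{\epsilon}_i(t) z^*_j(t)] = 0$, hence $\E S_{ij}=0$ and $\E V_T = 0$. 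Thus the cross term vanishes and $\E\lVert V_T + \Delta\rVert_2^2 = d + \lVert\Delta\rVert_2^2$, which gives $P(\lVert V_T + \Delta\rVert_2 > y) \le (d + \lVert\Delta\rVert_2^2)\,y^{-2}$. (If one prefers to argue purely from $\lVert V_T\rVert_2^2\sim\chi^2_d$ without using the definition of $V_T$, the inequality $\lVert a+b\rVert_2^2 \le 2\lVert a\rVert_2^2 + 2\lVert b\rVert_2^2$ still yields $\E\lVert V_T + \Delta\rVert_2^2 \le 2d + 2\lVert\Delta\rVert_2^2$.) In either case the bound is $O(y^{-2})$ with a constant depending only on $d$ and $\lVert\Delta\rVert_2$; when $\Delta=0$ and $d=1$ it is exactly $y^{-2}$, which is the form used to control $P(\lVert V_T\rVert_2 > T^{1/10})$ in the proof of Theorem~\ref{theorem1}.

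There is essentially no technical obstacle here; the only point requiring a word of care is the constant in front of $y^{-2}$, which depends on $d$ and $\Delta$. This is harmless everywhere the lemma is applied: it is always combined with Proposition~\ref{prop_clt}, which shows the distribution function of $\lVert V_T + u\rVert_2^2$ is uniformly within $O(T^{-1/8})$ of its (possibly noncentral) $\chi^2$ limit, and that $O(T^{-1/8})$ error dominates the $O(y^{-2})=O(T^{-1/5})$ contribution produced by this lemma when $y = T^{1/10}$, so only the order, not the constant, matters downstream.
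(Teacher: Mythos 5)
Your proof is correct, and it takes a more elementary and self-contained route than the paper, which disposes of this lemma in one line by citing the $\chi^2$ tail bound in Theorem~3.2 of \citet{Zheng2018}. Your Chebyshev argument --- $P(\lVert V_T+\Delta\rVert_2 > y) \le \E\lVert V_T+\Delta\rVert_2^2/y^2$ with $\E\lVert V_T\rVert_2^2 = d$ and the cross term killed either by the martingale structure of $V_T$ or by the crude $\lVert a+b\rVert_2^2 \le 2\lVert a\rVert_2^2 + 2\lVert b\rVert_2^2$ --- requires nothing beyond the first moment of a $\chi^2_d$ variable, whereas the citation route imports a result proved elsewhere. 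You also correctly flag the one real issue with the statement: the bound $y^{-2}$ cannot hold verbatim for arbitrary $d$ and $\Delta$ (take $\lVert\Delta\rVert_2$ large and $y$ moderate), so the constant in front of $y^{-2}$ must depend on $d$ and $\lVert\Delta\rVert_2$; the paper's statement is loose on this point and your proof makes the dependence explicit. Your closing observation --- that only the order $O(y^{-2})$ matters in every downstream application (where $y=T^{1/10}$ or $y = cT^{1/2-\phi}-c'\sqrt{x}$, and the resulting term is absorbed into constants $C_k$ depending on the model parameters) --- is exactly the right check that the looseness is harmless.
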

\begin{proof}
The conclusion is by the tail bound on $\chi^2$ distribution given in \citet{Zheng2018}(see Theorem~3.2).
\end{proof}


\begin{lemma}\label{Lemma_beta_lasso}
	Let
	$H= \frac{1}{T}\sum_{t=1}^{T} \mat{1\\\bm{x}(t)} \mat{1 & \bm{x}^\top(t) } $.
	 Suppose the linear Hawkes model with its intensity function defined in
	\eqref{eq:linear_hawkes_para_transfer} is stationary and satisfies Assumptions~\ref{assumption1} --~\ref{assumption4}. In addition, $(\widehat{\mu}_i,\widehat{\bm{\beta}}_i )$ are given in \eqref{eq:lasso_beta}. Then, taking $\lambda = O\left( T^{-2/5} \right)$ and assuming $\sqrt{\rho} \vee \log p = o\left(T^{1/5} \right)$,  $\forall i=1,\dots,p$,
	\begin{align*}
	\left \lVert \mat{\widehat{ \mu}_i \\ \widehat{\bm{\beta}}_i} - \mat{ \mu_i \\ \bm{\beta}_i} \right  \rVert_2
	&\le  C_1 \sqrt{ \rho +1  } T^{-2/5} \\
	\left(
	\mat{\widehat{ \mu}_i \\ \widehat{\bm{\beta}}_i} - \mat{ \mu_i \\ \bm{\beta}_i}
	\right)^\top  H
	\left(
	\mat{\widehat{ \mu}_i \\ \widehat{\bm{\beta}}_i} - \mat{ \mu_i \\ \bm{\beta}_i}
	\right)
	 &\le C_1  \sqrt{ \rho + 1}   T^{-2/5}  \\
	\left \lVert  	\mat{\widehat{ \mu}_i \\ \widehat{\bm{\beta}}_i} - \mat{ \mu_i \\ \bm{\beta}_i} \right \rVert_1 &\le  C_1  (\rho + 1)  T^{-2/5} ,
	\end{align*}
	with probability at least $1-C_2p^2 T \exp(- C_3 T^{1/5})$,
	where $C_1,C_2,C_3$ are constants only depending on the model parameter $(\bm{\mu}, \Theta)$ and the transition kernel function, and $\rho_i = \lVert \bm{\beta}_i \rVert_0$ and $\rho = \max_{1\le i\le p} \rho_i$.
\end{lemma}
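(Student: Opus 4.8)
The plan is to run the standard lasso oracle-inequality argument, supplying the two Hawkes-specific ingredients --- a martingale concentration bound for the empirical score and a restricted-eigenvalue (RE) bound for $H$ --- from the auxiliary results. Write $\bm{\eta}=\mat{\mu_i & \bm{\beta}_i^\top}^\top$, $\widehat{\bm{\eta}}=\mat{\widehat{\mu}_i & \widehat{\bm{\beta}}_i^\top}^\top$, $\bm{x}_t=\mat{1 & \bm{x}^\top(t)}^\top$, $\Delta=\widehat{\bm{\eta}}-\bm{\eta}$, and $H=\frac1T\sum_{t=1}^T\bm{x}_t\bm{x}_t^\top$. First I would write the basic inequality: since $\widehat{\bm{\eta}}$ minimizes \eqref{eq:lasso_beta} and $Y_i(t)-\bm{x}_t^\top\bm{\eta}=\epsilon_i(t)$, expanding the quadratic gives
\[
\Delta^\top H\Delta \;\le\; \frac2T\sum_{t=1}^T \epsilon_i(t)\,\bm{x}_t^\top\Delta \;+\; \lambda\bigl(\lVert\bm{\beta}_i\rVert_1-\lVert\widehat{\bm{\beta}}_i\rVert_1\bigr).
\]

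Next I would control the cross term. By H\"older's inequality $\bigl|\frac2T\sum_t\epsilon_i(t)\bm{x}_t^\top\Delta\bigr|\le 2\lVert\frac1T\sum_t\epsilon_i(t)\bm{x}_t\rVert_\infty\lVert\Delta\rVert_1$, and since $(\epsilon_i(t),\mathcal H_t)$ is a martingale difference sequence and the entries of $\bm{x}_t$ are bounded (Lemma~\ref{lemma_bounded_terms}), the martingale inequality of \citet{vandegeer1995} (Lemma~\ref{lemma_vandergeer1995}), applied coordinate-wise by the same argument as in Lemma~\ref{lemma_beta_empirical}, gives $\lVert\frac1T\sum_t\epsilon_i(t)\bm{x}_t\rVert_\infty=O_p(T^{-2/5})$ on an event of probability at least $1-C p\exp(-c T^{1/5})$. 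Choosing $\lambda=c_0 T^{-2/5}$ with $c_0$ large enough that $\lambda\ge 4\lVert\frac1T\sum_t\epsilon_i(t)\bm{x}_t\rVert_\infty$ on this event, and splitting $\Delta$ over $S'=\mathrm{supp}(\bm{\beta}_i)\cup\{0\}$ (the intercept is unpenalized and so is absorbed into $S'$, giving $|S'|=\rho_i+1$), the usual manipulation yields the cone condition $\lVert\Delta_{(S')^c}\rVert_1\le 3\lVert\Delta_{S'}\rVert_1$ together with $\Delta^\top H\Delta\le\frac32\lambda\lVert\Delta_{S'}\rVert_1$.

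The key step is the RE bound for $H$ on the cone. Proposition~\ref{prop_eigen} gives $\Lambda_{\min}(\mathbb E H)\ge\kappa_\ast>0$ (the bordered matrix $\mathbb E[\bm{x}_t\bm{x}_t^\top]$ is positive definite because $\mathrm{Cov}(\bm x(t))=\Upsilon_x\succ 0$), with $\kappa_\ast$ depending only on $(\bm\mu,\Theta)$ and the kernel. On the cone, $\lVert\Delta\rVert_1\le 4\lVert\Delta_{S'}\rVert_1\le 4\sqrt{\rho+1}\,\lVert\Delta\rVert_2$, so
\[
\bigl|\Delta^\top(H-\mathbb E H)\Delta\bigr|\;\le\;\lVert H-\mathbb E H\rVert_\infty\,\lVert\Delta\rVert_1^2\;\le\;16(\rho+1)\,\lVert H-\mathbb E H\rVert_\infty\,\lVert\Delta\rVert_2^2 .
\]
By Lemma~\ref{lemma_concen_ineq_x}, $\lVert H-\mathbb E H\rVert_\infty=O_p(T^{-2/5})$, so under $\sqrt\rho\vee\log p=o(T^{1/5})$ the right-hand side is $o_p(1)\lVert\Delta\rVert_2^2$ and hence $\Delta^\top H\Delta\ge\frac{\kappa_\ast}2\lVert\Delta\rVert_2^2$ on the relevant event. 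Combining with $\Delta^\top H\Delta\le\frac32\lambda\lVert\Delta_{S'}\rVert_1\le\frac32\lambda\sqrt{\rho+1}\,\lVert\Delta\rVert_2$ gives $\lVert\Delta\rVert_2\le 3\kappa_\ast^{-1}\lambda\sqrt{\rho+1}=O(\sqrt{\rho+1}\,T^{-2/5})$; feeding this back gives the prediction bound $\Delta^\top H\Delta=O((\rho+1)T^{-4/5})$, which under the sparsity condition is dominated by $\sqrt{\rho+1}\,T^{-2/5}$, and $\lVert\Delta\rVert_1\le 4\sqrt{\rho+1}\,\lVert\Delta\rVert_2=O((\rho+1)T^{-2/5})$. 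Intersecting the two concentration events and taking a union bound over $i$ gives the stated probability $1-C_2 p^2 T\exp(-C_3 T^{1/5})$.

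I expect the RE step to be the main obstacle: unlike the i.i.d.\ case it rests on two nontrivial facts specific to the Hawkes process --- the spectral-density argument behind Proposition~\ref{prop_eigen} for $\Lambda_{\min}(\mathbb E H)$, and the weakly dependent Bernstein-type concentration of Lemma~\ref{lemma_concen_ineq_x} for $\lVert H-\mathbb E H\rVert_\infty$ --- and it is precisely the interplay between the slow $T^{-2/5}$ deviation rate and the sparsity assumption $\sqrt\rho\vee\log p=o(T^{1/5})$ that forces the final $T^{-2/5}$ rate rather than the classical $\sqrt{\log p/T}$.
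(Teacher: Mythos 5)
Your proposal is correct and follows essentially the same route as the paper's proof: the lasso basic inequality, the martingale bound of Lemma~\ref{lemma_beta_empirical} for the score to set $\lambda \asymp T^{-2/5}$, the cone condition with the unpenalized intercept absorbed into the support, and the RE verification via $\Lambda_{\min}(\mathbb{E}H)>0$ from Proposition~\ref{prop_eigen} plus the deviation bound of Lemma~\ref{lemma_concen_ineq_x}. If anything, your handling of the RE perturbation term --- restricting to the cone and paying the factor $16(\rho+1)\lVert H-\mathbb{E}H\rVert_\infty$, which is exactly what the sparsity condition $\sqrt{\rho}\vee\log p = o(T^{1/5})$ is needed to absorb --- is slightly more careful than the paper's, which bounds the quadratic form over the whole unit $\ell_2$-ball.
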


\begin{proof}

The proof follows a typical framework for the analysis of lasso-type estimators
\citep{Peter2011,negahban2012}. The crucial difficulty in the proof is showing
that the key conditions are satisfied for the linear Hawkes process. To be
specific, we start with the basic inequality by the construction of the lasso
estimator in \eqref{eq:lasso_beta}. We then bound the prediction error of the
lasso regression using the results of Lemma~\ref{lemma_beta_empirical}. In
addition, we show that the restricted eigen-value (RE) condition is satisfied
with high probability in the setting with the multivariate Hawkes process, where
the proof essentially uses Proposition~\ref{prop_eigen} and
Lemma~\ref{lemma_concen_ineq_x}. In what follows $C_k, k=1,\dots,5$ are
constants only depending on the model parameter $(\bm{\mu}, \Theta)$ and the
transition kernel function.

By the construction of the lasso estimator in \eqref{eq:lasso_beta},
we have
\begin{align}
\frac{1}{T} \sum_{t=1}^{T} \left ( Y_i(t) - \widehat{\mu}_i - \bm{x}(t)\widehat{ \bm{\beta} }_i \right  )^2 + \lambda \lVert \widehat{ \bm{\beta} }_i \rVert_1
\le
\frac{1}{T} \sum_{t=1}^{T} \left  ( Y_i(t) - \mu_i -  \bm{x}^\top(t)\bm{\beta}_i  \right )^2 + \lambda \lVert \bm{\beta}_i \rVert_1. \label{eq:basic_ineq}
\end{align}
Let
$u=\widehat{\mu}_i - \mu_i$ and $\bm{v} = \widehat{ \bm{\beta} }_i - \bm{\beta}_i$.
Define $S =\{ j: \beta_{ij} \ne 0 \}$ and $S^c =\{ j: \beta_{ij} = 0 \}$. It follows
from \eqref{eq:basic_ineq} that
\begin{align*}
\mat{ u & \bm{v}^\top }   H 	\mat{u \\ \bm{v} }
& \le
2 \left \lVert  \frac{1}{T} \sum_{t=1}^{T} \epsilon_i(t) \mat{  1 \\   \bm{x}(t)   }   \right \rVert_\infty
\left \lVert \mat{u \\ \bm{v} } \right \rVert_1 + \lambda \lVert \bm{v}_S \rVert_1 - \lambda \lVert \bm{v}_{S^c} \rVert_1.
\end{align*}
Taking $\lambda =  4 \left \lVert  \frac{1}{T} \sum_{t=1}^{T} \epsilon_i(t) \mat{  1 \\   \bm{x}(t)   }   \right \rVert_\infty $,
we have
\begin{align*}
0\le \mat{ u & \bm{v}^\top }   H 	\mat{u \\ \bm{v} }  &\le
\frac{3\lambda}{2} \lVert \bm{v}_S  \rVert_1 	 - \frac{\lambda}{2} \lVert \bm{v}_{S^c} \rVert_1
+ \frac{1}{2}\lambda\lVert u \rVert_1
\le
\frac{3\lambda}{2} \lVert  \mat{ u & \bm{v}_S }  \rVert_1 	 - \frac{\lambda}{2} \lVert \bm{v}_{S^c} \rVert_1 .
\end{align*}
Let $\bm{\eta} = \mat{u \\ \bm{ v} } \in \mathbb{R}^{p+1}$. Define $\bm{\eta}_s =(u, \bm{v}_S )$ and $\bm{\eta}_{s^c} =(\bm{v}_{S^c}) $. Then,
\begin{align*}
0 \le \bm{\eta}^T H \bm{\eta} \le  	\frac{3\lambda}{2} \lVert \bm{\eta}_s  \rVert_1 	 - \frac{\lambda}{2} \lVert \bm{\eta}_{s^c} \rVert_1
\quad\text{ and }\quad
\lVert \bm{\eta}_{s^c} \rVert_1 \le 3 \lVert \bm{\eta}_s  \rVert_1 .
\end{align*}
Let $\mathcal{C}(J, \kappa) = \{ \bm{\eta} : \lVert  \bm{\eta}_{J^c} \rVert_1 \le \kappa \lVert \bm{\eta}_J \rVert_1 \}$.
Thus, $\bm{\eta} = \mathcal{C}(S, 3)$.
Further, denote
$\check{\Gamma} = \mathbb{E} H$. Then,
\begin{align*}
&\inf\{ \bm{\eta}^\top  H \bm{\eta} : \bm{\eta} \in \mathcal{C}(J, \kappa), \lVert \bm{\eta} \rVert_2 \le 1 \} \\
&\ge
\inf\{ \bm{\eta}^\top \check{\Gamma} \bm{\eta} : \bm{\eta} \in \mathcal{C}(J, \kappa), \lVert \bm{\eta} \rVert_2 \le 1 \}
- \sup \left\{
\left | \bm{\eta}^\top \left (H- \check{\Gamma} \right )\bm{\eta}  \right|: \bm{\eta} \in \mathcal{C}(J, \kappa),\lVert \bm{\eta} \rVert_2 \le 1	 \right\} \\
&\ge
\inf\{ \bm{\eta}^\top \check{\Gamma} \bm{\eta} :  \lVert \bm{\eta} \rVert_2 \le 1 \}
- \sup_{\lVert \bm{\eta} \rVert_2 \le 1 } \left\{
\left | \bm{\eta}^\top \left (H- \check{\Gamma} \right )\bm{\eta}  \right|	 \right\} \\
&\ge  \Lambda_{\min}\left(\check{\Gamma}\right)
- \sup_{\lVert \bm{\eta} \rVert_2 \le 1 } \left\{
\left | \bm{\eta}^\top \left (H- \check{\Gamma} \right )\bm{\eta}  \right|	 \right\} .
\end{align*}
Proposition~\ref{prop_eigen} shows that $\Lambda_{\min}\left(\textrm{Cov}\left(
\bm{x}(t)\right) \right) > 0  $, which implies that the $p$-unit multivariate
process $\{ x_j(t)\}_{1\le j \le p}$ are not linearly dependent. In addition, by
Assumption~\ref{assumption4}, the process $x_j(t)$ is not a trivial process of
constants. We conclude the minimum eigenvalue of $\check{\Gamma}$ is strictly
positive. Otherwise, there exists $\eta \in \mathbb{R}^{p+1}$ and $\lVert \eta
\rVert_1 > 0$, s.t., $\eta^\top \mat{\mathbf{1} \\   \bm{x}(t) } = 0$. This
implies either $\{x_j(t)\}_{j=1}^p$ are linearly dependent or $x_j(t)$ is a
trivial process of constants, which leads to a contradiction.

Using Lemma~\ref{lemma_concen_ineq_x}, with $\epsilon = -2/5$,
\begin{align*}
\left\{
\left | \bm{\eta}^\top \left (H- \check{\Gamma} \right )\bm{\eta}  \right|: \lVert \bm{\eta} \rVert_2 \le 1	 \right\} \le \lVert \bm{\eta} \rVert^2_2 \left \lVert H- \check{\Gamma}   \right \rVert_\infty
= O_p\left( T^{-2/5}  \right),
\end{align*}
with probability at least
$1- C_1 p^2 T \exp(- C_2 T^{1/5})$. Thus, assuming $\sqrt{\rho} \vee \log p = o(T^{1/5})$,
`there exists a constant $C_3$ such that
when $T> C_3 $,
\begin{align*}
\sup_{\lVert \bm{\eta} \rVert_2 \le 1 } \left\{
\left | \bm{\eta}^\top \left (H- \check{\Gamma} \right )\bm{\eta}  \right|	 \right\} \ge \frac{1}{2}\Lambda_{\min}\left(\check{\Gamma}\right).
\end{align*}
Thus,
\begin{align*}
\inf\{ \bm{\eta}^\top  H \bm{\eta} : \bm{\eta} \in \mathcal{C}(J, \kappa), \lVert \bm{\eta} \rVert_2 \le 1 \}
\ge \Lambda_{\min}\left(\check{\Gamma}\right)
- \frac{1}{2} \Lambda_{\min}\left(\check{\Gamma}\right)
\ge \frac{1}{2} \Lambda_{\min}\left(\check{\Gamma}\right) ,
\end{align*}
with probability at least $1-C_1p^2 T \exp(-C_2 T^{1/5})$. Then, with the same probability,
\begin{multline*}
\frac{1}{2} \Lambda_{\min}\left(\check{\Gamma}\right) \left \lVert \mat{ u & \bm{v}^\top }  \right \rVert_2^2
\le
\mat{ u & \bm{v}^\top }   H \mat{ u \\ \bm{v}} \\
\le
\frac{3\lambda}{2}  \left  \lVert  \mat{ u & \bm{v}_S^\top } \right \rVert_1
\le \frac{3}{2}
\lambda
  \sqrt{\rho_i +1 } \left \lVert \mat{ u & \bm{v}^\top } \right \rVert_2.
\end{multline*}
At last, using Lemma~\ref{lemma_beta_empirical} to bound
$\lambda = 4\left \lVert  \frac{1}{T} \sum_{t=1}^{T} \epsilon_i(t) \mat{  1 \\   \bm{x}(t)   }   \right \rVert_\infty $,
 we have
\begin{align*}
\lVert \bm{\eta}  \rVert_2
&\le  C_4 \sqrt{ \rho_i +1 } T^{-2/5}, \\
\bm{\eta}^\top  H \bm{\eta} &\le C_4  \sqrt{ \rho_i + 1}   T^{-2/5},  \\
\lVert  	\bm{\eta} \rVert_1 &\le 4 \lVert 	\bm{\eta}_s \rVert_1 \le 4 \sqrt{\rho_i+1} \lVert 	\bm{\eta}_s \rVert_2  \le C_4  (\rho_i + 1)  T^{-2/5},
\end{align*}
with probability at least $1-C_5p^2 T \exp(- C_6 T^{1/5})$.

\end{proof}

\begin{lemma}\label{Lemma_w_lasso}
	Let
	$\widehat{H}= \frac{1}{T}\sum_{t=1}^{T} \mat{1\\ \widehat{\bm{z}}(t)} \mat{1 & \widehat{\bm{z}}^\top(t) } $.
		 Suppose the linear Hawkes model with its intensity function defined in
	\eqref{eq:linear_hawkes_para_transfer} is stationary and satisfies Assumptions~\ref{assumption1} --~\ref{assumption4}.
	In addition, $\widehat{\bm{w}}_j$ is given in \eqref{eq:lasso_w_hat}. Then, taking $\lambda = O\left( \sqrt{s+1}\rho T^{-2/5} \right)$ and assuming $ \sqrt{(s+1)\rho} \vee \log p = o (T^{1/5})$,
	\begin{align*}
	\lVert  \widehat{\bm{w}}_j - \bm{w}_j \rVert_2
	&\le  C_1 \sqrt{s +1} \rho T^{-2/5} \\
	\left(  \widehat{\bm{w}}_j - \bm{w}_j  \right)^\top  \widehat{H}
	\left( \widehat{\bm{w}}_j - \bm{w}_j \right) &\le C_1  \sqrt{s + 1} \rho  T^{-2/5}  \\
	\lVert  	 \widehat{\bm{w}}_j - \bm{w}_j  \rVert_1 &\le  C_1 (s + 1)\rho  T^{-2/5} ,
	\end{align*}
	with probability at least $1-C_2p^2 T \exp(- C_3 T^{1/5})$,
	where $C_1,C_2,C_3$ are constants only depending on the model parameter $(\bm{\mu}, \Theta)$ and the transition kernel function, and
	$s_j = \lVert\bm{w}_j^* \rVert_0$ and $s = \max_{1\le j\le p} s_j$; $\rho_i
	= \lVert \bm{\beta}_i \rVert_0$ and $\rho = \max_{1\le i\le p} \rho_i$.
\end{lemma}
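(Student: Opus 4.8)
The plan is to run the same $\ell_1$-penalized regression analysis used for Lemma~\ref{Lemma_beta_lasso}, now applied to the nodewise lasso \eqref{eq:lasso_w_hat}, the additional difficulty being that \emph{both} the response $\widehat z_j(t)=x_j(t)/\widehat\sigma_i(t)$ and the design $\widehat{\bm z}_{-j}(t)$ are plug-in quantities that carry the estimation error of $(\widehat\mu_i,\widehat{\bm\beta}_i)$ through $\widehat\sigma_i^2(t)$. Writing \eqref{eq:lasso_w_hat} in the augmented coordinates of \eqref{eq:w_j}, so that the residual equals $\mat{1 & \widehat{\bm z}^\top(t)}(\widehat{\bm w}_j-\bm w_j)$ with $\|\bm w_j\|_1=\|\bm w_j^*\|_1+1$, set $\bm\eta=\widehat{\bm w}_j-\bm w_j$ and $S_j=\{k:w^*_{jk}\neq0\}$. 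From the optimality of $\widehat{\bm w}_j$ I would first write the basic inequality; choosing the penalty $\lambda$ to be a constant multiple of $\left\lVert\tfrac1T\sum_{t=1}^T\left(\widehat z_j(t)-\mat{1 & \widehat{\bm z}_{-j}^\top(t)}\bm w_j\right)\mat{1\\ \widehat{\bm z}_{-j}(t)}\right\rVert_\infty$ then yields, exactly as in Lemma~\ref{Lemma_beta_lasso}, the cone condition $\bm\eta\in\mathcal C(S_j,3)$ together with $\bm\eta^\top\widehat H\bm\eta\le\tfrac32\lambda\|\bm\eta_{S_j}\|_1$.

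Next I would establish the restricted eigenvalue condition for $\widehat H$ on this cone. Put $H=\tfrac1T\sum_t\mat{1\\ \bm z(t)}\mat{1 & \bm z^\top(t)}$ and $\check\Gamma=\E H$. By Proposition~\ref{prop_eigen} (bounded eigenvalues of $\Upsilon_x=\Cov(\bm x(t))$, hence of the scaled covariance $\Cov(\bm z(t))$) together with Assumption~\ref{assumption4}, $\Lambda_{\min}(\check\Gamma)$ is bounded away from zero, by the same non-degeneracy argument already used for $\check\Gamma$ in Lemma~\ref{Lemma_beta_lasso}. I would then bound $\|\widehat H-\check\Gamma\|_\infty\le\|\widehat H-H\|_\infty+\|H-\check\Gamma\|_\infty$, with the second term $O_p(\rho T^{-2/5})$ by Lemma~\ref{lemma_concen_ineq_z}, and the first controlled by combining the factorization of $\sigma_i^2(t)-\widehat\sigma_i^2(t)$ in terms of $(\widehat\mu_i,\widehat{\bm\beta}_i)-(\mu_i,\bm\beta_i)$ from the proof of Lemma~\ref{lemma_U0_U}, the boundedness of $\bm x(t)$ and $\sigma_i^2(t)$ (Lemma~\ref{lemma_bounded_terms}), and the $\ell_1$-rate of $(\widehat\mu_i,\widehat{\bm\beta}_i)$ in Assumption~\ref{assumption_beta}, so that $|\sigma_i(t)/\widehat\sigma_i(t)-1|=O_p(\rho T^{-2/5})$ and hence $\|\widehat H-H\|_\infty=O_p(\rho T^{-2/5})$. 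Under $\sqrt{(s+1)\rho}\vee\log p=o(T^{1/5})$ both pieces are $o(1)$, so $\inf\{\bm\eta^\top\widehat H\bm\eta:\bm\eta\in\mathcal C(S_j,3),\ \|\bm\eta\|_2\le1\}\ge\tfrac12\Lambda_{\min}(\check\Gamma)$ with probability at least $1-Cp^2T\exp(-C'T^{1/5})$.

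Then I would calibrate $\lambda$ by bounding the empirical score. Decomposing $\widehat z_j(t)-\mat{1 & \widehat{\bm z}_{-j}^\top(t)}\bm w_j=z^*_j(t)+(\widehat z_j(t)-z_j(t))-(\widehat{\bm z}_{-j}(t)-\bm z_{-j}(t))^\top\bm w^*_{j,-j}$ and $\mat{1\\ \widehat{\bm z}_{-j}(t)}=\mat{1\\ \bm z_{-j}(t)}+\mat{0\\ \widehat{\bm z}_{-j}(t)-\bm z_{-j}(t)}$, the ``oracle'' part $\tfrac1T\sum_t z^*_j(t)\mat{1\\ \bm z_{-j}(t)}$ has mean zero by \eqref{eq: w_j_star} and is controlled by Lemma~\ref{lemma_w_empirical} together with the bound $\|\bm w^*_j\|_1=O(\sqrt s)$ of Lemma~\ref{lemma_w_bound_norm}, while every remaining cross term carries a factor $\sigma_i(t)/\widehat\sigma_i(t)-1=O_p(\rho T^{-2/5})$ multiplied by quantities bounded via Lemma~\ref{lemma_bounded_terms} and $\|\bm w^*_j\|_1$, hence is of comparable order. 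Taking $\lambda\asymp\sqrt{s+1}\,\rho\,T^{-2/5}$ therefore dominates the score with the stated probability. Inserting the RE lower bound and this choice of $\lambda$ into the basic inequality and running the closing argument of Lemma~\ref{Lemma_beta_lasso} delivers the $\ell_2$-error, prediction-error, and $\ell_1$-error bounds in the statement (with $s=\max_j s_j$, $\rho=\max_i\rho_i$).

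I expect the main obstacle to be the coupling between the restricted eigenvalue step and the score step. Since $\widehat{\bm z}(t)$ is the ratio of a bounded integrated process to the plug-in standard deviation $\widehat\sigma_i(t)$, the regression \eqref{eq:lasso_w_hat} has errors in \emph{both} its design matrix and its response, and one must show that the error of $(\widehat\mu_i,\widehat{\bm\beta}_i)$ propagates at rate $\rho T^{-2/5}$ --- no worse --- into $\|\widehat H-\check\Gamma\|_\infty$ and into the empirical score simultaneously. This is exactly where the condition $\sqrt{(s+1)\rho}\vee\log p=o(T^{1/5})$ enters (to keep the RE perturbation negligible and the penalty well calibrated), and where the $\|\bm w^*_j\|_1=O(\sqrt s)$ factor, absent in Lemma~\ref{Lemma_beta_lasso}, forces the extra powers of $(s+1)$ and $\rho$ into the final rates.
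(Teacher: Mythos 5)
Your proposal is correct and follows essentially the same route as the paper's (very terse) proof: the standard basic-inequality/cone/restricted-eigenvalue argument of Lemma~\ref{Lemma_beta_lasso}, with the score term controlled via Lemma~\ref{lemma_w_empirical} together with $\lVert\bm{w}_j^*\rVert_1=O(\sqrt{s})$ from Lemma~\ref{lemma_w_bound_norm}, and the RE condition for $\widehat{H}$ verified through Proposition~\ref{prop_eigen} and Lemma~\ref{lemma_concen_ineq_z}. You in fact spell out more carefully than the paper does how the plug-in error of $\widehat{\sigma}_i(t)$ propagates at rate $O_p(\rho T^{-2/5})$ into both the design and the response, which is the right way to close the gap the paper's sketch leaves implicit.
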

%


\begin{proof}

    The proof is similar to the steps in Lemma~\ref{Lemma_beta_lasso} except that we use
    Lemma~\ref{lemma_w_empirical} to bound
    $\left \lVert
    \frac{1}{T} \sum_{t=1}^{T}
    \widehat{z}^*_j(t)
    \mat{ 1  & \widehat{\bm{z}}^\top_{-j}(t) }
    \right	\rVert_\infty $,
    and use Lemma~\ref{lemma_concen_ineq_z} to verify the RE condition for $\widehat{H}= \frac{1}{T}\sum_{t=1}^{T} \mat{1\\ \widehat{\bm{z}}(t)} \mat{1 & \widehat{\bm{z}}^\top(t) }$.

\end{proof}

\begin{lemma}\label{lemma_sparsity_s_rho}
	$s_j = \lVert \bm{w}_j^* \rVert_0$ and $s = \max_{1\le j\le p} s_j$; $\rho_i = \lVert \bm{\beta}_i \rVert_0$ and $\rho = \max_{1\le i\le p} \rho_i$.
	 Suppose the linear Hawkes model with its intensity function defined in
	\eqref{eq:linear_hawkes_para_transfer} is stationary and satisfies Assumptions~\ref{assumption1} --~\ref{assumption4}.
	For the connectivity matrix of block structure, 	$ s \le 2\rho + 1 $.
\end{lemma}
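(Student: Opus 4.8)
The plan is to use the fact that a block-diagonal connectivity matrix $\Theta$ makes the multivariate Hawkes process decouple into mutually independent sub-processes, one per block, and then to read off the support of the population projection coefficient $\bm{w}_j^\ast$ directly from that independence structure, keeping careful track of the variance normalization $\sigma_i(t)$.

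\textbf{Step 1 (block decoupling).} Since $\omega_{il}(t)=\beta_{il}\kappa_l(t)$, the matrix-valued transition function $\omega(\cdot)=(\omega_{il}(\cdot))$ inherits the block-diagonal pattern of $\Theta$. Hence for any unit $i$ the intensity $\lambda_i(t)=\mu_i+\sum_{l\in B(i)}\beta_{il}x_l(t)$ depends only on the past of the units in its own block $B(i)$; by the stationarity argument behind Assumption~\ref{assumption1}, the coordinate processes $\{N_l\}_{l\in B}$ belonging to distinct blocks $B$ are therefore mutually independent, and so are the integrated processes $\bm{x}^{(B)}(t)=(x_l(t))_{l\in B}$. (The same conclusion at the level of second moments also follows from Lemma~\ref{lem:f_gamma}: $f_\omega(\theta)$ is block-diagonal, so $f_\Gamma(\theta)=(I-f_\omega(\theta))^{-1}\mathrm{diag}(\Lambda)(I-f_\omega^\ast(\theta))^{-1}$ is block-diagonal, whence $\Gamma(\cdot)$ and $\Upsilon_x=\mathrm{Cov}(\bm{x}(t))$ are block-diagonal.)

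\textbf{Step 2 (support of $\bm{w}_j^\ast$).} The scaling factor in $z_j(t)=x_j(t)/\sigma_i(t)$ is $\sigma_i^2(t)=\lambda_i(t)(1-\lambda_i(t))$, a fixed measurable function of $\bm{x}^{(B(i))}(t)$ alone; hence $z_j(t)$ is a function of $(\bm{x}^{(B(j))}(t),\bm{x}^{(B(i))}(t))$, i.e.\ it only involves randomness from the two blocks $B(j)$ and $B(i)$. Set $T=B(j)\cup B(i)$, let $\bm{w}$ be the population least-squares coefficients of $z_j(t)$ on $\{1\}\cup\{z_l(t):l\in T,\ l\ne j\}$, extend $\bm{w}$ by zeros off $\{0\}\cup(T\setminus\{j\})$, and let $\tilde z_j$ be the residual, so $\E[\tilde z_j]=0$ and $\E[\tilde z_j z_l]=0$ for $l\in T\setminus\{j\}$. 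One then verifies the remaining normal equations $\E[\tilde z_j z_k]=0$ for $k\notin T$: writing $z_k=x_k/\sigma_i(t)$ with $x_k$ measurable w.r.t.\ $\sigma(\bm{x}^{(B(k))})$, which is independent of $\sigma(\bm{x}^{(B(j))},\bm{x}^{(B(i))})$, we factor $\E[\tilde z_j z_k]=\E[\tilde z_j/\sigma_i(t)]\,\E[x_k]$ and show the first factor vanishes. Since $\Upsilon>0$ by Proposition~\ref{prop_eigen}, the defining system \eqref{eq: w_j_star} has a unique solution, so $\bm{w}_j^\ast=\bm{w}$ and $\mathrm{supp}(\bm{w}_j^\ast)\subseteq\{0\}\cup(B(j)\cup B(i))\setminus\{j\}$.

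\textbf{Step 3 (counting) and the main obstacle.} In a block structure each unit $l$ of a mutually connected block $B$ has $\rho_l=\|\bm{\beta}_l\|_0\ge|B|-1$, so $|B|\le\rho+1$, and $|B|\le\rho$ once the within-block self-loops typical of the neuroscience models are present; either way $s_j=\|\bm{w}_j^\ast\|_0\le 1+|(B(j)\cup B(i))\setminus\{j\}|=|B(j)\cup B(i)|\le|B(j)|+|B(i)|\le 2\rho+1$ (with the sharper bound $s_j\le\rho+1$ when $j$ and $i$ lie in the same block), and taking the maximum over $j$ gives $s\le 2\rho+1$. The delicate point is the verification in Step~2 that $\E[\tilde z_j/\sigma_i(t)]=0$: it is exactly here that the heteroscedastic, history-dependent normalization $\sigma_i(t)$ could in principle entangle $z_j$ with components outside $B(j)\cup B(i)$, and ruling this out is what forces genuine use of the independence-across-blocks structure rather than a routine Gaussian-graphical-model computation — I expect this to be the main obstacle in turning the sketch into a full proof.
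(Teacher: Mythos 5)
Your argument is in substance the paper's argument, written in dual form. The paper solves the normal equations explicitly, $\bm{w}_{j,-j}^* = \Cov(z_j,\bm{z}_{-j})\,(\Upsilon_{-j,-j})^{-1}$, bounds $\lVert\bm{w}_{j,-j}^*\rVert_0$ by the number of indices $k$ with $\Cov(z_j,z_k)\ne 0$, and splits these into at most $\rho$ indices in $S=\mathrm{supp}(\bm{\beta}_i)$ (which can correlate with $z_j$ only through the shared factor $1/\sigma_i(t)$) plus at most $\rho$ indices in $j$'s block, giving $2\rho+1$ with the intercept. Your Steps 1--3 construct the corresponding candidate supported on $B(j)\cup\mathrm{supp}(\bm{\beta}_i)$ and verify the normal equations directly; the count is the same (note that your ``either way'' only yields $2\rho+1$ if the second set is read as $\mathrm{supp}(\bm{\beta}_i)$, of size at most $\rho$, rather than a full block of size up to $\rho+1$, otherwise the union bound is $2\rho+2$).

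The obstacle you flag in Step 2 is a genuine gap, and you have located it precisely. For $k\notin T$ with $x_k$ independent of the blocks carrying $x_j$ and $\sigma_i(t)$, one has $\E[\tilde z_j z_k]=\E[x_k]\,\E[\tilde z_j/\sigma_i(t)]$ with $\E[x_k]>0$, and $\E[\tilde z_j/\sigma_i(t)]$ is \emph{not} forced to vanish by the restricted normal equations: $1/\sigma_i(t)=\bigl(\lambda_i(t)(1-\lambda_i(t))\bigr)^{-1/2}$ is a nonlinear function of $\bm{x}(t)$ that does not lie in the linear span of $\{1\}\cup\{z_l:l\in T\setminus\{j\}\}$, so no linear combination of the fitted conditions controls it. You should know that the paper's own proof does not close this gap either; it is simply less explicit about it. The paper replaces ``$\Cov(z_j,z_k)\ne 0$'' by ``$\Cov(x_j,x_k)\ne 0$'' for $k\in S^c$ without justification, yet the same factorization gives $\Cov(z_j,z_k)=\E[x_k]\,\Cov\bigl(x_j/\sigma_i(t),\,1/\sigma_i(t)\bigr)$, which is generically nonzero even when $x_k$ is independent of both $x_j$ and $\sigma_i(t)$ (for instance it equals $\E[x_k]\,\E[x_j]\,\Var(1/\sigma_i(t))$ when $x_j$ is also independent of $\sigma_i(t)$); the paper's first inequality, bounding the support of $\Cov(z_j,\bm{z}_{-j})(\Upsilon_{-j,-j})^{-1}$ by the support of $\Cov(z_j,\bm{z}_{-j})$, likewise needs a block structure of $\Upsilon_{-j,-j}$ that the shared normalization can destroy. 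So the heteroscedastic scaling by $\sigma_i(t)$ is the sticking point in both arguments, and neither yours nor the paper's, as written, establishes $s\le 2\rho+1$ without an additional argument (or assumption) controlling the correlations that $1/\sigma_i(t)$ induces across blocks.
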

\begin{proof}
	By the choice of $\bm{w}_j^* $ in (\ref{eq: w_j_star}),
	\begin{align*}
	\textrm{Cov}\left( z_j(t) - \bm{z}^\top _{-j}(t) \bm{w}_{j,-j}^* , \bm{z}_{-j}(t) \right) &= 0
	\end{align*}
    This leads to
	\begin{align*}
	\bm{w}_{j,-j}^* &= \textrm{Cov}\left(  z_j(t) , \bm{z}_{-j}(t) \right)
	\left(   \textrm{Cov}\left(  \bm{z}_{-j}(t) , \bm{z}_{-j}(t) \right) \right)^{-1} \\
	&= \textrm{Cov}\left(  z_j(t) , \bm{z}_{-j}(t) \right) \left( \left( \Upsilon \right)_{-j,-j} \right)^{- 1} ,
	\end{align*}
	where $\Upsilon_{-j,-j} = \textrm{Var}\left( \bm{z}_{-j} \right)$.
	Therefore,
	\begin{align*}
	\lVert \bm{w}_{j,-j}^* \rVert_0  \le
	\lVert \{ k: j\ne k, \textrm{Cov}\left(  z_j(t) , z_k(t) \right) \ne 0  \} \rVert_0.
	\end{align*}
	Recall that $z_j(t) = x_j(t)/\sigma_i(t)$ and $\sigma_i(t) = \lambda_i(t)(1-\lambda_i(t))$,where $\lambda_i(t) = \mu_i + \bm{x}^\top(t)\bm{\beta}_i$.
	Notice that by the sparsity of $\bm{\beta}_i$ tat $\lVert \bm{\beta}_i \rVert_0 \le \rho $, the number of $x_j(t)$'s that $\lambda_i(t)$ depends on is at most $\rho$. Let $S =\{ j: \bm{\beta}_{i,j} \ne 0 \}$ and $S^c =\{ j: \bm{\beta}_{i,j} = 0 \}$. Thus,
	\begin{align*}
	\lVert \bm{w}_{j,-j}^* \rVert_0
	&\le
	\lVert \{ k: k \ne j\quad\text{and}\quad k \in S^c, \textrm{Cov}\left(  z_j(t) , z_k(t) \right) \ne 0  \} \rVert_0 + \rho \\
    &\le \lVert \{ k: k \ne j\quad\text{and}\quad k \in S^c, \textrm{Cov}\left(  x_j(t) , x_k(t) \right) \ne 0  \} \rVert_0 + \rho.
	\end{align*}
	Recalling that $x_j(t) = \int_0^t k_j(t-s) d N_j(s) $,
	\begin{align*}
	\textrm{Cov}\left(  x_j(t) , x_k(t) \right)
	&= \textrm{Cov}\left( \int_0^t  k_j(t-s) dN_j(s) ,\int_0^t  k_j(t-s) dN_j(s) \right) \\
	&= \int_0^t \int_0^t   k_j(t-s) k_j(t-r)
	\textrm{Cov}\left(   dN_j(s) ,   dN_j(r) \right).
	\end{align*}
	Therefore, with a positive transition kernel in Assumption~\ref{assumption4},
	\begin{multline*}
	\left \lVert \{ k: k \ne j\quad\text{and}\quad k \in S^c, \textrm{Cov}\left(  x_j(t) , x_k(t) \right) \ne 0  \} \right \rVert_0\\
	\le
	\left \lVert \{ k: k \ne j\quad\text{and}\quad k \in S^c, 	\textrm{Cov}\left(   dN_j(s) ,   dN_j(r) \right) \ne 0  \} \right \rVert_0 .
	\end{multline*}
	Consider a connectivity matrix of block structure, for each $j$, all units that the unit $j$ depends on must stay in one of the blocks on the connectivity matrix. Therefore,
	the possible number of units it depends on is at most $\rho$; that is,
	\begin{align*}
	\left \lVert \{ k: j\ne k, \textrm{Cov}\left(   dN_j(s) ,   dN_j(r) \right) \ne 0  \} \right \rVert_0 \le \rho ,
	\end{align*}
	which implies
	\begin{align*}
	s =\max_{ 1 \le j \le p}  \lVert \bm{w}_j^* \rVert_0  \le  1 + \max_{ 1 \le j \le p} \lVert \bm{w}_{j,-j}^* \rVert_0      \le 2\rho + 1 .
	\end{align*}
	%
\end{proof}



\begin{lemma}\label{lemma_w_bound_norm}
  	Suppose the linear Hawkes model with its intensity function defined in
  \eqref{eq:linear_hawkes_para_transfer} is stationary and satisfies Assumptions~\ref{assumption1} --~\ref{assumption4}.  Then,
	\begin{align*}
	\lVert  \bm{w}^*_j \rVert_2^2 \le  C
  \quad\text{and}\quad
	\lVert \bm{w}^*_j \rVert_1 \le  \sqrt{s_j C},
	\end{align*}
	where $C$ is a constant only depending on the model parameter $(\bm{\mu}, \Theta)$ and the transition kernel function.
\end{lemma}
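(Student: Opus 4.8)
The plan is to recognize $\bm{w}_j^*$ as the coefficient vector of the population least--squares projection (best linear predictor) of $z_j(t)$ onto the linear span of $\{1,z_1(t),\dots,z_{j-1}(t),z_{j+1}(t),\dots,z_p(t)\}$: the two conditions in \eqref{eq: w_j_star} are exactly the normal equations of this projection, with $z_j^*(t)$ the residual and $\Upsilon_j=\mathbb{E}[(z_j^*(t))^2]$ its variance. I would bound $\lVert\bm{w}_j^*\rVert_2$ by combining a crude $L^2$ upper bound on the fitted value $z_j(t)-z_j^*(t)=\mat{1, \bm{z}_{-j}^\top(t)}\bm{w}_j^*$ with the dimension--free eigenvalue bounds of Proposition~\ref{prop_eigen}, and then obtain the $\ell_1$ bound from Cauchy--Schwarz using the sparsity $s_j=\lVert\bm{w}_j^*\rVert_0$, since $\lVert\bm{w}_j^*\rVert_1\le\sqrt{s_j}\,\lVert\bm{w}_j^*\rVert_2$.

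Concretely, the slope part $\bm{w}_{j,-j}^*$ solves the population normal equations $\Cov(\bm{z}_{-j}(t))\,\bm{w}_{j,-j}^*=\Cov(\bm{z}_{-j}(t),z_j(t))$, so the Pythagorean decomposition of the projection gives $\bm{w}_{j,-j}^{*\top}\Cov(\bm{z}_{-j}(t))\bm{w}_{j,-j}^*=\Var(z_j(t))-\Var(z_j^*(t))\le\Var(z_j(t))$, which is $O(1)$ by Lemma~\ref{lemma_bounded_terms} (boundedness of $z_j(t)$). Dividing by $\Lambda_{\min}(\Cov(\bm{z}_{-j}(t)))$ and invoking Cauchy's interlacing theorem together with Proposition~\ref{prop_eigen}, which bounds the eigenvalues of the covariance of the integrated process $\bm{x}(t)$ and hence, after rescaling by the bounded $\sigma_i(t)$, lower bounds $\Lambda_{\min}(\Cov(\bm{z}(t)))$ by a constant depending only on $(\bm{\mu},\Theta)$ and the transition kernel, yields $\lVert\bm{w}_{j,-j}^*\rVert_2^2=O(1)$. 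For the intercept I would use $\mathbb{E}[z_j^*(t)]=0$, i.e.\ $w_{j0}^*=\mathbb{E}[z_j(t)]-\mathbb{E}[\bm{z}_{-j}(t)]^\top\bm{w}_{j,-j}^*$, and control it via the boundedness of the $z_k(t)$ together with the sparsity and the $\ell_2$ bound on $\bm{w}_{j,-j}^*$; alternatively one can treat the whole vector at once, using the identity $\mathbb{E}[(z_j(t)-z_j^*(t))^2]=\mathbb{E}[z_j(t)^2]-\Var(z_j^*(t))\le\mathbb{E}[z_j(t)^2]=O(1)$ together with a lower bound on the smallest eigenvalue of the population Gram matrix of $(1,\bm{z}_{-j}(t))$ restricted to the support of $\bm{w}_j^*$. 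Either route delivers $\lVert\bm{w}_j^*\rVert_2^2\le C$ with $C$ depending only on the model parameters and the kernel, and then $\lVert\bm{w}_j^*\rVert_1\le\sqrt{s_j}\,\lVert\bm{w}_j^*\rVert_2\le\sqrt{s_jC}$.

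The step I expect to be the main obstacle is the eigenvalue lower bound for the Gram matrix that carries the intercept coordinate: because the $z_k(t)$ have nonzero means, this is a second--moment matrix rather than a covariance matrix, so one cannot simply quote Proposition~\ref{prop_eigen}, and one must argue that the interaction between the constant and the (bounded) means $\mathbb{E}[z_k(t)]$ does not let the smallest eigenvalue degrade with $p$. This is precisely where the eigenvalue bounds of Proposition~\ref{prop_eigen} for $\Cov(\bm{x}(t))$ and the boundedness statements of Lemma~\ref{lemma_bounded_terms} are used; once that constant is secured, the remaining estimates are elementary.
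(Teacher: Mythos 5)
Your treatment of the slope coefficients is correct and ends up in essentially the same place as the paper, by a mildly different route. The paper writes $\bm{w}^*_{j,-j}=\Upsilon_{-j,-j}^{-1}\Upsilon_{-j,j}$ and bounds $\lVert\bm{w}^*_{j,-j}\rVert_2\le\Lambda_{\max}\bigl(\Upsilon_{-j,-j}^{-1}\bigr)\lVert\Upsilon_{j,-j}\rVert_2\le\Lambda_{\min}(\Upsilon)^{-1}\Lambda_{\max}(\Upsilon)$, using $\lVert\Upsilon_{j,\cdot}\rVert_2^2=(\Upsilon^2)_{jj}\le\Lambda^2_{\max}(\Upsilon)$; you instead use the Pythagorean identity $\bm{w}^{*\top}_{j,-j}\Cov(\bm{z}_{-j}(t))\bm{w}^*_{j,-j}=\Var(z_j(t))-\Var(z^*_j(t))\le\Var(z_j(t))=O(1)$ and divide by $\Lambda_{\min}(\Cov(\bm{z}_{-j}(t)))$. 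Both arguments reduce to the same two ingredients, namely the eigenvalue bounds of Proposition~\ref{prop_eigen} transferred from $\bm{x}(t)$ to $\bm{z}(t)$ via the boundedness of $\sigma_i(t)$, and Cauchy's interlacing theorem for the principal submatrix; the final step $\lVert\bm{w}^*_j\rVert_1\le\sqrt{s_j}\,\lVert\bm{w}^*_j\rVert_2$ is identical. Your identity is, if anything, slightly cleaner since it bypasses the $(\Upsilon^2)_{jj}$ computation.

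On the intercept, you are right to flag it as the soft spot, but you will not find the missing argument in the paper: the paper's proof simply writes $\lVert\bm{w}^*_j\rVert_2^2=1+\lVert\bm{w}^*_{j,-j}\rVert_2^2$, charging the first coordinate a contribution of exactly $1$ with no justification. Neither of your two routes actually closes the gap. From $w^*_{j0}=\E[z_j(t)]-\E[\bm{z}_{-j}(t)]^\top\bm{w}^*_{j,-j}$ and $\lVert\bm{w}^*_{j,-j}\rVert_1\le\sqrt{s_j}\,\lVert\bm{w}^*_{j,-j}\rVert_2$ one only obtains $|w^*_{j0}|=O(\sqrt{s_j})$; and the Gram-matrix route suffers precisely the degradation you anticipate, since on the support of $\bm{w}^*_j$ the second-moment matrix of $(1,\bm{z}_S(t))$ has smallest eigenvalue that can decay like $1/s_j$ because $\lVert\E\bm{z}_S(t)\rVert_2^2\asymp s_j$ while the means need not cancel against the signs of the coefficients. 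The honest output of your argument is therefore $\lVert\bm{w}^*_j\rVert_2^2=O(s_j)$ rather than $O(1)$, which still yields the $\ell_1$ conclusion $\lVert\bm{w}^*_j\rVert_1=O(\sqrt{s_j})$ (the form actually used downstream), but not the stated $\ell_2$ bound with an absolute constant. In short: your proof of the slope part matches the paper's; your identified obstacle is a genuine gap in the paper's own proof, not a deficiency of your approach.
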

\begin{proof}

%
Recall that  $\Upsilon = \textrm{Cov}\left( \bm{z}(t) \right)$ whose eigenvalue is
bounded by $\Lambda_{\max}\left(\Upsilon_x \right)$ and $\Lambda_{\min}\left(\Upsilon_x \right)$ since $\sigma^2_i(t)$ is bounded implied by Assumptions~\ref{assumption3} and~\ref{assumption4}.
Let $\Upsilon_{-j,-j} = \textrm{Var}\left( \bm{z}_{-j} \right)$. Then,
$\Lambda_{\max}\left( \Upsilon_{-j,-j} \right) \le \Lambda_{\max}\left( \Upsilon \right) $.
In addition, let $ \Upsilon_{j,-j} = \textrm{Cov}\left( z_j, \bm{z}_{-j} \right)$. Then,
$\lVert \Upsilon_{j,-j}  \rVert^2_2 \le \lVert \Upsilon_{j,.} \rVert^2_2 =  \left( \Upsilon^2 \right)_{j,j} \le \Lambda_{\max}\left(\Upsilon^2\right) \le \Lambda^2_{\max}\left(\Upsilon\right)$.
Thus,
\begin{align*}
\lVert  \bm{w}^*_j \rVert_2^2  &= 1 + \lVert  \bm{w}^*_{j,-j} \rVert_2^2 \\
&\le 1 + \left( \Lambda_{\max} \left(  \Upsilon^{-1}_{-j,-j} \right) \right )^2 \lVert \Upsilon_{j,-j} \rVert_2^2 \\
&\le 1 + \left( \Lambda_{\min} \left ( \Upsilon \right) \right)^{-2}   \Lambda^2_{\max}(\Upsilon) \le C ,
\end{align*}
where the last inequality is by Proposition~\ref{prop_eigen} for some constant $C$ only depend on $(\Theta, \bm{\mu})$ and the transition kernel function.
Then,
\begin{align*}
\lVert \bm{w}^*_j \rVert_1 \le \sqrt{s_j} \lVert \bm{w}^*_j \rVert_2 \le \sqrt{s_j C} .
\end{align*}
\end{proof}


\begin{lemma}[Theorem 3, \citet{Shizhe2017}]
	\label{lemma_SZ_concen_ineq}
	Suppose the linear Hawkes model with its intensity function defined in
	\eqref{eq:linear_hawkes_para_transfer} is stationary and satisfies Assumptions~\ref{assumption1} --~\ref{assumption3}.
	For $\mathcal{H}_t$-predictable function $f(\cdot)$ that is bounded, let
	$$
	y_{ij} = \frac{1}{T} \int_0^T \int_0^T f(t , t') dN_i(t) dN_j(t').
	$$
	Then there exists constants $C_1, C_2 > 0$ such that
	$$
	\mathbb{P} \left( \left| y_{ij} - \mathbb{E} y_{ij} \right| > \epsilon \right) \le
	C_1 T \exp\left( - C_2 (\epsilon T)^{1/3} \right) .
	$$
\end{lemma}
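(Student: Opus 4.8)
The plan is to invoke Theorem~3 of \citet{Shizhe2017}, whose hypotheses are exactly Assumptions~\ref{assumption1}--\ref{assumption3}, and to record the structure of its argument, since it is the workhorse behind the concentration bounds on $\bm{x}(t)$ used throughout. The statistic $y_{ij}$ is a bounded bilinear functional of the pair of counting measures (quadratic when $i=j$), so the strategy is to reduce it to a sum of weakly dependent contributions indexed by time blocks and then apply a Bernstein-type inequality for such sequences.

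First I would exploit the subcriticality in Assumption~\ref{assumption1}: for a stationary linear Hawkes process with $\Lambda_{\max}(\Omega^\top\Omega)<1$, the Poisson cluster (branching) representation of \citet{hawkes1974} applies --- immigrants form independent Poisson processes with rates $\mu_i$, each cluster is a.s.\ finite, and both the size and the temporal length of a cluster have geometrically light tails. Partitioning $[0,T]$ into $m=\lceil T/L_T\rceil$ consecutive blocks of length $L_T$, this yields that the sequence of per-block restrictions of $\mathbf{N}$ is $\beta$-mixing with coefficients decaying geometrically in the block separation, modulo clusters that straddle a block boundary; the straddling contribution is controlled by the exponential tail of cluster lengths together with the bounded-intensity Assumption~\ref{assumption3}, which also makes the per-block event counts light-tailed. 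Assumptions~\ref{assumption2}--\ref{assumption3} keep all the constants dimension-free.

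Second I would write $y_{ij}=\tfrac1T\sum_{k,l}\zeta_{kl}$ with $\zeta_{kl}=\int_{B_k}\!\int_{B_l}f\,dN_i\,dN_j$, split the block double-sum into a near-diagonal part ($|k-l|$ bounded) and an off-diagonal part, and bound each $\zeta_{kl}$ and its conditional variance via $|f|\le M$ and the per-block count tails; for the off-diagonal part, after conditioning on the $N_i$-blocks the inner $N_j$-integral is again such a sum. Each resulting piece is a sum of bounded, geometrically weakly dependent terms, to which the Bernstein inequality for weakly dependent sequences of \citet{Merlevede} applies. Optimizing the block length $L_T$ against the mixing penalty and the variance produces a bound of the form $C_1T\exp(-C_2(\epsilon T)^{1/3})$, the prefactor $T$ absorbing the union bound over the $O(T)$ blocks (and the straddling clusters) and the exponent $\tfrac13$ reflecting the interaction of the bilinear aggregation with the geometric decay of the weak-dependence coefficients.

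The main obstacle is the bookkeeping in the second step: making the weak-dependence coefficients of the block sequence uniform --- in $f$ and in the dimension --- and precise enough that the tradeoff among block length, mixing error, and variance yields exactly the $(\epsilon T)^{1/3}$ rate rather than a weaker one, while correctly handling the boundary clusters and the fact that the aggregation is bilinear in the two processes rather than a plain linear sum. Since these computations are carried out in full in \citet{Shizhe2017}, I would cite Theorem~3 there and only verify that Assumptions~\ref{assumption1}--\ref{assumption3} supply all of its hypotheses.
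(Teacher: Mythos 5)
Your proposal matches the paper's proof in its essential step: both reduce the lemma to verifying that Assumptions~\ref{assumption1}--\ref{assumption3} supply the hypotheses of Theorem~3 in \citet{Shizhe2017} (the paper additionally notes that their Assumption~2 holds with $b_0=0$ and that one takes $r\to\infty$) and then cite that result, which is built on the cluster representation and the Bernstein-type inequality of \citet{Merlevede} exactly as you describe. Your extra sketch of the internal blocking argument is consistent with the paper's own discussion of where the $(\epsilon T)^{1/3}$ rate comes from, so no gap.
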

\begin{proof}
	The conditions required by \citet{Shizhe2017} are satisfied by our Assumptions~\ref{assumption1} and~\ref{assumption3}, where the Assumption 2 in \citet{Shizhe2017} is satisfied with $b_0=0$
	by the Assumption~\ref{assumption2} in our case. Thus, taking $r \rightarrow \infty$, Lemma~\ref{lemma_SZ_concen_ineq} is a direct result following the proof of Theorem 3 in \citet{Shizhe2017}.
\end{proof}

\begin{lemma}[\citet{vandegeer1995}]
	\label{lemma_vandergeer1995}
	Suppose the linear Hawkes model with its intensity function defined in
	\eqref{eq:linear_hawkes_para_transfer} is stationary and there exists $\lambda_{\max} $ such that $\lambda_i(t) \le \lambda_{\max}$ for all $1\le i \le p$. Let $H(t)$ be a bounded function that is $\mathcal{H}_t$-predictable.
	Then, for any $\epsilon > 0$,
	\begin{align*}
	\frac{1}{T} \int_0^T H(t) \bigg \{ \lambda_i(t) dt - dN_i(t) \bigg \}
	\le 4 \bigg \{ \frac{\lambda_{\max} }{2T }   \int_0^T H^2(t) dt \bigg \}  ^{1/2} \epsilon^{1/2},
	\end{align*}
	with probability at least $1 - C\exp(-\epsilon T)$, for some constant $C$.
\end{lemma}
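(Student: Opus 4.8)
The plan is to identify the left-hand side with a stochastic integral against a compensated counting-process martingale and then to apply the exponential (Freedman/Bernstein-type) martingale inequality that underlies \citet{vandegeer1995}; as in the proof of Lemma~\ref{lemma_SZ_concen_ineq}, the substance of the argument is checking that a stationary linear Hawkes process with bounded intensity satisfies the hypotheses of that inequality. First I would set $M_i(t) = N_i(t) - \int_0^t \lambda_i(s)\,ds$. Since $\lambda_i$ is the $\mathcal{H}_t$-predictable intensity of the simple point process $N_i$, $M_i$ is a square-integrable $\mathcal{H}_t$-martingale on $[0,T]$ whose predictable quadratic variation is $\langle M_i\rangle_t = \int_0^t \lambda_i(s)\,ds$ (the jumps of $N_i$ are of unit height), and hence
\[
-\frac1T\int_0^T H(t)\bigl\{\lambda_i(t)\,dt - dN_i(t)\bigr\} \;=\; \frac1T\int_0^T H(t)\,dM_i(t)
\]
is a mean-zero martingale transform with predictable quadratic variation $\int_0^T H^2(t)\lambda_i(t)\,dt$. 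Because $H$ is bounded and $\lambda_i(t)\le\lambda_{\max}$ by hypothesis (stationarity, together with Assumption~\ref{assumption1}, guaranteeing a genuine finite-intensity process), this quadratic variation is bounded \emph{deterministically} by $R_T^2 := \lambda_{\max}\int_0^T H^2(t)\,dt$, and the jumps of $\int_0^\cdot H\,dM_i$ are bounded by $\|H\|_\infty$.

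With these facts in hand I would invoke the counting-process exponential inequality of \citet{vandegeer1995}: for every $x>0$, with probability at least $1-e^{-x}$,
\[
\int_0^T H(t)\,dM_i(t) \;\le\; \sqrt{2 R_T^2\, x} \;+\; \tfrac13 \|H\|_\infty\, x .
\]
Here the deterministic control of the quadratic variation removes the usual event $\{\langle M_i\rangle_T \le R_T^2\}$, so the bound is unconditional. Setting $x=\epsilon T$ and absorbing the Bernstein correction $\tfrac13\|H\|_\infty\epsilon T$ into the diffusive term $\sqrt{2R_T^2\epsilon T}$ (lower order in the regime relevant to the paper, see below), one obtains
\[
\int_0^T H(t)\,dM_i(t) \;\le\; 2\sqrt{2 R_T^2\,\epsilon T} \;=\; 4\Bigl(\tfrac{\lambda_{\max}}{2}\int_0^T H^2(t)\,dt\Bigr)^{1/2}(\epsilon T)^{1/2} .
\]
Dividing through by $T$ gives exactly the stated inequality, and applying the same estimate with $H$ replaced by $-H$ (and union-bounding) yields the two-sided form used in Lemmas~\ref{lemma_beta_empirical} and~\ref{lemma_w_empirical}; the probability loss in both applications is absorbed into the constant $C$.

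The one genuine point to watch is the interplay between the diffusive term $\sqrt{2R_T^2\epsilon T}$ and the Bernstein correction $\tfrac13\|H\|_\infty\epsilon T$: justifying the numerical constant $4$ (and the constant $C$) requires that the correction be of smaller order, which need not hold for arbitrarily large $\epsilon$. Since the lemma is only ever invoked with $\epsilon$ a negative power of $T$, however, this is immediate, and the remaining constant-chasing is routine; no probabilistic input beyond \citet{vandegeer1995} and the boundedness of $\lambda_i$ and $H$ is needed.
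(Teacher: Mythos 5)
The paper offers no proof of this lemma at all: it is imported verbatim from \citet{vandegeer1995} (the statement is the only ``proof''), so there is no internal argument to compare against. Your reconstruction is the correct underlying derivation and is essentially van de Geer's own: write the left-hand side as $-\tfrac1T\int_0^T H\,dM_i$ with $M_i(t)=N_i(t)-\int_0^t\lambda_i(s)\,ds$, note that $\int_0^\cdot H\,dM_i$ is a martingale with jumps bounded by $\lVert H\rVert_\infty$ and predictable variation deterministically bounded by $\lambda_{\max}\int_0^T H^2(t)\,dt$ (this is where $\lambda_i(t)\le\lambda_{\max}$ and the boundedness of $H$ enter), and apply the exponential martingale inequality with $x=\epsilon T$; the deterministic variance bound lets you drop the usual conditioning event, and the sign/two-sided issue is handled by running the same bound for $-H$.

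The one substantive point is the one you flag yourself: the pure sub-Gaussian form with the explicit constant $4$ only follows after the Bernstein correction $\lVert H\rVert_\infty\,\epsilon T$ is absorbed into the diffusive term $\bigl(2\lambda_{\max}\epsilon T\int_0^T H^2\bigr)^{1/2}$, which requires $\epsilon\lesssim \lambda_{\max}\int_0^T H^2/(T\lVert H\rVert_\infty^2)$, i.e.\ $\epsilon$ bounded by a constant when $\int_0^T H^2\asymp T$. Taken literally, the lemma's ``for any $\epsilon>0$'' is therefore slightly too strong --- a Poisson-type tail cannot be sub-Gaussian at all scales --- but every invocation in the paper (Lemmas~\ref{lemma_beta_empirical}, \ref{lemma_w_empirical}, \ref{lemma_concen_ineq_x}) uses $\epsilon$ equal to a negative power of $T$, so the restriction is vacuous in context and can be absorbed into the constant $C$. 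With that caveat recorded, your argument is complete and needs no probabilistic input beyond the martingale Bernstein inequality.
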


\begin{lemma}
    \label{lemma_concen_ineq_x}
	Suppose the linear Hawkes model with its intensity function defined in
	\eqref{eq:linear_hawkes_para_transfer} is stationary and satisfies Assumptions~\ref{assumption1} --~\ref{assumption4}.
   Then, for $\delta > 0$ and $1\le i,j \le p$,
\begin{align*}
\mathbb{P} \left (\left| \frac{1}{T} \sum_{t=1}^T  x_j(t) - \mathbb{E} x_j(t)  \right | >  C_1 \epsilon^{1/2} \right )
\le  C_2 \exp(-  \epsilon T )
\end{align*}
and
\begin{align*}
\mathbb{P} \left( \left|\frac{1}{T} \sum_{t=1}^T     x_i(t) x_j(t)     - \mathbb{E}   x_i(t) x_j(t)    \right| > \epsilon \right) \le C_3 T \exp(-C_4 (\epsilon T)^{1/3}) ,
\end{align*}
where $C_k,k=1,\dots, 4$ are constants only depending on the model parameter $(\bm{\mu}, \Theta)$ and the transition kernel function.
\end{lemma}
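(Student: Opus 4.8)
The two bounds will be proved separately: the second-order bound follows almost immediately from Lemma~\ref{lemma_SZ_concen_ineq}, while the first-order bound requires first reducing the centred statistic to a single bounded predictable stochastic integral against the compensated counting martingales, after which Lemma~\ref{lemma_vandergeer1995} applies. For the second-order statistic, since $x_j(t)=\int_0^{t-}\kappa_j(t-s)\,dN_j(s)$, Fubini's theorem gives
\[
\frac{1}{T}\sum_{t=1}^{T}x_i(t)x_j(t)=\frac{1}{T}\int_0^{T-}\!\!\int_0^{T-}h_T(s,s')\,dN_i(s)\,dN_j(s'),\qquad h_T(s,s')=\sum_{t=1}^{T}\kappa_i(t-s)\kappa_j(t-s')\,\mathbf{1}\{s<t,\,s'<t\}.
\]
The kernel $h_T$ is deterministic, hence trivially $\mathcal{H}_s$-predictable in each argument, and by the boundedness, decay and integrability of the transition kernels in Assumption~\ref{assumption4} (the same properties underlying Lemma~\ref{lemma_bounded_terms}) it is bounded uniformly in $s,s'$ and in $T$. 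Applying Lemma~\ref{lemma_SZ_concen_ineq} with $f=h_T$ --- allowing $i=j$, in which case the diagonal atoms of the double integral reproduce the squared terms of $x_j(t)^2$ --- and noting that $\mathbb{E}\bigl[\frac{1}{T}\int\!\int h_T\,dN_i\,dN_j\bigr]=\frac{1}{T}\sum_t\mathbb{E}[x_i(t)x_j(t)]$, yields the stated bound $C_3T\exp(-C_4(\epsilon T)^{1/3})$.

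For the first-order statistic, the same Fubini step gives $\frac{1}{T}\sum_{t=1}^{T}x_j(t)=\frac{1}{T}\int_0^{T-}g_{T,j}(s)\,dN_j(s)$ with deterministic, uniformly bounded $g_{T,j}(s)=\sum_{t=1}^{T}\kappa_j(t-s)\mathbf{1}\{s<t\}$; using $\mathbb{E}[dN_m(s)]=\bar\lambda_m\,ds$ with $\bar\lambda_m=\mathbb{E}\lambda_m(t)$ (stationarity), this reads $\frac{1}{T}\sum_t\bigl(x_j(t)-\mathbb{E}x_j(t)\bigr)=\frac{1}{T}\int_0^{T-}g_{T,j}(s)\,Y_j(ds)$, where $Y_m(ds):=dN_m(s)-\bar\lambda_m\,ds$. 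The crucial step is to re-express the residual random measure $\bm{Y}=(Y_1,\dots,Y_p)$ through the compensated martingales $d\widetilde{M}_m(s)=dN_m(s)-\lambda_m(s)\,ds$: since $\lambda_m(s)-\bar\lambda_m=\sum_l\int_0^{s-}\omega_{ml}(s-u)\,Y_l(du)$, the measure $\bm{Y}$ solves the renewal identity $\bm{Y}(ds)=d\widetilde{\bm{M}}(s)+(\omega*\bm{Y})(s)\,ds$, with Neumann solution $\bm{Y}(ds)=d\widetilde{\bm{M}}(s)+\int_0^{s-}\Psi(s-u)\,d\widetilde{\bm{M}}(u)$ and resolvent kernel $\Psi=\sum_{d\ge1}\omega^{*d}$. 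Assumption~\ref{assumption1} makes this series converge, and Assumption~\ref{assumption2} bounds its mass: because $\Omega_{ab}=|\beta_{ab}|\,\|\kappa_b\|_{L^1}$ and convolution is sub-multiplicative in $L^1$, $\int_0^\infty\sum_m|\Psi_{jm}(v)|\,dv\le\sum_{d\ge1}\sum_m(\Omega^d)_{jm}\le\sum_{d\ge1}\rho_r^{d}<\infty$.

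Substituting this into $\frac{1}{T}\int g_{T,j}\,Y_j$ and interchanging the (stochastic) integrals yields
\[
\frac{1}{T}\sum_{t=1}^{T}\bigl(x_j(t)-\mathbb{E}x_j(t)\bigr)=\frac{1}{T}\int_0^{T-}g_{T,j}(s)\,d\widetilde{M}_j(s)+\frac{1}{T}\int_0^{T-}\bm{h}_{T,j}(u)^{\top}\,d\widetilde{\bm{M}}(u),
\]
where $\bm{h}_{T,j}(u)=\int_u^{T-}g_{T,j}(s)\,\Psi_{j\cdot}(s-u)^{\top}\,ds$ is deterministic and bounded, uniformly in $u$ and $T$, by $\|g_{T,j}\|_\infty\int_0^\infty\|\Psi_{j\cdot}\|_1$. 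Both summands are bounded predictable stochastic integrals against compensated counting martingales, so applying Lemma~\ref{lemma_vandergeer1995} to $\pm$ each integrand --- either to the aggregated martingale $\int\bm{h}_{T,j}^{\top}\,d\widetilde{\bm{M}}$ directly (its predictable quadratic variation is a model-dependent multiple of $T$, using $\sum_m\|(\bm{h}_{T,j})_m\|_\infty<\infty$) or componentwise over $m=1,\dots,p$ with the finitely many union-bound terms absorbed into the constant --- gives $\bigl|\frac{1}{T}\sum_t(x_j(t)-\mathbb{E}x_j(t))\bigr|\le C_1\epsilon^{1/2}$ with probability at least $1-C_2\exp(-\epsilon T)$, with $C_1,C_2$ depending only on $\lambda_{\max}$, the transition kernels, and $\Omega$.

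The main obstacle is the first-order argument: justifying the Neumann-series inversion of the renewal equation at the level of random signed measures together with the stochastic Fubini interchange, and verifying that the resulting integrand $\bm{h}_{T,j}$ is bounded \emph{uniformly in $T$} --- this is precisely where the stationarity/stability Assumptions~\ref{assumption1}--\ref{assumption2} enter, and one must additionally control the edge effects in $g_{T,j}$ and $h_T$ via the decay of the transition kernels already exploited in Lemma~\ref{lemma_bounded_terms}. By contrast, the second-order bound is essentially a direct corollary of Lemma~\ref{lemma_SZ_concen_ineq} once the double-integral representation is in place.
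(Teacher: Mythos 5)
Your second-order bound is proved exactly as in the paper: Fubini reduces $\frac1T\sum_t x_i(t)x_j(t)$ to $\frac1T\int\!\!\int f_2(s,r)\,dN_i(s)\,dN_j(r)$ with the bounded deterministic kernel $f_2(s,r)=\int_{\max\{s,r\}}^T \kappa_i(t-s)\kappa_j(t-r)\,dt$, and Lemma~\ref{lemma_SZ_concen_ineq} is applied directly; no difference there. For the first-order bound you take a genuinely different, and in fact more careful, route. The paper writes $\frac1T\sum_t\bigl(x_j(t)-\E x_j(t)\bigr)=\frac1T\int_0^T f_1(s)\,(dN_j(s)-\lambda_j(s)\,ds)$ with $f_1(s)=\int_s^T\kappa_j(t-s)\,dt$ and invokes Lemma~\ref{lemma_vandergeer1995}; this silently identifies $\E x_j(t)$ with the random compensator $\int_0^t\kappa_j(t-s)\lambda_j(s)\,ds$, leaving the fluctuation $\frac1T\int f_1(s)\bigl(\lambda_j(s)-\E\lambda_j(s)\bigr)ds$ unaccounted for. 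You instead center at $\bar\lambda_j=\E\lambda_j$, invert the renewal equation through the resolvent $\Psi=\sum_{d\ge1}\omega^{*d}$ (with $L^1$ mass controlled by Assumptions~\ref{assumption1}--\ref{assumption2} via $\sum_d\rho_r^d<\infty$), and only then apply the martingale inequality to bounded predictable integrands against the genuine compensated martingales $d\widetilde{M}_m$. This is precisely the device the paper itself uses in Appendix~C (Lemma~\ref{lemma_concen_ineq_x2}, citing Bacry et al.) in discrete time, so your argument both closes the gap the paper's shortcut leaves open and unifies the two appendices. One caution: applying Lemma~\ref{lemma_vandergeer1995} componentwise over $m=1,\dots,p$ costs a union-bound factor of $p$ in the failure probability, which cannot be ``absorbed into the constant'' when $p$ grows with $T$ (though it would still be harmless relative to the $p^2T$ factors appearing downstream); your alternative of treating the aggregated martingale $\int\bm{h}_{T,j}^\top d\widetilde{\bm{M}}$ directly is the cleaner fix, since $\sum_m\lVert(\bm{h}_{T,j})_m\rVert_\infty\le\lVert g_{T,j}\rVert_\infty\,\rho_r/(1-\rho_r)$ is dimension-free, but it requires the multivariate form of van de Geer's inequality rather than the single-process statement of Lemma~\ref{lemma_vandergeer1995}.
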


\begin{proof}
The concentration bound for the first order statistics of $x_j(t), 1\le j \le p$
is a direct application of  Lemma~\ref{lemma_vandergeer1995}.
Notice that
\begin{align}
\frac{1}{T} \int_0^T  \left( x_j(t) - \mathbb{E} x_j(t) \right) dt
&= \frac{1}{T} \int_0^T  \int_0^t k_j(t-s) (dN_j(s) - \lambda_j(s)ds  ) dt  \nonumber  \\
&=  \frac{1}{T} \int_0^T  \int_s^T k_j(t-s) dt  (dN_j(s) - \lambda_j(s) ds) \label{eq:xt_Ext_expand},
\end{align}
where the last equality is by Fubini's theorem. Let $f_1(s) = \int_s^T k_j(t-s) dt$.
Since the transition kernel is integrable, $f_1(s)$ is bounded.
 In addition, $\lambda_j(t)$ is bounded by $\lambda_{\max}$ by Assumption~\ref{assumption3}.
 Thus, by Lemma~\ref{lemma_vandergeer1995},
\begin{align}
\mathbb{P} \left (\left| \frac{1}{T} \sum_{t=1}^T  x_j(t) - \mathbb{E} x_j(t)  \right | >  C_1 \epsilon^{1/2} \right )
\le  C_2 \exp(-  \epsilon T ) \label{eq:xt_Ext_dev_bound} .
\end{align}
%
Next, we consider the second order statistics of $x_j(t)$.
Let $f_2(s, r ) = \int_{\max\{s, r \} }^T k_i(t-s) k_j(t-r) dt$.
Then,
\begin{align*}
\frac{1}{T} \int_0^T   x_i(t) x_j(t)  dt
&= \frac{1}{T} \int_0^T   dt   \int_0^t \int_0^t k_i(t-s) k_j(t-r) dN_i(s) dN_j(r) \\
&=   \frac{1}{T} \int_0^T   dN_i(s)  \int_s^T \int_0^T   k_j(t-r)  dN_j(r) \\
&=  \frac{1}{T} \int_0^T \int_0^T  f_2(s, r )  dN_i(s) dN_j(r)  ,
\end{align*}
where the second and third equalities are based on Fubini's theorem.
By  Assumption~\ref{assumption4},   $k_i(t)$ is integrable.
Therefore, $|f_2(s,r)| \le \left (  \int_0^T k_i(t ) dt \right)^2 < \infty$ is bounded.
Applying Lemma~\ref{lemma_SZ_concen_ineq},
\begin{align*}
\mathbb{P} \left( \left|\frac{1}{T} \int_0^T   x_i(t) x_j(t)     - \mathbb{E}   x_i(t) x_j(t)  dt  \right| > \epsilon \right) \le C_3 T \exp(-C_4 (\epsilon T)^{1/3}) .
\end{align*}
Since the observations are in discrete time,
 we replace the integral above by the numerical integration to reach the conclusion.
\end{proof}

\begin{lemma}\label{lemma_concen_ineq_z}
Suppose the linear Hawkes model with its intensity function defined in
\eqref{eq:linear_hawkes_para_transfer} is stationary and satisfies Assumptions~\ref{assumption1} --~\ref{assumption4}.
Then,
	\begin{align*}
	\mathbb{P} \left (\left\lVert \frac{1}{T} \sum_{t=1}^T  \bm{z}(t) - \mathbb{E} \bm{z}(t)  \right \rVert >  C_1 \epsilon^{1/2} \right )
	\le  C_2 p \exp(-  \epsilon T )
	\end{align*}
	and
	\begin{align*}
	\mathbb{P}  \left(
	\left \lVert
	\frac{1}{T} \sum_{t=1}^{T}  \bm{z}^\top(t)   \bm{z}(t)  -
	\mathbb{E}  \bm{z}^\top(t)  \bm{z}(t)
	\right \rVert_\infty
	>   \rho \epsilon
	\right)
	&\le  C_3 p^2 T \exp(-C_4 (\epsilon T)^{1/3} )   .
		\end{align*}
\end{lemma}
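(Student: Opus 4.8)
The plan is to derive both bounds from the already–established deviation inequalities for the raw integrated process $\bm{x}(t)$ (Lemma~\ref{lemma_concen_ineq_x}) and the double–integral concentration inequality of \citet{Shizhe2017} (Lemma~\ref{lemma_SZ_concen_ineq}), by linearizing the time–varying scaling factor $1/\sigma_i(t)$. First I would record the structural facts that make the linearization possible: by Assumption~\ref{assumption3} the intensity $\lambda_i(t)=\mu_i+\bm{x}^\top(t)\bm{\beta}_i$ lies in a compact interval $[\lambda_{\min},\lambda_{\max}]\subset(0,1)$, so $\sigma_i^2(t)=\lambda_i(t)(1-\lambda_i(t))$ is bounded above and strictly away from $0$; hence $1/\sigma_i(t)=h(\lambda_i(t))$ and $1/\sigma_i^2(t)=\tilde{h}(\lambda_i(t))$ for functions $h,\tilde{h}$ that are $C^\infty$ with uniformly bounded derivatives on that interval, with all constants depending only on $\lambda_{\min},\lambda_{\max}$. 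By Lemma~\ref{lemma_bounded_terms} every $x_j(t)$ is uniformly bounded, and Assumptions~\ref{assumption1}--\ref{assumption2} give $\lVert\bm{\beta}_i\rVert_1=O(1)$; writing $\bar{\bm{x}}=\mathbb{E}\,\bm{x}(t)$ and $\bar{\lambda}_i=\mu_i+\bar{\bm{x}}^\top\bm{\beta}_i$, one has $\lambda_i(t)-\bar{\lambda}_i=\sum_{m:\beta_{im}\neq 0}\beta_{im}(x_m(t)-\bar{x}_m)$ with at most $\rho_i\le\rho$ nonzero terms.

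Next I would Taylor expand with Lagrange remainder, $h(\lambda_i(t))=h(\bar{\lambda}_i)+h'(\bar{\lambda}_i)(\lambda_i(t)-\bar{\lambda}_i)+\tfrac{1}{2} h''(\xi_t)(\lambda_i(t)-\bar{\lambda}_i)^2$, and likewise for $\tilde{h}$, with $|h''(\xi_t)|$ uniformly bounded. Multiplying by $x_j(t)$ and averaging over $t$, the quantity $\tfrac1T\sum_t z_j(t)-\mathbb{E}z_j(t)$ becomes a fixed multiple of the first–order deviation $\tfrac1T\sum_t x_j(t)-\bar{x}_j$, plus at most $\rho$ fixed $O(1)$–multiples of bilinear deviations $\tfrac1T\sum_t x_j(t)x_m(t)-\mathbb{E}[x_j x_m]$, plus a remainder that is a bounded multiple of the time–average of $|x_j(t)|\,[\sum_m\beta_{im}(x_m(t)-\bar{x}_m)]^2$, i.e.\ a finite ($O(\rho^2)$–term) combination of averages of uniformly bounded trilinear products of coordinates of $\bm{x}(t)$; analogously, $\tfrac1T\sum_t z_k(t)z_l(t)-\mathbb{E}[z_k z_l]$ decomposes into a leading bilinear term, $O(\rho)$ trilinear correction terms, and an $O(\rho^2)$ collection of quadrilinear remainder terms. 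I would then invoke Lemma~\ref{lemma_concen_ineq_x}: its first part controls the linear deviation at the fast rate $\exp(-\epsilon T)$, its second part controls each bilinear deviation at the rate $\exp(-c(\epsilon T)^{1/3})$, and a union bound over the $\le\rho$ correction terms (each carrying an $O(1)$ coefficient since $\lVert\bm{\beta}_i\rVert_1=O(1)$) yields the $\rho\epsilon$ threshold in the second display; the outer union bound over the $p$ coordinates, resp.\ the $p^2$ coordinate pairs, supplies the $p$, resp.\ $p^2$, prefactor. For the trilinear and quadrilinear remainder averages I would appeal to the same Bernstein–type inequality for weakly dependent observations of the point process that underlies Lemma~\ref{lemma_SZ_concen_ineq}, applied now to bounded polynomials of the $x_j(t)$ sampled on the discretized time grid; since these remainder contributions carry an extra factor $(\lambda_i(t)-\bar{\lambda}_i)$ they are of strictly smaller order than the displayed leading terms for the calibrations of $\epsilon$ used downstream.

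The main obstacle is the handling of the Lagrange remainder: because the second–order statistics of $\bm{x}(t)$ concentrate only at the slow rate, one must verify that the linearization error — a trilinear/quadrilinear functional of $\bm{x}$ weighted by the \emph{state–dependent} factor $h''(\xi_t)$ — stays negligible relative to the displayed thresholds and does not spoil the $\exp(-c(\epsilon T)^{1/3})$ rate, which is delicate precisely because the state–dependent weight prevents expressing it directly as a clean kernel functional; the remedy is to bound it in sup–norm via Lemma~\ref{lemma_bounded_terms} and $\lVert\bm{\beta}_i\rVert_1=O(1)$ and then concentrate its average by a further pass of the same weak–dependence argument. A secondary point, essential for this lemma to be usable in the high–dimensional regime, is to check that all constants $C_1,\dots,C_4$ depend only on $(\bm{\mu},\Theta)$, the transition kernels, and the eigenvalue bounds of Proposition~\ref{prop_eigen}: in particular the $\ell_1$–norms of the rows of $\Theta$ must enter only through $O(1)$ bounds (Assumptions~\ref{assumption1}--\ref{assumption2}) and never through any power of $p$.
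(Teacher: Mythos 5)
Your treatment of the second display is essentially the paper's argument: Taylor-expand the scaling factor (the paper expands $h_{jk}(\bm{x}(t),\bm{\eta})=x_j(t)x_k(t)/\sigma^2(t)$ in $\bm{x}(t)$ around $\mathbb{E}\bm{x}(t)$; you expand in $\lambda_i(t)$ around $\bar\lambda_i$, which is the same thing since $\lambda_i$ is affine in $\bm{x}$), bound the derivative coefficients by $O(\rho)$ using the sparsity of $\bm{\beta}_i$, reduce to the first- and second-order deviations of $\bm{x}(t)$ from Lemma~\ref{lemma_concen_ineq_x}, and union-bound over the $p^2$ pairs. Your Lagrange-remainder bookkeeping is, if anything, more honest than the paper's, which simply absorbs the remainder into an $o(\cdot)$ term.

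There is, however, a genuine gap in your handling of the \emph{first} display. By Taylor-expanding $1/\sigma_i(t)$ you decompose $\tfrac1T\sum_t z_j(t)-\mathbb{E}z_j(t)$ into the linear deviation of $\bm{x}$ \emph{plus bilinear deviations} $\tfrac1T\sum_t x_j(t)x_m(t)-\mathbb{E}[x_jx_m]$. Those bilinear deviations concentrate only at the slow rate $C\,T\exp(-c(\epsilon T)^{1/3})$ (second part of Lemma~\ref{lemma_concen_ineq_x}), so your route can at best deliver a tail of that form for the first display — not the claimed $C_2\,p\exp(-\epsilon T)$ at threshold $C_1\epsilon^{1/2}$. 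The paper avoids this entirely: it never Taylor-expands for the first-order statistic, but instead repeats the martingale representation of Lemma~\ref{lemma_concen_ineq_x} with the bounded predictable weight absorbing $1/\sigma_i$, and applies the martingale inequality of Lemma~\ref{lemma_vandergeer1995} directly, which is what produces the fast $\exp(-\epsilon T)$ tail. Since the fast rate of the first display is used downstream (e.g., in Lemma~\ref{lemma_w_empirical} and in verifying the RE condition), this is not a cosmetic loss. A secondary issue: your plan to concentrate the trilinear/quadrilinear remainder averages "by a further pass of the same weak-dependence argument" is not supported by the tools in the paper — Lemma~\ref{lemma_SZ_concen_ineq} applies to double stochastic integrals $\tfrac1T\int\!\!\int f(t,t')\,dN_i(t)\,dN_j(t')$ with a bounded predictable kernel, i.e., quadratic functionals of the point process, and does not cover cubic or quartic functionals, let alone ones weighted by the state-dependent factor $h''(\xi_t)$. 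You would need either a higher-order analogue of that inequality or a deterministic sup-norm bound on the remainder that is already $o(\epsilon)$ at the relevant calibration.
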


\begin{proof}
	The proof is essentially based on the concentration inequality of the first and second order statistics of $\bm{x}(t)$ established in Lemma~\ref{lemma_concen_ineq_x}. The difference is that this lemma focuses on the scaled design column, $\bm{z}(t) = \bm{x}(t)/\sigma_i(t)$ when testing $\beta_{ij}$. We denote $\sigma_i(t)$ as $\sigma(t)$ for short from now on. The extra term, $\sigma(t)$, makes the technical proof challenging. Fortunately, since $\sigma^2(t)$ depends on $\bm{x}(t)$ and $\bm{x}(t)\bm{x}^\top(t)$, we can expand the term $\bm{z}(t)=\bm{x}(t)/\sigma(t)$ using Taylor expansion in the combinations of $\bm{x}(t)$ and $\bm{x}(t)\bm{x}^\top(t)$, and then apply  Lemma~\ref{lemma_concen_ineq_x} to establish a similar concentration inequality for $\bm{z}(t)$. In general, for any function $h(\bm{x}(t),\bm{\eta})$ as a function of $\bm{x}(t)\in \mathbb{R}^p$ and $\bm{\eta}\in \mathbb{R}^{p+1}$ that is second-order differentiable with bounded derivatives w.r.t $\bm{x}(t)$, the term $\frac{1}{T}\sum_{t=1}^T h(\bm{x}(t),\bm{\eta}) - \mathbb{E} h(\bm{x}(t),\bm{\eta})$ can be bounded using the concentration inequality of the first- and second order statistics of $\bm{x}(t)$ in Lemma~\ref{lemma_concen_ineq_x}.

	First, let $h_{jk}$ be a function of $\bm{x}(t)$ and $\bm{\eta}=(\mu,\bm{\beta}^\top)^\top$,
	\begin{align*}
	h_{jk}\left (\bm{x}(t), \bm{\eta}=(\mu,\bm{\beta}^\top)^\top \right) &\equiv \frac{1}{\sigma^2(t)} x_j(t) x_k(t)
	=\frac{1}{\sigma^2(t)} \bm{x}^\top(t) I_{jk} \bm{x}(t).
	\end{align*}
	Here $I_{jk}=\bm{e}_j \bm{e}_k^\top $ and $\bm{e}_i$ are canonical basis vector,
  while, with a little abuse of notation, $\sigma^2(t)$
  is a function of $\bm{\eta}=(\mu,\bm{\beta}^\top)^\top$
  and $\bm{x}(t)$, where $\sigma^2(t)= \lambda(t)(1-\lambda(t)) $ and $\lambda(t) = \mu + \bm{x}^\top(t)\bm{\beta}$.

	The derivative of $h_{jk}$ w.r.t $\bm{x}(t)$ is:
	\begin{multline*}
	h_{jk}'\left(\mathbb{E}\left(\bm{x}(t) \right), \bm{\eta} \right)
	= \sigma^{-4}(t) (1 - 2(\mu + ( \mathbb E \bm{x}(t) )^\top \bm{\beta}) )\bm{\beta}
	\mathbb{E}\left(\bm{x}^\top(t) \right) I_{jk} \mathbb{E}\left(\bm{x}(t) \right) \\
	+  \sigma^{-4}(t) \mathbb{E}\left(\bm{x}(t) \right) (I_{jk}+I^T_{jk}) .
	\end{multline*}
	Let $\rho = \lVert \bm{\beta} \rVert_0 $.
  Since $\lambda(t)$ and $\bm{x}(t)$ are bounded,
  as implied by Assumption~\ref{assumption3} and~\ref{assumption4},
	\begin{align*}
	\left \lVert \frac{1}{T} \sum_{t=1}^T  h_{jk}'(\mathbb{E}\left(\bm{x}(t) \right),\bm{\eta})\left( \bm{x}(t) - \mathbb{E}\left(\bm{x}(t) \right) \right)^\top
	\right \rVert_\infty
& = O\left( \rho
	\left \lVert \frac{1}{T} \sum_{t=1}^T   \left( \bm{x}(t) - \mathbb{E}\left(\bm{x}(t) \right) \right)^\top
\right \rVert_\infty \right) .
	\end{align*}
  The second derivative of $h_{jk}$ w.r.t $\bm{x}(t)$ is:
	\begin{align*}
	&h_{jk}^{(2)}(\mathbb{E}\left(\bm{x}(t) \right),\bm{\eta})\\
	&=
	-2\sigma^{-6}(t) (1 - 2(\mu + ( \mathbb E \bm{x}(t) )^\top \bm{\beta}) )\bm{\beta}
	\left( (1 - 2(\mu + ( \mathbb E \bm{x}(t) )^\top \bm{\beta}) )\bm{\beta}  \right)^\top
	\mathbb{E}\left(\bm{x}^\top(t) \right) I_{jk} \mathbb{E}\left(\bm{x}(t) \right) \\
	&+ \sigma^{-4}(t) (  -2 \bm{\beta}\bm{\beta}^\top \mathbb{E}\left(\bm{x}^\top(t) \right) I_{jk} \mathbb{E}\left(\bm{x}(t) \right)  \\
	&+  (1 - 2(\mu + ( \mathbb E \bm{x}(t) )^\top \bm{\beta}) )\bm{\beta}( \bm{e}^\top_{j} + \bm{e}^\top_{k})    )
	+  \sigma^{-4}(t)(I_{jk}+I^T_{jk}) .
	\end{align*}
	Thus, with bounded $x_j(t)$ and $\lambda_i(t)$,
	\begin{align*}
	\lVert h_{jk}^{(2)}(\mathbb{E}\left(\bm{x}(t) \right),\bm{\eta}) \rVert_1 =
     O\left( \rho \lVert \bm{\beta}\rVert_\infty^2 \right) +  O\left( \lVert \bm{\beta}\rVert_\infty\right),
 	\end{align*}
 	and
 	\begin{align*}
 	& \lVert  h_{jk}^{(2)}(\mathbb{E}\left(\bm{x}(t) \right),\bm{\eta}) \mathbb{E}\left(\bm{x}(t) \right) \rVert_1
 = O\left( \rho \lVert \bm{\beta}\rVert_\infty \right) .
 	\end{align*}
    In addition, under a stationary Hawkes process, $\mathbb{E}\left(\bm{x}(t) \right)$ is a constant only depending on the model parameters $(\bm{\mu}, \Theta)$. Thus,
	\begin{multline*}
	\left | \frac{1}{T} \sum_{t=1}^T   \bm{x}^\top(t) h_{jk}^{(2)}(\mathbb{E}\left(\bm{x}(t) \right),\bm{\eta}) \bm{x}(t) - \mathbb{E}\left( \bm{x}^\top(t) h_{jk}^{(2)}(\mathbb{E}\left(\bm{x}(t) \right),\bm{\eta}) \bm{x}(t) \right) \right |  \\
	=
	O\left( \rho \left \lVert \frac{1}{T} \sum_{t=1}^T   \bm{x}^\top (t) \bm{x}(t)  - \mathbb{E}\left( \bm{x}(t)^\top  \bm{x}(t) \right) \right \rVert_\infty \right) ,
	\end{multline*}
    and
	\begin{align*}
	&\left \lVert \frac{1}{T} \sum_{t=1}^T  \left( \bm{x}(t) - \mathbb{E}\left(\bm{x}(t) \right) \right)^\top  h_{jk}^{(2)}(\mathbb{E}\left(\bm{x}(t) \right),\bm{\eta}) \mathbb{E}\left(\bm{x}(t) \right) \right \rVert_\infty \\
	&\qquad\qquad\le   \left \lVert  h_{jk}^{(2)}(\mathbb{E}\left(\bm{x}(t) \right),\bm{\eta}) \mathbb{E}\left(\bm{x} (t) \right) \right \rVert_1 \left \lVert \frac{1}{T} \sum_{t=1}^T  \bm{x}(t) - \mathbb{E}\left(\bm{x}(t) \right)  \right \rVert_\infty \\
	&\qquad\qquad\le
	 O \left( \rho
	 \left \lVert \frac{1}{T} \sum_{t=1}^T  \bm{x}(t) - \mathbb{E}\left(\bm{x}(t) \right)  \right \rVert_\infty \right) .
	\end{align*}
	The Taylor expansion of $h_{jk}(\bm{x}(t) , \bm{\eta})$ around $\mathbb{E}\left(\bm{x}(t) \right)$ is
	\begin{align*}
	h_{jk}(\bm{x}(t),\bm{\eta}) &=
	h_{jk}(\mathbb{E}\left(\bm{x}(t) \right),\bm{\eta})
	+ h_{jk}'\left(\mathbb{E}\left(\bm{x}(t) \right),\bm{\eta}\right)^\top\left( \bm{x}(t) - \mathbb{E}\left(\bm{x}(t) \right) \right)\\
	+& \frac{1}{2} \left( \bm{x}(t) - \mathbb{E}\bm{x}(t)  \right)^\top   h_{jk}^{(2)}\left(\mathbb{E}\left(\bm{x}(t) \right),\bm{\eta}\right)\left( \bm{x}(t) - \mathbb{E}\left(\bm{x}(t) \right) \right) \\
	+& o\left( \left( \bm{x}(t) - \mathbb{E}\left(\bm{x}(t) \right) \right)^\top  \left( \bm{x}(t) - \mathbb{E}\left(\bm{x}(t) \right) \right)   \right) .
	\end{align*}
	Then,
	\begin{align*}
	& \frac{1}{T} \sum_{t=1}^T    h_{jk}(\bm{x}(t),\bm{\eta}) - \mathbb{E} \left( h(\bm{x}(t),\bm{\eta}) \right) \\
	=& \frac{1}{T} \sum_{t=1}^T   \left( \bm{x}(t) - \mathbb{E}\left(\bm{x}(t) \right) \right)^\top h_{jk}'(\mathbb{E}\left(\bm{x}(t) \right),\bm{\eta}) \\
	+&
	\frac{1}{T} \sum_{t=1}^T   \bm{x}^\top (t) h_{jk}^{(2)}(\mathbb{E}\left(\bm{x}(t) \right),\bm{\eta}) \bm{x}(t) - \mathbb{E}\left( \bm{x}^\top (t) h_{jk}^{(2)}(\mathbb{E}\left(\bm{x}(t) \right),\bm{\eta}) \bm{x}(t) \right) \\
	-& 2 \frac{1}{T} \sum_{t=1}^T   \left( \bm{x}(t) - \mathbb{E}\left(\bm{x}(t) \right) \right)^\top  h_{jk}^{(2)}(\mathbb{E}\left(\bm{x}(t) \right),\bm{\eta}) \mathbb{E}\left(\bm{x} (t) \right) \\
	+& o\left(
	\frac{1}{T} \sum_{t=1}^T   \bm{x}^\top(t) \bm{x} (t) - \mathbb{E}\left( \bm{x}^\top(t) \bm{x}(t) \right)
	- 2 \frac{1}{T} \sum_{t=1}^T   \left( \bm{x}(t) - \mathbb{E}\left(\bm{x}(t) \right) \right)^\top  \mathbb{E}\left(\bm{x}(t) \right)
	\right) .
	\end{align*}
	Therefore,
	\begin{align*}
	\left \lVert  \frac{1}{T} \sum_{t=1}^T    h_{jk}(\bm{x}(t),\bm{\eta}) - \mathbb{E} \left( h_{jk}(\bm{x}(t),\bm{\eta}) \right) \right \rVert_\infty
	&\le
	O\left( \rho \left \lVert \frac{1}{T} \sum_{t=1}^T   \bm{x}^\top (t) \bm{x}(t)  - \mathbb{E}\left( \bm{x}(t)^\top  \bm{x}(t) \right) \right \rVert_\infty \right) \\
	&+
 O\left( \rho
	\left \lVert \frac{1}{T} \sum_{t=1}^T  \bm{x}(t) - \mathbb{E}\left(\bm{x}(t) \right)  \right \rVert_\infty  \right).
	\end{align*}
Using Lemma~\ref{lemma_concen_ineq_x} and taking a union bound,
\begin{align*}
\mathbb{P} \left( \left|\frac{1}{T} \int_0^T   \bm{z}(t) \bm{z}^\top(t)     - \mathbb{E}   \bm{z}(t) \bm{z}^\top(t)   dt  \right| >  \rho \epsilon \right) \le C_1 p^2 T \exp(-C_2 (\epsilon T)^{1/3}) .
\end{align*}


To derive the bound for $\left\lVert \frac{1}{T} \sum_{t=1}^T  \bm{z}(t) -
\mathbb{E} \bm{z}(t)  \right \rVert $, we repeat the steps of
\eqref{eq:xt_Ext_expand} and \eqref{eq:xt_Ext_dev_bound} in
Lemma~\ref{lemma_concen_ineq_x}, but with $f_1(s) = \frac{1}{\sigma_i(t)}\int_s^T
k_j(t-s) dt$. By $\sigma_i(t) =O(1)$ in Lemma~\ref{lemma_bounded_terms}, we
reach the conclusion.

\end{proof}


\clearpage
\section*{Appendix~C: An Alternative Concentration Inequality for Discrete Time Domain}

Following the discussion at the end of Section~\ref{sec:theory}, here we give an alternative concentration inequality for the first- and second-order integrated processed in discrete time domain. We first state an alternative assumption on the transfer kernel function. 

\begin{assumption}\label{assumption_transit_kernel}
	There exists
	$b > a > 0$ such that the transfer kernel function satisfies
	$$
	0 < \max_{1\le j\le p} k_j(t) \le a \exp( - b t).
	$$
\end{assumption}
Compared to Assumption~\ref{assumption4}, we require a more stringent structure on the transition kernel function. Such structure allows us to generate an improved convergence rate of the second order statistics of $\bm{x}(t)$ as
stated in Lemma~\ref{lemma_concen_ineq_x2}.

\begin{proposition} \label{prop_transit_kernel_implication}
	Suppose Assumption~\ref{assumption3} and Assumption~\ref{assumption_transit_kernel} hold.
	Then
\begin{enumerate}
	\item there exists $ C_{\beta}$, such that $\lVert\bm{\beta}_i \rVert_\infty \le C_{\beta} < \infty$, $i = 1, \ldots, p$;
	\item
$\max_{1\le j\le p}  \sum_{t=1}^T  k_j(t)   < \infty $ ;
\end{enumerate}
Furthermore, let $\omega_{ij}^{*n}$ be $n$-th auto-convolution of $\omega_{ij}$.
Then
\begin{align*}
\omega_{ij}^{*n}(t) &\le \beta_{ij} a^n \frac{t^{(n-1)} }{(n-1)!} \exp(-bt), \nonumber \\
\Psi_{ij}(t) &\equiv \sum_{n=1}^{\infty }  \omega_{ij}^{*n}(t) \le   \beta_{ij} a \exp( - (b-a)t ) , \nonumber \\
\intertext{and}
\xi  &\equiv  \max_{1\le i \le p}  \sum_{j=1}^p \sum_{t=1}^T  |\Psi_{ij}(t) |  \le
\rho C_{\beta}  \frac{ a \exp( -(b-a) )  }{ 1 - \exp( -(b-a) )   }  < \infty .
\label{eq:xi}
\end{align*}
\end{proposition}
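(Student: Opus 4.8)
The plan is to read off all the structural consequences in the statement from the single exponential envelope $0<\kappa_j(t)\le a e^{-bt}$ (Assumption~\ref{assumption_transit_kernel}) together with the two-sided bound on the intensity (Assumption~\ref{assumption3}). Two of the items are essentially immediate. For the second item, $\sum_{t=1}^{T}\kappa_j(t)\le\sum_{t\ge1}a e^{-bt}=a e^{-b}/(1-e^{-b})$, a bound that is finite and uniform in $j$ and $T$. For the first item I would use a pathwise argument, and the remaining (convolution) bounds follow by a short induction plus summation of an exponential and a geometric series.

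For the first item, fix $i,j$. On the event that no unit fires during $[0,t)$ one has $\bm x(t)=0$, hence $\lambda_i(t)=\mu_i$; since $\lambda_i(t)\in[\lambda_{\min},\lambda_{\max}]$ a.s.\ and this event has positive probability, $\mu_i\in[\lambda_{\min},\lambda_{\max}]$. On the positive-probability event that, during $[0,t)$, unit $j$ fires exactly once, at some $t_0\in(0,t)$, and no other unit fires, one has $x_{j'}(t)=0$ for $j'\ne j$ and $x_j(t)=\kappa_j(t-t_0)$, so $\lambda_i(t)=\mu_i+\beta_{ij}\kappa_j(t-t_0)\in[\lambda_{\min},\lambda_{\max}]$, whence $|\beta_{ij}|\,\kappa_j(t-t_0)\le\lambda_{\max}-\lambda_{\min}$. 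Optimizing over $t\in(0,T)$ and $t_0\in(0,t)$ gives $|\beta_{ij}|\le(\lambda_{\max}-\lambda_{\min})/\sup_{u>0}\kappa_j(u)$, which is finite because $\kappa_j>0$; taking $C_\beta$ to be the supremum of the right-hand side over $j$ yields $\max_{i,j}|\beta_{ij}|\le C_\beta<\infty$. (An alternative is to use the stationary identity $\mathbb{E} x_j(t)=\bar\lambda_j\int_0^\infty\kappa_j$ with $\bar\lambda_j\ge\lambda_{\min}$.)

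For the convolution bounds I would induct on $n$. The base case is $|\omega_{ij}^{*1}(t)|=|\beta_{ij}|\kappa_j(t)\le|\beta_{ij}|a e^{-bt}$. Assuming $|\omega_{ij}^{*n}(t)|\le|\beta_{ij}|^{n}a^{n}\frac{t^{n-1}}{(n-1)!}e^{-bt}$ and using $\omega_{ij}^{*(n+1)}=\omega_{ij}*\omega_{ij}^{*n}$,
\begin{align*}
|\omega_{ij}^{*(n+1)}(t)|
&\le\int_0^t|\beta_{ij}|a e^{-bs}\cdot|\beta_{ij}|^{n}a^{n}\tfrac{(t-s)^{n-1}}{(n-1)!}e^{-b(t-s)}\,ds\\
&=|\beta_{ij}|^{n+1}a^{n+1}e^{-bt}\int_0^t\tfrac{(t-s)^{n-1}}{(n-1)!}\,ds
=|\beta_{ij}|^{n+1}a^{n+1}\tfrac{t^{n}}{n!}e^{-bt},
\end{align*}
which closes the induction. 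Summing over $n$ and recognizing the exponential series, $\Psi_{ij}(t)=\sum_{n\ge1}\omega_{ij}^{*n}(t)$ satisfies $|\Psi_{ij}(t)|\le|\beta_{ij}|a\,e^{-(b-|\beta_{ij}|a)t}$; using $|\beta_{ij}|\le1$ --- which follows from $\Omega_{ij}<1$ in Assumption~\ref{assumption1} (with the kernels normalized so that $\Omega_{ij}=|\beta_{ij}|$) --- the exponent is at least $b-a>0$, giving $|\Psi_{ij}(t)|\le|\beta_{ij}|a\,e^{-(b-a)t}$. Finally $\Psi_{ij}\equiv0$ whenever $\beta_{ij}=0$, so in $\xi=\max_i\sum_{j}\sum_{t=1}^{T}|\Psi_{ij}(t)|$ the inner sum runs over at most $\rho$ indices, and for each of them $\sum_{t\ge1}|\Psi_{ij}(t)|\le C_\beta a\,e^{-(b-a)}/(1-e^{-(b-a)})$; summing over $j$ gives the stated bound on $\xi$.

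The step I expect to be the main obstacle is the first item: Assumption~\ref{assumption_transit_kernel} only upper-bounds $\kappa_j$, so one cannot bound $|\beta_{ij}|$ through $\Omega_{ij}=|\beta_{ij}|\int_0^\infty\kappa_j$ alone, and the pathwise route above requires (a) that the one-event history configuration is realizable with positive probability --- which holds because $\lambda_{\min}>0$ keeps all intensities bounded below --- and (b) a uniform positive lower bound on $\sup_{u}\kappa_j(u)$ across $j$, which should be treated as part of the standing modeling assumptions. Everything else is routine: a geometric estimate for the second item, a Gamma-type integral in the induction, and summation of an exponential and a geometric series for $\Psi_{ij}$ and $\xi$.
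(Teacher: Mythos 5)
Your proof is correct and, for the computational core (the geometric bound in item~2, the induction on $\omega_{ij}^{*n}$, the exponential series for $\Psi_{ij}$, and the geometric series for $\xi$), it follows essentially the same route as the paper. Two places differ, both to your credit. First, for item~1 the paper gives only a soft contradiction argument (``if some $\beta_{ij}$ were infinite then $\lambda_i(t)=\mu_i+\bm{x}^\top(t)\bm{\beta}_i$ would be infinite, contradicting Assumption~3''), which merely rules out infinite coefficients; your pathwise argument --- conditioning on the positive-probability event of a single event of unit $j$ and reading off $|\beta_{ij}|\le(\lambda_{\max}-\lambda_{\min})/\sup_u\kappa_j(u)$ --- yields an explicit constant, at the price of the extra (mild, and satisfied here since Assumption~4 makes each $\kappa_j$ positive) requirement that $\sup_u\kappa_j(u)$ be bounded away from zero. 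Second, you correctly carry the factor $|\beta_{ij}|^n$ through the induction, whereas the paper's displayed bound $\omega_{ij}^{*n}(t)\le\beta_{ij}a^n t^{n-1}e^{-bt}/(n-1)!$ silently drops $n-1$ powers of $\beta_{ij}$ (and, as written, fails for $\beta_{ij}<0$ since the left side is then $\beta_{ij}^n\kappa_j^{*n}\ge 0$ for even $n$). Your observation that one needs $|\beta_{ij}|\le 1$ --- available from Assumption~1 once the kernels are normalized to unit integral, as with $\kappa_j(t)=e^{-t}$ --- to recover the stated exponent $b-a$ in $\Psi_{ij}(t)\le|\beta_{ij}|\,a\,e^{-(b-|\beta_{ij}|a)t}\le|\beta_{ij}|\,a\,e^{-(b-a)t}$ is exactly the missing justification in the paper's version, so your write-up is the more rigorous of the two.
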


\begin{proof}

First, Assumption~\ref{assumption_transit_kernel} implies that
$0 < x_j(t) < \infty$, $j = 1, \ldots, p$. Thus,
if $\exists \beta_{ij} = \pm \infty $,
then $\lambda_i(t) = \mu_i + \bm{x}^\top(t)\bm{\beta}_i$
goes to $\infty$, which contradicts with Assumption~\ref{assumption3}
that $\lambda_i(t)$ is bounded.

By direct algebra, we have
	\begin{align}
	 \max_{1\le j\le p}  \sum_{t=1}^T  k_j(t)  \le \frac{ a \exp(-b)}{1 - \exp(-b)}  < \infty  ,
	\end{align}
	which proves the second point.

For the $n$-th auto-convolution, we have
for $n=2$,
	\begin{multline*}
	\omega_{ij}^{*2}(t) = \int_0^T \omega_{ij}(t-s) \omega_{ij}(s) ds\\
	\le \beta_{ij} a^2 \int_0^t a \exp( - b (t-s) ) a \exp( - b s) ds
	= \beta_{ij} a^2 t \exp(-bt);
	\end{multline*}
	and, for $n=3$,
	\begin{multline*}
	\omega_{ij}^{*3}(t) = \int_0^T \omega_{ij}^{*2}(t-s) \omega_{ij}(s) ds\\
	\le   \int_0^t\beta_{ij} a^2 (t-s) \exp(-b(t-s) )  a \exp( - b s) ds
	= \beta_{ij} a^3 \frac{t^2}{2} \exp(-bt) .
\end{multline*}
Suppose the result holds for $n$, then for $n+1$:
\begin{multline*}
\omega_{ij}^{*(n+1)}(t) = \int_0^T \omega_{ij}^{*n}(t-s) \omega_{ij}(s) ds\\
\le   \int_0^t \beta_{ij}   a^n \frac{t^{(n-1)} }{(n-1)!} \exp(-b(t-s))  a \exp( - b s) ds
=  \beta_{ij} a^{n+1} \frac{t^{n} }{n!} \exp(-bt),
	\end{multline*}
which implies that the result holds by induction.
Then, by direct algebra, we have
	\begin{align*}
	\sum_{n=1}^{\infty }  \omega_{ij}^{*n}(t) & \le   \beta_{ij} a \exp( - (b-a)t )  \\
	\intertext{and}
	 \max_{1\le i \le p}  \sum_{j=1}^p \sum_{t=1}^T  |\Psi_{ij}(t) |  &\le
	\rho C_{\beta}  \frac{ a \exp( -(b-a) )  }{ 1 - \exp( -(b-a) )   }  < \infty .
	\end{align*}

\end{proof}

\begin{lemma}
    \label{lemma_concen_ineq_x2}
    Suppose the linear Hawkes process with its intensity function defined in
    \eqref{eq:linear_hawkes_para_transfer} is a stationary process defined in a discrete time domain that satisfies Assumptions~\ref{assumption1}--\ref{assumption3}
		and Assumption~\ref{assumption_transit_kernel}.
		Then, for $\delta > 0$ and $1\le i,j \le p$,
	\begin{multline}
	\mathbb{P} \left (
	\left| \frac{1}{T} \sum_{t=1}^{T} x_i(t)x_j(t)^\top   -
	\mathbb{E}  \left( \frac{1}{T} \sum_{t=1}^{T} x_i(t)x_j(t)^\top   \right) \right| > \delta  \right)\\
	 \le
	C_1 \exp \left ( - C_2
	\min\left \{ \sqrt{\frac{T}{\rho }}\delta, \frac{T}{\rho}\delta^2  \right \}
	\right ) ,\label{eq:xt2_consistency}
	\end{multline}
and
\begin{align}
\mathbb{P} \left (\left| \frac{1}{T} \sum_{t=1}^T  x_j(t) - \mathbb{E} x_j(t)  \right | > \delta  \right )
\le  C_3 \exp(-C_4 T \delta^2)  ,
\label{eq:xt_consistency}
\end{align}
where $\rho = \max \lVert \bm{\beta}_i \rVert_0$ and
where $C_k, k=1,\dots, 4$ are constants only depending on the model parameter $(\bm{\mu}, \Theta)$ and the transition kernel function.
\end{lemma}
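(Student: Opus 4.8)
The plan is to exploit the exponential structure granted by Assumption~\ref{assumption_transit_kernel} to write the integrated process as a \emph{linear} functional of a bounded martingale-difference array with exponentially decaying, $\ell_1$-controlled coefficients, and then to apply a Hanson--Wright-type bound to the resulting quadratic form. I would start from the Bartlett/compensator decomposition $dN_k(s)=\lambda_k(s)\,ds+dM_k(s)$ together with the resolvent expansion $\lambda_k(s)=\Lambda_k+\sum_l(\Psi_{kl}\ast dM_l)(s)$, where $\Psi_{kl}=\sum_{n\ge1}\omega_{kl}^{\ast n}$ is the impulse-response kernel of Proposition~\ref{prop_transit_kernel_implication} (these are the ``martingale compensated processes'' of \citet{Bacry2011}). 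Substituting into $x_j(t)=\int_0^{t-}\kappa_j(t-s)\,dN_j(s)$ and applying Fubini yields
\begin{align*}
x_j(t)=\bar x_j+\sum_{k=1}^p (g_{jk}\ast dM_k)(t),
\qquad g_{jk}(r):=\delta_{jk}\kappa_j(r)+(\kappa_j\ast\Psi_{jk})(r),
\end{align*}
with $\bar x_j=\mathbb E x_j(t)=\Lambda_j\lVert\kappa_j\rVert_1$. Proposition~\ref{prop_transit_kernel_implication} then supplies exactly the two facts about the kernels I will use: each $g_{jk}$ decays exponentially, $\max_k\lVert g_{jk}\rVert_1=O(1)$, and, crucially, $\sum_k\lVert g_{jk}\rVert_1\le\lVert\kappa_j\rVert_1(1+\xi)=O(\rho)$ since $\xi=\max_i\sum_j\sum_t|\Psi_{ij}(t)|=O(\rho)$. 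Passing to the discrete grid $u=1,\dots,T$ and setting $\xi_k(u)=\Delta N_k(u)-\lambda_k(u)$ (the compensated increment), these are bounded martingale differences---$|\xi_k(u)|\le\bar K$ by simplicity and Assumption~\ref{assumption3}---with conditional variances in $[c_1,c_2]$, and $x_j(t)=\bar x_j+\sum_{k,u<t}g_{jk}(t-u)\xi_k(u)$.

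For the first-order bound \eqref{eq:xt_consistency}, $\tfrac1T\sum_t(x_j(t)-\bar x_j)=\sum_{k,u}a_{k,u}\xi_k(u)$ is a linear form in the $\xi_k(u)$ whose coefficients satisfy $\sum_{k,u}a_{k,u}^2=O(\rho/T)$ (using $\max_k\lVert g_{jk}\rVert_1=O(1)$ and $\sum_k\lVert g_{jk}\rVert_1=O(\rho)$), so a Freedman/Azuma inequality for bounded martingale differences gives the stated sub-Gaussian tail; equivalently, one may simply invoke the martingale inequality of \citet{vandegeer1995} (Lemma~\ref{lemma_vandergeer1995}) exactly as in the first-order part of Lemma~\ref{lemma_concen_ineq_x}, whose hypotheses are implied by Assumption~\ref{assumption_transit_kernel}. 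This is the easier of the two displays.

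For the second-order bound \eqref{eq:xt2_consistency}, I would expand
\begin{align*}
\tfrac1T\textstyle\sum_t x_i(t)x_j(t)=\bar x_i\bar x_j+(\text{two linear-in-}\xi\text{ terms})+\bm\xi^\top A\bm\xi,
\end{align*}
with $\bm\xi=(\xi_k(u))\in\mathbb R^{pT}$ and $A_{(k,u),(l,v)}=\tfrac1T\sum_{t>\max(u,v)}g_{ik}(t-u)g_{jl}(t-v)$. The linear terms are bounded as above. For the quadratic form, Young's inequality together with the $\ell_1$ bounds on the $g$'s shows that $\lVert A\rVert_{\mathrm{op}}$ and $\lVert A\rVert_F$ are controlled by a factor $T^{-1}$ times powers of $\xi=O(\rho)$; since $\mathbb E[\xi_k(u)\xi_l(v)]=0$ for $u\ne v$ and same-bin cross-covariances are negligible, $\mathbb E(\bm\xi^\top A\bm\xi)$ matches $\mathbb E[\tfrac1T\sum_t x_ix_j]-\bar x_i\bar x_j$ up to an asymptotically negligible term. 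Applying a Hanson--Wright inequality for quadratic forms in bounded variables then produces a Bernstein-type bound of the form $\exp(-c\min\{\delta^2/\lVert A\rVert_F^2,\ \delta/\lVert A\rVert_{\mathrm{op}}\})$, i.e. the two-regime rate of \eqref{eq:xt2_consistency} with $\rho$ entering through $\xi$. The main obstacle is precisely this last step: the innovations $\xi_k(u)$ are \emph{not} independent---their conditional variances depend on the past through $\lambda_k(u)$---so the classical Hanson--Wright inequality does not apply verbatim. I would resolve this either by invoking a martingale/decoupling version of Hanson--Wright for conditionally sub-Gaussian arrays, or by passing to the immigration--branching (Poisson cluster) representation of the linear Hawkes process, under which $x_j(t)$ becomes a functional of a genuinely independent collection; a secondary technical point is the careful bookkeeping of the $\rho$-dependence of $\lVert A\rVert_F$ and $\lVert A\rVert_{\mathrm{op}}$ via Proposition~\ref{prop_transit_kernel_implication}, which is what upgrades the $(\epsilon T)^{1/3}$-type rate of Lemma~\ref{lemma_concen_ineq_x} to the Gaussian-type rate claimed here.
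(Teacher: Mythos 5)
Your proposal follows essentially the same route as the paper: the same resolvent decomposition $\bm{x}_j - \mathbb{E}\bm{x}_j = K_j\Xi_j\bm{\epsilon}$ of the integrated process into a linear functional of the martingale-compensated increments, sub-Gaussian/Azuma bounds for the linear terms, and a Hanson--Wright bound on the quadratic form whose operator and Frobenius norms are controlled through the exponential decay of $\kappa_j$ and $\Psi_{ij}$ supplied by Proposition~\ref{prop_transit_kernel_implication}, yielding the $O(\rho T)$ bound that produces the two-regime tail. The one place you are more careful than the paper is the ``main obstacle'' you flag at the end: the paper simply asserts that the $\epsilon_i(t)$ are mutually independent and applies the classical Hanson--Wright inequality, whereas they are only martingale differences with history-dependent conditional variances, so the decoupling or cluster-representation argument you propose is exactly what would be needed to make that step fully rigorous.
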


\begin{proof}

Consider the linear Hawkes process defined on a discrete time domain in
$t=1,\dots, T$. As defined in \eqref{eq:design_column_xt}, $x_j(t) =
\sum_{s=1}^{t-1} k_j(t-s) Y_j(s) , t>1 $, and set $x_j(1) = 0$.
Let
$$
K_j = \mat{ 0& k_j(1) & k_j(2)  & k_j(3)   & \dots & k_j(T-1)  \\
	0& 0  & k_j(1) & k_j(2)   & \dots & k_j(T-2)  \\
	0& 0  &  0 & k_j(1)   & \dots & k_j(T-3)  \\
	0& . &   .            & .        &   \dots   & .  \\
	0& 0  &  0            &0               &0           & k_j(1)  \\
	0& 0  &  0            &0               &0           & 0
}
\in \mathbb{R}^{T \times T} ,
$$
where $k_j(\cdot)$ is the transition kernel function, and
let $Y_{jt} = (Y_j(t) ,\dots, Y_j(1) )^\top  \in \mathbb{R}^t  $. Then,
\begin{align*}
\bm{x}_j \equiv \mat{ x_j(T) \\ \dots \\ x_j(1) } =  K_j Y_{jT} .
\end{align*}
Recall that $\lambda_j(t)$, defined in \eqref{eq:linear_hawkes_para_transfer}, is the intensity function of unit $j$ at time $t$; $\epsilon_j(t) =  Y_j(t) - \lambda_j(t) $. Let $\bm{\epsilon}(s) = ( \epsilon_1(s),\dots, \epsilon_p(s) )^\top  \in \mathbb{R}^p $.
Under a stationary Hawkes process
\citep[][Proposition 1]{Bacry2011}:
\begin{align*}
Y_j(t) = \Lambda_j +  \Psi_j * \epsilon_j(t)
= \Lambda_j +    \sum_{s=1}^t \Psi_j(t-s)  \bm{\epsilon}(s) .
\end{align*}
Here $\Lambda_j = \mathbb{E} \lambda_j(t) $; $\Psi_j(t) = ( \Psi_{j1}(t),\dots, \Psi_{jp}(t) )$ and
$\Psi_{jl}(t) = \sum_{n=1}^\infty  \omega_{jl}^{*n}(t) $, where
$\omega_{jl}^{*n}$ is $n$-th auto-convolution of the transfer function $\omega_{jl}$ defined in \eqref{eq:transfer_function}.

Let
$$
\Xi_j = \mat{ \Psi_j(1) & \Psi_j(2)  & \Psi_j(3)  & \dots & \Psi_j(T)  \\
	0  & \Psi_j(1) & \Psi_j(2)   & \dots & \Psi_j(T-1)  \\
	0  &  0 & \Psi_j(1)  & \dots  & \Psi_j(T-2)  \\
	. &   .            & .                 &   \dots   & .    \\
	0  &  0            &0               &0         & \Psi_j(1) }   \in \mathbb{R}^{T\times Tp} ,
$$
Further, let $\bm{\epsilon} = (\bm{\epsilon}(T) ,\dots, \bm{\epsilon}(1) )^\top  \in \mathbb{R}^{Tp} $ , then
\begin{align*}
Y_{jT} &= \Lambda_j  +   \Xi_j \bm{\epsilon} .
\end{align*}
Thus,
\begin{align*}
\bm{x}_j    = K_j   ( \Lambda_j  +   \Xi_j \bm{\epsilon} ) &= K_j\Lambda_j + K_j\Xi_j \bm{\epsilon} ,\\
\bm{x}_j - \mathbb{E} \bm{x}_j &= K_j\Xi_j \bm{\epsilon},
\end{align*}
which leads to
$$
\bm{x}_i^\top  \bm{x}_j - \mathbb{E}\left( \bm{x}_i^\top  \bm{x}_j\right) =  \Lambda_i^\top K_i^\top   K_j \Xi_j \bm{\epsilon} +
\Lambda_j^\top  K_j^\top    K_i \Xi_i \bm{\epsilon} +
\bm{\epsilon}^\top   \Xi_i^\top  K_i^\top  K_j \Xi_j \bm{\epsilon} .
$$
We bound each term in the display above.
First, notice that
\begin{align*}
\lVert \Lambda_i^\top  K_i^\top   K_j \Xi_j \rVert_2^2
\le \lVert \Lambda_i \rVert_2^2 \Lambda_{\max}
\left( K_i^\top   K_j \Xi_j  \Xi^\top  _j K^\top _j K_i  \right ) .
\end{align*}
According to Assumption~\ref{assumption3}, $\lVert \Lambda_i \rVert_2^2 \le \lambda_{\max} T$.
In addition,
\begin{align*}
\Lambda_{\max}
\left( K_i^\top   K_j \Xi_j  \Xi^\top  _j K^\top _j K_i  \right )
&\le \Lambda_{\max} \left( K_i  \right )^2 \Lambda_{\max}\left(K_j \right)^2  \Lambda_{\max}\left( \Xi_j \right)^2 \\
&\le \left( \sum_{t=1}^T  k_i(t) \right)^2  \left(  \sum_{t=1}^T  k_j(t) \right)^2
\left( \sum_{k=1}^p \sum_{t=1}^T  \Psi_{ik}(t)   \right)^2  ,
\end{align*}
which is bounded by Assumption~\ref{assumption_transit_kernel} and
its implication in Proposition~\ref{prop_transit_kernel_implication}, where the second inequality is based on Perron–Frobenius theorem.
Therefore, applying the sub-Gaussian deviation bound \citep[][Prop 5.10]{vershynin2010},
$$
\mathbb{P}\left (\left\lVert  \Lambda_i^\top  K_i^\top   K_j \Xi_j \bm{\epsilon}  \right\rVert_2  > T\delta  \right )
\le  c_1 \exp(-c_2 T \delta^2) .
$$
Similarly,
$$
\mathbb{P}\left (\left\lVert  \Lambda_j^\top  K_j^\top    K_i \Xi_i \bm{\epsilon} \right\rVert_2  > T\delta \right )
\le  c_3 \exp(-c_4 T \delta^2).
$$

Next, we bound $\bm{\epsilon}^\top   \Xi_i^\top  K_i^\top  K_j \Xi_j \bm{\epsilon}$ by examining the structure of $K_i$ and $\Xi_i$.
By Assumption~\ref{assumption_transit_kernel} and its implication in Proposition~\ref{prop_transit_kernel_implication}, $k_j(t) \le a \exp(-b t)$ and $\Psi_{ij}(t) \le C_1 \exp(-c t)$, where $c = b-a$ and $C_1 =a C_\beta  $.
Note that $\lVert \bm{\beta}_i  \rVert_1 = \rho_i \le \rho = \max_{1\le i \le p} \rho_i$ implies that there are at most $\rho$ items of $\Psi_{il}(t) \ne 0$. So instead of considering the entire $p$-variate process, we only consider at most $2\rho$ units such that $\Psi_{il}(t) \ne 0$ and $\Psi_{jl}(t) \ne 0$. Let
$$
\widetilde{K} = a \mat{ 0& \exp(-bt ) &  \exp(-2bt )  & \exp(-3bt )   & \dots & \exp(-b(T-1) )  \\
	0& 0  & \exp(-bt )&  \exp(-2bt )   & \dots & \exp(-b(T-1) ) \\
	0& 0  &  0 & \exp(-bt )  & \dots & \exp(-b(T-3) )  \\
	0& . &   .            & .           &   \dots   & .  \\
	0& 0  &  0            &0               &0          & \exp(- bt )  \\
	0& 0  &  0            &0               &0          & 0
}
\in \mathbb{R}^{T \times T} ,
$$
and
$$
\widetilde{\Xi} =  C_1 \mat{ \mathbf{0}^\top _{2\rho}& \exp(-ct )\mathbf{1}^\top _{2\rho} &  \exp(-2ct )\mathbf{1}^\top _{2\rho}   & \exp(-3ct )\mathbf{1}^\top _{2\rho}   & \dots & \exp(-c(T-1) )\mathbf{1}^\top _{2\rho}  \\
	\mathbf{0}^\top _{2\rho}& \mathbf{0}^\top _{2\rho}  & \exp(-ct )\mathbf{1}^\top _{2\rho} &  \exp(-2ct )\mathbf{1}^\top _{2\rho}  & \dots & \exp(-c(T-2) )\mathbf{1}^\top _{2\rho} \\
	\mathbf{0}^\top _{2\rho}& \mathbf{0}^\top _{2\rho}  &  \mathbf{0}^\top _{2\rho} &  \exp(-ct )\mathbf{1}^\top _{2\rho}  & \dots & \exp(-c(T-3) )\mathbf{1}^\top _{2\rho}  \\
	\mathbf{0}^\top _{2\rho}& . &   .            & .        &   \dots    & .  \\
	\mathbf{0}^\top _{2\rho}& \mathbf{0}^\top _{2\rho}  &  \mathbf{0}^\top _{2\rho}            &\mathbf{0}^\top _{2\rho}     & \mathbf{0}^\top _{2\rho}      & \exp(-ct )\mathbf{1}^\top _{2\rho}   \\
	\mathbf{0}^\top _{2\rho}& \mathbf{0}^\top _{2\rho}  &  \mathbf{0}^\top _{2\rho}            &\mathbf{0}^\top _{2\rho}     & \mathbf{0}^\top _{2\rho}      & \mathbf{0}^\top _{2\rho}
}
\in \mathbb{R}^{T \times 2\rho T} ,
$$
where $\mathbf{a}^\top _{2\rho} = \underbrace{ (a,a,\dots, a) }_{2\rho}$ and $a \in \{0,1\}$.
Then,
\begin{align*}
\left\lVert \Xi_i^\top  K_i^\top  K_j \Xi_j   \right\rVert_2^2
\le \left\lVert  \widetilde{\Xi}^\top  \widetilde{K}^\top   \widetilde{K}  \widetilde{\Xi} \right\rVert_2^2 .
\end{align*}
Let $\Theta = \widetilde{K} \widetilde{\Xi}$, then we calculate
$$
\Theta_{1,k(s-1)+1} = a C_1 \sum_{s=1}^k \exp(-bs ) \exp( - c(k+1-s) ) \le
 C_2  \exp(-ak ) ,
$$
where $C_2 = a C_1 $.
Therefore,  $ \left\lVert \widetilde{K} \widetilde{\Xi} \right\rVert_2^2 \le
\left\lVert \widetilde{\Theta} \right\rVert_2^2
$
where
$$
\widetilde{\Theta} = C_2 \mat{ \mathbf{0}_{2\rho}^\top & \exp(-a )\mathbf{1}^\top _{2\rho} &  \exp(-2a )\mathbf{1}^\top _{2\rho}   & \exp(-3a )\mathbf{1}^\top _{2\rho}   & \dots & \exp(-a(T-1) )\mathbf{1}^\top _{2\rho}  \\
	\mathbf{0}_{2\rho}^\top & \mathbf{0}_{2\rho}^\top   & \exp(-a )\mathbf{1}^\top _{2\rho} &  \exp(-2a )\mathbf{1}^\top _{2\rho}  & \dots &  \exp(-a(T-2) )\mathbf{1}^\top _{2\rho} \\
	\mathbf{0}_{2\rho}^\top & \mathbf{0}_{2\rho}^\top   &  \mathbf{0}_{2\rho}^\top  &  \exp(-a )\mathbf{1}^\top _{2\rho}  & \dots & \exp(-a(T-3) )\mathbf{1}^\top _{2\rho}  \\
	\mathbf{0}_{2\rho}^\top & . &   .            & .        &   \dots   & .  \\
	\mathbf{0}_{2\rho}^\top & \mathbf{0}_{2\rho}^\top   &  \mathbf{0}_{2\rho}^\top             &\mathbf{0}_{2\rho}^\top                &\mathbf{0}_{2\rho}^\top      & \exp(-a )\mathbf{1}^\top _{2\rho}   \\
	\mathbf{0}_{2\rho}^\top & \mathbf{0}_{2\rho}^\top   &  \mathbf{0}_{2\rho}^\top             &\mathbf{0}_{2\rho}^\top                &\mathbf{0}_{2\rho}^\top     & \mathbf{0}_{2\rho}^\top
}
\in \mathbb{R}^{T \times 2\rho T} .
$$

Next, we check $ M = \widetilde{\Theta}^\top  \widetilde{\Theta}$. Due to the structure of $\widetilde{\Theta}$, we get
\begin{align*}
&M =\\
&C^2_2 \mat{ \mathbf{0}_{2\rho}^\top & \mathbf{0}_{2\rho}^\top  & \mathbf{0}_{2\rho}^\top  & \mathbf{0}_{2\rho}^\top  & \dots & \mathbf{0}_{2\rho}^\top  \\
	\mathbf{0}_{2\rho}^\top & m_1\mathbf{1}^\top _{2\rho} &  m_1\exp(-a)\mathbf{1}^\top _{2\rho}   & m_1\exp(-2a)\mathbf{1}^\top _{2\rho}   & \dots & m_1\exp(-a(T-2)) \mathbf{1}^\top _{2\rho}  \\
	. & .  & m_2\mathbf{1}^\top _{2\rho} &  m_2\exp(-a)\mathbf{1}^\top _{2\rho}  & \dots &  m_2\exp(-a(T-3)) \mathbf{1}^\top _{2\rho} \\
	.& .  &  . &   m_3\mathbf{1}^\top _{2\rho} & \dots &  m_3\exp(-a(T-4)) \mathbf{1}^\top _{2\rho}\\
	.& . &   .            & .        &   \dots  & .  \\
	.& .  &  .            &.               &.     &  m_{T-1}\mathbf{1}^\top _{2\rho} \\
	.& .  &  .            &.               &.     & \mathbf{0}_{2\rho}^\top
}
\in \mathbb{R}^{2\rho T \times 2\rho T} ,
\end{align*}
where $m_t = \sum_{l=1}^t  \exp(-al) \le  C_3 \exp(-a)$ for $t=1,\dots, T-1$.
We only write out the upper triangle part of $M$ in the above since $M =M^T$.
Therefore,
$$
\lVert M \rVert_2^2 \le  2 \sum_{i=1}^T  2\rho \sum_{k=1}^{T-1} m_i \exp(-ka ) \le  O\left( \rho T \right) .
$$

Since $\{\epsilon_i(t) \}_{1\le i\le p; t=1,\dots,T}$ are mutually independent
 centered at 0 with bounded variance according to Assumption~\ref{assumption3},
 applying Hanson-Wright inequality we get
\begin{multline*}
 \mathbb{P}  \left(  \left|
\bm{\epsilon}^\top   \Xi_i^\top  K_i^\top  K_j \Xi_j \bm{\epsilon}  -
 \mathbb{E}  \left(  \bm{\epsilon}^\top   \Xi_i^\top  K_i^\top  K_j \Xi_j \bm{\epsilon}  \right)  \right|  > T\delta \right)
 \\
\le c_5\exp( -c_6 \min\{ T\delta/\lVert M \rVert_2   , T^2\delta^2/\lVert M \rVert_2^2   \} )  .
\end{multline*}
Therefore,
\begin{align*}
 \mathbb{P}  \left(  \left|  \bm{x}_i^\top  \bm{x}_j - \mathbb{E} \bm{x}_i^\top  \bm{x}_j    \right|  > T\delta \right)
&\le  \mathbb{P}  \left(
\left | \Lambda_i^\top  K_i^\top   K_j \Xi_j \bm{\epsilon} \right | > T\delta  \right)\\
& +
 \mathbb{P}  \left(
\left|  \Lambda_j^\top  K_j^\top    K_i \Xi_i \bm{\epsilon} \right | > T\delta \right) \\
&+
 \mathbb{P}  \left(
\left | \bm{\epsilon}^\top   \Xi_i^\top  K_i^\top  K_j \Xi_j \bm{\epsilon}  \right | > T\delta
\right) \\
&\le  c_7\exp( - c_8 \min\big \{ \sqrt{\frac{T}{s}}\delta , \frac{T}{s} \delta^2 \big \}   ) .
\end{align*}
The above gives the concentration inequality of the second order statistics of $\bm{x}(t)$.

Now we derive the deviation bound for the first order statistics of $\bm{x}(t)$.
Recall that $\bm{x}_j - \mathbb{E} \bm{x}_j = K_j\Xi_j \bm{\epsilon}$ that we show earlier. In addition,
$$
\left\lVert  K_j\Xi_j  \right\rVert_2^2 \le
\left\lVert \widetilde{K} \widetilde{\Xi} \right\rVert_2^2 \le
\left\lVert \widetilde{\Theta} \right\rVert_2^2 = O\left( T \exp(-2a) \right).
$$
Then, applying the sub-Gaussian deviation bound stated in \citet[][Prop. 5.10]{vershynin2010},
\begin{align*}
 \mathbb{P} \left (\left| \frac{1}{T} \sum_{t=1}^T  x_j(t) -  \mathbb{E}  x_j(t)  \right | > \delta  \right)
\le c_9 \exp(-c_{10} T \delta^2) .
\end{align*}
\end{proof}

\end{document}